\newif\ifarxiv
\title[Sample Efficient Downstream Swap Regret and Omniprediction for Non-Linear Losses]{Sample Efficient Downstream Swap Regret and Omniprediction for Non-Linear Losses}
\DeclareMathOperator*{\E}{\mathbb{E}}
\newcommand{\eps}{\varepsilon}
\newcommand{\1}{\mathbbm{1}}
\newcommand{\cA}{\mathcal{A}}
\newcommand{\cC}{\mathcal{C}}
\newcommand{\cE}{\mathcal{E}}
\newcommand{\cL}{\mathcal{L}}
\newcommand{\cP}{\mathcal{P}}
\newcommand{\cH}{\mathcal{H}}
\newcommand{\cY}{\mathcal{Y}}
\newcommand{\cS}{\mathcal{S}}
\newcommand{\cX}{\mathcal{X}}
\newcommand{\cD}{\mathcal{D}}
\newcommand{\cF}{\mathcal{F}}
\newcommand{\R}{\mathbb{R}}
\DeclareMathOperator*{\argmin}{arg\,min}
\newcommand{\Leontief}{\text{Leon}}
\newcommand{\Round}{\operatorname{Round}}
\newcommand{\ReLU}{\operatorname{ReLU}}
\newcommand{\MReLU}{\operatorname{MReLU}}
\newcommand{\convex}{\mathrm{cvx}}
\newcommand{\bI}{\mathbb{I}}
\newcommand{\bZ}{\mathbb{Z}}
\newcommand{\bR}{\mathbb{R}}
\newcommand{\cG}{\mathcal{G}}
\newtheorem{definition}{Definition}
\newtheorem{lemma}{Lemma}
\newtheorem{theorem}{Theorem}
\newtheorem{remark}{Remark}
\newtheorem{corollary}{Corollary}
\newtheorem{proposition}{Proposition}
\newtheorem{assumption}{Assumption}
\newtheorem{assumption}{Assumption}
\newenvironment{proofof}[1]{\vspace{0.8em}\par{\noindent \textit{Proof of #1.}}}{\hspace*{\fill} $\qed$ \par}
\newcommand{\BR}{\mathrm{BR}}
\newcommand{\mirah}[1]{\textcolor{magenta}{[M: #1]}}
\newcommand{\ar}[1]{\textcolor{blue}{[AR: #1]}}
\begin{document}
\ifarxiv
\title{Sample Efficient Omniprediction and Downstream Swap Regret for Non-Linear Losses}
\author[1]{Jiuyao Lu}
\author[2]{Aaron Roth}
\author[2]{Mirah Shi}
\affil[1]{Department of Statistics and Data Science, University of Pennsylvania}
\affil[2]{Department of Computer and Information Sciences, University of Pennsylvania}
\fi

\maketitle

\begin{abstract}
We define ``decision swap regret'' which generalizes both prediction for downstream swap regret and omniprediction, and give algorithms for obtaining it for arbitrary multi-dimensional Lipschitz loss functions in online adversarial settings. We also give sample complexity bounds in the batch setting via an online-to-batch reduction. When applied to omniprediction, our algorithm gives the first polynomial sample-complexity bounds for Lipschitz loss functions---prior bounds either applied only to linear loss (or binary outcomes) or scaled exponentially with the error parameter even under the assumption that the loss functions were convex.  When applied to prediction for downstream regret, we give the first algorithm capable of guaranteeing swap regret bounds for all downstream agents with non-linear loss functions over a multi-dimensional outcome space: prior work applied only to linear loss functions, modeling risk neutral agents. Our general bounds scale exponentially with the dimension of the outcome space, but we give improved regret and sample complexity bounds for specific families of multidimensional functions of economic interest: constant elasticity of substitution (CES), Cobb-Douglas, and Leontief utility functions.
\end{abstract}

\section{Introduction}
Predictions are useful insofar as they can be used to inform downstream \emph{action} --- but different agents have different objectives. In a dynamically changing environment, each agent --- if sufficiently sophisticated --- could run their \emph{own} learning algorithm, attempting to learn the right actions to play to minimize their loss function. But if a single service could broadcast predictions that were simultaneously useful to \emph{all} agents, no matter what their loss functions were, this would be more compelling. This idea has been recently been pursued in two closely related literatures:
\begin{enumerate}
    \item \textbf{Omniprediction:} Omniprediction, introduced by \cite{gopalan2022omnipredictors}, aims to produce a predictor for a label space $\cY$ that can be efficiently post-processed to optimize a class of downstream loss functions in a way that is competitive with some benchmark class of functions $\cC$ containing hypotheses $c:\cX\rightarrow \mathbb{R}$. The omniprediction literature has for the most part (with a few important exceptions which we discuss in Section \ref{sec:relatedwork}) focused on one dimensional binary labels in the batch setting when examples are drawn i.i.d. from an unknown distribution \cite{gopalan2022omnipredictors,gopalan2023loss,gopalan2024swap,globus2023multicalibration,hu2024omnipredicting}. The fact that labels are paired with informative features is crucial to make the problem interesting in this setting.  An important extension beyond the binary label setting is given by \cite{gopalan2024omnipredictors} who  study omniprediction for real valued labels $\cY = [0,1]$, and show how to obtain real valued omnipredictors for \emph{convex} loss functions, albeit with sample complexity bounds that scale exponentially with the error parameter. %The omniprediction literature imagines that there is a single learner  who wishes to learn a model that is sufficient to cheaply optimize many downstream losses. As a result, it has only asked that the downstream post-processing to transform a prediction into an action be a computationally efficient function  of the prediction, without concerning itself about whether this transformation is ``natural''. This arises in the work of \cite{gopalan2024omnipredictors} who make predictions of coefficients for basis functions of a higher dimensional approximate representation of the loss function (rather than predictions of labels themselves), and chooses actions to optimize a specific approximation of the loss function over this basis. 

    \item \textbf{Prediction for Downstream Regret:} A related line of work \cite{kleinberg2023u,noarov2023highdimensional,roth2024forecasting,hu2024predict} has focused on making sequential predictions of adversarially chosen outcomes from some space $\cY$ so that downstream agents, who may have arbitrary discrete action spaces, can optimize their own loss. Several papers from this literature \cite{noarov2023highdimensional,roth2024forecasting} have gone beyond the binary label setting and have focused on arbitrary loss functions that for each action are \emph{linear} in the state $y \in [0,1]^d$ to be predicted. This is useful for modeling risk neutral agents, who wish to maximize their \emph{expected utility} over the label outcome, as expectations are linear. The goal is to guarantee that all downstream decision makers have diminishing regret in the worst case. Motivated by decision makers interacting in competitive environments,  \cite{noarov2023highdimensional,roth2024forecasting,hu2024predict} give results to guarantee downstream decision makers diminishing \emph{swap} regret at near optimal rates  
     (agents with no swap regret guarantees cannot be exploited in the same ways that agents with standard no regret guarantees can be \citep{braverman2018selling,deng2019strategizing,rubinstein2024strategizing,arunachaleswaran2024pareto}). 
    Because calibration is known to be unobtainable in online adversarial settings at $O(\sqrt{T})$ rates \citep{qiao2021stronger,dagan2024improved}, this literature has from the beginning used techniques (like ``U-calibration'' \citep{kleinberg2023u} and extensions of decision calibration \cite{zhao2021calibrating,noarov2023highdimensional,roth2024forecasting}) that sidestep these lower bounds while  being strong enough to guarantee downstream decision makers no (swap) regret. In contrast to the omniprediction literature, this literature has focused on settings in which there are no features. %This literature has also taken the perspective  that predictions are consumed by a variety of unsophisticated downstream agents. Thus, it has focused on predictions that are in the label space itself, and on downstream agents who simply ``trust'' the predictions and choose the action that is loss-minimizing given the predicted label \cite{kleinberg2023u,noarov2023highdimensional,hu2024predict} --- or following a literature on ``quantal response'' from the behavioral economics literature, randomly choose an action weighted exponentially towards lower loss actions given the predicted label \cite{roth2024forecasting}. 
\end{enumerate}

In this paper we make use of techniques from each literature to give new results in the other. In particular, we use and extend the approach of \cite{gopalan2024omnipredictors} of approximately representing non-linear loss functions over a higher dimensional basis in which they can be represented linearly. Using this basis we make online adversarial predictions that are useful to downstream decision makers with arbitrary action spaces and Lipschitz loss functions in multi-dimensional real-valued action spaces, and that guarantee no swap regret with respect to rich classes of context-dependent benchmarks. Modeling agents as optimizing a non-linear loss is a common way of modeling risk aversion in strategic settings, and so this extends the prediction-for-downstream-action literature beyond prediction for risk neutral agents. Rather than using calibration in a batch setting as \cite{gopalan2024omnipredictors} do, we obtain our results using algorithms for guaranteeing an extension of decision calibration that guarantees swap regret in an online adversarial setting \cite{noarov2023highdimensional}. This gives regret bounds, that when translated back into sample complexity bounds via an online to batch reduction in the style of \cite{gupta2022online}, gives exponentially improved bounds compared to \cite{gopalan2024omnipredictors}. \citet{gopalan2024omnipredictors} give sample complexity bounds for 1-dimensional convex Lipschitz loss functions that scale exponentially with the desired error parameter, whereas we give \emph{the first omniprediction sample complexity bounds for Lipschitz losses that scale polynomially, rather than exponentially in the error parameter} --- without requiring convexity. We also give sample complexity bounds for arbitrary $d$-dimensional Lipschitz loss functions which scale exponentially in $d$, but give improved bounds for specific families of loss functions of economic interest: constant elasticity of substitution (CES), Cobb-Douglas, and Leontief utility functions. For CES loss functions, our sample complexity and regret bounds scale polynomially with both $d$ and the error parameter. For Cobb-Douglas, we give \emph{pseudo-polynomial} bounds, scaling exponentially with the \emph{log} of the inverse error parameter. For Leontief loss functions, our bounds remain exponential in the dimension, but have only a polynomial dependence on the error parameter.

\ifarxiv
\subsection{Additional Related Work}
\label{sec:relatedwork}
Omniprediction was introduced by \cite{gopalan2022omnipredictors}. 
The main technique that has emerged from this literature is to produce predictions that are \emph{multi-calibrated} \citep{hebert2018multicalibration} with respect to some benchmark class of policies \cite{gopalan2022omnipredictors}, or that satisfy a related set of calibration conditions (e.g. calibration and multi-accuracy) \cite{gopalan2023loss}. In particular, \cite{gopalan2023loss} show that omniprediction can be obtained by jointly promising \emph{hypothesis outcome indistinguishability (OI)} and \emph{decision OI}. Decision OI is a generalization of decision calibration, first introduced in \cite{zhao2021calibrating}, and informally requires that the predicted loss of optimizing for a given loss function matches the realized loss, in aggregate over many examples, conditional on the action chosen. Recently several papers have studied omniprediction in the online adversarial setting and have directly optimized for variants of decision OI (rather than calibration, which implies it) to circumvent lower bounds for calibration in the online adversarial setting. \cite{garg2024oracle, dwork2024fairness,newkim} These papers (like almost all of the omniprediction literature with the exception of \cite{gopalan2024omnipredictors}) continue to restrict attention to binary label spaces $\cY = \{0,1\}$. Decision IO does \emph{not} guarantee downstream decision makers no swap regret. In our results, we use a different generalization of decision calibration, first given by \cite{noarov2023highdimensional} and used by \cite{roth2024forecasting}, which informally guarantees that conditional on the action taken by a downstream decision maker, the predicted loss is equal to the realized loss \emph{simultaneously for every action} (not just for the action selected by the decision maker). This strengthening of decision calibration is sufficient to give swap regret guarantees. The notion of decision swap regret that we give generalizes both swap regret and omniprediction, but is distinct from (and weaker than) the notion of ``swap omniprediction'' which was studied in batch settings by \cite{gopalan2024swap}. Swap omniprediction implies calibration, and hence is subject to the same lower bounds that calibration is in online settings \citep{garg2024oracle}. 
\fi

% \subsection{Results}
% New omniprediction results:
% \begin{itemize}
%     \item Apply decision calibration to omniprediction to get exponentially better sample complexity bounds for 1-d convex losses. Can also do online using algorithm of \citet{noarov2023highdimensional}
%     \item Can do this in higher dimensions using multivariate version of linearization
%     \item To use decision calibration, we must assume that action space is finite. If loss/utility is also Lipschitz in actions, then we can get the same result but with an extra factor of Lipschitz constant
% \end{itemize}

% Since we are using decision calibration, we can apply similar techniques to the swap regret literature. New swap regret results:
% \begin{itemize}
%     \item Apply linearization from omniprediction paper to get no swap regret for all convex utility functions in $d$ dimensions
% \end{itemize}

\ifarxiv
\subsection{Organization}

In Section \ref{sec:preliminaries}, we define our notion of decision swap regret. Section \ref{sec:linear} develops machinery to obtain low decision swap regret for any agent with a linear loss function. In Section \ref{sec:basis}, we construct basis functions that give uniform approximations to $d$-dimensional Lipchitz losses, as well as better constructions for several structured special cases of interest. Together, the results established in Sections \ref{sec:linear} and \ref{sec:basis} lead to our results for non-linear losses, which we present in Section \ref{sec:convex}. Here we also show how to obtain a predictor for the offline batch setting via an online-to-batch reduction, and derive guarantees for offline omniprediction as an application of our results. 
\fi

\section{Model and Preliminaries}\label{sec:preliminaries}

Let $\cX$ denote the feature space and $\cY$ denote the outcome/label space. Throughout, we consider $\cY = [0,1]^d$ (more generally, we can consider any convex, bounded $d$-dimensional space with a constant degradation in our bounds). %For $y\in\cY$, we will write $y_i$ to denote the $i^{th}$ coordinate of $y$. 
We let $\cP$ denote the prediction space. We model agents as having an arbitrary action space $\cA$ and a loss function $\ell: \cA \times \cY \to [0,1]$ taking as input an action and an outcome. We let $\cL$ denote a family of loss functions. 

We take the role of an online forecaster producing predictions that will be consumed by agents with loss functions belonging to a family $\cL$. We consider the following repeated interaction between a forecaster, agents, and an adversary. In every round $t \in [T]$:
    (1) The adversary selects a feature vector $x_t \in \cX$ and a distribution over outcomes $Y_t \in \Delta \cY$;
    (2) The forecaster produces a distribution over predictions $\pi_t\in\Delta\cP$, from which a prediction $p_t \in \cP$ is sampled; 
    (3) Any agent equipped with a loss $\ell \in \cL$ chooses an action $a_t$ as a function of the prediction $p_t$; 
    (4) The adversary reveals an outcome $y_t \sim Y_t$, and the agent (with loss function $\ell$) suffers loss $\ell(a_t, y_t)$.

% \begin{itemize}
%     \item Distribution $\cD$ over domain $\cX$ and outcome space $\cY \subseteq [0,1]^d$
%     \item Action space $\cA$
%     \item Loss / Utility function $u: \cA \times \cY \to [0,1]$ that is $L$-Lipschitz in $\cY$.
%     \item Optimal actions under loss functions / Best response under utility functions
%     \item calibration / decision calibration / conditionally unbiased predictions / multiaccuracy
%     \item omniprediction (compare to benchmarking hypotheses) / swap regret (compare to action modification rule)
% \end{itemize}

\subsection{Decision Swap Regret}

We measure our performance using a notion of regret, which we call \textit{decision swap regret}. Informally, decision swap regret compares the loss of actions chosen based on our predictions against the loss of actions suggested by a collection of benchmark policies. A strength of our definition comes from its ability to condition the choice of benchmark policy on the \textit{action} the decision maker chooses. The definition implies that in hindsight, simultaneously for each action $a \in \cA$, on the subsequence of rounds on which the decision maker chose action $a$, they could not have done any better by acting according to any policy $c_a \in \cC$ instead. As we will see this definition generalizes both omniprediction (which does not condition the benchmark policy on the action) and swap regret (which does condition on the action, but uses a benchmark class of constant policies).

\begin{definition}[$(\cL, \cC, \eps)$-Decision Swap Regret] 
Let $\cL$ be a family of loss functions and $\cC$ be a collection of policies $c: \cX \to \cA$. For a sequence of outcomes $y_1,...,y_T\in\cY$, we say that a sequence of predictions $p_1,...,p_T\in\cP$ has $(\cL, \cC, \eps)$ decision swap regret if for any loss function $\ell \in \cL$, there is some action selection rule $k^\ell: \cP \to \cA$ such that choosing actions $a_t = k^\ell(p_t(x_t))$ achieves: 
\[
\frac{1}{T} \sum_{t=1}^T \left( \ell(a_t, y_t) - \ell(c_{a_t}(x_t), y_t) \right) \leq \eps
\]
for any assignment of policies $\{c_a \in \cC\}_{a\in\cA}$.

% More generally, we will allow action selection rules to be randomized, in which case we measure decision swap regret in expectation over the random action. That is, for a sequence of outcomes $y_1,...,y_T\in\cY$, we say $p_1,...,p_T\in\cP$ has $(\cL, \cC, \eps)$ decision swap regret if for any loss function $\ell \in \cL$, there is a randomized action selection rule $k^\ell: \cP \to \Delta\cA$ such that choosing actions $q_t = k^\ell(p_t(x_t))$ achieves:
% \[
% \frac{1}{T} \sum_{t=1}^T \E_{a\sim q_t}\left[ \ell(a, y_t) - \ell(c_{a}(x_t), y_t) \right] \leq \eps
% \]
% for any assignment of policies $\{c_a \in \cC\}_{a\in\cA}$.
\end{definition}

\ifarxiv
In other words, decision swap regret measures the worst-case regret incurred by any agent with loss $\ell\in\cL$ who follows our predictions, 
when they could have instead taken actions suggested by $c_a(x_t)$ rather than playing action $a$ for some set of benchmark policies $\{c_a\}_{a\in\cA}$.

\begin{remark}
        Our definition of decision swap regret is not to be confused with the notion of swap omniprediction, defined in \citet{gopalan2024swap} (see also \cite{globus2023multicalibration} for a related characterization of multicalibration in terms of a contextual swap regret condition.) Swap omniprediction allows both the loss function and the benchmark policy to be indexed by the forecaster's prediction. In contrast, in decision swap regret, the loss function is fixed, and the benchmark policy is indexed by the decision maker's best response action (which is a coarsening of the forecaster's prediction). Swap omniprediction is equivalent to multicalibration \citep{gopalan2024swap}, whereas we explicitely use the fact that decision swap regret can be obtained without calibration, which is what allows us to obtain our substantial sample complexity improvements compared to \cite{gopalan2024omnipredictors}.
\end{remark}
\fi

We will  give regret guarantees when agents \textit{best respond} to our predictions---that is, choose the action that would minimize loss if our prediction was accurate. Decision swap regret guarantees that trusting our predictions and acting accordingly  dominates playing any policy in $\cC$, even informed by the best response action to the prediction made by the forecaster, and is akin to a truthfulness condition in mechanism design.

\begin{definition}[Best Response]
    Fix a loss $\ell:\cA\times\cY\to[0,1]$ and a prediction $p \in \cY$. The best response to $p$ according to $\ell$ is the action $\BR^\ell(p) = \argmin_{a\in\cA} \ell(a, p)$.
\end{definition}

% \begin{definition}[$(\cL, \cC, \eps)$-Decision Calibration] Let $\cL$ be a family of loss functions and $\cC$ be a collection of hypotheses $c: \cX \to \cA$. We say that a predictor $p: \cX \to \cY$ is $(\cL, \cC, \eps)$-decision calibrated if for all $\ell \in \cL$ there exists an action selection rule $k: \cY \to \cA$ such that:
% \[
% \E_{\cD}[\ell(k(p(x)), y)] \leq \min_{c\in \cC} \E_{\cD}[\ell(c(x), y)] + \eps
% \] 
% \end{definition}

\subsection{Examples of Decision Swap Regret}

\paragraph{Omniprediction:} 
\textit{Omniprediction} corresponds the special case of decision swap regret in which the benchmarking policies are action-independent---i.e. $c_{a} = c$ for for some $c\in\cC$, independently of $a$.  We can thus view omniprediction as the ``external regret" version of decision regret. \ifarxiv Omniprediction is often studied in the batch setting, although \cite{garg2024oracle} recently introduced omniprediction in the online setting. \fi Online omniprediction is strictly more general than batch omniprediction---we derive our bounds in the online setting, and then reduce to the batch setting in Section \ref{sec:convex}.

\begin{definition}[$(\cL,\cC,\eps)$-Omniprediction regret]
Let $\cL$ be a family of loss functions and $\cC$ be a collection of policies $c: \cX \to \cA$. We say that a sequence of predictions $p_1,...,p_T$ has $(\cL, \cC, \eps)$ omniprediction regret with respect to a sequence of outcomes $y_1,...,y_T$ if for any loss function $\ell \in \cL$, there is some action selection rule $k^\ell: \cP \to \cA$ such that choosing actions $a_t = k^\ell(p_t(x_t))$ achieves: 
\[
\max_{c \in \cC} \frac{1}{T} \sum_{t=1}^T \left( \ell(a_t, y_t) - \ell(c(x_t), y_t) \right) \leq \eps
\]
\end{definition}

\begin{definition}[$(\cL,\cC,\eps)$-Omnipredictor]
Let $\cL$ be a family of loss functions and $\cC$ be a collection of policies $c : \cX \to \cA$. Let $\cD$ be a join distribution over pairs $(x,y)$. %Suppose that $(x,y)$ follow the joint distribution $\cD$. 
We say that a predictor $p: \cX \to \cP$ is a $(\cL,\cC,\eps)$-Omnipredictor if for any loss function $\ell \in \cL$, there is some action selection rule $k^\ell: \cP \to \cA$ such that: 
\[
\max_{c \in \cC} \E_{(x,y) \sim \cD} \left[ \ell(k^\ell(p(x)), y) - \ell(c(x), y) \right] \leq \eps
%\E_{(x,y) \sim \cD} \left[ \ell(k^\ell(p(x)), y) \right] \leq \min_{c \in \cC} \E_{(x,y) \sim \cD} \left[ \ell(c(x), y) \right] + \eps
\]
\end{definition}

\paragraph{Swap Regret:}
The traditional notion of \textit{swap regret} is studied in a context free setting and is the special case of decision swap regret in which $\cC = |\cA|$ is the set of constant policies that play a single action independently of the context.

\begin{definition}[$(\cL, \eps)$-Swap Regret]
    Let $\cL$ be a family of loss functions. A sequence of predictions $p_1,...,p_T \in \cY$ has $(\cL, \eps)$-swap regret with respect to outcomes $y_1,...,y_T$ if for any loss function $\ell \in \cL$ and swap function $\phi:\cA\to\cA$, choosing actions $a_t = \BR^\ell(p_t(x_t))$ achieves:
    \[
    \frac{1}{T} \sum_{t=1}^T \left( \ell(a_t, y_t) - \ell(\phi(a_t), y_t) \right) \leq \eps
    \]
\end{definition}

\ifarxiv
\begin{remark}
    Our notion of swap regret is equivalent to ``Calibration Decision Loss" as defined by \citet{hu2024predict}. They consider the family of loss functions defined as expectations over a 1 dimensional boolean outcome, or equivalently linear over a 1-dimensional outcome. 
\end{remark}
\fi

\section{Decision Swap Regret for Linear Losses}\label{sec:linear}

We first restrict our attention to the task of obtaining low decision swap regret for losses $\ell$ that are linear and Lipschitz functions of the outcome. More precisely, $\ell$ is linear if for any $a\in\cA$, we can write $\ell(a, y) = \sum_{i=1}^d r^\ell_i(a) \cdot y_i$, where $\{r_i^\ell\}_{i\in[d]}$ is a set of coefficient functions. Furthermore, $\ell$ is $L$-Lipschitz (in the $L_\infty$ norm) if for any $y,y'\in\cY$, $|\ell(a,y)-\ell(a,y')|\leq L \|y-y'\|_\infty$. 
% Notice that the magnitude of the coefficients $r_i^\ell$ bounds the Lipschitz constant $L$: 
% \[
% |\ell(a,y)-\ell(a,y')| = \left| \sum_{i=1}^d r^\ell_i(a) \cdot y_i  - \sum_{i=1}^d r^\ell_i(a) \cdot y'_i \right| \leq \left| \sum_{i=1}^d r^\ell_i(a) \right| \cdot \|y-y'\|_\infty
% \]

\ifarxiv
Our starting point is a general recipe for omniprediction. Specifically, as shown by \cite{gopalan2023loss}, for a family $\cL$ consisting of linear losses, $(\cL,\cC)$-Omniprediction is implied by:
\begin{enumerate}
    \item Calibration: predictions are unbiased conditional on the value of the prediction itself, i.e.\\ $\left|\sum_{t=1}^T \1[p_t=p](p_t-y_t)\right|=0$ for predictions $p\in\cP$
    \item Multiaccuracy: predictions are unbiased conditional on a class of ``tests" derived from $\cL$ and $\cC$, i.e. $\left|\sum_{t=1}^T h_{\ell,c}(x_t)(p_t-y_t)\right|=0$ for ``tests" $h_{\ell,c}$ that depend on losses $\ell\in\cL$ and policies $c\in\cC$.
\end{enumerate}
\else
Most papers (including \cite{gopalan2024omnipredictors}) that give omniprediction bounds do so via calibration. 
\fi
The omniprediction literature has focused on 1-dimensional outcomes because calibration is expensive in high dimensions; in $d$ dimensions, the number of (discretized) predictions is exponential in $d$, and so the sample complexity of producing non-trivial calibrated forecasts using existing algorithms scales exponentially in $d$. The downstream regret literature, meanwhile, has tackled the high-dimensional setting using the key insight that when predictions are used to inform downstream actions, it suffices to enforce a coarsening of calibration: \textit{decision calibration} \citep{zhao2021calibrating, noarov2023highdimensional}. (A more recent literature \cite{garg2024oracle,dwork2024fairness,newkim} has used a related observation to get polynomial improvements in omniprediction sample complexity in the 1 dimensional outcome setting.) Decision calibration asks that the forecaster's predictions be unbiased conditional not on the predictions themselves, but rather on the downstream actions induced by best responding to the predictions. Unlike calibration, decision calibration can be obtained at rates polynomial in $d$ with runtime polynomial in $d$. Alongside decision calibration, we give a notion of \textit{decision cross calibration} and show that decision swap regret bounds follow from bounds on decision calibration and decision cross calibration.

% Combining the machinery developed by these literatures brings us to a twofold approach to minimize decision swap regret. We will show that no decision swap regret is implied by:
% \begin{enumerate}
%     \item \textbf{Decision calibration}: predictions are unbiased conditional on the downstream actions induced by every loss $\ell\in\cL$
%     \item \textbf{Decision multiaccuracy}: 
% \end{enumerate}

We define decision calibration and decision cross calibration below. These differ slightly from the notion of decision calibration originally given by \cite{zhao2021calibrating} and from the subsequent definition of decision-OI \cite{gopalan2023loss} --- those definitions would suffice to give omniprediction bounds, but not decision \emph{swap} regret bounds. The versions we use here were first used in \cite{noarov2023highdimensional}.

\begin{definition}[$(\cL, \beta)$-Decision Calibration]
    Let $\cL$ be a family of loss functions $\ell: \cA \times \cY \to [0,1]$. Let $\beta: \R \to \R$. We say that a sequence of predictions $p_1,...,p_T$ is $(\cL, \beta)$-decision calibrated with respect to a sequence of outcomes $y_1,...,y_T$ if, for every $\ell \in \cL$ and $a\in\cA$:
    \[
    \left\| \sum_{t=1}^T \1[\BR^\ell(p_t) = a] (p_t - y_t) \right\|_\infty \leq \beta(T^\ell(a))
    \]
    where $T^\ell(a) = \sum_{t=1}^T \1[\BR^\ell(p_t) = a]$.
\end{definition}

Informally speaking, decision calibration is enough to guarantee an decision maker  that their estimated reward for playing any action is unbiased conditionally on their decision  --- which is enough to guarantee that they will have no \emph{swap} regret. However to be able to promise that we can compare their performance to that of other \emph{action policies} which map context to action, we will also want the following decision \emph{cross-calibration} condition to hold. 

\begin{definition}[$(\cL, \cC, \alpha)$-Decision Cross Calibration]
    Let $\cL$ be a family of loss functions $\ell: \cA \times \cY \to [0,1]$ and $\cC$ be a collection of policies $c: \cX \to \cA$. Let $\alpha: \R \to \R$. We say that a sequence of predictions $p_1,...,p_T$ is $(\cL, \cC, \alpha)$-decision cross calibrated with respect to a sequence of outcomes $y_1,...,y_T\in\cY$ if for for every $\ell \in \cL$, $a,b\in\cA$, $c\in\cC$:
    \[
    \left\| \sum_{t=1}^T \1[\BR^\ell(p_t) = a, c(x_t)=b] (p_t - y_t) \right\|_\infty \leq \alpha(T^\ell(a, b))
    \]
    % where $T^\ell(a, b) = \sum_{t=1}^T \1[\BR^\ell(p_t) = a]\1[c_a(x_t)=b]$.
    % \[
    % \left\| \sum_{t=1}^T \1[\BR^\ell(p_t) = a] h(x_t) (p_t - y_t) \right\|_\infty \leq \alpha(T^\ell(a, h))
    % \]
    % where $T^\ell(a, h) = \sum_{t=1}^T \1[\BR^\ell(p_t) = a]|h(x_t)|$.
    % \[
    % \left\| \frac{1}{T}\sum_{t=1}^T \1[a_t = a] \1[c_{a_t}(x_t) = b] (p_t - y_t) \right\|_\infty \leq \alpha
    % \]
\end{definition}

\begin{assumption}
    We assume that $\beta, \alpha: \R\to\R$ are concave functions. This will be the case in all the bounds we give; in general, this condition holds for any sublinear error bound $T^c$ for $c<1$.
\end{assumption}

We now show that these two ingredients imply no decision swap regret for linear losses. At a high level, we can minimize decision swap regret by ensuring that our predictions allow every agent to accurately assess their losses (on average) conditionally on their chosen action---for both their selected actions and the actions suggested by benchmark policies. Decision calibration ensures the former, and decision cross calibration ensures the latter. Since agents select the optimal actions given our predictions, their actions are competitive against any choice of benchmark policies.

% We introduce some additional notation before stating the theorem. We let $\cR^\cL = \{r_i^\ell\}_{i\in[d], \ell\in\cL}$ denote the coefficient family associated with a family of linear losses $\cL$ over $[0,1]^d$, where $\ell(a, y) = \sum_{i=1}^d r_i^\ell(a) \cdot y_i$ for every $\ell\in\cL$. We will write $\cR^\cL \circ \cC$ to denote the collection of functions $\{r_i^\ell \circ c\}_{r_i^\ell\in \cR^\cL, c\in \cC}$, where $(r_i^\ell \circ c)(x) = r_i^\ell(c(x))$.

\begin{theorem}[Decision Swap Regret for Linear Losses]\label{thm:decisionreg-linear}
    Consider an outcome space $\cY \subseteq [0,1]^d$. Let $\cL$ be a family of loss functions $\ell: \cA \times \cY \to [0,1]$ that are linear and $L$-Lipschitz in the second argument. Let $\cC$ be a collection of policies $c: \cX \to \cA$. If the sequence of predictions $p_1,...,p_T$ is $(\cL, \beta)$-decision calibrated and $(\cL, \cC, \alpha)$-decision cross calibrated, then it has $(\cL, \cC, \eps)$-decision swap regret, where $$\eps \leq \frac{L|\cA|\beta(T/|\cA|) + L|\cA|^2 \alpha\left(T/|\cA|^2\right)}{T}$$
    when agents choose actions $a_t = \BR^\ell(p_t)$.
\end{theorem}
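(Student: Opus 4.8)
The plan is to leverage two structural facts. First, since $\ell(a,\cdot)$ is \emph{linear}, write $\ell(a,y) = \langle r^\ell(a), y\rangle$; then inside any loss evaluation we may swap a realized outcome $y_t$ for the prediction $p_t$ at the cost of an inner product $\langle r^\ell(\cdot), \sum_t (p_t - y_t)\rangle$ over the relevant block of rounds, and this is exactly the quantity that decision calibration and decision cross calibration control. Second, because the agent best responds, on the rounds where it plays a fixed action $a$ the \emph{predicted} loss of $a$ is at most the predicted loss of the benchmark recommendation $c_a(x_t)$. Chaining these, all that survives is calibration error, which we aggregate with Jensen's inequality using concavity of $\beta$ and $\alpha$. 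Throughout we use that $L$-Lipschitzness in $\|\cdot\|_\infty$ forces $\|r^\ell(a)\|_1 \le L$ for every $a$ (a routine consequence of the tightness of H\"older's inequality on the full-dimensional cube $\cY$), and we take the action selection rule to be $k^\ell = \BR^\ell$.

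Fix any assignment $\{c_a \in \cC\}_{a\in\cA}$ and partition the horizon by the played action $a_t = \BR^\ell(p_t)$:
\[
\sum_{t=1}^T\big(\ell(a_t,y_t) - \ell(c_{a_t}(x_t),y_t)\big) = \sum_{a\in\cA}\sum_{t:\,a_t=a}\big(\ell(a,y_t) - \ell(c_a(x_t),y_t)\big).
\]
For the first inner term, linearity gives $\sum_{t:a_t=a}\ell(a,y_t) - \sum_{t:a_t=a}\ell(a,p_t) = \big\langle r^\ell(a),\ \sum_{t:a_t=a}(y_t-p_t)\big\rangle$, whose magnitude is at most $\|r^\ell(a)\|_1\,\big\|\sum_{t:a_t=a}(p_t-y_t)\big\|_\infty \le L\,\beta(T^\ell(a))$ by $(\cL,\beta)$-decision calibration. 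For the benchmark term the coefficient vector depends on $c_a(x_t)$, which varies over $t$; the key move is to \emph{refine the partition} by the value $b = c_a(x_t)$, so that on each sub-block $\{t : a_t = a,\ c_a(x_t) = b\}$ the loss $\ell(b,\cdot)$ has the fixed coefficient vector $r^\ell(b)$. Applying linearity and then $(\cL,\cC,\alpha)$-decision cross calibration block by block and summing over $b$,
\[
\sum_{t:a_t=a}\ell(c_a(x_t),y_t) \ \ge\ \sum_{t:a_t=a}\ell(c_a(x_t),p_t) - \sum_{b\in\cA} L\,\alpha\big(T^\ell(a,b)\big).
\]

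Now the best-response property $a_t \in \argmin_{a'}\ell(a',p_t)$ gives $\ell(a,p_t) \le \ell(c_a(x_t),p_t)$ on every round with $a_t = a$, so $\sum_{t:a_t=a}\ell(a,p_t) - \sum_{t:a_t=a}\ell(c_a(x_t),p_t) \le 0$. Substituting the two displays into the per-action decomposition, the predicted-loss difference is nonpositive and only calibration error remains:
\[
\sum_{t=1}^T\big(\ell(a_t,y_t) - \ell(c_{a_t}(x_t),y_t)\big) \ \le\ L\sum_{a\in\cA}\beta\big(T^\ell(a)\big) + L\sum_{a\in\cA}\sum_{b\in\cA}\alpha\big(T^\ell(a,b)\big).
\]
Finally $\sum_{a}T^\ell(a) = T$ across $|\cA|$ terms and $\sum_{a,b}T^\ell(a,b) = \sum_a T^\ell(a) = T$ across $|\cA|^2$ terms, so concavity of $\beta$ and $\alpha$ gives $\sum_a\beta(T^\ell(a)) \le |\cA|\,\beta(T/|\cA|)$ and $\sum_{a,b}\alpha(T^\ell(a,b)) \le |\cA|^2\,\alpha(T/|\cA|^2)$; dividing by $T$ yields the stated bound, and since nothing in the estimate depended on the choice of $\{c_a\}$ it holds for every such assignment, as required.

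I expect the only genuine obstacle is the bookkeeping in the benchmark term: a single (cross-)calibration inequality only controls a sum of $(p_t - y_t)$ against a \emph{fixed} coefficient vector, so one must recognize that the benchmark sum has to be split by $b = c_a(x_t)$ before cross calibration can be applied — it is this refinement that produces the $|\cA|^2$ factor. The remaining ingredients (the $\ell_1$ bound from Lipschitzness, linearity, the best-response comparison, and one application of Jensen) are routine substitutions; the argument also tacitly uses that $\cA$ is finite, which is what makes $|\cA|$ and $|\cA|^2$ meaningful.
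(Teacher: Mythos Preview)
Your proof is correct and follows essentially the same approach as the paper: partition by the played action, use linearity plus decision calibration to swap $y_t$ for $p_t$ on the agent's side, further partition by $b=c_a(x_t)$ and use decision cross calibration on the benchmark side, invoke best response to drop the predicted-loss comparison, and finish with Jensen on the concave $\beta,\alpha$. The only cosmetic difference is that you make the coefficient vectors $r^\ell(a)$ and the bound $\|r^\ell(a)\|_1\le L$ explicit via H\"older, whereas the paper pulls the sum inside $\ell$ by linearity and then applies Lipschitzness directly; these are the same step written two ways.
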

    \ifarxiv
    For any $\ell \in \cL$, by definition of best response, we have that for any choice of benchmark policies $\{c_a\in\cC\}_{a\in\cA}$:
    \[
    \sum_{t=1}^T \ell(a_t, p_t) \leq \sum_{t=1}^T \ell(c_{a_t}(x_t), p_t)
    \]
    
    Thus, it suffices to bound the difference in loss under our predictions $p_t$ and the outcomes $y_t$ for both our sequence of chosen actions and the sequence of actions recommended by any choice of benchmark policies. We show this in the next two lemmas, using decision calibration and decision cross calibration, respectively. 
    \fi

    \ifarxiv
    \begin{lemma}\label{lem:decisioncalibration-linear}
        If $p_1,...,p_T$ is $(\cL, \beta)$-decision calibrated, then for any $\ell\in\cL$:
        \[
        \left| \sum_{t=1}^T (\ell(a_t, p_t) - \ell(a_t, y_t)) \right| \leq |\cA|\beta(T/|\cA|)
        \]
    \end{lemma}
    \fi
    \newcommand{\prooflemdecisioncalibrationlinear}{
        Using the linearity of $\ell$, we can write:
        \begin{align*}
            \left| \sum_{t=1}^T (\ell(a_t, p_t) - \ell(a_t, y_t)) \right| &= \left| \sum_{a\in\cA} \sum_{t=1}^T \1[a_t = a] (\ell(a, p_t) - \ell(a, y_t)) \right| \\
            &= \left| \sum_{a\in\cA} \left(\ell\left(a, \sum_{t=1}^T \1[a_t = a] p_t\right) - \ell\left(a, \sum_{t=1}^T \1[a_t = a] y_t\right)\right) \right| \\
            &\leq  L \sum_{a\in\cA} \left\|  \sum_{t=1}^T \1[a_t = a] (p_t - y_t) \right\|_\infty \\
            &\leq L \sum_{a\in\cA} \beta(T^\ell(a))
        \end{align*}
        where the first inequality follows by $L$-Lipschitzness of $\ell$, the second inequality follows from $(\cL, \beta)$-decision calibration. By concavity of $\beta$ and the fact that $\sum_{a\in\cA} T^\ell(a) = \sum_{a\in\cA} \sum_{t=1}^T \1[a_t=a] = T$, this expression is at most:
        \[
        L |\cA|\beta(T/|\cA|)
        \]
    }
    \ifarxiv
    \begin{proof}
        \prooflemdecisioncalibrationlinear
    \end{proof}
    \else
    \fi

    \ifarxiv
    \begin{lemma}\label{lem:multiaccuracy-linear}
        If $p_1,...,p_T$ is $(\cL, \cC, \alpha)$-cross calibrated, then for any $\ell \in \cL$ and any selection of benchmark policies $\{c_a\in\cC\}_{a\in\cA}$:
        \[
        \left| \sum_{t=1}^T (\ell(c_{a_t}(x_t), p_t) - \ell(c_{a_t}(x_t), y_t)) \right| \leq L|\cA|^2 \alpha\left(\frac{T}{|\cA|^2}\right)
        \]
    \end{lemma}
    \fi
    \newcommand{\prooflemmultiaccuracylinear}{
        Using linearity of $\ell$, we can compute:
        \begin{align*}
            \left| \sum_{t=1}^T (\ell(c_{a_t}(x_t), p_t) - \ell(c_{a_t}(x_t), y_t)) \right| &= \left| \sum_{t=1}^T \sum_{a,b \in \cA} \1[a_t=a,c_{a}(x_t)=b] (\ell(b, p_t) - \ell(b, y_t)) \right| \\
            &= \Bigg| \sum_{a,b\in\cA} \Bigg(\ell\left(b, \sum_{t=1}^T \1[a_t = a,c_{a}(x_t)=b] p_t\right) \\ &- \ell\left(b, \sum_{t=1}^T \1[a_t = a,c_{a}(x_t)=b] y_t\right)\Bigg) \Bigg| \\
            &\leq L \sum_{a,b\in\cA} \left\|  \sum_{t=1}^T \1[a_t = a,c_{a}(x_t)=b](p_t - y_t) \right\|_\infty \\
            &\leq L \sum_{a,b\in\cA} \alpha(T^\ell(a,b))
        \end{align*}
        where the first inequality follows from $L$-Lipschitzness of $\ell$, and the second inequality follows from $(\cL,\cC,\alpha)$-decision cross calibration. By concavity of $\alpha$ and the fact that $\sum_{a,b\in\cA} T^\ell(a,b) = \sum_{a,b\in\cA} \sum_{t=1}^T \1[a_t=a,c_{a}(x_t)=b] = T$, this expression is at most:
        \begin{align*}
            L|\cA|^2 \alpha\left(\frac{T}{|\cA|^2}\right)
        \end{align*}
    }
    \ifarxiv
    \begin{proof}
        \prooflemmultiaccuracylinear
    \end{proof}
    \else
    \fi

    % \begin{lemma}\label{lem:multiaccuracy-linear}
    %     If $p_1,...,p_T$ is $(\cL, \cR^\cL \circ \cC)$-multiaccurate, then for any $\ell \in \cL$ and any selection of benchmark policies $\{c_a\in\cC\}_{a\in\cA}$:
    %     \[
    %     \left| \sum_{t=1}^T (\ell(c_{a_t}(x_t), p_t) - \ell(c_{a_t}(x_t), y_t)) \right| \leq d|\cA|\alpha\left(\frac{\lambda T}{d|\cA|}\right)
    %     \]
    % \end{lemma}
    % \begin{proof}
    %     Decomposing $\ell$ using its coefficient family $\{r_i^\ell\}_{i\in[d]}$, we can compute:
    %     \begin{align*}
    %         \left| \sum_{t=1}^T (\ell(c_{a_t}(x_t), p_t) - \ell(c_{a_t}(x_t), y_t)) \right| &= \left|\sum_{t=1}^T \sum_{i=1}^d r_i^\ell(c_{a_t}(x_t))(p_{t,i} - y_{t,i}) \right| \\
    %         &= \left| \sum_{t=1}^T \sum_{i=1}^d \sum_{a \in \cA} \1[a_t=a] r_i^\ell(c_{a}(x_t))(p_{t,i} - y_{t,i}) \right| \\
    %         &\leq \sum_{i=1}^d \sum_{a \in \cA} \left| \sum_{t=1}^T \1[a_t=a] r_i^\ell(c_{a}(x_t))(p_{t,i} - y_{t,i}) \right| \\
    %         &\leq \sum_{i=1}^d \sum_{a \in \cA} \alpha(T^\ell(a, r^\ell_i \circ c_a))
    %     \end{align*}
    %     where the last inequality follows from $(\cL, \cR^\cL\circ\cC, \alpha)$-decision multiaccuracy. By concavity of $\alpha$ and the fact that $\sum_{i=1}^d \sum_{a\in\cA} T^\ell(a, r^\ell_i \circ c_a) = \sum_{a\in\cA} \sum_{t=1}^T \1[a_t=a] \sum_{i=1}^d |r^\ell_i(c_a(x_t))| \leq \lambda T$, this expression is at most:
    %     \begin{align*}
    %         d|\cA|\alpha\left(\frac{\lambda T}{d|\cA|}\right)
    %     \end{align*}
    % \end{proof}

\newcommand{\proofthmdecisionreglinear}{
     Fix any $\ell\in\cL$. Applying Lemmas \ref{lem:decisioncalibration-linear} and \ref{lem:multiaccuracy-linear}, and the fact that $a_t = \BR^\ell(p_t)$, we can conclude that the decision swap regret to any selection of benchmark policies $\{c_a\in\cC\}_{a\in\cA}$ is bounded by:
    \begin{align*}
        \frac{1}{T} \sum_{t=1}^T (\ell(a_t, y_t) - \ell(c_{a_t}(x_t), y_t)) 
        &\leq \frac{1}{T} \sum_{t=1}^T (\ell(a_t, p_t) - \ell(c_{a_t}(x_t), p_t)) + \frac{L|\cA|\beta(T/|\cA|) + L|\cA|^2 \alpha\left(\frac{T}{|\cA|^2}\right)}{T} \\
        &\leq \frac{L|\cA|\beta(T/|\cA|) + L|\cA|^2 \alpha\left(\frac{T}{|\cA|^2}\right)}{T}
    \end{align*}
}
\ifarxiv
We can now complete the proof of Theorem \ref{thm:decisionreg-linear}.
\begin{proof}
    \proofthmdecisionreglinear
\end{proof}
\else
% We can prove Theorem \ref{thm:decisionreg-linear} by combining Lemma \ref{lem:decisioncalibration-linear} and Lemma \ref{lem:multiaccuracy-linear}. See Appendix \ref{app:proof-sec-linear} for all the proofs.
\fi

\subsection{Algorithm for Decision Calibration and Decision Cross Calibration}

We now turn to the algorithmic problem of producing predictions that are decision calibrated and decision cross-calibrated. We use the \textsc{Unbiased-Prediction} algorithm of \citet{noarov2023highdimensional}, which makes conditionally unbiased predictions in the online  setting. The algorithm and its guarantees are presented in full in Appendix \ref{app:unbiased-prediction}; we defer  further details to \citet{noarov2023highdimensional}.

We will instantiate \textsc{Unbiased-Prediction} to make predictions that simultaneously achieve decision calibration and decision cross calibration; we will refer to this instantiation as \textsc{Decision-Swap}. 
Our guarantees will directly inherit from the guarantees of \textsc{Unbiased-Prediction}. We state them below as expected error bounds over randomized predictions $\pi_t\in\Delta\cP$. 

\ifarxiv
\begin{theorem}\label{thm:unbiasedalg}
    Consider a convex outcome space $\cY \subseteq [0,1]^d$ and the prediction space $\cP=\cY$. Let $\cL$ be a collection of loss functions $\ell: \cA \times \cY \to [0,1]$ that are linear and $L$-Lipschitz in the second argument. Let $\cC$ be a collection of policies $c: \cX \to \cA$. There is an instantiation of \textsc{Unbiased-Prediction} \citep{noarov2023highdimensional}---which we call \textsc{Decision-Swap}---producing predictions $\pi_1,...,\pi_T \in \Delta \cP$ satisfying, for any sequence of outcomes $y_1,...,y_T \in \cY$:
    \begin{itemize}
    \item For any $\ell \in \cL, a\in\cA$:
    \[
    \E_{\substack{p_t\sim\pi_t,\\ t\in[T]}}\left[ \left\| \sum_{t=1}^T \1[\BR^\ell(p_t) = a] (p_t - y_t) \right\|_\infty \right] \leq O\left( \ln(d|\cA||\cL||\cC|T) + \sqrt{\ln(d|\cA||\cL||\cC|T) \E_{\substack{p_t\sim\pi_t,\\ t\in[T]}}[T^\ell(a)]} \right)
    \]
    \item For any $\ell\in\cL, a,b\in\cA, c\in\cC$:
    \[
    \E_{\substack{p_t\sim\pi_t,\\ t\in[T]}}\left[ \left\| \sum_{t=1}^T \1[\BR^\ell(p_t) = a,c(x_t)=b] (p_t - y_t) \right\|_\infty \right] \leq O\left( \ln(d|\cA||\cL||\cC|T) + \sqrt{\ln(d|\cA||\cL||\cC|T) \E_{\substack{p_t\sim\pi_t,\\ t\in[T]}}[T^\ell(a, b)]}\right)
    \]
    \end{itemize}
\end{theorem}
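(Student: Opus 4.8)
The plan is to invoke the \textsc{Unbiased-Prediction} algorithm of \citet{noarov2023highdimensional} as a black box, instantiated with a carefully chosen finite family of events (``tests''), and then to observe that the two desired conditions are \emph{literally} the conditional-unbiasedness guarantees that the algorithm delivers on those events. Recall that \textsc{Unbiased-Prediction} takes as input a finite collection of events $E : \cX \times \cP \to \{0,1\}$, each a function of both the context and the forecaster's \emph{own} prediction, and, against an adaptive adversary choosing $x_t$ and $y_t \in \cY$, outputs distributions $\pi_t \in \Delta\cP$ such that for every event $E$ in the collection,
\[
\E\left[\left\| \sum_{t=1}^T E(x_t,p_t)\,(p_t - y_t)\right\|_\infty\right] \le O\!\left(\ln(dNT) + \sqrt{\ln(dNT)\,\E[n_E]}\,\right),
\]
where $N$ is the number of events, $n_E = \sum_{t=1}^T E(x_t,p_t)$ is the (random) number of rounds on which $E$ fires, and the $\ln d$ factor arises from measuring bias coordinatewise in the $L_\infty$ norm over $\cY \subseteq [0,1]^d$ (the $\ln T$ factor is inherited from the guarantee in \citet{noarov2023highdimensional}).

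First I would specify the event collection. For decision calibration, for each $\ell \in \cL$ and $a \in \cA$ include $E^{\mathrm{dc}}_{\ell,a}(x,p) = \1[\BR^\ell(p) = a]$ (which depends on $p$ only); for decision cross calibration, for each $\ell \in \cL$, $a,b \in \cA$ and $c \in \cC$ include $E^{\mathrm{cc}}_{\ell,a,b,c}(x,p) = \1[\BR^\ell(p) = a,\ c(x) = b]$. Fixing once and for all a consistent tie-breaking rule for the $\argmin$ in the definition of $\BR^\ell$ makes each $\BR^\ell : \cP \to \cA$ a well-defined function, so for each fixed $\ell$ the events $\{E^{\mathrm{dc}}_{\ell,a}\}_{a\in\cA}$ partition $\cP$, and the cross-calibration events refine this by the value of $c(x)$ — so these are a legitimate finite collection of the required form. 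The total count is $N = |\cL||\cA| + |\cL||\cA|^2|\cC| = O(|\cL||\cA|^2|\cC|)$, hence $\ln(dNT) = O(\ln(d|\cA||\cL||\cC|T))$. Running \textsc{Unbiased-Prediction} on this collection — the instantiation we name \textsc{Decision-Swap} — and reading off its guarantee on $E^{\mathrm{dc}}_{\ell,a}$ (with $n_E = T^\ell(a)$) yields exactly the first displayed bound of the theorem, and on $E^{\mathrm{cc}}_{\ell,a,b,c}$ (with $n_E = T^\ell(a,b)$) yields the second.

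The substantive point — and the reason one cannot simply run a vanilla online conditional-bias procedure — is that the events depend on the forecaster's own prediction through $\BR^\ell(p_t)$, creating a fixed-point coupling between the distribution $\pi_t$ and the events it induces. This is precisely what \textsc{Unbiased-Prediction} is designed to resolve: at each round it solves a minimax subproblem to produce a $\pi_t$ whose \emph{expected} bias, against the worst-case $y_t$, is controlled on every induced event simultaneously, and the convexity of $\cY = \cP$ assumed in the statement is what makes the relevant minimax/approachability argument go through. Consequently the only things we must check are (i) that the chosen events form a valid finite collection, (ii) that the best-response maps are well-defined after tie-breaking so that the partition structure holds, and (iii) the bookkeeping that folds $N$ and the coordinatewise $L_\infty$ norm into the stated $\ln(d|\cA||\cL||\cC|T)$ factor. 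I expect step (iii) — pinning down exactly where the $\ln d$ and $\ln T$ enter from the bound of \citet{noarov2023highdimensional} — to be the only mildly delicate part; the rest is a direct application of their result.
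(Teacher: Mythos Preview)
Your proposal is correct and matches the paper's approach essentially exactly: the paper also instantiates \textsc{Unbiased-Prediction} with the event family $\cE = \{\1[\BR^\ell(p_t)=a]\}_{\ell\in\cL,a\in\cA}\cup\{\1[\BR^\ell(p_t)=a,\,c(x_t)=b]\}_{\ell\in\cL,a,b\in\cA,c\in\cC}$, counts $|\cE|$, and reads off the black-box guarantee. Your bookkeeping is actually slightly more careful than the paper's (you index the cross-calibration events by $b$ as well, giving $|\cL||\cA|^2|\cC|$ rather than the paper's stated $|\cA||\cL||\cC|$), but since this only changes a constant inside the logarithm it is immaterial to the $O(\cdot)$ bound.
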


Substituting the above bounds into Theorem \ref{thm:decisionreg-linear}, we arrive at the following corollary bounding the decision swap regret of predictions output by \textsc{Decision-Swap} when agents best respond
%, and predictions output by \textsc{Smooth-Decision-Swap} when agents smoothly best respond. 

\begin{corollary}\label{cor:decisionreg-linear}
    Consider a convex outcome space $\cY \subseteq [0,1]^d$ and a prediction space $\cP=\cY$. Let $\cL$ be a family of loss functions $\ell: \cA \times \cY \to [0,1]$ that are linear and $L$-Lipschitz in the second argument. Let $\cC$ be a collection of policies $c: \cX \to \cA$. The sequence of predictions $\pi_1,...,\pi_T\in\Delta\cP$
    output by \textsc{Decision-Swap} achieves $(\cL, \cC, \eps)$-decision swap regret (in expectation over the randomized predictions), where 
    \[
    \eps \leq O\left(L |\cA| \sqrt{\frac{\ln(d|\cA||\cL||\cC|T)}{T}} \right)
    \]
    when agents choose actions $a_t = \BR^\ell(p_t)$.
\end{corollary}
\else
\begin{corollary}\label{cor:decisionreg-linear}
    Consider a convex outcome space $\cY \subseteq [0,1]^d$ and a prediction space $\cP=\cY$. Let $\cL$ be a family of loss functions $\ell: \cA \times \cY \to [0,1]$ that are linear and $L$-Lipschitz in the second argument. Let $\cC$ be a collection of policies $c: \cX \to \cA$. The sequence of predictions $\pi_1,...,\pi_T\in\Delta\cP$
    output by \textsc{Decision-Swap} achieves $(\cL, \cC, \eps)$-decision swap regret (in expectation over the randomized predictions), where 
    \[
    \eps \leq O\left(L |\cA| \sqrt{\frac{\ln(d|\cA||\cL||\cC|T)}{T}} \right)
    \]
    when agents choose actions $a_t = \BR^\ell(p_t)$.
\end{corollary}
\fi

\section{Approximation of Non-Linear Multivariate Functions}
\label{sec:basis}

In this section, we extend our results from Section \ref{sec:linear} 
%for $d$-dimensional linear loss functions
to $d$-dimensional non-linear loss functions by constructing basis functions whose linear combinations can uniformly approximate functions from broad classes of various loss functions. More specifically, we will consider basis representations of outcomes $y$; that is, for a family of losses $\cL$, our goal is to construct a basis that spans the family of parameterized functions $\cF=\{f_a^\ell(y) = \ell(a, y)\}$ for every $a\in\cA$. 
% This generalizes a construction from \cite{gopalan2024omnipredictors} who first demonstrated such a basis for $1$-dimensional convex loss functions.

\begin{definition}[$(n,\lambda)$-basis]\label{def:basis}
    Fix a family of functions $\cF=\{f: \cY \rightarrow$ $\bR\}$. We say that $\cS=\left\{s_i: \cY \rightarrow[0,1]\right\}_{i=0}^n$ $(n,\lambda)$-spans $\cF$, or is a $(n,\lambda)$-basis for $\cF$ if for every $f\in\cF$, there exists $\left\{r_i \in \mathbb{R}\right\}_{i=0}^n$ such that:
    \[
    f(y) = r_0+\sum_{i=1}^{n} r_i s_i(y)
    \]
    and the function $r_0+\sum_{i=1}^{n} r_i s_i(y)$ is $\lambda$-Lipschitz in $s_i(y)$ in the $L_\infty$ norm. That is, for any $y,y' \in \cY$:
     \[
     \left| \sum_{i=1}^{n} r_i s_i(y) - \sum_{i=1}^{n} r_i s_i(y') \right| \le \lambda\max_{1 \le i \le n} \left|s_i(y)-s_i(y')\right|
     \]
\end{definition}

We can also refer to an \textit{approximate} basis, which we define following %the formalism of
\citet{gopalan2024omnipredictors}:

\begin{definition}[$(n,\lambda,\delta)$-approximate basis]\label{def:approximate-basis}
     Fix a family of functions $\cF=\{f: \cY \rightarrow$ $\bR\}$. A basis $\cS=\left\{s_i: \cY \rightarrow[0,1]\right\}_{i=0}^n$ is said to $(n,\lambda,\delta)$-approximately span, or be a $(n,\lambda,\delta)$-approximate basis for, the family $\cF$ if for every $f \in \cF$, there exists coefficients $\left\{r_i \in \mathbb{R}\right\}_{i=0}^n$ such that: 
     \[
     \left|r_0+\sum_{i=1}^{n} r_i s_i(y) -f(y)\right| \leq \delta
     \]
     and the function $r_0+\sum_{i=1}^{n} r_i s_i(y)$ is $\lambda$-Lipschitz in $s_i(y)$ in the $L_\infty$ norm. 
     % That is, for any $y,y' \in \cY$:
     % \[
     % \left| \sum_{i=1}^{n} r_i s_i(y) - \sum_{i=1}^{n} r_i s_i(y') \right| \le \lambda\max_{1 \le i \le n} \left|s_i(y)-s_i(y')\right|
     % \]

     % $f_\delta: [0,1]^n \to [0,1]$ such that
     % \[
     % f_\delta(s(y)) = r_0+\sum_{i=1}^{n} r_i s_i(y)
     % \]
     % for some collection of coefficients $\left\{r_i \in \mathbb{R}\right\}_{i=0}^n$, and 
     % \[
     % |f_\delta(s(y)) - f(y)| \leq \delta
     % \]
     % Moreover, $f_\delta$ is $\lambda$-Lipschitz in the $L_\infty$ norm.
     
     % such that
     % \[
     % \left\|r_0+\sum_{i=1}^{n} r_i s_i(y) -f(y)\right\|_{\infty} \leq \delta .
     % \]
     % and
     % \[
     % \sum_{i=1}^n |r_i| \le \lambda.
     % \]
     Given a basis $\cS$, we will denote by $s(y)$ the basis vector evaluated at $y$, i.e. $s(y) = (s_1(y),...,s_n(y))$. For simplicity, $\cS$ is also said to $\delta$-approximately span, or be a $\delta$-approximate basis for, the family $\cF$ when it is only necessary to state the approximation error $\delta$. 
     
     % We define the $\delta$-approximate dimension of $\cF$ to be the smallest size of any $\delta$-approximate basis of $\cF$, denoted as $\dimension_\delta(\cF)$.
\end{definition}

\ifarxiv
We observe that the Lipschitz constant $\lambda$ of a linear function is related to the magnitude of its coefficients:

\begin{lemma}\label{lem:lip-constant}
    Suppose $\sum_{i=1}^{n} \left|r_i\right|\leq \lambda$. Then, for any $y,y'\in[0,1]^n$:
    \[
    \left| \sum_{i=1}^{n} r_i y_i - \sum_{i=1}^{n} r_i y_i' \right| \le \lambda\max_{1 \le i \le n} \left|y-y'\right|
    \]
\end{lemma}
\fi

\newcommand{\prooflemlipconstant}{
    We have that:
    \[
    \left| \sum_{i=1}^{n} r_i y_i - \sum_{i=1}^{n} r_i y_i' \right| = \left| \sum_{i=1}^{n} r_i (y_i -y_i') \right| \leq \sum_{i=1}^{n} \left| r_i \right| \cdot \max_{1 \le i \le n} \left|y-y'\right|  \leq \lambda \max_{1 \le i \le n} \left|y-y'\right|
    \]
}
\ifarxiv
\begin{proof}
    \prooflemlipconstant
\end{proof}
\else
\fi

\subsection{Approximate Bases for Some Common Loss Families}

Here, we state approximation guarantees for several notable loss families. We summarize the results in Table \ref{tab:basis}. We will introduce each function class in the table and discuss the construction of the corresponding basis.
\ifarxiv
\else
All proofs for the results in this section are relegated to Appendix \ref{app:proof-sec-basis}.
\fi

\begin{table}[H]
        \centering
        \begin{tabular}{|c|c|c|c|}
            \hline
            Function Class & Size of Basis & Lipschitz Constant & Error \\ \hline
            $\cF^1_{\text{cvx}}$ (1-dimensional convex 1-Lipschitz) & $O\left(\frac{\ln^{4/3}(1/\delta)}{\delta^{2/3}}\right)$ & $O(1)$ & $\delta$ \\
            $\cF^d_L$ ($d$-dimensional $L$-Lipschitz) & $O\left(\left(\frac{L}{\delta}\right)^d\right)$ & $1$ & $\delta$ \\
            $\cF^d_p$ ($d$-dimensional $L_p$ loss) & $pd$ & $O(p^{p+1}d)$ & 0 \\
            $\cF^d_{\Omega_{\beta,g}}$ (monomials of degree $\beta$ over $g$) & $d^\beta$ & $A(Rd)^\beta)$ & 0 \\
            $\cF^d_{\Omega_{\text{exp},g}}$ (exponential functions over $g$) & $O((1/\delta)^{\ln d})$ & $Ae^{Rd}$ & $\delta$ \\
            $\cF^d_{\text{Leon}}$ (Leontiff functions) & $O\left( \frac{d^4c^{d}\ln^{3d+1}(\frac{1}{\delta})}{\delta^{(6d+8)/(d+2)}} \right)$ & $O\Big( \frac{Ld^{\frac{5}{2}}c^{d}\ln^{2d+\frac{1}{2}}(\frac{1}{\delta})}{\delta^{(5d+6)/(d+2)}} \Big)$ & $3L\delta$ \\ \hline
        \end{tabular}
        \caption{Basis for Specific Function Classes}
        \label{tab:basis}
\end{table}

\paragraph{1-dimensional convex 1-Lipschitz functions}
First, let $\cF_\convex^1$ denote the family of convex, 1-Lipschitz functions over $\cY = [0,1]$. \citet{gopalan2024omnipredictors} give an approximate basis for $\cF_\convex^1$.

\ifarxiv
\begin{theorem}[\citet{gopalan2024omnipredictors}]\label{thm:basis-convex-1d}
    For $\delta>0$ sufficiently small, there exist a $\left( O\left( \frac{\ln^{4/3}(1/\delta)}{\delta^{2/3}} \right), O(1), \delta \right)$
    %$(O(\frac{\ln^{4/3}(1/\delta)}{\delta^{2/3}}), O(\frac{\ln^{4/3}(1/\delta)}{\delta^{2/3}}), \delta)$
    -approximate basis for $\cF_\convex^1$.
\end{theorem}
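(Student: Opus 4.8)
The plan is to reduce the statement to the classical problem of approximating a convex, $1$-Lipschitz univariate function by a convex piecewise-linear function, and to read the basis off of such an approximation. First I would fix $f \in \cF_\convex^1$ and use the structure of convex functions on $[0,1]$: the right derivative $f'$ is non-decreasing and, by $1$-Lipschitzness, takes values in an interval of length at most $2$, so writing $\mu$ for the nonnegative measure $df'$ we have $f(y) = f(0) + f'(0^+)\,y + \int_0^1 (y-t)_+\, d\mu(t)$ with $\mu([0,1]) \le 2$. Thus, up to the affine functions $1$ and $y$, every $f$ is a nonnegative mixture of total mass at most $2$ of the hinge functions $(y-t)_+$, $t \in [0,1]$. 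This suggests the basis $\cS = \{1,\, y\}\cup\{(\cdot-t)_+ : t \in G\}$ for a finite grid $G \subseteq [0,1]$: if the representing measure of $f$ is supported on $G$, the hinge coefficients are its masses, which are nonnegative and sum to at most $\mu([0,1]) \le 2$, the coefficient of $y$ lies in $[-1,1]$, and $r_0 = f(0)$ is unconstrained; by Lemma~\ref{lem:lip-constant} this gives Lipschitz constant $\lambda = O(1)$, independent of $\delta$. So the ``$O(1)$'' in the statement is essentially free, and all the work is in controlling the \emph{size} of $G$.

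Next I would record a crude size bound before the sharp one. Discretizing $\mu$ to a uniform grid of spacing $\eta$ --- redistributing, inside each cell, the mass of $\mu$ to the two cell endpoints so as to preserve both the total mass and the first moment of $\mu$ on that cell --- leaves $\int (y-t)_+\,d\mu(t)$ unchanged for every $y$ outside that cell and perturbs it by only $O(\eta\,\mu(\text{cell}))$ inside it; hence the perturbations localize to disjoint cells and the sup-error is $O\big(\eta\cdot\max_{\text{cell}}\mu(\text{cell})\big) = O(\eta)$. Taking $\eta = \Theta(\delta)$ already yields a $\big(O(1/\delta),\,O(1),\,\delta\big)$-approximate basis, so the real content of the theorem is shrinking the size from $O(1/\delta)$ to $\tilde O(\delta^{-2/3})$. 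For this I would follow \citet{gopalan2024omnipredictors}: the point of leverage is that a \emph{single} convex $1$-Lipschitz $f$ has a piecewise-linear approximation with only $O(\delta^{-1/2})$ \emph{adaptively} placed breakpoints (a Cauchy--Schwarz estimate: with $m$ breakpoints the error from the diffuse part of $\mu$ is $\lesssim m^{-2}\mu([0,1])$, plus one breakpoint per heavy atom), and $\delta^{-1/2}$ is optimal by Bronshtein-type metric-entropy bounds for convex functions; the gap between $\delta^{-1/2}$ and $\delta^{-2/3}$ is the cost of making the breakpoints function-independent. Concretely one uses hinge grids at a geometric range of scales $h_k = 2^{-k}\eta_0$ and, given $f$, assigns each region to the coarsest scale at which its local curvature is small enough that the (localized) per-cell error is at most $\delta$; balancing the number of scales ($O(\log(1/\delta))$), a curvature threshold $\theta$ (at most $O(1/\theta)$ regions can exceed it), and the finest scale actually used yields $\tilde O(\delta^{-2/3})$ breakpoints with a $\ln^{4/3}(1/\delta)$ overhead, while the coefficient $\ell_1$-norm stays $O(1)$ because the contributions at finer scales carry proportionally less mass.

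The hard part is exactly this last balancing act. A single ``heavy'' kink --- e.g.\ $f(y) = |y-t|$, whose measure $\mu$ is an atom of mass $2$ --- must be resolved to precision $\Theta(\delta)$, and since $t$ can lie anywhere in $[0,1]$, one cannot simply include a $\delta$-dense hinge grid everywhere without paying $\Omega(1/\delta)$; getting below $1/\delta$ therefore requires a genuinely more economical device for resolving arbitrarily-located concentrated curvature while keeping $\|r\|_1 = O(1)$, and this --- together with the cross-scale bookkeeping --- is what simultaneously forces the cube-root exponent and the polylogarithmic factors in the stated bound. I expect this to be the main obstacle, and the step I would lift verbatim from \citet{gopalan2024omnipredictors} rather than re-derive.
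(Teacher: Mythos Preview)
The paper itself does not prove this theorem --- it simply cites \citet{gopalan2024omnipredictors} --- so there is no in-paper argument to compare against directly. However, the paper \emph{does} reproduce and extend the Gopalan et al.\ machinery in its treatment of Leontief functions (Proposition~\ref{thm:basis-Leontief} and the surrounding lemmas), and from that one can read off what the one-dimensional argument actually is.

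Your starting point is correct and matches theirs: the hinge representation $f(y)=f(0)+f'(0^+)y+\int (y-t)_+\,d\mu(t)$ with $\mu([0,1])\le 2$, and a uniform $\delta$-grid of hinges giving an $(O(1/\delta),O(1),\delta)$ basis. But your description of how the $O(1/\delta)$ is improved to $\tilde O(\delta^{-2/3})$ is not what they do. The mechanism is \emph{not} a curvature-adaptive multi-scale grid with Cauchy--Schwarz/Bronshtein balancing; that style of argument controls the number of \emph{adaptively placed} breakpoints for a single $f$, and it is not clear how it yields a small \emph{fixed} basis with $\|r\|_1=O(1)$ --- precisely because, as you note, a single atom at an unknown location seems to force fine resolution everywhere.

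The actual device is algebraic. One keeps a coarse grid of ReLUs at spacing $s$, and writes any intermediate ReLU as a coarse-grid ReLU plus a telescoping sum of at most $s$ interval indicators (the $d=1$ case of Lemma~\ref{lem:relu-to-hyperrectangle}). Interval indicators are decomposed into $O(\log m)$ dyadic interval indicators, and then the indicators of all $N$ dyadic intervals at a fixed scale are approximated \emph{simultaneously} via Alon's low-rank perturbation of the identity: there is an $N\times K$ matrix $V$ with $K=O(\log N/\mu^2)$ and $VV^\top\approx I$ entrywise to error $\mu$ (Lemma~\ref{lem:perturbed-identity-upper}). The basis functions are the columns of $V$ read as functions of $y$; the coefficients come from the rows. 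Balancing $s$, $\mu$, and the number of dyadic scales is what produces the $\delta^{-2/3}$ exponent and the polylogarithmic factor. The $\lambda=O(1)$ survives because for convex $1$-Lipschitz $f$ the ReLU coefficients are nonnegative second differences summing to $\mu([0,1])\le 2$ --- this is exactly the place where convexity buys you something over the generic $\ell_1$ bound in Lemma~\ref{lem:mrelu-coeffs}.

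So the gap in your proposal is that the step you flagged as hard and planned to lift is not the argument you sketched; the sketch you gave (geometric scales, curvature thresholds) does not obviously lead anywhere without the Alon trick. If you want to understand rather than cite, the lemma to internalize is Lemma~\ref{lem:perturbed-identity-upper}.
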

\fi

\ifarxiv
\begin{remark}
    \cite{gopalan2024omnipredictors} consider basis functions whose range are $[-1,1]$ rather than $[0,1]$. However, these ranges are equivalent up to scaling and shifting of the basis functions. Since we consider $\cY = [0,1]^d$ throughout this paper, we require the range of the basis functions to be $[0,1]$ for simplicity.
\end{remark}
\fi

\paragraph{$L$-Lipschitz functions}
Now turning to the high-dimensional setting, we begin by observing that for any $L$-Lipschitz function over the domain $\cY=[0,1]^d$, it is enough to predict a distribution over a label space discretized to precision $\delta/L$. This transforms the problem to a multi-class prediction problem with $(L/\delta)^d$ labels. Thus, the family of all $L$-Lipschitz functions $\cF^d_{L}$ admits a simple (approximate) basis: indicator functions over sufficiently many gridpoints of $\cY$.  

\ifarxiv
\begin{proposition}\label{prop:basis-lipschitz}
    The basis $\mathcal{S} = \{ \1[y_1,...,y_d = \delta_{i_1},...,\delta_{i_d}] \}_{i_1,...,i_d\in \{1,\cdots,L/\delta\}}$ is a $O\left(\left(\frac{L}{\delta}\right)^d, 1, \delta\right)$-approximate basis for the family of all $L$-Lipschitz functions $\cF^d_L$ over the domain $[0,1]^d$.
\end{proposition}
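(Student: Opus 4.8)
The plan is to take $\cS$ to be the indicators of the cells of a uniform grid on $[0,1]^d$, so that the linear combination $\sum_i r_i s_i$ is exactly the ``rounded'' version $f\circ\Round$ of the target function. Concretely, I would fix a grid $G\subseteq[0,1]^d$ whose coordinates are multiples of $\delta/L$ (so $|G| = \lceil L/\delta\rceil^{d} = O((L/\delta)^d)$), and let $\Round:[0,1]^d\to G$ map $y$ to a nearest grid point, breaking ties by a fixed deterministic rule (e.g.\ lexicographically) so that $\Round$ is a well-defined function. For each $g\in G$ set $s_g(y) = \1[\Round(y)=g]$. These are $\{0,1\}$-valued, hence have range in $[0,1]$ as Definition \ref{def:approximate-basis} requires, and they form a partition of unity: $\sum_{g\in G} s_g(y) = 1$ for every $y$ (the extra function $s_0$ in the definition can be taken to be anything, e.g.\ the constant $1$, and is not used).

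Next I would verify the approximation clause. Given $f\in\cF^d_L$, set $r_0 = 0$ and $r_g = f(g)$ for $g\in G$, so that $\sum_{g} r_g s_g(y) = f(\Round(y))$. Since each coordinate of $\Round(y)$ is within $\delta/(2L)$ of $y$, we have $\|\Round(y)-y\|_\infty \le \delta/(2L)$, and $L$-Lipschitzness in the $L_\infty$ norm gives $|f(\Round(y)) - f(y)| \le L\cdot \delta/(2L) \le \delta$, which is the required bound. (If one instead uses floor-rounding, matching how the basis is written in the statement, the per-coordinate error is $\delta/L$ and the bound is still $\le\delta$.)

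Finally I would check the Lipschitz-in-basis condition with constant $\lambda = 1$. Take any $y,y'\in\cY$. If $\Round(y) = \Round(y')$ then $s_g(y) = s_g(y')$ for all $g$ and both sides of the inequality in Definition \ref{def:approximate-basis} are $0$. Otherwise $\Round(y)\ne\Round(y')$, so the basis function indexed by $\Round(y)$ equals $1$ at $y$ and $0$ at $y'$, giving $\max_g|s_g(y)-s_g(y')| = 1$; meanwhile $\big|\sum_g r_g(s_g(y)-s_g(y'))\big| = |f(\Round(y)) - f(\Round(y'))| \le 1$ because $f$ here is a loss $\ell(a,\cdot)$ valued in $[0,1]$. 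Hence $\sum_g r_g s_g$ is $1$-Lipschitz in the basis values, and combined with $|G| = O((L/\delta)^d)$ and error $\le\delta$ this is exactly a $(O((L/\delta)^d),1,\delta)$-approximate basis.

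There is no deep obstacle in this argument; the only points that need care are (i) making $\Round$ a genuine function through a fixed tie-breaking rule, so that the indicators $\{s_g\}$ genuinely partition $[0,1]^d$, and (ii) using the \emph{boundedness} of the loss in $[0,1]$ (not merely its Lipschitzness) in order to pin the Lipschitz-in-basis constant to exactly $\lambda = 1$ rather than something growing with $L$ or $d$.
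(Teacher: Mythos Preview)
Your proposal is correct and follows essentially the same approach as the paper: both take the basis to be grid-cell indicators, set the coefficients to be the function values at grid points so that the linear combination is $f\circ\Round$, use $L$-Lipschitzness for the $\delta$ approximation error, and use the $[0,1]$-boundedness of the loss to get $\lambda=1$. Your treatment is in fact slightly cleaner than the paper's, since you make explicit that the indicators form a partition of unity via a fixed tie-breaking rule and you flag that it is boundedness (not Lipschitzness) that pins $\lambda$ to $1$.
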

\fi

\newcommand{\proofpropbasislipschitz}{
    For any $f\in\cF^d_L$, define a piecewise constant function $f_\delta(y_1,\cdots,y_d) = f(\frac{\delta}{L}\cdot \Round(\frac{Ly_1}{\delta}), \cdots, \frac{\delta}{L}\cdot\Round(\frac{Ly_d}{\delta}))$. Here $\Round(x)$ is defined as the closest integer to $x$ in $\{\delta/L, 2\delta/L,\cdots,1\}$ for convenience in downstream analysis. Due to the $L$-Lipschitzness of $f$, we have $|f(y) - f_\delta(y)| \le \delta$ for any $y \in [0,1]^d$. Therefore, it suffices to $\delta$-approximate $f$ at the grids $\{(\delta i_1,\cdots,\delta i_d) \mid i_1,\cdots,i_d \in \{1,\cdots,L/\delta\}\}$. We have that:
    \[
    f_\delta(y_1,...,y_d)  = \sum_{i_1,...,i_d\in \{1,\cdots,L/\delta\}} f_\delta(i_1,...,i_d) \1[y_1,...,y_d = \delta_{i_1},...,\delta_{i_d}]
    \]
    Thus $\mathcal{S} = \{ \1[y_1,...,y_d = \delta_{i_1},...,\delta_{i_d}] \}_{i_1,...,i_d\in \{1,\cdots,L/\delta\}}$ is a basis for every $f_\delta$, and so is a $\delta$-approximate basis for $\cF^d_L$. Moreover, $|\mathcal{S}| = \left(\frac{L}{\delta}\right)^d$.

    It remains to compute the Lipschitz constant given by the basis transformation. Since $\1[y_1,...,y_d = \delta_{i_1},...,\delta_{i_d}] =1$ at exactly one configuration of $i_1,...,i_d$, we have that for any $y, y'\in[0,1]^d, y\neq y'$:
    \[
    \max_{i_1,...,i_d\in \{1,\cdots,L/\delta\}} \left|\1[y_1,...,y_d = \delta_{i_1},...,\delta_{i_d}] -  \1[y_1',...,y_d' = \delta_{i_1},...,\delta_{i_d}]\right| = 1
    \]
    Now, $f_\delta$ has range between 0 and 1, and so $|f_\delta(y_1,...,y_d) - f_\delta(y_1',...,y_d')| \leq 1$. Therefore, for any $y,y'\in[0,1]^d$
    \[
    |f_\delta(y_1,...,y_d) - f_\delta(y_1',...,y_d')| \leq \max_{i_1,...,i_d\in \{1,\cdots,L/\delta\}} \left|\1[y_1,...,y_d = \delta_{i_1},...,\delta_{i_d}] -  \1[y_1',...,y_d' = \delta_{i_1},...,\delta_{i_d}]\right|
    \]
    This completes the proof.
}
\ifarxiv
\begin{proof}
    \proofpropbasislipschitz
\end{proof}
\else
\fi

\ifarxiv
One might hope that for losses with additional structure beyond Lipschitzness, it is possible to find a better approximation. For the remainder of this section, we derive bases spanning some common families of loss functions. We begin with the observation that some familiar losses admit reasonably sized bases. 
\fi

%As a warmup, consider the cross entropy loss for multi-class prediction settings: $\ell(\hat{y}, y) = -\sum_{i=1}^d y_i \ln \hat{y}_i$ (here we should think of the prediction $\hat{y}$ as the action). 

\paragraph{$L_p$ losses}
Consider $L_p$ losses over $\cY = [0,1]^d$: $\ell(a, y) = \sum_{i=1}^d(a_i-y_i)^p$ for $p\in\mathbb{N}$ (here, the action---which we typically think of as the ``prediction" in a learning task---is also a $d$-dimensional vector). Recall that we can define the family $\cF^d_p$ by parameterizing each loss $\ell$ by $a$ --- i.e. $\cF^d_p$ consists of functions $f^\ell_a(y) = \ell(a,y)$ for every $\ell$ and $a$. $f^\ell_a$ has simple underlying linear structure that allows us to articulate a basis. Observe that $f^\ell_a$ expands as follows: 
\begin{align*}
    f^\ell_a = \sum_{i=1}^d (a_i-y_i)^p
    = \sum_{i=1}^d \sum_{k=0}^p {p\choose k} a_i^{p-k} (-y_i)^k
\end{align*}
Thus, the functions $s_{i,k}(y) = (-y_i)^k$ form a basis for $\cF^d_p$, where $r_{i,k}(a) = {p\choose k} a_i^{p-k}$. Therefore, the family of $L_p$ losses can be exactly described using at most $pd$ basis functions.

\ifarxiv
Formally, we have:
\begin{proposition}\label{prop:basis-lp-loss}
    The basis $\mathcal{S} = \{s_{i,k}: [0,1]^d\to[-1,1] \mid s_{i,k}(y) = (-y_i)^k \}_{i\in[d], k\in[p]}$ is a $\left(pd, O(p^{p+1}d)\right)$-basis for the family $\cF^d_p$ corresponding to $L_p$ losses\ifarxiv \footnote{It is common to take $p$ to be a small constant, e.g. $p=2$ gives the $L_2$ loss.}\fi.
\end{proposition}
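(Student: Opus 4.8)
The plan is to verify the two numbers in the claim — that the basis has size $pd$ and that the induced linear representation is $O(p^{p+1}d)$-Lipschitz in the $s_{i,k}(y)$ coordinates in the $L_\infty$ norm. The size claim is immediate: we exhibited the explicit expansion $f^\ell_a(y) = \sum_{i=1}^d\sum_{k=0}^p \binom{p}{k} a_i^{p-k}(-y_i)^k$, and after collecting the $k=0$ terms into the constant $r_0 = \sum_{i=1}^d a_i^p$, the remaining basis functions are $s_{i,k}(y) = (-y_i)^k$ for $i \in [d]$ and $k \in \{1,\dots,p\}$, of which there are exactly $pd$. So the only real content is the Lipschitz constant.

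For the Lipschitz bound, I would invoke Lemma~\ref{lem:lip-constant}: it suffices to bound $\sum_{i \in [d], k \in [p]} |r_{i,k}(a)| = \sum_{i=1}^d \sum_{k=1}^p \binom{p}{k} |a_i|^{p-k}$ by $O(p^{p+1}d)$. Since $a_i \in [0,1]$ we have $|a_i|^{p-k} \le 1$, so each inner sum is at most $\sum_{k=1}^p \binom{p}{k} \le 2^p$. Summing over $i$ gives a bound of $d \cdot 2^p$. Now I need to reconcile $d2^p$ with the claimed $O(p^{p+1}d)$: since $2^p \le p^{p+1}$ for all $p \ge 1$ (indeed $2^p \le p^p \le p^{p+1}$ once $p \ge 2$, and the $p=1$ case is checked directly, $2 \le 1$ being false — so one should instead note $2^p = O(p^{p+1})$ holds with an absolute constant, as $2^p/p^{p+1} \to 0$), the bound $d2^p$ is indeed $O(p^{p+1}d)$, in fact a cruder bound than claimed. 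I would simply state the sharper $d2^p$ bound and note it implies the table entry.

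One subtlety to address is that the basis functions as written take values in $[-1,1]$ (since $(-y_i)^k$ can be negative), whereas Definition~\ref{def:basis} asks for range $[0,1]$; as the earlier remark about \citet{gopalan2024omnipredictors}'s construction notes, this is harmless up to an affine reparametrization $s_{i,k} \mapsto (s_{i,k}+1)/2$, which only rescales the coefficients by a factor of $2$ and shifts the constant term — at most doubling the sum of absolute coefficients, hence still $O(p^{p+1}d)$. I would mention this reparametrization in one sentence rather than carrying it through, or alternatively just work with the $[-1,1]$ version as the proposition statement does.

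There is no real obstacle here; the proof is a short calculation. The only thing to be a little careful about is the exact constant and making sure the $L_\infty$-Lipschitz notion in Lemma~\ref{lem:lip-constant} is the one being used (it is: the coordinates being perturbed are the $s_{i,k}(y)$ values, each lying in a bounded interval). I would present it as: (i) recall the expansion and read off the coefficient functions $r_{i,k}(a) = \binom{p}{k}a_i^{p-k}$ and constant $r_0(a) = \sum_i a_i^p$; (ii) bound $\sum_{i,k}|r_{i,k}(a)| \le \sum_{i=1}^d \sum_{k=1}^p \binom{p}{k} \le d\,2^p = O(p^{p+1}d)$ using $a_i \in [0,1]$; (iii) apply Lemma~\ref{lem:lip-constant} to conclude the representation is $(d2^p)$-Lipschitz in the $s_{i,k}$ coordinates in $L_\infty$, and count $|\mathcal{S}| = pd$.
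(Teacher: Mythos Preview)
Your proof is correct and follows essentially the same approach as the paper: expand via the binomial theorem, bound $\sum_{i,k}|r_{i,k}(a)|$, and invoke Lemma~\ref{lem:lip-constant}. The paper's proof is terser---it simply asserts $\sum_{i=1}^d\sum_{k=0}^p \binom{p}{k}|a_i|^{p-k} = O(p^{p+1}d)$ without spelling out the $d\cdot 2^p$ intermediate bound or the range-reparametrization remark---but your version fills in exactly the details one would want.
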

\fi

\newcommand{\proofpropbasislploss}{
    The basis construction follows from the binomial expansion of $L_p$ losses. The Lipschitz constant follows from the fact that $\sum_{i=1}^d\sum_{k=0}^p \left| {p\choose k} a_i^{p-k} \right| = O(p^{p+1}d)$, and Lemma \ref{lem:lip-constant}.
}
\ifarxiv
\begin{proof}
    \proofpropbasislploss
\end{proof}
\else
\fi

\paragraph{Functions with a linear aggregation of coordinate‐wise nonlinearities}
We next consider general functional forms of losses that---like in the case of $L_p$ losses above---offer underlying linear structure. Let $\cF^d_{\Omega,g}$ denote the family of functions $f:[0,1]^d\to\R$ where for every $f\in\cF^d_{\Omega,g}$, there exists $\{r_i\in\R\}_{i=1}^d$, $g: [0,1] \to \R$, and $\omega\in\Omega$, $\omega:\R\to\R$, such that:
\[
f(y) = \omega\left(\sum_{i=1}^d r_i g(y_i)\right)
\]
That is, $f$ is the composition of a function $\omega$ with a basis transformation $g$ of $y$. 
\ifarxiv
If $g$ is the identity function, then $f$ is a transformation $\omega$ of a linear function of $y$.
\fi
Below we show how to bound the approximate dimension of families $\cF^d_{\Omega,g}$ for several broad classes of $\Omega$, which we will see capture well-known losses in economics. In what follows, we assume that every $|r_i|\leq R$ and $|\sum_{i=1}^d r_i g(y_i)| \leq C$ for constants $R, C$.

Consider the family $\cF^d_{\Omega_{\beta,g}}$ where $\Omega_{\beta,g}$ is the collection of \textit{monomials} of degree $\beta$ with an underlying basis representation given by $g$ --- i.e. for every $f\in \cF^d_{\Omega_{\beta,g}}$, there is a coefficient $A\in\R^+$ such that $f$ can be written as:
\[
f(y) = A \left(\sum_{i=1}^d r_i g(y_i)\right)^\beta
\]
where $\beta \in \mathbb{Z}^+$ is an integer exponent. Most straightfowardly, when $g$ is the identity function, the class contains losses of the form $(\ell_L(a, y))^\beta$, where $\ell_L$ is a linear in $y$. It also includes special cases of the Constant Elasticity of Substitution function \citep{ces}, given by $f(y) = \left( \sum_{i=1}^d a_i y_i^\rho \right)^{v/\rho}$---in particular, when $v/\rho$ is integer-valued.

We show that $\cF^d_{\Omega_{\beta,g}}$ admits a succinct basis. The key idea is that each function in the family can be decomposed into a polynomial function of the underlying basis $g$, and so a basis for $\cF^d_{\Omega_{\beta,g}}$ suffices to include polynomials of $g$.

\ifarxiv
\begin{proposition}\label{prop:basis-mono}
    Let $\Omega_{\beta,g}$ be the collection of monomials of degree $\beta$, $\omega(x) = Ax^\beta$ for $A\in\R^+$. Then,
    $\mathcal{S} = \{g(y_{i_1}) \cdots g(y_{i_\beta})\}_{i_1,...,i_\beta \in [d]}$ is a $\left(d^\beta,A (Rd)^\beta\right)$-basis for the family
    $\cF^d_{\Omega_{\beta,g}}$. 
\end{proposition}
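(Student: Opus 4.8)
The plan is to expand each $f \in \cF^d_{\Omega_{\beta,g}}$ as an explicit linear combination of the proposed basis functions by brute-force multiplication, and then simply read off the size of the basis and its Lipschitz constant. Fix $f \in \cF^d_{\Omega_{\beta,g}}$; by definition there are coefficients $\{r_i\}_{i=1}^d$ with $|r_i| \le R$ and a constant $A \in \R^+$ with $f(y) = A\left(\sum_{i=1}^d r_i g(y_i)\right)^\beta$. Writing $z_i := r_i g(y_i)$ and using the elementary identity $\left(\sum_{i=1}^d z_i\right)^\beta = \sum_{(i_1,\dots,i_\beta) \in [d]^\beta} z_{i_1} \cdots z_{i_\beta}$ (the sum ranging over all ordered $\beta$-tuples of indices), we obtain
\[
f(y) = A \sum_{(i_1,\dots,i_\beta) \in [d]^\beta} \left(r_{i_1} \cdots r_{i_\beta}\right)\, g(y_{i_1}) \cdots g(y_{i_\beta}).
\]
This exhibits $f$ as a linear combination (with $r_0 = 0$) of the functions $s_{i_1,\dots,i_\beta}(y) = g(y_{i_1}) \cdots g(y_{i_\beta})$ indexed by $[d]^\beta$, with coefficient $A\, r_{i_1} \cdots r_{i_\beta}$ on the $(i_1,\dots,i_\beta)$ term. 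Since there are $d^\beta$ such index tuples, $|\mathcal{S}| \le d^\beta$ (tuples that are permutations of one another yield the same basis function, so the count is only loose in our favor — we may keep $\mathcal{S}$ as an indexed family of size $d^\beta$ or collapse repeats).

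Next I would bound the Lipschitz constant by summing absolute values of the coefficients:
\[
\sum_{(i_1,\dots,i_\beta) \in [d]^\beta} \left| A\, r_{i_1} \cdots r_{i_\beta} \right| = A \left(\sum_{i=1}^d |r_i|\right)^\beta \le A (dR)^\beta,
\]
using $|r_i| \le R$ across the $d$ coordinates. By Lemma \ref{lem:lip-constant} (whose proof uses only the triangle inequality and the bound $|t_i - t_i'| \le \max_j |t_j - t_j'|$, not the $[0,1]$ range of its arguments), the function $r_0 + \sum_i r_i s_i(y)$ is then $A(Rd)^\beta$-Lipschitz in $(s_1(y),\dots,s_n(y))$ in the $L_\infty$ norm, which is precisely the Lipschitz condition demanded by Definition \ref{def:basis}. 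Combining the two bounds shows $\mathcal{S}$ is a $\left(d^\beta, A(Rd)^\beta\right)$-basis for $\cF^d_{\Omega_{\beta,g}}$.

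I do not expect a serious obstacle — the argument is a direct computation — but two minor points should be handled cleanly. First, Definition \ref{def:basis} nominally requires each $s_i$ to map into $[0,1]$, whereas $g$ (hence products of $g$'s) may have a different bounded range; as with the basis of \citet{gopalan2024omnipredictors}, this is resolved by an affine rescaling of the basis functions, which changes $\lambda$ only by a constant factor and leaves the stated asymptotics intact (alternatively one adopts the standing convention that $g$ and its relevant products are normalized to $[0,1]$). Second, one must keep the $\beta$-fold expansion over \emph{ordered} tuples so that the coefficient bookkeeping matches the claimed size $d^\beta$; collapsing to multisets would introduce multinomial coefficients and a smaller-but-messier basis, which is unnecessary for the bound as stated.
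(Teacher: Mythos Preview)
Your proof is correct and follows essentially the same approach as the paper: expand the $\beta$-fold product over ordered index tuples, read off the $d^\beta$ basis functions, bound the sum of absolute coefficients by $A(Rd)^\beta$, and invoke Lemma \ref{lem:lip-constant}. Your extra remarks about the $[0,1]$ range and ordered-versus-unordered tuples are reasonable caveats but do not change the argument.
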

\fi

\newcommand{\proofpropbasismono}{
    We can expand any $f\in \cF^d_{\Omega_{\beta,g}}$ as:
    \begin{align*}
        f(y) = A \left(\sum_{i=1}^d r_i g(y_i)\right)^\beta 
        = A \left(\sum_{i_1,...,i_\beta \in [d]} r_{i_1} \cdots r_{i_\beta} \cdot g(y_{i_1}) \cdots g(y_{i_\beta}) \right)
    \end{align*}
    It follows that the set of functions $\{g(y_{i_1}) \cdots g(y_{i_\beta})\}_{i_1,...,i_\beta \in [d]}$ spans any $f\in \cF^d_{\Omega_{\beta,g}}$. In particular, there are $d^\beta$ such functions. Moreover, we have that $A \sum_{i_1,...,i_\beta \in [d]} \left| r_{i_1} \cdots r_{i_\beta} \right| \leq A \sum_{i_1,...,i_\beta \in [d]} |r_{i_1}| \cdots |r_{i_\beta}| \leq A (Rd)^\beta$. Thus, combined with Lemma \ref{lem:lip-constant}, this proves the proposition.
}
\ifarxiv
\begin{proof}
    \proofpropbasismono
\end{proof}
\else
\fi

Next we consider the family $\cF^d_{\Omega_{\text{exp},g}}$ where $\Omega_{\text{exp},g}$ is a family of exponential functions $\omega(x) = -Ae^x$. That is, for every $f \in \cF^d_{\Omega_{\text{exp},g}}$, there exists $A\in\R^+$ such that we can write:
\[
f(y) = -A\exp\left( \sum_{i=1}^d r_i g(y_i) \right)
\]
This function family includes the well-known Cobb-Douglas production function in economics\footnote{\label{fn:myfootnote}Both the Cobb-Douglas production function and the Leontief production function discussed later are typically defined as utility functions; we give equivalent formulations as loss functions.}, given by $f(y) = - A \prod_{i=1}^d y_i^{\alpha_i} = - A \exp\left( \sum_{i=1}^d \alpha_i \ln y_i \right)$, with $\alpha_i\in(0,1)$ \citep{cobbdouglas}. In this case, $g(y_i) = \ln y_i$.

\begin{remark}
    As stated, the functions we consider might have range outside of $[0,1]$, but we can handle these functions nonetheless by introducing a constant offset, i.e. a coefficient $r_0$, so that they have range between $[0,1]$. Note that this offset preserves any $(n,\lambda, \delta)$-approximation, since $r_0$ does not affect the Lipschitz constant $\lambda$. 
\end{remark}

Like above, we will take advantage of the underlying basis $g$ and construct a basis for $f$ consisting of polynomials of $g$. However, since $\omega$ is no longer a polynomial function itself, we will first appeal to a Taylor approximation of $\omega$. We will show that the Taylor series of $\omega$ converge sufficiently rapidly, allowing us to construct a basis of reasonable size.
\ifarxiv
As a reminder, the $k^{th}$ degree Taylor expansion of a function $h(x)$ at the point $a$ is given by $h_k(x) = \sum_{n=0}^k \frac{h^{(n)}(a)}{n!} (x-a)^n$. The following lemma states the approximation error of a finite Taylor expansion.

\begin{lemma}[Taylor's Theorem]\label{lem:taylors}
    Suppose $h^{(k)}$ exists and is continuous on the interval between $x$ and $a$. Then, $h(x) - h_k(x) = \frac{h^{(k+1)}(z)}{(k+1)!} (x-a)^{k+1}$ for some value $z$ between $x$ and $a$, where $h_k(x)$ is the $k^{th}$ degree Taylor expansion of the function $h(x)$ at the point $a$. 
\end{lemma}
\fi

\ifarxiv
\begin{proposition}\label{prop:basis-exp}
    Let $\Omega_{\text{exp},g}$ be a collection of exponential functions $\omega(x) = -Ae^x$ for $A\in\R^+$. Define the basis: $$\mathcal{S} = \cup_{n=1}^{k} \{ g(y_{i_1}) \cdots g(y_{i_n})\}_{i_1,...,i_n\in[d]}$$ Then, $\mathcal{S}$ is a $\left(O( (1/\delta)^{\ln d}), Ae^{Rd}, \delta \right)$-approximate basis for the family $\cF^d_{\Omega_{\text{exp},g}}$, for $k=O(\ln(1/\delta))$ and $\delta$ sufficiently small (at most a fixed constant).
\end{proposition}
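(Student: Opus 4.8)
The plan is to approximate the exponential $\omega(x) = -Ae^x$ by its degree-$k$ Taylor polynomial around $0$, and then expand each monomial $x^n = \left(\sum_{i=1}^d r_i g(y_i)\right)^n$ multinomially, exactly as in the proof of Proposition \ref{prop:basis-mono}. First I would write $f(y) = -A\exp\left(\sum_{i=1}^d r_i g(y_i)\right)$ and set $x = \sum_{i=1}^d r_i g(y_i)$, so that $|x| \le C \le Rd$ by the standing assumption $|\sum_i r_i g(y_i)| \le C$ (and $C \le Rd$ since $|r_i| \le R$ and $g$ has range in $[0,1]$). Let $\omega_k$ be the $k$th degree Taylor expansion of $-Ae^x$ at $0$. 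By Lemma \ref{lem:taylors}, the remainder is $\frac{\omega^{(k+1)}(z)}{(k+1)!} x^{k+1}$ for some $z$ between $0$ and $x$, which is bounded in absolute value by $\frac{A e^{Rd} (Rd)^{k+1}}{(k+1)!}$. Using the crude bound $(k+1)! \ge \left(\frac{k+1}{e}\right)^{k+1}$, this remainder is at most $A e^{Rd}\left(\frac{eRd}{k+1}\right)^{k+1}$, which is $\le \delta$ once $k+1 \ge 2eRd$ and $k = \Theta\left(\frac{\ln(Ae^{Rd}/\delta)}{\ln\ln(\cdots)}\right)$; treating $A, R, d$ as constants with respect to $\delta$, this gives $k = O(\ln(1/\delta))$ for $\delta$ below a fixed constant.

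Next I would expand the truncated polynomial: $\omega_k(x) = \sum_{n=0}^k c_n x^n$ with $c_n = \frac{-A}{n!}$, and for each $n$, $x^n = \left(\sum_{i=1}^d r_i g(y_i)\right)^n = \sum_{i_1,\dots,i_n \in [d]} r_{i_1}\cdots r_{i_n} \, g(y_{i_1})\cdots g(y_{i_n})$. Hence the proposed basis $\mathcal{S} = \bigcup_{n=1}^k \{g(y_{i_1})\cdots g(y_{i_n})\}_{i_1,\dots,i_n \in [d]}$ (together with the constant term absorbed into $r_0$) linearly spans $\omega_k(x)$, and therefore $\delta$-approximately spans every $f \in \cF^d_{\Omega_{\text{exp},g}}$. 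The number of basis functions is $\sum_{n=1}^k d^n = O(d^k)$, and with $k = O(\ln(1/\delta))$ this is $d^{O(\ln(1/\delta))} = O\left((1/\delta)^{\ln d}\right)$ as claimed (up to the constant hidden in the exponent, matching the paper's convention).

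Finally I would bound the Lipschitz constant $\lambda$ via Lemma \ref{lem:lip-constant}: the sum of absolute values of all the coefficients is $\sum_{n=1}^k |c_n| \sum_{i_1,\dots,i_n}|r_{i_1}|\cdots|r_{i_n}| \le \sum_{n=1}^k \frac{A}{n!}(Rd)^n \le A\sum_{n=0}^\infty \frac{(Rd)^n}{n!} = A e^{Rd}$, so the linear combination over $\mathcal{S}$ is $A e^{Rd}$-Lipschitz in the $s_i(y)$ in the $L_\infty$ norm. The main obstacle is getting the truncation degree $k$ right: one has to be careful that the factor $e^{Rd}$ appearing in the remainder bound (from $\omega^{(k+1)}(z) = -Ae^z$ with $z$ possibly as large as $Rd$) is treated as a constant, so that $k$ remains $O(\ln(1/\delta))$ rather than blowing up; the rest is bookkeeping. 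It is worth remarking that the $e^{Rd}$ dependence in $\lambda$ is what makes the downstream sample-complexity bound for Cobb-Douglas only pseudo-polynomial, consistent with the claim in the introduction.
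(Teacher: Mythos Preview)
Your proposal is correct and follows essentially the same approach as the paper: Taylor-expand $\omega(x)=-Ae^x$ to degree $k$ at $0$, expand each $x^n$ multinomially over the $g(y_i)$'s, bound the remainder via Lemma~\ref{lem:taylors}, and bound the Lipschitz constant via Lemma~\ref{lem:lip-constant} using $\sum_{n\ge 0}\frac{A(Rd)^n}{n!}=Ae^{Rd}$. The only cosmetic difference is that the paper uses the assumed constant $C$ (with $|\sum_i r_i g(y_i)|\le C$) in the remainder bound rather than $Rd$, giving the slightly tighter $|f-\hat f|\le \frac{Ae^C C^{k+1}}{(k+1)!}$ before concluding $k=O(\ln(1/\delta))$; your choice of $Rd$ is a valid upper bound and leads to the same conclusion once $A,R,d$ are treated as constants.
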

\fi

\newcommand{\proofpropbasisexp}{
    Recall that for any $f\in\cF^d_{\Omega_{\text{exp},g}}$, we can write $f(y) = \omega\left(\sum_{i=1}^d r_i g(y_i)\right) = -A\exp\left(\sum_{i=1}^d r_i g(y_i)\right)$ for some $A\in\R^+$. We can thus approximate $f$ using a finite Taylor expansion of $\omega$. In particular, let $\hat{f}$ be the approximation to $f$ given by the $k^{th}$ degree Taylor expansion of $\omega$ at $a=0$:
    \begin{align*}
        \hat{f}(y) &\coloneqq \omega_k\left(\sum_{i=1}^d r_i g(y_i)\right) \\
        &= \sum_{n=0}^k \frac{\omega^{(n)}(0)}{n!} \left(\sum_{i=1}^d r_i g(y_i)\right)^n \\
        &= \sum_{n=0}^k \frac{\omega^{(n)}(0)}{n!} \sum_{i_1,...,i_n\in[d]} r_{i_1} \cdots r_{i_n} \cdot g(y_{i_i}) \cdots g(y_{i_n}) \\
        &= \sum_{n=0}^k \sum_{i_1,...,i_n\in[d]} \frac{\omega^{(n)}(0)}{n!} r_{i_1} \cdots r_{i_n} \cdot g(y_{i_i}) \cdots g(y_{i_n})
    \end{align*}
    Therefore, the set of functions $\mathcal{S} = \cup_{n=1}^k \{ g(y_{i_1}) \cdots g(y_{i_n})\}_{i_1,...,i_n\in[d]}$ approximately spans $\Omega_{\text{exp},g}$ and thus $\cF^d_{\Omega_{\text{exp},g}}$. By Lemma \ref{lem:taylors}, the approximation error is:
    \begin{align*}
        |f(y) - \hat{f}(y)| = \left|\omega\left(\sum_{i=1}^d r_i g(y_i)\right) - \omega_k\left(\sum_{i=1}^d r_i g(y_i)\right) \right| = \left| \frac{\omega^{(k+1)}(z)}{(k+1)!}\left(\sum_{i=1}^d r_i g(y_i)\right)^{k+1} \right|
    \end{align*}
    for some value $z$ between $0$ and $\sum_{i=1}^d r_i g(y_i)$. Now, using the assumption that $|\sum_{i=1}^d r_i g(y_i)| \leq C$, and the fact that $|\omega^{(k+1)}(z)| = Ae^z \leq Ae^C$, we can bound:
    \begin{align*}
        |f(y) - \hat{f}(y)| &\leq \frac{Ae^C}{(k+1)!}C^{k+1} \\
        &\leq \frac{Ae^C}{\sqrt{2\pi(k+1)}\left(\frac{k+1}{e}\right)^{k+1}}C^{k+1} \\ 
        &= \frac{Ae^C}{\sqrt{2\pi(k+1)}\left(\frac{k+1}{Ce}\right)^{k+1}}
    \end{align*}
    where in the second step we use Stirling's inequality: $n! \geq \sqrt{2\pi n}\left(\frac{n}{e}\right)^n$. It remains to verify that $k$ is large enough so that the approximation error is at most $\delta$, i.e. $|f(y) - \hat{f}(y)| \leq \delta$. Observe that we have the following:
    \begin{align*}
        & \frac{Ae^C}{\sqrt{2\pi(k+1)}\left(\frac{k+1}{Ce}\right)^{k+1}} \leq \delta \\
        \iff & \sqrt{2\pi(k+1)}\left(\frac{k+1}{Ce}\right)^{k+1} \geq \frac{Ae^C}{\delta} \\
        \iff & \frac{1}{2}(\ln(k+1)+\ln(2\pi)) + (k+1)(\ln(k+1)-\ln(Ce)) \geq \ln A + C + \ln\left(\frac{1}{\delta}\right) 
    \end{align*}
    Now, the LHS is bounded below by $(k+1)(\ln(k+1)-\ln(Ce)) \geq \Omega(k\ln k)$. Thus taking $k=O(\ln(1/\delta))$ suffices to bound the approximation error by $\delta$. So, we can conclude that the size of the basis is $|\mathcal{S}| = d+d^2+...+d^k = O(d^k) = O\left( d^{\ln (1/\delta)} \right) = O\left( (1/\delta)^{\ln d} \right)$.

    By Lemma \ref{lem:lip-constant}, it remains to compute the magnitude of coefficients. We have that:
    \begin{align*}
        \sum_{n=0}^k \sum_{i_1,...,i_n\in[d]} \left|\frac{\omega^{(n)}(0)}{n!} r_{i_1} \cdots r_{i_n}\right| \leq \sum_{n=0}^k A\cdot \frac{(Rd)^n}{n!} \leq \sum_{n=0}^\infty A\cdot \frac{(Rd)^n}{n!} = Ae^{Rd}        
        % \leq O\left(\sum_{n=0}^k \left(\frac{eRd}{n}\right)^n \right) \leq O\left( \min\left\{ k e^{Rd}, k\left(\frac{d}{k}\right)^k \right\}\right) 
    \end{align*}
    Here, we first use the assumption that each $|r_{i_j}| \leq R$ and the fact that $|\omega^{(n)}(0)| = Ae^0 = A$ for all $n$. Then, we use the Taylor expansion for $e^{Rd}$. This completes the proof.
    % The second step uses Stirling's inequality. The third step follows from the fact that $(eRd/n)^n$ is maximized at $n=Rd$. The claim then follows by substituting our setting of $k$.
}
\ifarxiv
\begin{proof}
    \proofpropbasisexp
\end{proof}
\else
\fi

\paragraph{Leontief production functions}
Next we consider $\cF_\Leontief^d$, the family of Leontief production functions
% \hyperref[fn:myfootnote]{\textsuperscript{\ref{fn:myfootnote}}} 
that are $L$-Lipschitz on $[0,1]^d$ --- i.e., for every $f \in \cF_\Leontief^d$, there exists $a_1,\cdots,a_d \ge 1/L$ such that we can write:
\[
    f(y) = -\min_{1 \le j \le d}\{y_j/a_j\}
\]

%To construct an approximate basis for $\cF_\Leontief^d$, 
% We extend techniques developed by \citet{gopalan2024omnipredictors} for 1-dimensional convex 1-Lipschitz functions to the $d$-dimensional setting. 

\ifarxiv
\begin{proposition}\label{thm:basis-Leontief}
    For any $\delta>0$, there exists a $\left( O\left( \frac{d^4c^{d}\ln^{3d+1}(1/\delta)}{\delta^{(6d+8)/(d+2)}} \right), O\Big( \frac{Ld^{5/2}c^{d}\ln^{2d+1/2}(1/\delta)}{\delta^{(5d+6)/(d+2)}} \Big), 3L\delta \right)$-approximate basis for $\cF_\Leontief^d$, where $c$ is an universal constant.
\end{proposition}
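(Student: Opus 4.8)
The plan is to build a surrogate for $f(y)=-\min_{1\le j\le d}y_j/a_j$ that is an exact finite linear combination of fixed, $a$‑independent functions of $y$, using the layer‑cake (co‑area) identity
\[
-f(y)\;=\;\min_j \tfrac{y_j}{a_j}\;=\;\int_0^{\infty}\prod_{j=1}^d \1\!\big[y_j> t a_j\big]\,dt ,
\]
which is valid because each $y_j/a_j\ge 0$ and whose integrand is supported on $t\in[0,\min_j a_j^{-1}]\subseteq[0,L]$. Three quantities must be tracked throughout: the uniform approximation error (target $\le 3L\delta$), the cardinality of the basis, and the $L_\infty$‑Lipschitz constant $\lambda$ of the linear combination, the last of which I would bound by $\sum_i|r_i|$ via Lemma~\ref{lem:lip-constant}. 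The integrand $t\mapsto\prod_j\1[y_j>ta_j]=\1[t<\min_j y_j/a_j]$ is, for each $y$, a single downward step of height $1$, which is what makes the discretization errors controllable.

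Concretely, I would discretize $t$ into $\lceil L/\gamma\rceil$ points spaced $\gamma$ apart; since the integrand lies in $[0,1]$ and is monotone in $t$, the Riemann‑sum error is at most $\gamma$. For each grid point $t_m$ the function $y\mapsto\prod_j\1[y_j>t_m a_j]$ is a product of coordinate‑threshold indicators, and I round each threshold $t_m a_j$ down to the nearest point of a common grid $\{0,\gamma',\dots,1\}$; a short total‑variation argument (using that for fixed $y$ the thresholds cross only finitely many grid lines and the step has unit height) bounds the extra error by $O(L\gamma'+\gamma)$. The resulting basis is $\cS=\big\{\prod_{j=1}^d\1[y_j>\theta_j]:\theta\in\{0,\gamma',\dots,1\}^d\big\}$, of size $O((1/\gamma')^d)$, with each element taking values in $\{0,1\}\subseteq[0,1]$, and the coefficient attached to a basis element is $\gamma$ times the number of $t_m$'s that round to it, so $\sum_i|r_i|=\gamma\lceil L/\gamma\rceil=O(L)$ and $\lambda=O(L)$. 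Taking $\gamma=\gamma'=\Theta(\delta)$ already gives an $\big(O((L/\delta)^d),O(L),O(L\delta)\big)$‑approximate basis — which is only the generic bound of Proposition~\ref{prop:basis-lipschitz}. To reach the sharper cardinality and (larger) Lipschitz constant in the statement, I would refine this into a multi‑resolution scheme that is coarse in $t$ and in the thresholds where $-f$ is nearly affine and fine only near its kinks, exploiting that $-f$ is \emph{convex} and piecewise‑linear: one feeds the one‑dimensional convex approximation of Theorem~\ref{thm:basis-convex-1d} (applied along the ``radial'' directions parametrized by $a$) into each coordinate and collects the $d$‑fold tensor structure, optimizing the per‑scale resolutions against $\delta$ so that the three error contributions (Riemann sum, threshold rounding, convex‑approximation) sum to $3L\delta$ while the cardinality and $\sum_i|r_i|$ come out in the claimed $\operatorname{poly}(d)\cdot c^{d}\cdot\ln^{\Theta(d)}(1/\delta)\cdot\delta^{-\Theta(1)}$ forms.

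The step I expect to be the main obstacle is precisely obtaining a cardinality whose power of $1/\delta$ does \emph{not} grow with $d$: the transparent layer‑cake construction pays $(1/\delta)^d$ because it resolves every coordinate to precision $\delta$ simultaneously, so the improvement must genuinely use the min‑structure — that $\min_j y_j/a_j$ is a one‑dimensional quantity taking only $O(d/\delta)$ distinct values once $y$ is discretized, and that $-f$ is convex with bounded‑variation gradient, so fine resolution is needed only in an essentially one‑dimensional ``direction'' rather than in the full cube. Making this rigorous while keeping the basis $a$‑independent (the threshold vectors are all parallel to the varying $a$) is the delicate point, and the accompanying control of $\sum_i|r_i|$ — which degrades as the coefficient vector must become more peaked to resolve the kinks, and is the source of the large $\delta^{-(5d+6)/(d+2)}$ Lipschitz factor — is the part I would expect to require the most careful bookkeeping.
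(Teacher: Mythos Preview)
Your proposal has a genuine gap at exactly the point you flag as ``the main obstacle.'' The concrete construction you carry out --- the layer-cake identity discretized to product-of-threshold indicators --- only recovers the generic $(L/\delta)^d$ bound of Proposition~\ref{prop:basis-lipschitz}, and the refinement you sketch (``feed the one-dimensional convex approximation along radial directions parametrized by $a$'') is not a proof: you correctly observe that the basis must be $a$-independent, and your sketch gives no mechanism to achieve this while still reducing the power of $1/\delta$ below $d$. Applying the 1D convex result ``into each coordinate and collect[ing] the $d$-fold tensor structure'' would again pay a $d$th power; the min-structure tells you the \emph{value} is one-dimensional but does not by itself let you \emph{represent} it with a one-dimensional number of $a$-independent basis functions.

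The paper's route is structurally different and supplies the two ingredients your argument is missing. First, it represents the discretized Leontief function as a linear combination of $\MReLU_{i_1,\dots,i_d}$ functions (a multivariate extension of ReLU) and shows, using the specific algebra of $\min$, that all but $O(c^d m)$ of the $(m+1)^d$ coefficients vanish, so that $\sum|c^g_{i_1,\dots,i_d}|\le 12^{d+1}m^3$ --- polynomial in $m=1/\delta$ rather than $m^d$. This sparsity lemma is the heart of the argument and has no analogue in your outline. Second, rather than taking all $\MReLU$'s as basis elements, the paper keeps a coarse subset $\mathcal R_s$ and writes differences of adjacent $\MReLU$'s as indicators of hyperrectangles, decomposes those into dyadic hyperrectangles, and then compresses the dyadic indicators using the Alon low-rank factorization of the identity matrix (rank $O(\ln N/\mu^2)$). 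It is this compression, combined with the coefficient sparsity, that yields a basis whose cardinality is $c^d\cdot\mathrm{poly}(d,\ln(1/\delta))\cdot(1/\delta)^{O(1)}$ with the claimed exponents. Your convexity-based multi-resolution idea does not substitute for either of these steps.
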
 
The remainder of the section is dedicated to proving Proposition \ref{thm:basis-Leontief}. 
\fi

\newcommand{\proofthmbasisLeontief}{
We directly extend techniques developed by \citet{gopalan2024omnipredictors} for 1-dimensional convex 1-Lipschitz functions to the $d$-dimensional setting. Our approach closely follows that of \citet{gopalan2024omnipredictors} and consists of constructing bases for several intermediate function classes. First, we consider a discretization of the function class. We then represent any discretized function using point indicator functions, which we show can be represented using MReLU functions (a multivariate extension of ReLU functions). Then, we construct a basis for MReLU functions using indicator functions of hyperrectangles (a multivariate extension of interval indicators). Finally, we construct a basis for hyperrectangles using a low-rank approximate factorization of the identity matrix. 

In fact, this approach gives us a general template to represent any non-linear Lipschitz multivariate function. For Leontief functions in particular, we will be able to find an upper bound to the sum of the coefficients in the representation that is better than, as we will see, the naive bound of $(1/\delta)^d$. We proceed with the argument below; missing proofs can be found in Appendix \ref{app:basis}.

For ease of notation in the proof, we will rewrite every $f\in\cF_\Leontief^d$ as the analogous \textit{utility} function (the negative loss function). First, observe that we can write a Leontief utility function in the following form:
\[
f = L \cdot \min_{1 \le j \le d}\left\{\frac{1}{a_jL} \cdot y_j\right\}
\]
where the labels $y_1,\cdots,y_d$ and their coefficients $\frac{1}{a_1L},\cdots,\frac{1}{a_dL}$ are all in $[0,1]$.
    
For notational convenience, we assume $\delta = 1/m$ where $m \in \bZ^+$. Our first step is to discretize and scale both the labels and their coefficients by multiplying them by $m$ and then rounding to integers. 
Let $\Round(x)$ represent the closest integer to $x$ in $\{1,\cdots,m-1\}$. For each $j \in \{1,\cdots,d\}$, let $b_j = \Round(\frac{1}{a_jL\delta}) \in \{1,\cdots,m-1\}$, $z_j = \Round(\frac{y_j}{\delta}) \in \{1,\cdots,m-1\}$. Define $g: \{1,\cdots,m-1\}^d \to \{1,\cdots,(m-1)^2\}$ as: 
\begin{align*}
    g(z_1,\cdots,z_d) = \min_{1 \le j \le d}\{b_jz_j\}
\end{align*}
And denote the family of $g$ as $\cG_\Leontief^d = \{g(z_1,\cdots,z_d) : b_1,\cdots,b_d \in \{1,\cdots,m-1\}\}$, which can be viewed as a scaled and discretized version of Leontief utility functions. We show that $g$ is close to the original Leontief utility function $f$ after scaling it back.
\begin{lemma}
    For any $y_1,\cdots,y_d \in [0,1]$ and Leontief utility function $f \in \cF_\Leontief^d$, the corresponding $g \in \cG_\Leontief^d$ constructed as described above satisfies:
    \[
    \left|f(y_1,\cdots,y_d) - L\delta^2 g\left(\Round\left(\frac{y_1}{\delta}\right),\cdots,\Round\left(\frac{y_d}{\delta}\right)\right)\right| \le 2L\delta
    \]
\end{lemma}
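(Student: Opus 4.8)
The plan is to collapse the whole statement to a single coordinate-wise estimate, using that the $\min$ operator is $1$-Lipschitz. Keeping the notation set up just before the lemma, write $c_j := \frac{1}{a_j L}$; the hypothesis $a_j \ge 1/L$ gives $c_j \in (0,1]$, and $f(y_1,\dots,y_d) = L\min_{1\le j\le d}\{c_j y_j\}$. Setting $z_j = \Round(y_j/\delta)$ and unwinding the definition of $g$,
\[
L\delta^2\, g\!\left(\Round\!\left(\tfrac{y_1}{\delta}\right),\dots,\Round\!\left(\tfrac{y_d}{\delta}\right)\right) = L\delta^2 \min_{1\le j\le d}\{b_j z_j\} = L\min_{1\le j\le d}\{(b_j\delta)(z_j\delta)\}.
\]
Since $x\mapsto \min_j x_j$ is $1$-Lipschitz in the $\ell_\infty$ norm, it then suffices to prove the coordinate-wise bound $\max_{1\le j\le d}\bigl|c_j y_j - (b_j\delta)(z_j\delta)\bigr| \le 2\delta$, and multiplying through by $L$ yields the claimed $2L\delta$.

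For the coordinate-wise bound I would split the product via the triangle inequality,
\[
\bigl|c_j y_j - (b_j\delta)(z_j\delta)\bigr| \;\le\; c_j\,\bigl|y_j - z_j\delta\bigr| + (z_j\delta)\,\bigl|c_j - b_j\delta\bigr|,
\]
and observe that both $c_j$ and $z_j\delta$ lie in $[0,1]$: the former because $a_j \ge 1/L$, the latter because $z_j \in \{1,\dots,m-1\}$ and $\delta = 1/m$. It remains to show that each of the two rounding errors $|y_j - z_j\delta|$ and $|c_j - b_j\delta|$ is at most $\delta$. For the first: $y_j/\delta \in [0,m]$, and $z_j$ is the nearest integer to $y_j/\delta$ inside $\{1,\dots,m-1\}$, so the rounding error $|y_j/\delta - z_j|$ is at most $\tfrac12$ unless $y_j/\delta < 1$ or $y_j/\delta > m-1$, in which case it is at most $1$; scaling by $\delta$ gives $|y_j - z_j\delta| \le \delta$ in every case. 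The same argument, applied to $c_j/\delta = \frac{m}{a_j L} \in (0,m]$ and $b_j = \Round(c_j/\delta)$, gives $|c_j - b_j\delta| \le \delta$. Plugging these two bounds in gives $\bigl|c_j y_j - (b_j\delta)(z_j\delta)\bigr| \le \delta + \delta = 2\delta$.

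The only place where the argument needs genuine care, rather than just the triangle inequality and the observation that $c_j, y_j, b_j\delta, z_j\delta$ all live in $[0,1]$, is the clamping behaviour of $\Round$ at the two endpoints of $\{1,\dots,m-1\}$: one must check that even when the true value lands in $[0,1]$ or $[m-1,m]$ and $\Round$ displaces it by almost a full unit, the rescaled error is still within $\delta$, and that $b_j$ and $z_j$ are always in range (which uses exactly $a_j \ge 1/L$ and $y_j \in [0,1]$). Apart from that, this is the full proof.
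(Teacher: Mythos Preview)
Your argument is correct and essentially identical to the paper's: both reduce to a coordinate-wise product bound via the $1$-Lipschitzness of $\min$, then split $c_j y_j - (b_j\delta)(z_j\delta)$ with the triangle inequality and use that each rounding error is at most $\delta$ while the remaining factors are at most $1$. The only cosmetic differences are that the paper works in the unscaled variables (bounding $|\tfrac{c_j}{\delta}\cdot\tfrac{y_j}{\delta} - b_j z_j|$ by $2m$) and uses the other standard decomposition of the product difference, pulling out $y_j$ and $b_j\delta$ instead of $c_j$ and $z_j\delta$.
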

\begin{proof}
    We note that for any $j \in \{1,\cdots,d\}$:
    \begin{align*}
    \left|\frac{1}{a_jL\delta} \cdot\frac{y_j}{\delta} - b_j\cdot\Round(\frac{y_j}{\delta})\right| 
    &\le \left|\frac{1}{a_jL\delta} \cdot\frac{y_j}{\delta} - b_j\cdot\frac{y_j}{\delta}\right| + \left|b_j \cdot\frac{y_j}{\delta} - b_j\cdot\Round(\frac{y_j}{\delta})\right| \\
    &= \left|\frac{1}{a_jL\delta} - b_j\right| \cdot\frac{y_j}{\delta} + b_j \cdot \left|\frac{y_j}{\delta} - \Round(\frac{y_j}{\delta})\right| \\
    &\le 1 \cdot m + m \cdot 1 \\
    &= 2m
\end{align*}
Due to the 1-Lipschitzness of the $\min$ function, we have:
\begin{align*}
    |f(y_1,\cdots,y_d) - L\delta^2 g(\Round(\frac{y_1}{\delta}),\cdots,\Round(\frac{y_d}{\delta}))| &= \left|L\min_{1 \le j \le d}\{\frac{1}{a_jL} \cdot y_j\} - L\min_{1 \le j \le d}\left\{\delta b_j\cdot\delta\Round(\frac{y_j}{\delta})\right\}\right| \\
    &= L\delta^2 \left|\min_{1 \le j \le d}\{\frac{1}{a_jL\delta} \cdot \frac{y_j}{\delta}\} - \min_{1 \le j \le d}\left\{b_j\cdot \Round(\frac{y_j}{\delta})\right\}\right| \\
    &\le L\delta^2 \cdot 2m \\
    &= 2L\delta
\end{align*}
\end{proof}
    
We will construct a 
%$\Big( O\left( d^4m^{(6d+8)/(d+2)}24^{2d}\ln^{3d+1}m \right)$, $O\Big( d^{5/2}m^{3+(4d+4)/(d+2)}24^{2d}\ln^{2d+1/2}m \Big)$, $m \Big)$
$m$-approximate basis for $\cG_\Leontief^d$, and show that it is also a 
% $\Big( O\left( d^4m^{(6d+8)/(d+2)}24^{2d}\ln^{3d+1}m \right)$, $O\Big( Ld^{5/2}m^{(5d+6)/(d+2)}24^{2d}\ln^{2d+1/2}m \Big)$, $3L\delta \Big)$
$3L\delta$-approximate basis for the Leontief utility functions $\cF_\Leontief^d$. 

%We will use MReLU functions, a multivariate extension of ReLU functions, as stepping stones to construct an approximate basis for $\cG_\Leontief^d$. 

As a starting point, any $g \in \cG_\Leontief^d$ can by straightforwardly represented by point indicator functions:
\begin{align*}
    g(z_1,\cdots,z_d) &= \sum_{i_1,\cdots,i_d \in \{1,\cdots,m-1\}} g(i_1,\cdots,i_d) \1[z_1=i_1,\cdots,z_d=i_d]
\end{align*}

Now we represent point indicator functions using MReLU functions. For any thresholds $i_1,\cdots,i_d \in \bZ$, the function $\MReLU_{i_1,\cdots,i_d}: \{1,\cdots,m-1\}^d \to \bZ$ is defined as: 
\[
\MReLU_{i_1,\cdots,i_d}(z_1,\cdots,z_d) = \max\{z_1-i_1, \cdots, z_d-i_d, 0\}
\]
MReLU functions are 1-Lipschitz due to the 1-Lipschitzness of the $\max$ function. When $d=1$, this reduces to the well known $\ReLU$ function, $\ReLU_i(z) = \max\{z-i,0\}$. 

Any indicator function $\1[z_1=i_1,\cdots,z_d=i_d]$ is a linear combination of MReLU functions whose thresholds are adjacent to $(i_1,\cdots,i_d)$. 

\begin{lemma}\label{lem:indicator-to-relu}
Suppose that $z_1,\cdots,z_d \in \{1,\cdots,m-1\}$, $i_1,\cdots,i_d \in \{1,\cdots,m-1\}$, we have:
\begin{align*}
    \1[z_1=i_1,\cdots,z_d=i_d] &= \sum_{\sigma_1,\cdots,\sigma_d \in \{0,1\}} (-1)^{d+\sigma_1+\cdots+\sigma_d} \MReLU_{i_1+\sigma_1,\cdots,i_d+\sigma_d}(z_1,\cdots,z_d) \\
    &\hspace{1em}+ \sum_{\sigma_1,\cdots,\sigma_d \in \{0,1\}} (-1)^{1+\sigma_1+\cdots+\sigma_d} \MReLU_{i_1-\sigma_1,\cdots,i_d-\sigma_d}(z_1,\cdots,z_d)
\end{align*}
\end{lemma}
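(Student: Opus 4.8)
The strategy is to rewrite each $\MReLU$ term on the right-hand side as a discrete ``layer-cake'' sum of indicators, substitute this into the two alternating sums over $\sigma\in\{0,1\}^d$, and then exploit the product structure of the resulting indicators so that the sum over $\sigma$ factors coordinatewise into one-dimensional differences. Each one-dimensional factor collapses to a single point indicator, after which the whole expression telescopes to $\1[z_1=i_1,\cdots,z_d=i_d]$. (For $d=1$ this recovers the familiar identity $\1[z=i]=\ReLU_{i-1}(z)-2\ReLU_i(z)+\ReLU_{i+1}(z)$.)

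First I would record the layer-cake identity: for $z_1,\cdots,z_d\in\{1,\cdots,m-1\}$ and any integer thresholds $j_1,\cdots,j_d$ that arise in the statement — including the shifted ones $i_k\pm1$, which may fall outside $\{1,\cdots,m-1\}$ but are still legitimate since $\MReLU$ is defined for all integer thresholds —
\[
\MReLU_{j_1,\cdots,j_d}(z_1,\cdots,z_d)=\sum_{t=1}^{m}\Big(1-\prod_{k=1}^{d}\1[j_k\ge z_k-t+1]\Big).
\]
This is immediate from the identity $v=\sum_{t=1}^{m}\1[v\ge t]$ for $v\in\{0,\cdots,m\}$ (which covers all values of $\MReLU$ arising here) together with the observation that, for $t\ge1$, $\max\{z_1-j_1,\cdots,z_d-j_d,0\}\ge t$ holds exactly when it is \emph{not} the case that $j_k\ge z_k-t+1$ for every $k$. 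I would also record the two one-dimensional difference identities, each of which is just $\1[a\ge b]-\1[a\ge b-1]=-\1[a=b-1]$ or $\1[a\ge b]-\1[a\ge b+1]=\1[a=b]$ taken with $a=i_k$:
\[
\sum_{\sigma_k\in\{0,1\}}(-1)^{\sigma_k}\1[i_k+\sigma_k\ge z_k-t+1]=-\1[i_k=z_k-t],\qquad\sum_{\sigma_k\in\{0,1\}}(-1)^{\sigma_k}\1[i_k-\sigma_k\ge z_k-t+1]=\1[i_k=z_k-t+1].
\]

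Next I would substitute the layer-cake identity into each of the two sums in the lemma (writing $|\sigma|=\sigma_1+\cdots+\sigma_d$) and interchange the two finite sums over $t$ and over $\sigma$. The constant term $1$ contributes a term proportional to $\sum_{\sigma\in\{0,1\}^d}(-1)^{|\sigma|}$, which vanishes since $d\ge1$. In the surviving term, $(-1)^{|\sigma|}\prod_k\1[\cdot]$ factors as $\prod_k\big((-1)^{\sigma_k}\1[\cdot]\big)$, so the sum over $\sigma\in\{0,1\}^d$ becomes a product over $k$ of the one-dimensional sums above. Applying the two difference identities, the ``$+$'' sum $\sum_{\sigma}(-1)^{d+|\sigma|}\MReLU_{i_1+\sigma_1,\cdots,i_d+\sigma_d}(z_1,\cdots,z_d)$ collapses to $-\sum_{t=1}^{m}\prod_{k=1}^d\1[i_k=z_k-t]$, and the ``$-$'' sum $\sum_{\sigma}(-1)^{1+|\sigma|}\MReLU_{i_1-\sigma_1,\cdots,i_d-\sigma_d}(z_1,\cdots,z_d)$ collapses to $+\sum_{t=1}^{m}\prod_{k=1}^d\1[i_k=z_k-t+1]$.

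Finally, $\prod_k\1[i_k=z_k-t]=1$ precisely when every difference $z_k-i_k$ equals the same integer $t$; since all these differences lie in $\{-(m-2),\cdots,m-2\}$, summing over $t\in\{1,\cdots,m\}$ produces $\1[z_1-i_1=\cdots=z_d-i_d\ge1]$, and likewise the second sum produces $\1[z_1-i_1=\cdots=z_d-i_d\ge0]$. Adding the two contributions gives $\1[z_1-i_1=\cdots=z_d-i_d\ge0]-\1[z_1-i_1=\cdots=z_d-i_d\ge1]=\1[z_1-i_1=\cdots=z_d-i_d=0]=\1[z_1=i_1,\cdots,z_d=i_d]$, as claimed. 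Everything here is elementary; the one place that needs care is the bookkeeping of the constant terms — individually they diverge when summed over all $t$, but they cancel under the alternating sum over $\sigma$, so one must either keep them paired or simply truncate the layer cake at $t=m$ — together with checking that the stated ranges of $z_k$, $i_k$ and the shifted thresholds make the truncated layer-cake expansion exact.
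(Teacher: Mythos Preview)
Your proof is correct and takes a genuinely different route from the paper. The paper verifies the identity by an exhaustive case analysis: it splits according to whether some $z_j-i_j\le-1$, whether the differences $z_j-i_j$ are all equal, and whether that common value is $0$ or $\ge1$, and in each case shows directly that the alternating $\MReLU$ sums collapse to the required indicator value. Your argument instead decomposes each $\MReLU$ via a layer-cake sum of threshold indicators, which lets the alternating sum over $\sigma\in\{0,1\}^d$ factor coordinatewise into one-dimensional differences; these collapse to point indicators, and the final telescoping step yields exactly the two ``diagonal'' indicators $\1[z-i\equiv\text{const}\ge0]$ and $\1[z-i\equiv\text{const}\ge1]$ whose difference is the target. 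In effect you have discovered the structural reason behind the paper's four cases: the diagonal condition $z_1-i_1=\cdots=z_d-i_d$ emerges automatically from the factorization, rather than being imposed by hand. The payoff of your approach is a uniform, caseless argument that scales cleanly with $d$; the paper's approach has the advantage of being entirely elementary (no auxiliary identities) and of making transparent, in each case, exactly which pairs of $\MReLU$ terms cancel.
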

For example, when $d=1$, Lemma \ref{lem:indicator-to-relu} gives us:
\begin{align*}
    \1[z=i] = \ReLU_{i+1}(z) - 2\ReLU_{i}(z) + \ReLU_{i-1}(z)
\end{align*}
And when $d=2$, we have:
\begin{align*}
    \1[z_1=i_1,z_2=i_2] &= \MReLU_{i_1+1,i_2+1}(z_1,z_2) - \MReLU_{i_1+1,i_2}(z_1,z_2) - \MReLU_{i_1,i_2+1}(z_1,z_2) \\ &\hspace{1em}+ \MReLU_{i_1-1,i_2}(z_1,z_2) + \MReLU_{i_1,i_2-1}(z_1,z_2) - \MReLU_{i_1-1,i_2-1}(z_1,z_2)
\end{align*}

% Recall that we can represent any $g \in \cG_\Leontief^d$ based on point indicator functions:
% \begin{align*}
%     g(z_1,\cdots,z_d) = \sum_{i_1,\cdots,i_d \in \{1,\cdots,m-1\}} g(i_1,\cdots,i_d) \1[z_1=i_1,\cdots,z_d=i_d]
% \end{align*}
Plugging Lemma \ref{lem:indicator-to-relu} into the above expression for $g$, we derive that:
\begin{align*}
    g(z_1,\cdots,z_d) &= \sum_{i_1,\cdots,i_d \in \{1,\cdots,m-1\}} g(i_1,\cdots,i_d) \Bigg[ \sum_{\sigma_1,\cdots,\sigma_d \in \{0,1\}} (-1)^{d+\sigma_1+\cdots+\sigma_d} \MReLU_{i_1+\sigma_1,\cdots,i_d+\sigma_d}(z_1,\cdots,z_d) \\
    &\hspace{13em}+ \sum_{\sigma_1,\cdots,\sigma_d \in \{0,1\}} (-1)^{1+\sigma_1+\cdots+\sigma_d} \MReLU_{i_1-\sigma_1,\cdots,i_d-\sigma_d}(z_1,\cdots,z_d) \Bigg]
\end{align*}

Rearranging terms, we derive a representation for $g$ based on $\MReLU$ functions.

\begin{corollary}\label{cor:representation-via-relu}
    For any $g \in \cG_\Leontief^d$, $z_1,\cdots,z_d \in \{1,\cdots,m-1\}$, we have
    \begin{align*}
        g(z_1,\cdots,z_d) = \sum_{i_1,\cdots,i_d \in \{0,\cdots,m\}} c_{i_1,\cdots,i_d}^g \MReLU_{i_1,\cdots,i_d}(z_1,\cdots,z_d)
    \end{align*}
    where
    \begin{align*}
        c_{i_1,\cdots,i_d}^g &= \sum_{\substack{\sigma_1,\cdots,\sigma_d \in \{0,1\} :\\ i_1-\sigma_1,\cdots,i_d-\sigma_d \in \{1,\cdots,m-1\}}} (-1)^{d+\sigma_1+\cdots+\sigma_d} g(i_1-\sigma_1,\cdots,i_d-\sigma_d) \\
        &\hspace{1em}+ \sum_{\substack{\sigma_1,\cdots,\sigma_d \in \{0,1\}:\\ i_1+\sigma_1,\cdots,i_d+\sigma_d \in \{1,\cdots,m-1\}}} (-1)^{1+\sigma_1+\cdots+\sigma_d} g(i_1+\sigma_1,\cdots,i_d+\sigma_d) \\
        &= \sum_{\substack{\sigma_1,\cdots,\sigma_d \in \{0,1\} :\\ i_1-\sigma_1,\cdots,i_d-\sigma_d \in \{1,\cdots,m-1\}}} (-1)^{d+\sigma_1+\cdots+\sigma_d} \min\{b_1(i_1-\sigma_1),\cdots,b_d(i_d-\sigma_d)\} \\
        &\hspace{1em}+ \sum_{\substack{\sigma_1,\cdots,\sigma_d \in \{0,1\}:\\ i_1+\sigma_1,\cdots,i_d+\sigma_d \in \{1,\cdots,m-1\}}} (-1)^{1+\sigma_1+\cdots+\sigma_d} \min\{b_1(i_1+\sigma_1),\cdots,b_d(i_d+\sigma_d)\}
    \end{align*}

\end{corollary}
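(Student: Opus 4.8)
The plan is to derive the claimed $\MReLU$-representation by a purely mechanical rearrangement of two facts already in hand: the point-indicator expansion $g(z_1,\cdots,z_d)=\sum_{i_1,\cdots,i_d\in\{1,\cdots,m-1\}} g(i_1,\cdots,i_d)\,\1[z_1=i_1,\cdots,z_d=i_d]$ established just above, and Lemma~\ref{lem:indicator-to-relu}, which writes each point indicator $\1[z=i]$ as a signed combination of the functions $\MReLU_{i+\sigma}$ and $\MReLU_{i-\sigma}$ over $\sigma\in\{0,1\}^d$. Substituting the latter into the former expresses $g$ as a finite double sum over $(i_1,\cdots,i_d)\in\{1,\cdots,m-1\}^d$ and $\sigma\in\{0,1\}^d$ of $\MReLU$ terms, and the entire content of the corollary is to collect like terms, that is, to add up all the contributions landing on a common threshold vector.

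First I would pin down the index set of the resulting representation: since $i$ ranges over $\{1,\cdots,m-1\}^d$ and $\sigma\in\{0,1\}^d$, the thresholds $i+\sigma$ lie in $\{1,\cdots,m\}^d$ and the thresholds $i-\sigma$ in $\{0,\cdots,m-1\}^d$, so every $\MReLU$ appearing has threshold vector in $\{0,\cdots,m\}^d$, matching the outer index set in the statement. Next I would fix a target threshold $j=(j_1,\cdots,j_d)\in\{0,\cdots,m\}^d$ and read off which $(i,\sigma)$ pairs contribute an $\MReLU_j$ term: from the ``$i+\sigma$'' block of Lemma~\ref{lem:indicator-to-relu} this requires $i=j-\sigma$ with $j-\sigma\in\{1,\cdots,m-1\}^d$, contributing $(-1)^{d+|\sigma|}g(j-\sigma)$; from the ``$i-\sigma$'' block it requires $i=j+\sigma$ with $j+\sigma\in\{1,\cdots,m-1\}^d$, contributing $(-1)^{1+|\sigma|}g(j+\sigma)$. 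Summing over all admissible $\sigma$ produces exactly the first displayed formula for $c^g_{i_1,\cdots,i_d}$, and because every sum is finite the reordering $\sum_i g(i)\,\1[z=i]=\sum_j c^g_j\,\MReLU_j(z)$ is legitimate. Finally, substituting the closed form $g(i_1,\cdots,i_d)=\min\{b_1 i_1,\cdots,b_d i_d\}$ --- which is exactly what $g\in\cG_\Leontief^d$ means --- into that coefficient formula yields the second displayed expression, completing the corollary.

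The only step requiring genuine care, though it is bookkeeping rather than a conceptual obstacle, is the boundary of the cube $\{0,\cdots,m\}^d$: Lemma~\ref{lem:indicator-to-relu} is stated only for indicator thresholds $i\in\{1,\cdots,m-1\}^d$, so when $j$ is near the boundary some shifts $j\mp\sigma$ fall outside $\{1,\cdots,m-1\}^d$, and the corresponding $\1[z=i]$ with $i=j\mp\sigma$ simply never appeared in the original point-indicator sum; those terms must be omitted. This is precisely why the membership conditions ``$i_1-\sigma_1,\cdots,i_d-\sigma_d\in\{1,\cdots,m-1\}$'' and ``$i_1+\sigma_1,\cdots,i_d+\sigma_d\in\{1,\cdots,m-1\}$'' appear under the two $\sigma$-sums in the statement, and the check I would perform is that this collection of admissible $(i,\sigma)$ pairs is exactly the one produced by the substitution --- no spurious terms, none dropped. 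I would confirm the grouping first in dimension one, where $\1[z=i]=\ReLU_{i+1}(z)-2\ReLU_i(z)+\ReLU_{i-1}(z)$ gives $c^g_j=g(j-1)\,\1[j-1\in\{1,\cdots,m-1\}]-2g(j)\,\1[j\in\{1,\cdots,m-1\}]+g(j+1)\,\1[j+1\in\{1,\cdots,m-1\}]$, and then in dimension two, before asserting the general $\sigma\in\{0,1\}^d$ pattern.
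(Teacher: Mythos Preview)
Your proposal is correct and follows essentially the same approach as the paper: substitute Lemma~\ref{lem:indicator-to-relu} into the point-indicator expansion of $g$ and collect the coefficients of each $\MReLU_j$. The paper's own argument is the one-line ``Rearranging terms, we derive a representation for $g$ based on $\MReLU$ functions,'' and your write-up simply fills in the bookkeeping (index ranges, boundary membership conditions, and the final substitution $g(i)=\min_j b_j i_j$) that the paper leaves implicit.
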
 

Furthermore, the sum of the absolute values of the coefficients $c_{i_1,\cdots,i_d}^g$ in the representation above is upper bounded by $12^{d+1}m^3$.

\begin{lemma}\label{lem:mrelu-coeffs}
For $c_{i_1,\cdots,i_d}^g$ as defined above, we have:
\begin{align*}
    \sum_{i_1,\cdots,i_d \in \{0,\cdots,m\}} \left|c_{i_1,\cdots,i_d}^g\right| \le 12^{d+1}m^3
\end{align*}
\end{lemma}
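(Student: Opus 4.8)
The plan is to exploit that $c^{g}_{i_{1},\ldots,i_{d}}$, as written in Corollary~\ref{cor:representation-via-relu}, is essentially a $d$-fold mixed \emph{difference} of $g=\min_{j}b_{j}z_{j}$, so that enormous cancellation occurs and its total variation telescopes. First I would dispose of the \emph{interior} indices, those with every $i_{j}\in\{2,\ldots,m-2\}$, for which none of the range restrictions in the Corollary's formula is active. Re-indexing the second sum by $\sigma\mapsto\mathbf{1}-\sigma$ shows that for such $i$,
\[
c^{g}_{i_{1},\ldots,i_{d}}=(-1)^{d}\bigl(Dg(i)-Dg(i+\mathbf{1})\bigr),\qquad Dg(i):=\sum_{\sigma\in\{0,1\}^{d}}(-1)^{\sigma_{1}+\cdots+\sigma_{d}}\,g(i-\sigma).
\]

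The engine of the argument is a closed form for $Dg$. Since each $b_{j}z_{j}\ge 0$, the identity $\min_{j}a_{j}=\int_{0}^{\infty}\prod_{j}\1[a_{j}>t]\,dt$ lets the sign-sum factor across coordinates:
\[
Dg(i)=\int_{0}^{\infty}\prod_{j=1}^{d}\bigl(\1[i_{j}>t/b_{j}]-\1[i_{j}-1>t/b_{j}]\bigr)\,dt=\int_{0}^{\infty}\prod_{j=1}^{d}\1\bigl[\,t\in[b_{j}(i_{j}-1),\,b_{j}i_{j})\,\bigr]\,dt,
\]
which equals $\bigl(\min_{j}b_{j}i_{j}-\max_{j}b_{j}(i_{j}-1)\bigr)^{+}\ge 0$. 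For each $t\ge 0$ exactly one tuple (namely $i_{j}=\lfloor t/b_{j}\rfloor+1$) has $t$ in this intersection, so the sets $\bigcap_{j}[b_{j}(i_{j}-1),b_{j}i_{j})$ are pairwise disjoint; since each such set for $i\in\{1,\ldots,m-1\}^{d}$ lies inside $[0,(m-1)^{2})$, summing lengths gives $\sum_{i}Dg(i)\le(m-1)^{2}<m^{2}$. Hence the interior indices contribute at most $\sum_{i}\bigl(Dg(i)+Dg(i+\mathbf{1})\bigr)\le 2m^{2}$ to $\sum_{i}\lvert c^{g}_{i}\rvert$. The same counting also shows that $Dg(i)\ne 0$ for at most $1+d(m-1)$ tuples, since the tuple $(\lfloor t/b_{j}\rfloor+1)_{j}$ changes only when $t$ passes a positive multiple of some $b_{j}$, and there are fewer than $m-1$ relevant multiples per coordinate.

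It remains to handle the \emph{boundary} indices --- those with some $i_{j}\in\{0,1,m-1,m\}$ --- which I expect to be the only fiddly part. Here the range restrictions delete some of the $2^{d+1}$ terms of the Corollary's formula; a short case check (the ``hard'' values $0,m$ leave only one admissible $\sigma_{j}$, while the ``soft'' values $1,m-1$ leave a couple) shows that $c^{g}_{i}$ then becomes, up to sign, a sum of a few mixed differences over the \emph{non-boundary} coordinates of the functions obtained from $\min_{j}b_{j}z_{j}$ by freezing the boundary coordinates to their values --- each of which is again a ``$\min$ of a constant and coordinatewise-linear terms,'' so the same integral computation applies one dimension lower. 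I would then bound the boundary contribution either crudely, using $\lvert c^{g}_{i}\rvert\le 2^{d+1}(m-1)^{2}\le 2^{d+1}m^{2}$ together with a count of nonzero boundary coefficients of the form $\sum_{k\ge 1}\binom{d}{k}4^{k}\cdot O(dm)$ (pick which $k$ coordinates are at the boundary, their four possible values, and the $O(dm)$ ``active'' configurations of the remaining coordinates), or more carefully by summing the mixed differences stratum by stratum exactly as in the interior case. Either accounting yields $\sum_{i}\lvert c^{g}_{i}\rvert\le 12^{d+1}m^{3}$; the constant is far from tight, the refined accounting in fact giving a bound of the form $O(C^{d}m^{2})$. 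The main obstacle is purely the boundary case analysis --- tracking which $\sigma$-terms survive for the soft boundary values --- since the rest is the clean mixed-difference/disjoint-interval computation above.
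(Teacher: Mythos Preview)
Your approach is correct and genuinely different from the paper's. The paper never computes $Dg(i)$ or uses the integral identity $\min_j a_j=\int_0^\infty\prod_j\1[a_j>t]\,dt$; instead it first proves a standalone vanishing lemma---if two indices $j_1,j_2$ with $i_{j_1},i_{j_2}\in\{2,\ldots,m-2\}$ satisfy $|b_{j_1}i_{j_1}-b_{j_2}i_{j_2}|\ge\max(b_{j_1},b_{j_2})$ then $c^g_i=0$, shown by directly pairing terms in the sign-sum so that the dominated coordinate drops out of the $\min$---then counts the surviving tuples as at most $6^{d+1}m$ (ordering the $b_j$'s, fixing the first interior coordinate, and observing each subsequent coordinate is pinned to at most six values), and finally multiplies by the crude per-term bound $|c^g_i|\le 2^{d+1}m^2$ to obtain $12^{d+1}m^3$.

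Your disjoint-interval computation is sharper: the interior already contributes only $2m^2$, not a share of $12^{d+1}m^3$, and your claimed refinement to $O(C^d m^2)$ is correct since the integral trick recurses on $\min(\text{constant},\,b_jz_j\text{ for remaining }j)$ after freezing boundary coordinates. What the paper's route buys is uniformity---the vanishing condition and the count treat boundary and interior indices on the same footing, so there is no separate case analysis---whereas your argument front-loads the structural insight but defers the boundary bookkeeping. Both are valid; yours yields the tighter constant you note.
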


%Next, we will construct an $\Big( O\left( d^4m^{(6d+8)/(d+2)}24^{2d}\ln^{3d+1}m \right)$, $O\Big( d^{5/2}m^{(4d+4)/(d+2)}48^{d}\ln^{2d+1/2}m \Big)$, $\frac{\delta^2}{12^{d+1}} \Big)$-approximate basis for MReLU functions. 
Thus, we have a basis consisting of MReLU functions. However, this basis has size $(m+1)^d$, which we can already obtain for all Lipschitz functions. 
With some more work, we can find a smaller basis for MReLU functions. To do this, we first turn to representing differences between adjacent MReLU functions --- i.e. MReLU functions with thresholds that differ by 1 in one coordinate. 
%that is, thresholds for one coordinate differ by 1 and the thresholds for the other coordinates are identical (e.g. we say $\MReLU_{i_1,i_2,\cdots,i_d}(z_1,\cdots,z_d) - \MReLU_{i_1+1,i_2,\cdots,i_d}(z_1,\cdots,z_d)$ are adjacent). 

Let the interval function $\bI_{a,b}(z) = \1[z \in [a,b]]$ be the indicator for the interval $[a,b]$ in one dimension. Observe that for $d=1$,
\begin{align*}
    \ReLU_i(z)-\ReLU_{i+1}(z) = \bI_{i+1, m}(z)
\end{align*}
As an extension to higher dimensions, we define \textit{hyperrectangle functions}, i.e. indicator functions for hyperrectangles, to be products of interval functions over a multi-dimensional space. For instance, $\bI_{1,m}(z_1)\bI_{1,m}(z_1-z_2)\bI_{1,m}(z_1-z_3)$ corresponds to a hyperrectangle in the space of $(z_1,z_1-z_2,z_1-z_3)$. The following lemma states the difference between adjacent MReLU functions over higher dimensions as a hyperrectangle function.

\begin{lemma}\label{lem:relu-to-hyperrectangle}
Suppose that $z_1,\cdots,z_d \in \{1,\cdots,m\}$, $i_1,\cdots,i_d \in \{0,\cdots,m\}$. For $d \ge 2$, we have:
% For $d=1$,
% \begin{align*}
%     \ReLU_i(z)-\ReLU_{i+1}(z) = \bI_{i+1, m}(z)
% \end{align*}
%For $d \ge 2$,
\begin{align*}
    \MReLU_{i_1,i_2,\cdots,i_d}(z_1,\cdots,z_d) - \MReLU_{i_1+1,i_2,\cdots,i_d}(z_1,\cdots,z_d) = \bI_{i_1+1,m}(z_1)\prod_{j=2}^d \bI_{i_1-i_j+1,m}(z_1-z_j)
\end{align*}
This can be equivalently stated as:
\begin{align*}
    \MReLU_{i_1,i_2,\cdots,i_d}(z_1,\cdots,z_d) - \MReLU_{i_1+1,i_2,\cdots,i_d}(z_1,\cdots,z_d) = \bI_{i_1+1,m}(z_1)\prod_{j=2}^d \bI_{i_1-i_j+1+m,2m}(z_1-z_j+m)
\end{align*}
where all the intervals are in $[1,\cdots,2m]$ for convenience.
\end{lemma}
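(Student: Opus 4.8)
The plan is to reduce both sides to a single $\{0,1\}$-valued indicator and check they coincide by a short case analysis. First I would unfold the definitions: writing $M := \max\{z_2-i_2,\cdots,z_d-i_d,0\}$, the left-hand side equals $\max\{z_1-i_1,\,M\} - \max\{z_1-i_1-1,\,M\}$, since only the first argument of the $\max$ changes and it decreases by exactly $1$. Because all the quantities involved are integers, this difference equals $1$ when $z_1-i_1 \ge M+1$ and $0$ otherwise; that is, the left-hand side equals $\1[z_1-i_1 \ge M+1]$, which unpacks to $\1[\,z_1 - i_1 \ge 1 \text{ and } z_1-i_1 \ge z_j - i_j + 1 \text{ for all } j \ge 2\,]$.

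Next I would unfold the right-hand side. The factor $\bI_{i_1+1,m}(z_1) = \1[\,i_1+1 \le z_1 \le m\,]$; since $z_1 \le m$ holds by assumption, this simplifies to $\1[z_1 \ge i_1+1] = \1[z_1-i_1 \ge 1]$ (and when $i_1 = m$ the interval $[m+1,m]$ is empty, which correctly gives $0$). Likewise each factor $\bI_{i_1-i_j+1,m}(z_1-z_j) = \1[\,i_1-i_j+1 \le z_1-z_j \le m\,]$, where the upper constraint $z_1 - z_j \le m$ is automatic because $z_1 \le m$ and $z_j \ge 1$ force $z_1 - z_j \le m-1$; hence this factor simplifies to $\1[z_1-z_j \ge i_1-i_j+1] = \1[z_1-i_1 \ge z_j-i_j+1]$. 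Multiplying these indicators — a product of $\{0,1\}$-valued quantities is the indicator of their conjunction — reproduces exactly the expression obtained for the left-hand side, which proves the first identity.

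Finally, the equivalent restatement follows from the substitution $z_1 - z_j \mapsto (z_1 - z_j) + m$, $i_1 - i_j + 1 \mapsto (i_1-i_j+1)+m$, $m \mapsto 2m$, which leaves each interval indicator unchanged in value and, using $1 \le z_1,z_j \le m$ (so $1 \le z_1 - z_j + m \le 2m$) and $0 \le i_1,i_j \le m$, places all the shifted intervals inside $\{1,\cdots,2m\}$. I expect the only delicate point to be the boundary bookkeeping: verifying that the upper endpoint $m$ of every interval is never the binding constraint given the ranges of the $z_j$, and that degenerate (empty) intervals such as $[m+1,m]$ behave consistently with the $\max$-difference being $0$. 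Once these cases are checked, the identity is immediate.
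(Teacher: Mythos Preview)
Your proof is correct and takes a genuinely different route from the paper. The paper argues by induction on $d$: it establishes the recursive identity
\[
\MReLU_{i_1,\ldots,i_d}(z) - \MReLU_{i_1+1,\ldots,i_d}(z) = \bigl[\MReLU_{i_1,\ldots,i_{d-1}}(z') - \MReLU_{i_1+1,\ldots,i_{d-1}}(z')\bigr]\,\bI_{i_1-i_d+1,m}(z_1-z_d)
\]
by a two-case analysis on whether $z_1-z_d \ge i_1-i_d+1$ (in which case $z_d-i_d$ can be dropped from both maxima) or $z_1-z_d \le i_1-i_d$ (in which case $z_1-i_1$ and $z_1-i_1-1$ can be dropped), and then peels off one indicator factor at a time. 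You instead collapse all the non-first arguments into a single integer $M$ and observe directly that $\max\{z_1-i_1,M\}-\max\{z_1-i_1-1,M\}=\1[z_1-i_1\ge M+1]$, then match this indicator against the product on the right-hand side after stripping the vacuous upper constraints. Your approach is shorter and more elementary, handling all $d$ at once; the paper's inductive peeling makes the one-factor-at-a-time structure of the product explicit, which is perhaps more suggestive of how the basis is built up, but at the cost of a longer case analysis.
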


Ultimately, this will allow us to discretize our basis of MReLU functions, while filling in the remaining gaps using hyperrectangle functions. More specifically, consider a set of discrete MReLU functions:
\[
\mathcal{R}_s = \{\MReLU_{i_1s,\cdots,i_ds}(z_1,\cdots,z_d) : i_1,\cdots,i_d \in \{1,\cdots,m/s\}\}
\]
for some discretization parameter $s$. Observe that the size of the discretized set is $(m/s)^d$.
% $(m/s)^d \le m^{2d/(d+2)}$. 
%for $s = \lceil m^{d/(d+2)} \rceil$. 
By Lemma \ref{lem:relu-to-hyperrectangle}, any MReLU function that is not included in this set can simply be represented as a sum of hyperrectangle functions that ``step" from one MReLU function to another --- this requires at most $d \cdot s$ steps. As a result, a basis for MReLU functions will suffice to include a discretized subset \textit{and} hyperrectangle functions. Later, we will determine $s$ to strike a balance between the number of discrete MReLU functions we include in the basis, and the error coming from approximating hyperrectangle functions.
% use one MReLU function as a ``stepping stone" and approximate other MReLU functions by ``moving to its neighbors" step-by-step. To avoid large approximation error accumulated from moving too many steps, our approximate basis for MReLU functions will include a set of MReLU functions with regular spacing as stepping stones. Specifically, we will consider a discretized set $\{\MReLU_{i_1s,\cdots,i_ds}(z_1,\cdots,z_d) : i_1,\cdots,i_d \in \{1,\cdots,m/s\}\}$ for $s = \lceil m^{d/(d+2)} \rceil$. Observe that the number of functions in this set is $(m/s)^d \le m^{2d/(d+2)}$. As a result, any MReLU functions can be reached by moving at most $d \cdot s$ steps from a discrete MReLu function. We will see that the approximation error for each step is $O(\frac{\delta^2}{12^{d+1}ds}) = O(\frac{\delta^2}{12^{d+1}dm^{d/(d+2)}})$.

Our task thus reduces to constructing an approximate basis for
%In the next step, we will construct an $O(\frac{\delta^2}{12^{d+1}dm^{d/(d+2)}})$-approximate basis for any 
hyperrectangle functions on $\{1,\cdots,2m\}^d$, i.e., the set $$\left\{\prod_{j=1}^d \bI_{a_j,b_j}(w_j) : a_j,b_j,w_j \in \{1,\cdots,2m\}\right\}$$ We will use the set of \textit{dyadic} hyperrectangles. In 1 dimensions, these are intervals of the form $[a2^h,(a+1)2^h-1]$ where $a,h$ are integer. In $d$ dimensions, these are products of dyadic intervals: $$\prod_{j=1}^d [a_j2^{h_j},(a_j+1)2^{h_j}-1]$$ We refer to indicator functions for dyadic hyperrectangles as dyadic hyperrectangle functions. Notice that every interval on $\{1,\cdots,2m\}$ can be written as the disjoint union of at most $2\log_2 m$ dyadic intervals, which gives us the following: 

\begin{lemma}
    Every hyperrectangle on $\{1,\cdots,2m\}^d$ can be written as the disjoint union of at most $2^d\log_2^d m$ dyadic hyperrectangles. This quantity can be re-written as $c^d\ln^d m$ for a universal constant $c$.
\end{lemma}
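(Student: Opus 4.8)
The plan is to reduce the $d$-dimensional statement to the one-dimensional fact that any interval in $\{1,\dots,2m\}$ is a disjoint union of few dyadic intervals, and then take a product over coordinates.

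First I would establish the one-dimensional claim: every interval $[a,b] \subseteq \{1,\dots,2m\}$ is the disjoint union of at most $2\log_2 m$ dyadic intervals. The clean way is the standard ``segment tree'' / canonical decomposition argument. For each level $h \in \{0,1,\dots,\lceil \log_2(2m)\rceil\}$, the dyadic intervals of the form $[k2^h, (k+1)2^h-1]$ partition (a superset of) $\{1,\dots,2m\}$, and these partitions are nested across levels, forming a binary tree of depth $O(\log m)$. Given $[a,b]$, collect the dyadic intervals that are contained in $[a,b]$ but whose parent is not; at each level there are at most two such ``maximal'' dyadic subintervals, and together they form a disjoint cover of $[a,b]$. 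This yields at most $2\lceil\log_2(2m)\rceil \le 2\log_2 m + O(1)$ dyadic intervals, the additive constant being absorbed into the stated bound (alternatively, a greedy left-to-right peeling argument gives the same count). Degenerate cases are fine: a singleton $[a,a]$ is itself dyadic with $h=0$.

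Next I would lift this to $d$ dimensions by a direct product. Write a hyperrectangle as $R = \prod_{j=1}^d I_j$ with each $I_j$ an interval in $\{1,\dots,2m\}$, and apply the one-dimensional result to write $I_j = \bigsqcup_{k=1}^{n_j} D_{j,k}$ with each $D_{j,k}$ dyadic and $n_j \le 2\log_2 m$. Distributing the product over the disjoint unions,
\[
R = \prod_{j=1}^d \Big(\bigsqcup_{k=1}^{n_j} D_{j,k}\Big) = \bigsqcup_{(k_1,\dots,k_d) \in [n_1]\times\cdots\times[n_d]} \ \prod_{j=1}^d D_{j,k_j},
\]
where each $\prod_j D_{j,k_j}$ is by definition a dyadic hyperrectangle. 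The pieces are pairwise disjoint: two distinct index tuples differ in some coordinate $j$, and there $D_{j,k_j} \cap D_{j,k_j'} = \varnothing$, forcing the product sets to be disjoint; and every point of $R$ lies in exactly one piece since the $\{D_{j,k}\}_k$ partition $I_j$ in each coordinate. The number of pieces is $\prod_{j=1}^d n_j \le (2\log_2 m)^d = 2^d\log_2^d m$, which is the claimed bound. Finally, changing the logarithm base, $2^d\log_2^d m = (2/\ln 2)^d \ln^d m$, so the bound has the form $c^d\ln^d m$ with the universal constant $c = 2/\ln 2$.

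I do not expect a genuine obstacle here: the content is the one-dimensional dyadic decomposition, which is classical, and the rest is bookkeeping. The only points requiring care are (i) matching the paper's precise definition of a dyadic interval $[a2^h,(a+1)2^h-1]$ when setting up the level-$h$ partitions (a trivial index shift), and (ii) being honest that the clean form $2\log_2 m$ absorbs an additive $O(1)$ coming from $\lceil\log_2(2m)\rceil$ versus $\log_2 m$, so the statement is an upper bound up to constants rather than an exact equality.
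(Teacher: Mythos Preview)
Your proposal is correct and follows the same approach as the paper: the paper simply states the one-dimensional fact that every interval on $\{1,\dots,2m\}$ is a disjoint union of at most $2\log_2 m$ dyadic intervals, and then the lemma follows by taking products over the $d$ coordinates. You have supplied more detail than the paper does (the segment-tree argument and the explicit distributive product), but the route is identical.
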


%\jl{This constant is actually $\ln^d2$, which depends on $d$. How about we begin to write $c^d$ from here, and just say $c$ is some universal constant?} 
%We will construct an $O(\frac{\delta^2}{12^{d+1}dm^{d/(d+2)}2^d\ln^d m})$-approximate basis for any dyadic hyperrectangle functions. 
In the next step, we will view indicators of dyadic hyperrectangles as standard basis vectors of dimension $N$ --- i.e. columns of the $N\times N$ identity matrix --- where $N$ is essentially the number of dyadic hyperrectangles with certain edge lengths (see Appendix \ref{app:dyadic-hyperrectangle-basis} for a more detailed discussion of the construction). We will use the following lemma to obtain a low-rank approximate factorization of the identity matrix, and consequently an approximate basis for dyadic hyperrectangles. 

\begin{lemma}\citep{alon2009perturbed}\label{lem:perturbed-identity-upper}
    There exists $c>0$ such that the following holds for all $\mu>0$ and $N$, there exists an $N \times K$ matrix $V$ where $K=c \ln N / \mu^2$ such that:
    $$
    \left(V V^T\right)_{i j}= \begin{cases}1 & \text { if } i=j \\ \leq \mu & \text { if } i \neq j\end{cases}
    $$
\end{lemma}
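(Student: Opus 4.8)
The plan is to prove this by the probabilistic method --- it is the easy, Johnson--Lindenstrauss-style direction of Alon's result (the matching rank \emph{lower} bound is the harder content of \citep{alon2009perturbed}, but that is not needed here). We may assume $\mu < 1$ and that $K := \lceil c\ln N/\mu^2\rceil < N$, since otherwise the claim is trivial: when $\mu \ge 1$ any $V$ with unit rows works, and when $K \ge N$ we may take $V = [\,I_N \mid 0\,]$, for which $VV^T = I_N$ exactly.

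First I would sample the rows of $V$ randomly. Let $v_1,\dots,v_N \in \bR^K$ be i.i.d., with each coordinate of each $v_i$ drawn independently and uniformly from $\{-1/\sqrt K,\,+1/\sqrt K\}$, and let $V$ be the $N\times K$ matrix with rows $v_1^T,\dots,v_N^T$, so that $(VV^T)_{ij}=\langle v_i,v_j\rangle$. The diagonal is then handled deterministically: $\|v_i\|_2^2 = \sum_{\ell=1}^K (1/\sqrt K)^2 = 1$, so $(VV^T)_{ii}=1$ for every $i$, irrespective of the randomness. (Using $\pm 1/\sqrt K$ coordinates rather than raw Gaussians is precisely what makes the diagonal exactly $1$; a Gaussian construction followed by explicit renormalization of each row would work equally well.)

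Next I would control the off-diagonal entries by concentration plus a union bound. Fix $i\ne j$ and write $\langle v_i,v_j\rangle = \frac1K\sum_{\ell=1}^K \sigma_{i\ell}\sigma_{j\ell}$ with all $\sigma$'s independent Rademacher signs; since the product of two independent Rademacher variables is again Rademacher, $K\langle v_i,v_j\rangle$ is a sum of $K$ i.i.d. mean-zero $\pm1$ variables, so Hoeffding's inequality gives $\Pr[\langle v_i,v_j\rangle > \mu] \le \exp(-\mu^2 K/2)$. There are fewer than $N^2$ ordered pairs $i\ne j$, so the probability that \emph{some} off-diagonal entry of $VV^T$ exceeds $\mu$ is at most $N^2\exp(-\mu^2 K/2) \le N^2 \exp(-\tfrac c2 \ln N) = N^{2-c/2}$. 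Taking the absolute constant $c$ to be, say, $c=5$ makes this quantity strictly less than $1$, so with positive probability the sampled $V$ satisfies $(VV^T)_{ii}=1$ and $(VV^T)_{ij}\le\mu$ for all $i\ne j$; such a $V$ therefore exists.

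I do not anticipate a real obstacle: the only points requiring care are getting the diagonal to equal $1$ \emph{exactly} (addressed by the $\pm1/\sqrt K$ normalization), bookkeeping the constant $c$ through the union bound, and disposing of the degenerate regimes $\mu\ge 1$ and $K\ge N$. Note the lemma only demands the one-sided bound $(VV^T)_{ij}\le\mu$, so there is slack to spare: the very same argument in fact yields the two-sided bound $|(VV^T)_{ij}|\le\mu$ after at most doubling the failure probability, which is absorbed into $c$.
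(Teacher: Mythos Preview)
Your proof is correct. Note, however, that the paper does not actually prove this lemma: it is quoted directly from \citep{alon2009perturbed} and used as a black box, so there is no ``paper's own proof'' to compare against. Your probabilistic/Johnson--Lindenstrauss argument is the standard one and is exactly how this upper-bound direction is typically established.
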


\ifarxiv
\else
Dyadic hyperrectangles and the identity matrix can be related in the following manner. For any fixed $h_1,\cdots,h_d \in \{0,\cdots,\ln(2m)\}$, the set $$\left\{\prod_{j=1}^d [a_j2^{h_j},(a_j+1)2^{h_j}-1]\right\}$$ are all the dyadic hyperrectangles whose edge lengths are $2^{h_1},\cdots,2^{h_d}$, and there are $N(h_1,\cdots,h_d) = \prod_{j=1}^d\frac{2m}{2^{h_j}}$ such dyadic hyperrectangles. We can enumerate these dyadic hyperrectangles as $$\left\{\text{HyperRec}^{(h_1,\cdots,h_d)}_i\right\}_{i=1}^{N(h_1,\cdots,h_d)}$$ The indicator function for $\text{HyperRec}_i^{(h_1,\cdots,h_d)}$ outputs 1 when $(w_1,\cdots,w_d)$ falls into $\text{HyperRec}_i^{(h_1,\cdots,h_d)}$ and outputs 0 when $(w_1,\cdots,w_d)$ falls into any other $\text{HyperRec}_{i'}^{(h_1,\cdots,h_d)}$, where $i' \in \{1,\cdots,N\} \setminus i$. Intuitively, the indicator function for $\text{HyperRec}_i$ can be equivalently represented by the $i$-th standard basis vector of length $N(h_1,\cdots,h_d)$. And all the indicator functions for these $N(h_1,\cdots,h_d)$ dyadic hyperrectangles can be equivalently represented by the rows of the $N(h_1,\cdots,h_d) \times N(h_1,\cdots,h_d)$ identity matrix. Lemma \ref{lem:perturbed-identity-upper} indicates that the rows of the $N \times N$ identity matrix can be $\mu$-approximately spanned by the $K = c\ln N / \mu^2$ columns of $V$. This can then be translated to an approximate basis for dyadic hyperrectangle functions.
\fi

\begin{lemma}\label{lem:dyadic-hyperrectangle-basis}
    For any $h_1,\cdots,h_d \in \{0,\cdots,\ln(2m)\}$, let $N(h_1,\cdots,h_d) = \prod_{j=1}^d\frac{2m}{2^{h_j}}$, the number of dyadic hyperrectangles with edge lengths $2^{h_1},...,2^{h_d}$. Let $\mu = \frac{\delta^2}{12^{d+1}dm^{d/(d+2)}2^d\ln^d m}$. Let $V^{(h_1,\cdots,h_d)}$ denote the matrix given by Lemma \ref{lem:perturbed-identity-upper} with $N(h_1,\cdots,h_d)$ rows and $K(h_1,\cdots,h_d)$ columns. Let $V^{(h_1,\cdots,h_d)}_{ik}$ denote its element on the $i$-th row and $k$-th column. For $k \in \{1,\cdots,K(h_1,\cdots,h_d)\}$, define the function $\nu^{(h_1,\cdots,h_d)}_k: \{1,\cdots,2m\}^d \to [-1,1]$ as:
\begin{align*}
    \nu_k^{(h_1,\cdots,h_d)}(w_1,\cdots,w_d) = V^{(h_1,\cdots,h_d)}_{ik} \text{  where  } i \text{  satisfies that  } (w_1,\cdots,w_d) \in \text{HyperRec}^{(h_1,\cdots,h_d)}_i
\end{align*}
and define their shifted version $\tilde\nu^{(h_1,\cdots,h_d)}_k: \{1,\cdots,2m\}^d \to [0,1]$ as:
\begin{align*}
    \tilde\nu^{(h_1,\cdots,h_d)}_k(w_1,\cdots,w_d) = \frac{1+\nu(w_1,\cdots,w_d)}{2}
\end{align*}
(we require basis functions with range $[0,1]$). Then, 
$$\mathcal{W} = \cup_{h_1,\cdots,h_d \in \{0,\cdots,\ln(2m)\}} W^{(h_1,\cdots,h_d)}$$ is a $(n, \lambda, \mu)$-approximate basis for all dyadic hyperrectangle functions, where 
$$n = O\left( d^3m^{(6d+8)/(d+2)}24^{2d}\ln^{3d+1}m \right)$$ and $$\lambda = O\left( d^{3/2}m^{(3d+4)/(d+2)}24^{d}\ln^{d+1/2}m \right)$$
\end{lemma}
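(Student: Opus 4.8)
The plan is to prove the lemma one edge-length class at a time and then take a union over classes. Fix a height tuple $(h_1,\dots,h_d)$ and write $N = N(h_1,\dots,h_d) = \prod_{j=1}^d \frac{2m}{2^{h_j}}$. The dyadic hyperrectangles $\prod_{j=1}^d[a_j2^{h_j},(a_j+1)2^{h_j}-1]$ with these fixed edge lengths are pairwise disjoint and tile $\{1,\dots,2m\}^d$, so after enumerating them as $\mathrm{HyperRec}_1^{(h)},\dots,\mathrm{HyperRec}_N^{(h)}$, the indicator of $\mathrm{HyperRec}_i^{(h)}$ evaluated at any grid point lying in $\mathrm{HyperRec}_j^{(h)}$ equals $\1[i=j]$. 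In other words, as functions on the grid these $N$ indicators are exactly the $N$ rows of the identity matrix $I_N$. This tiling observation is the only geometric input; everything else is an application of Lemma~\ref{lem:perturbed-identity-upper} plus bookkeeping.

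Next I would apply Lemma~\ref{lem:perturbed-identity-upper} with this $N$ and the prescribed $\mu$ to obtain $V^{(h)}$ with $K(h) = c\ln N/\mu^2$ columns and $(V^{(h)}(V^{(h)})^T)_{ij}$ equal to $1$ on the diagonal and of magnitude at most $\mu$ off it. Defining $\nu_k^{(h)}$ to take the value $V^{(h)}_{ik}$ on $\mathrm{HyperRec}_i^{(h)}$ (as in the statement), evaluating $\sum_{k=1}^{K(h)} V^{(h)}_{ik}\nu_k^{(h)}$ at a grid point of $\mathrm{HyperRec}_j^{(h)}$ gives exactly $(V^{(h)}(V^{(h)})^T)_{ij}$, which differs from $\1_{\mathrm{HyperRec}_i^{(h)}}$ by at most $\mu$ pointwise. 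Passing to $\tilde\nu_k^{(h)} = (1+\nu_k^{(h)})/2$ merely rescales: $\1_{\mathrm{HyperRec}_i^{(h)}} = \bigl(-\sum_k V^{(h)}_{ik}\bigr) + \sum_k 2V^{(h)}_{ik}\tilde\nu_k^{(h)}$ up to error $\mu$, and the leading constant is absorbed into the $r_0$ term allowed by Definition~\ref{def:approximate-basis}, so it does not affect the Lipschitz constant. Taking $\mathcal{W} = \bigcup_{h_1,\dots,h_d} W^{(h_1,\dots,h_d)}$ with $W^{(h)} = \{\tilde\nu_k^{(h)}\}_k$, every dyadic hyperrectangle function belongs to exactly one edge-length class, so it is $\mu$-approximated by a combination over $\mathcal{W}$ with coefficients supported on the matching $W^{(h)}$; hence $\mathcal{W}$ is a $\mu$-approximate basis for all dyadic hyperrectangle functions.

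It then remains to do the accounting. For the size, $\ln N(h) \le d\ln(2m)$ gives $K(h) = O(d\ln m/\mu^2)$ uniformly, and there are $(\ln(2m)+1)^d = O(\ln^d m)$ height tuples, so $n = |\mathcal{W}| = \sum_h K(h) = O(d\ln^{d+1}m/\mu^2)$. Substituting $\mu = \frac{\delta^2}{12^{d+1}dm^{d/(d+2)}2^d\ln^d m}$ gives $1/\mu^2 = \frac{12^{2d+2}2^{2d}d^2 m^{2d/(d+2)}\ln^{2d}m}{\delta^4}$, and with $\delta = 1/m$ the factor $m^{2d/(d+2)}/\delta^4$ becomes $m^{(6d+8)/(d+2)}$ while $12^{2d+2}2^{2d} = 144\cdot 24^{2d} = O(24^{2d})$, yielding exactly $n = O(d^3 m^{(6d+8)/(d+2)}24^{2d}\ln^{3d+1}m)$. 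For the Lipschitz constant, each dyadic hyperrectangle function is written with coefficient vector $(2V^{(h)}_{ik})_k$, and since the rows of $V^{(h)}$ are unit vectors (the diagonal of $V^{(h)}(V^{(h)})^T$ is $1$), Cauchy--Schwarz gives $\sum_k |2V^{(h)}_{ik}| \le 2\sqrt{K(h)} = O(\sqrt{d\ln m}/\mu)$; Lemma~\ref{lem:lip-constant} turns this into the desired Lipschitz bound, and the same substitution (now with $1/\mu$ and $1/\delta^2 = m^2$) gives $\lambda = O(d^{3/2}m^{(3d+4)/(d+2)}24^d\ln^{d+1/2}m)$.

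The conceptual content is light once Lemma~\ref{lem:perturbed-identity-upper} is available, so the care-points are: (i) checking that the off-diagonal entries of $V^{(h)}(V^{(h)})^T$ are bounded in absolute value (not merely from above) against the cited statement of \citet{alon2009perturbed}, so the pointwise error is genuinely $\le\mu$; (ii) the tiling claim for fixed-edge-length dyadic hyperrectangles, which makes the ``rows of $I_N$'' identification exact; and (iii) carrying the $12^{d+1}$, $2^d$, and $m^{d/(d+2)}$ factors cleanly through the two substitutions so the exponents collapse to $(6d+8)/(d+2)$ and $(3d+4)/(d+2)$. I expect (iii), the bookkeeping, to be where essentially all the work lies.
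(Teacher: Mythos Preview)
Your proposal is correct and follows essentially the same approach as the paper: apply Lemma~\ref{lem:perturbed-identity-upper} per edge-length class, take the union over classes, then do the bookkeeping for $n = \sum_h K(h)$ and $\lambda = \max_h \sqrt{K(h)}$ with the given $\mu$ and $\delta = 1/m$. You actually spell out a few details the paper leaves implicit (the tiling observation that makes the ``rows of $I_N$'' identification exact, the Cauchy--Schwarz step from unit rows to $\sqrt{K}$, and the $r_0$ absorption when passing to $\tilde\nu$), and your flagged care-point (i) about the off-diagonal entries being bounded in \emph{absolute value} is a legitimate reading issue with the stated form of Lemma~\ref{lem:perturbed-identity-upper}, though the underlying Alon result does give the two-sided bound.
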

\ifarxiv
\else
\begin{proof}
By Lemma \ref{lem:perturbed-identity-upper}, $W^{(h_1,\cdots,h_d)} = \{\tilde\nu^{(h_1,\cdots,h_d)}_k\}_{k=1}^{K(h_1,\cdots,h_d)}$ is a $(K(h_1,\cdots,h_d),\sqrt{K(h_1,\cdots,h_d)},\mu)$-approximate basis for the indicator functions for the set $\{\text{HyperRec}_i^{(h_1,\cdots,h_d)}\}_{i=1}^{N(h_1,\cdots,h_d)}\}$, which are all dyadic hyperrectangles whose edge lengths are $2^{h_1},\cdots,2^{h_d}$. Therefore, $\cup_{h_1,\cdots,h_d \in \{0,\cdots,\ln(2m)\}} W^{(h_1,\cdots,h_d)}$ is a $\mu$-approximate basis for all dyadic hyperrectangle functions. 

We have that the size of this basis is:
\begin{align*}
    \sum_{h_1,\cdots,h_d \in \{0,\cdots,\ln(2m)\}} K(h_1,\cdots,h_d) &= \sum_{h_1,\cdots,h_d \in \{0,\cdots,\ln(2m)\}} O \left( \frac{\ln N(h_1,\cdots,h_d)}{\mu^2} \right) \\
    &= \sum_{h_1,\cdots,h_d \in \{0,\cdots,\ln(2m)\}} O\left( \ln\left(\prod_{j=1}^d\frac{2m}{2^{h_j}}\right) 12^{2d}d^2m^{4+2d/(d+2)}4^d\ln^{2d}m \right) \\
    &= \sum_{h_1,\cdots,h_d \in \{0,\cdots,\ln(2m)\}} O\left( \sum_{j=1}^d(\ln(2m)-h_j) d^2m^{4+2d/(d+2)}24^{2d}\ln^{2d}m \right) \\
    &= O\left( d\ln^{d+1}(2m)\cdot d^2m^{(6d+8)/(d+2)}24^{2d}\ln^{2d}m \right) \\
    &= O\left( d^3m^{(6d+8)/(d+2)}24^{2d}\ln^{3d+1}m \right)
\end{align*}

The Lipschitz constant of this basis is:
\begin{align*}
    \max_{h_1,\cdots,h_d \in \{0,\cdots,\ln(2m)\}} \sqrt{K(h_1,\cdots,h_d)} &= \sum_{h_1,\cdots,h_d \in \{0,\cdots,\ln(2m)\}} O \left( \sqrt{\frac{\ln N(h_1,\cdots,h_d)}{\mu^2}} \right) \\
    &= \max_{h_1,\cdots,h_d \in \{0,\cdots,\ln(2m)\}} O\left( \sqrt{\ln\left(\prod_{j=1}^d\frac{2m}{2^{h_j}}\right) 12^{2d}d^2m^{4+2d/(d+2)}4^d\ln^{2d}m} \right) \\
    &= \max_{h_1,\cdots,h_d \in \{0,\cdots,\ln(2m)\}} O\left( \sqrt{\sum_{j=1}^d(\ln(2m)-h_j) d^2m^{4+2d/(d+2)}24^{2d}\ln^{2d}m} \right) \\
    &\le O\left( \sqrt{d\ln(2m)} dm^{(3d+4)/(d+2)}24^{d}\ln^{d}m \right) \\
    &= O\left( d^{3/2}m^{(3d+4)/(d+2)}24^{d}\ln^{d+1/2}m \right)
\end{align*}
\end{proof}
\fi

Putting everything together, we have that the set $\mathcal{W}$ combined with $\mathcal{R}_s$ (the set of discrete MReLU functions) gives us the desired approximate basis for MReLU functions, and thus Leontief functions. We present the complete proof in Appendix \ref{app:leontif-prop}.

\ifarxiv
\else
\begin{proofof}{Proposition \ref{thm:basis-Leontief}}

By Lemma \ref{lem:dyadic-hyperrectangle-basis}, $\mathcal{W}$ is a $\left( O\left( d^3m^{(6d+8)/(d+2)}24^{2d}\ln^{3d+1}m \right), O\left( d^{3/2}m^{(3d+4)/(d+2)}24^{d}\ln^{d+1/2}m \right), \mu \right)$-approximate basis for all dyadic hyperrectangle functions. As pointed out by our previous discussion, this is also a $\left( O\left( d^3m^{(6d+8)/(d+2)}24^{2d}\ln^{3d+1}m \right), (2\ln m)^d O\left( d^{3/2}m^{(3d+4)/(d+2)}24^{d}\ln^{d+1/2}m \right), \mu(2\ln m)^d \right)$-approximate basis, i.e., $\left( O\left( d^3m^{(6d+8)/(d+2)}24^{2d}\ln^{3d+1}m \right),  O\left( d^{3/2}m^{(3d+4)/(d+2)}48^{d}\ln^{2d+1/2}m \right), \frac{\delta^2}{12^{d+1}dm^{d/(d+2)}} \right)$-approximate basis for all hyperrectangle functions. Recall that Lemma \ref{lem:relu-to-hyperrectangle} allows us to represent differences between adjacent MReLU functions as hyperrectangle functions on the space of $(z_j,z_1-z_j,\cdots,z_d-z_j)$ for $j \in \{1,\cdots,d\}$. There are $d$ such spaces. So we obtain a $\Big( O\left( d^4m^{(6d+8)/(d+2)}24^{2d}\ln^{3d+1}m \right),  O\Big( d^{3/2}m^{(3d+4)/(d+2)}48^{d}\ln^{2d+1/2}m \Big)$,
$ \frac{\delta^2}{12^{d+1}dm^{d/(d+2)}} \Big)$-approximate basis for differences between adjacent MReLU functions. 

Combined with the discretized set of MReLU functions, $$\mathcal{R}_s = \{\MReLU_{i_1s,\cdots,i_ds}(z_1,\cdots,z_d) : i_1,\cdots,i_d \in \{1,\cdots,m/s\}\}$$ for $s = \lceil m^{d/(d+2)} \rceil$, we obtain a $\Big( O\left( d^4m^{(6d+8)/(d+2)}24^{2d}\ln^{3d+1}m \right),  O\Big( d^{5/2}m^{(4d+4)/(d+2)}48^{d}\ln^{2d+1/2}m \Big)$,
$ \frac{\delta^2}{12^{d+1}} \Big)$-approximate basis for all MReLU functions. This completes the proof.
\end{proofof}
\fi
}
\ifarxiv
\proofthmbasisLeontief
\else
\fi

\section{Decision Swap Regret for Non-Linear Losses}\label{sec:convex}

In this section we present our main result---an algorithm making predictions that guarantee low decision swap regret for losses that admit approximate basis representations. The bulk of the work in proving Theorem \ref{thm:decisionreg-convex} has already been completed; what remains is to combine results established in previous sections. First, we rely on a basis $\cS$ to (approximately) represent a loss $\ell$ by a higher-dimensional linear loss  $\hat{\ell}\in\hat{\cL}$. Because $\hat{\ell}$ is linear in the higher-dimensional representation $s(y)$, we can use the \textsc{Decision-Swap} algorithm to make predictions of $s(y)$. Then, best responding according to $\hat{\ell}$ guarantees vanishing decision swap regret under $\hat{\ell}$. But, $\hat{\ell}$ pointwise approximates $\ell$, and so for both the actual loss suffered and the loss achieved by the benchmark policy, we can switch $\ell$ and $\hat{\ell}$ without much change in loss, implying vanishing decision swap regret under $\ell$.
Now, recall that the guarantees of \textsc{Decision-Swap} depend on the size of the loss family and thus only apply to finite families of losses. We will thus use a net-based argument: rather than running \textsc{Decision-Swap} on $\hat{\cL}$, which might have infinite size, we run \textsc{Decision-Swap} on a $\gamma$-\textit{cover} of $\hat{\cL}$, which has finite size and uniformly approximates any $\hat{\ell}\in\hat{\cL}$ up to $\gamma$. 

\begin{definition}[$\gamma$-Cover]
    Let $\cL$ be a family of loss functions $\ell:\cA\times\cY\to[0,1]$. We say that $\cL_\gamma$ is a $\gamma$-cover of $\cL$ if for every $\ell\in\cL$, there exists $\ell_\gamma\in\cL_\gamma$ such that for every $a\in\cA, y\in\cY$, $|\ell(a, y) - \ell_\gamma(a, y)|\leq \gamma$.
\end{definition}

We discretize linear losses by ``snapping" their coefficients to the closest discrete point. 

\begin{lemma}[Discretization of Linear Losses]\label{lem:discretization-linear}
    Consider $\cY = [0,1]^d$. Let $\cL$ be a family of losses $\ell:\cA\times\cY\to[0,1]$ that are linear and $\lambda$-Lipschitz (in the $L_\infty$ norm) in the second argument. %with $\sum_{i=1}^d |r_i^\ell(a)|\leq \lambda$ for all $\ell\in\cL, a\in\cA$. 
    Then there exists a family of losses $\cL_\gamma$ that is a $(d\gamma)$-cover of $\cL$ with $|\cL_\gamma| \leq (2\lambda/\gamma)^{d|\cA|}$.
\end{lemma}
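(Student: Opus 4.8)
The plan is to build $\cL_\gamma$ by discretizing the coefficient functions of the linear losses in $\cL$. By the definition of a linear loss, each $\ell\in\cL$ is specified, for every $a\in\cA$, by the $d$ reals $r^\ell_1(a),\dots,r^\ell_d(a)$ in the representation $\ell(a,y)=\sum_{i=1}^d r^\ell_i(a)\,y_i$; hence $\ell$ is completely determined by the $d|\cA|$-tuple $\{r^\ell_i(a)\}_{i\in[d],\,a\in\cA}$. So it suffices to replace each entry of this tuple by a nearby point of a finite grid and bound the resulting approximation error and the number of tuples.

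First I would use the $\lambda$-Lipschitzness (in $\|\cdot\|_\infty$) to confine the coefficients to a bounded interval. For fixed $a$, take the antipodal pair $y,y'\in\{0,1\}^d$ with $y_i-y'_i=\operatorname{sign}(r^\ell_i(a))$; then $\|y-y'\|_\infty=1$ and $\ell(a,y)-\ell(a,y')=\sum_{i=1}^d |r^\ell_i(a)|\le\lambda$, so in particular $r^\ell_i(a)\in[-\lambda,\lambda]$ for every $i,a$ (this is the converse direction of Lemma~\ref{lem:lip-constant}). Next, fix an equally spaced grid $G\subseteq[-\lambda,\lambda]$ with $|G|\le 2\lambda/\gamma$ points such that every value in $[-\lambda,\lambda]$ is within $\gamma$ of a grid point, and let $\cL_\gamma$ be the family of linear losses all of whose coefficient functions take values in $G$. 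Then $|\cL_\gamma|\le |G|^{d|\cA|}\le(2\lambda/\gamma)^{d|\cA|}$ (in the regime $\gamma\le\lambda$; the degenerate case $\gamma>\lambda$ is covered by the single zero loss, since then each $|r^\ell_i(a)|\le\lambda<\gamma$ already).

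For $\ell\in\cL$, let $\ell_\gamma\in\cL_\gamma$ be obtained by rounding every $r^\ell_i(a)$ to its nearest grid point \emph{in the direction of $0$}, so that $|r^{\ell_\gamma}_i(a)-r^\ell_i(a)|\le\gamma$ and $|r^{\ell_\gamma}_i(a)|\le|r^\ell_i(a)|$. Since rounding toward $0$ never enlarges the attained range, $\ell_\gamma(a,\cdot)$ still maps into $[\min_y\ell(a,y),\ \max_y\ell(a,y)]\subseteq[0,1]$, so $\ell_\gamma$ is a genuine loss function; and for every $a\in\cA$, $y\in[0,1]^d$,
\[
|\ell(a,y)-\ell_\gamma(a,y)| = \Bigl|\sum_{i=1}^d \bigl(r^\ell_i(a)-r^{\ell_\gamma}_i(a)\bigr)\,y_i\Bigr| \le \sum_{i=1}^d |r^\ell_i(a)-r^{\ell_\gamma}_i(a)| \le d\gamma,
\]
which shows $\cL_\gamma$ is a $(d\gamma)$-cover of $\cL$ of the claimed size.

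This lemma is essentially a counting/packing argument, so I do not expect a serious obstacle; the only point that needs a little care — and the reason I round toward $0$ rather than to the nearest grid point — is keeping the discretized losses valid loss functions with range in $[0,1]$, rather than range in roughly $[-d\gamma,\,1+d\gamma]$. An alternative fix is to note that the downstream invocation of \textsc{Decision-Swap} tolerates losses with range in a fixed bounded interval at the cost of only a constant factor in the guarantees, but rounding toward $0$ sidesteps even that. The off-by-one constants in $|G|$ are immaterial and absorbed into the stated bound for $\gamma\le\lambda$.
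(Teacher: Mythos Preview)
The proposal is correct and follows essentially the same approach as the paper: bound each coefficient $r^\ell_i(a)$ by $\lambda$ via Lipschitzness, snap to a $\gamma$-grid in $[-\lambda,\lambda]$, and bound the pointwise error by $d\gamma$ via the triangle inequality. Your only deviation is the extra care of rounding toward $0$ to keep the discretized losses in $[0,1]$, a point the paper simply glosses over by rounding to the nearest multiple of $\gamma$.
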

\newcommand{\discretizationlemma}{
\begin{proof}
    Recall that for any $\ell\in\cL$ and any $a\in\cA$, we can write $\ell(a, y) = \sum_{i=1}^d r^\ell_i(a) \cdot y_i$ for some collection of coefficients $\{r_i^\ell\}_{i\in[d]}$. Moreover, by $\lambda$-Lipschitzness of $\ell$ and the fact that the range is bounded between $[0,1]^d$, we have that $|r_i^\ell(a)|\leq \lambda$ for all $i\in[d]$. So, for any $\ell$, we can construct a discretized loss $\ell_\gamma(a, y) = \sum_{i=1}^d r^\ell_{\gamma,i}(a)\cdot y_i$, where each $r^\ell_{\gamma,i}(a)$ is obtained via rounding each coefficient $r^\ell_i(a)$ to the nearest multiple of $\gamma$, e.g. the nearest value in $\{-\left\lfloor{\lambda/\gamma}\right\rfloor \gamma,...,-\gamma,0,\gamma,...,\left\lfloor{\lambda/\gamma}\right\rfloor \gamma\}$. Thus since $|r^\ell_i(a)-r^\ell_{\gamma,i}(a)|\leq \gamma$ for all $i\in[d], a\in\cA$, we have that:
    \begin{align*}
       |\ell(a, y) - \ell_\gamma(a, y)| &= \left|\sum_{i=1}^d r^\ell_i(a) \cdot y_i - \sum_{i=1}^d r^\ell_{\gamma,i}(a)\cdot y_i \right| \\ &= \left|\sum_{i=1}^d (r^\ell_i(a) - r^\ell_{\gamma,i}(a)) \cdot y_i \right| \\ &\leq \sum_{i=1}^d \left|r^\ell_i(a) - r^\ell_{\gamma,i}(a)\right| \cdot y_i \\ &\leq d\gamma 
    \end{align*}
    where in the last inequality we use the fact that each $y_i\in [0,1]$. Thus, $\cL_\gamma = \{\ell_\gamma\}_{\ell\in\cL}$ is a $(d\gamma)$-cover of $\cL$. Finally, since each $r^\ell_{\gamma,i}(a)$ take one of at most $2\lambda/\gamma$ values, we have that $|\cL_\gamma| \leq (2\lambda/\gamma)^{d|\cA|}$.
\end{proof}
}
\ifarxiv
\discretizationlemma
\fi

The exponential dependence on $|\cA|$ and $d$ will not be prohibitive---remember that \textsc{Decision-Swap} gives error bounds that depend only logarithmically on the size of the loss family. Next we state the result.

\begin{algorithm}[H]
    \KwIn{Family of losses $\cL=\{\ell:\cA\times[0,1]^d\to[0,1]\}$, $(n,\gamma, \delta)$-approximate basis $\cS$ for $\cL$, collection of policies $\cC$, \textsc{Decision-Swap} algorithm}
    \KwOut{Sequence of predictions $\hat{p}_1,...,\hat{p}_T \in [0,1]^n$} 
    \vspace{.5em}

    Set $\gamma=\frac{1}{2n\sqrt{T}}$
    
    Construct $\hat{\cL}=\{\hat{\ell}: \cA\times[0,1]^n\to[0,1]\}_{\ell\in\cL}$, the corresponding family of linear losses given by the basis $\cS$
    
    Construct $\hat{\cL}_\gamma$, the $(n\gamma)$-cover of $\hat{\cL}$ given by Lemma \ref{lem:discretization-linear}\;

    Instantiate a copy of $\textsc{Decision-Swap}$ with loss family $\hat{\cL}_\gamma$ and collection of policies $\cC$\;

    \For{$t=1$ \KwTo $T$}{
        Receive $x_t$\;
        
        Let $\pi_t = \textsc{Decision-Swap}_t(\{x_r\}_{r=1}^{t}, \{s(y_r)\}_{r=1}^{t-1})$, the distribution over predictions output by \textsc{Decision-Swap} on round $t$ given contexts $\{x_r\}_{r=1}^{t}$ and outcomes $\{s(y_r)\}_{r=1}^{t-1})$\;

        Predict $\hat{p}_t \sim \pi_t$\;
        
        Observe $y_t$\;
    }
    
    \caption{Decision Swap Regret for Non-Linear Losses}
    \label{alg:convex}
\end{algorithm}

\begin{theorem}\label{thm:decisionreg-convex}
    Let $\cY=[0,1]^d$ be the outcome space and $\cC$ be a collection of policies $c:\cX\to\cA$. Let $\cL$ be a family of loss functions $\ell:\cA\times\cY\to[0,1]$. Suppose $\mathcal{S}$ is a $(n,\lambda,\delta)$-approximate basis for $\cL$. Let $\hat{\cL} = \{\hat{\ell}\}_{\ell\in\cL}$ be the family of linear approximations to $\cL$ given by $\mathcal{S}$. Let $\hat{\cL}_\gamma$ be the $(n\gamma)$-cover of $\hat{\cL}$ given by Lemma \ref{lem:discretization-linear}, for $\gamma = \frac{1}{2n\sqrt{T}}$. Algorithm \ref{alg:convex} produces predictions $\hat{p}_1,...,\hat{p}_T \in [0,1]^n$ that has $(\cL,\cC,\eps)$-decision swap regret for 
    \[
    \eps \leq O\left(\lambda |\cA| \sqrt{\frac{n|\cA|\ln(n\lambda |\cA||\cC|T)}{T}} \right)  + 2\delta
    \]
    when choosing actions $a_t = \BR^{\hat{\ell}_\gamma}(\hat{p}_t)$ for a nearby $\hat{\ell}_\gamma\in\hat{\cL}_\gamma$.
\end{theorem}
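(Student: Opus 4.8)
The plan is to reduce the whole problem to the purely linear case already resolved by Corollary~\ref{cor:decisionreg-linear}, absorbing a controlled additive loss for the basis approximation and for coefficient discretization. First I would fix an arbitrary $\ell\in\cL$, let $\hat\ell\in\hat{\cL}$ be its linear representative over $[0,1]^n$ supplied by the $(n,\lambda,\delta)$-approximate basis $\cS$ (so $\hat\ell(a,\cdot)$ is $\lambda$-Lipschitz in the $L_\infty$ norm and $|\ell(a,y)-\hat\ell(a,s(y))|\le\delta$ for all $a,y$), and let $\hat\ell_\gamma\in\hat{\cL}_\gamma$ be the cover element within $n\gamma$ of $\hat\ell$ guaranteed by Lemma~\ref{lem:discretization-linear}. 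The action selection rule witnessing decision swap regret for $\ell$ is then $k^\ell(\hat p)=\BR^{\hat\ell_\gamma}(\hat p)$, i.e.\ $a_t=\BR^{\hat\ell_\gamma}(\hat p_t)$ --- exactly the rule named in the statement.

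Next I would invoke Corollary~\ref{cor:decisionreg-linear} on the \emph{finite} linear family $\hat{\cL}_\gamma$ over the $n$-dimensional outcome space $[0,1]^n$. Two observations make this clean: each $\hat\ell_\gamma$ is $O(\lambda)$-Lipschitz, since it differs from the $\lambda$-Lipschitz $\hat\ell$ by a linear function whose coefficients sum in absolute value to at most $n\gamma=1/(2\sqrt T)$ (use Lemma~\ref{lem:lip-constant}); and Algorithm~\ref{alg:convex} feeds \textsc{Decision-Swap} precisely the transformed outcomes $s(y_t)$, so its guarantees are about predictions of $s(y)$. This yields $(\hat{\cL}_\gamma,\cC,\eps_{\mathrm{lin}})$-decision swap regret for the $\hat p_t$ with $\eps_{\mathrm{lin}}=O\big(\lambda|\cA|\sqrt{\ln(n|\cA||\hat{\cL}_\gamma||\cC|T)/T}\big)$. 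Applying the cover-size bound of Lemma~\ref{lem:discretization-linear} in dimension $n$ gives $|\hat{\cL}_\gamma|\le(2\lambda/\gamma)^{n|\cA|}$, so with $\gamma=1/(2n\sqrt T)$ we get $\ln|\hat{\cL}_\gamma|=O\big(n|\cA|\ln(n\lambda T)\big)$; substituting turns $\eps_{\mathrm{lin}}$ into exactly the first term of the claimed bound.

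Finally I would transfer from $\hat\ell_\gamma$ back to $\ell$. The triangle inequality gives $|\ell(a,y)-\hat\ell_\gamma(a,s(y))|\le\delta+n\gamma$ pointwise, so for any benchmark assignment $\{c_a\in\cC\}_{a\in\cA}$, adding and subtracting $\hat\ell_\gamma(\cdot,s(y_t))$ inside the regret sum --- and applying this estimate both to the played actions $a_t$ and to the benchmark actions $c_{a_t}(x_t)$ --- shows the average regret under $\ell$ is at most $\eps_{\mathrm{lin}}+2(\delta+n\gamma)$, where the key middle term is controlled by the Step~2 decision swap regret guarantee for $\hat\ell_\gamma$ together with $a_t=\BR^{\hat\ell_\gamma}(\hat p_t)$. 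Since $2n\gamma=1/\sqrt T$ is dominated by $\eps_{\mathrm{lin}}$, this is $O\big(\lambda|\cA|\sqrt{n|\cA|\ln(n\lambda|\cA||\cC|T)/T}\big)+2\delta$; as $\ell$ and $\{c_a\}$ were arbitrary, this is the desired $(\cL,\cC,\eps)$-decision swap regret. There is no deep obstacle here --- the argument is largely bookkeeping --- but the step needing the most care is the reduction to Corollary~\ref{cor:decisionreg-linear}: one must correctly substitute the outcome dimension $d\mapsto n$, verify that snapping coefficients to a $\gamma$-grid leaves the Lipschitz constant at $O(\lambda)$ rather than inflating it, and route the best response through the cover element $\hat\ell_\gamma$ (not $\hat\ell$ itself), since it is precisely for $\hat{\cL}_\gamma$ that \textsc{Decision-Swap} provides decision calibration and decision cross-calibration.
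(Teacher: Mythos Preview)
Your proposal is correct and follows essentially the same approach as the paper: fix $\ell$, pass to its linear approximation $\hat\ell$ and then to the nearby discretized $\hat\ell_\gamma$, apply Corollary~\ref{cor:decisionreg-linear} to the finite family $\hat{\cL}_\gamma$ over $[0,1]^n$ with outcomes $s(y_t)$, and transfer back using the pointwise bound $|\ell(a,y)-\hat\ell_\gamma(a,s(y))|\le\delta+n\gamma$ applied to both the played and benchmark actions. If anything you are slightly more careful than the paper in explicitly noting that the Lipschitz constant of $\hat\ell_\gamma$ is $O(\lambda)$ rather than exactly $\lambda$, and in pointing out that the best response must be routed through $\hat\ell_\gamma$ rather than $\hat\ell$.
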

\newcommand{\decisionregconvexthm}{

    % \textit{Sketch:} (1) use high-dimensional linear representation given by Theorem \ref{thm:basis} (2) discretize (3) if exactly best respond to discretized linear representation, decision swap regret is bounded by Theorem \ref{thm:decisionreg-linear} and Theorem \ref{thm:unbiasedalg} 

    Fix any $\ell\in\cL$. By $(n,\lambda,\delta)$-approximation, there is an $\hat{\ell}\in\hat{\cL}$ such that $\hat{\ell}_\gamma$ is a linear and $\lambda$-Lipschitz function of $[0,1]^n$, and for any $a\in\cA, y\in\cY$, $\left| \hat{\ell}(a, s(y)) - \ell(a, y) \right| \leq \delta$. Furthermore, since $\hat{\cL}_\gamma$ is a $(n\gamma)$-cover of $\hat{\cL}$, there exists $\hat{\ell}_\gamma\in\hat{\cL}_\gamma$ satisfying $\left| \hat{\ell}_\gamma(a, s(y)) - \hat{\ell}(a, s(y)) \right| \leq n\gamma$, and therefore satisfying:
    \[
    \left| \hat{\ell}_\gamma(a, s(y)) - \ell(a, y) \right| \leq \delta + n\gamma
    \]
    
    We now turn to analyzing the guarantees of the sequence of predictions $\hat{p}_1,...,\hat{p}_T \in [0,1]^n$ output by Algorithm \ref{alg:convex}. Fix any assignment of policies $\{c_a\}_{a\in\cA}$. We first analyze the decision swap regret as measured by $\hat{\ell}_\gamma$ when the agent chooses actions $a_t = \BR^{\hat{\ell}_\gamma}(\hat{p}_t)$. Since $\hat{\ell}_\gamma$ is linear in $[0,1]^n$, Corollary \ref{cor:decisionreg-linear} bounds the agent's decision swap regret---specifically, with respect to the sequence of outcomes under the transformation $s$. We have that:
    \begin{align*}
        \frac{1}{T} \sum_{t=1}^T \E_{\hat{p}_t\sim\pi_t}\left[\hat{\ell}_\gamma(a_t, s(y_t)) - \hat{\ell}_\gamma(c_{a_t}(x_t), s(y_t))\right]
        &\leq O\left(\lambda |\cA| \sqrt{\frac{\ln(n|\cA||\hat{\cL}_\gamma||\cC|T)}{T}} \right) \\
        &\leq O\left(\lambda |\cA| \sqrt{\frac{\ln(n|\cA||\cC|(\lambda/\gamma)^{n|\cA|}T)}{T}} \right) \\
        &\leq O\left(\lambda |\cA| \sqrt{\frac{n|\cA|\ln(n|\cA||\cC|(\lambda/\gamma)T)}{T}} \right)       
        % O\left( \lambda \sqrt{\frac{n|\cA|\ln(n|\cA||\hat{\cL}_\gamma||\cR^{\hat{\cL}_\gamma}\circ \cC|T)}{T}} \right) \\
        % &\leq O\left( \lambda \sqrt{\frac{n|\cA|\ln(n^2|\cA||\cC|(\lambda/\gamma)^{2n|\cA|}T)}{T}} \right) \\
        % &\leq O\left( \lambda n|\cA|\sqrt{\frac{\ln(n|\cA||\cC|(\lambda/\gamma)T)}{T}} \right)
    \end{align*}
    The first inequality applies Corollary \ref{cor:decisionreg-linear}. The second inequality uses Lemma \ref{lem:discretization-linear} to establish that $|\hat{\cL}_\gamma| \leq (2\lambda/\gamma)^{n|\cA|}$.

    Using this, we can now derive the decision swap regret as measured by $\ell$. We simply substitute $\ell$ for $\hat{\ell}_\gamma$ and incur the approximation factor $\delta+n\gamma$ for both the sequence of realized losses and the benchmark sequence:
    \begin{align*}
        \frac{1}{T} \sum_{t=1}^T \E_{\hat{p}_t\sim\pi_t}\left[ \ell(a_t, y_t) - \ell(c_{a_t}(x_t), y_t) \right] 
        &\leq \frac{1}{T} \sum_{t=1}^T \E_{\hat{p}_t\sim\pi_t}\left[ \hat{\ell}_\gamma(a_t, s(y_t)) - \hat{\ell}_\gamma(c_{a_t}(x_t), s(y_t)) \right] + 2\delta + 2n\gamma \\
        &\leq O\left(\lambda |\cA| \sqrt{\frac{n|\cA|\ln(n|\cA||\cC|(\lambda/\gamma)T)}{T}} \right)  + 2\delta + 2n\gamma
    \end{align*}
    For our setting of $\gamma = \frac{1}{2n\sqrt{T}}$, this expression is at most:
    \[
    O\left(\lambda |\cA| \sqrt{\frac{n|\cA|\ln(n\lambda |\cA||\cC|T)}{T}} \right)  + 2\delta
    \]
    This proves the theorem.

}
\ifarxiv
\begin{proof}
  \decisionregconvexthm  
\end{proof}
\fi

\ifarxiv
We can now instantiate our decision swap regret guarantees for specific loss families. We will denote by $\cL_\cF$ the loss family corresponding to a function family $\cF$---i.e. $\cL_\cF=\{\ell(a,y)=f_a(y)\}$ for every $f_a\in\cF$.

\begin{corollary}\label{cor:regret-specific-loss}
    Let $\cY=[0,1]^d$ be the outcome space and $\cC$ be a collection of policies $c:\cX\to\cA$. Let $\cL$ be a family of loss functions $\ell:\cA\times\cY\to[0,1]$. Let $\hat{\cL} = \{\hat{\ell}\}_{\ell\in\cL}$ be the family of linear approximations to $\cL$ given by an $(n,\lambda,\delta)$-approximate basis $\mathcal{S}$. Let $\hat{\cL}_\gamma$ be the $(n\gamma)$-cover of $\hat{\cL}$ given by Lemma \ref{lem:discretization-linear}, for $\gamma = \frac{1}{2n\sqrt{T}}$. Suppose agents choose actions $a_t = \BR^{\hat{\ell}_\gamma}(\hat{p}_t)$ for a nearby $\hat{\ell}_\gamma\in\hat{\cL}_\gamma$. Then, for the loss families below, Algorithm \ref{alg:convex} produces predictions that has $(\cL,\cC,\eps)$-decision swap regret for the following $\eps$:
    \begin{itemize}
        \item Let $\cL = \cL_{\cF^1_{\text{cvx}}}$ be the family of convex, 1-Lipschitz functions over $\cY = [0,1]$. Then for $\delta = \frac{1}{T^{3/8}}$, 
        \[
        \eps \leq O\left(\frac{|\cA|\ln T \sqrt{|\cA|\ln(|\cA||\cC|T\ln T)}}{T^{3/8}} \right)
        \]

        \item Let $\cL = \cL_{\cF^d_L}$ be the family of $L$-Lipschitz losses. Then for $\delta=\frac{L^{d/(d+2)}}{2^{d/(d+2)}T^{1/(d+2)}}$, 
        \[
        \eps \leq O\left(\frac{|\cA| L^{d/(d+2)} \sqrt{d|\cA|\ln(L |\cA||\cC|T)}}{T^{1/(d+2)}} \right)
        \]

        \item Let $\cL = \cL_{\cF^d_p}$ be the family of $L_p$ losses. Then, 
        \[
        \eps \leq O\left(p^{p+2}d |\cA| \sqrt{\frac{d|\cA|\ln(pd|\cA||\cC|T)}{T}} \right)
        \]

        \item Let $\cL = \cL_{\cF^d_{\Omega_{\beta,g}}}$ be the family of loss functions that are monomials of degree $\beta$ over an underlying basis representation $g$. Then, 
        \[
        \eps \leq O\left(d^\beta |\cA| \sqrt{\frac{\beta d^\beta|\cA|\ln(d |\cA||\cC|T)}{T}} \right)
        \]
        
        \item Let $\cL = \cL_{\cF^d_{\Omega_{\text{exp},g}}}$ be the family of loss functions that are exponential functions over an underlying basis representation $g$. Then for $\delta = \frac{e^{2Rd/(\ln d+2)}}{T^{1/(\ln d+2)}}$, 
        \[
        \eps \leq O\left(\frac{|\cA| e^{Rd/(\ln d+2)} \sqrt{d|\cA|\ln(|\cA||\cC|T)}}{T^{1/(\ln d+2)}} \right)
        \]

        \item Let $\cL = \cL_{\cF^d_{\text{Leon}}}$ be the family of Leontif loss functions. Then, %$n = O\left( \frac{d^4c^{d}\ln^{3d+1}(3L/\delta)}{(\delta/3L)^{(6d+8)/(d+2)}} \right)$, $\lambda = O\Big( \frac{Ld^{5/2}c^{d}\ln^{2d+1/2}(3L/\delta)}{(\delta/3L)^{(5d+6)/(d+2)}} \Big)$ 
        for some universal constant $c$ and $\delta = O\left(\frac{dc^d}{T^{(d+2)/(18d+24)}}\right)$
        \[
        \eps \le O\left(\frac{|\cA| d^{3/2}c^{d}\sqrt{|\cA|\ln( Ld|\cA||\cC|T)}}{T^{(d+2)/(18d+24)}}\right)
        % O\left(\frac{|\cA| d^{9/2}24^{3d}\ln^{}()\sqrt{|\cA|\ln( |\cA||\cC|T)}}{T^{(d+2)/(18d+24)}}\right)
        \]
    \end{itemize}
\end{corollary}
\else
We instantiate our decision swap regret guarantees for specific loss families in Table \ref{tab:regret}. We give a formal statement in Appendix \ref{app:decregcor}. Below, we denote by $\cL_\cF$ the loss family corresponding to a function family $\cF$---i.e. $\cL_\cF=\{\ell(a,y)=f_a(y)\}$ for every $f_a\in\cF$.

\begin{table}[htbp]
        \centering
        \begin{tabular}{|c|c|}
        \hline
        Loss & Regret Bound \\ \hline
            $\cL_{\cF^1_{\text{cvx}}}$ (1-dimensional convex, 1-Lipschitz) & $ O\left(\frac{|\cA|\ln T \sqrt{|\cA|\ln(|\cA||\cC|T\ln T)}}{T^{3/8}} \right)$ \\
            $\cL_{\cF^d_L}$ ($L$-Lipschitz) & $O\left(\frac{|\cA| L^{d/(d+2)} \sqrt{d|\cA|\ln(L |\cA||\cC|T)}}{T^{1/(d+2)}} \right)$ \\
            $\cL_{\cF^d_p}$ ($L_p$ loss) & $O\left(p^{p+2}d |\cA| \sqrt{\frac{d|\cA|\ln(pd|\cA||\cC|T)}{T}} \right)$ \\
            $\cL_{\cF^d_{\Omega_{\beta,g}}}$ (monomials of degree $\beta$ over $g$) & $O\left(d^\beta |\cA| \sqrt{\frac{\beta d^\beta|\cA|\ln(d |\cA||\cC|T)}{T}} \right)$ \\
            $\cL_{\cF^d_{\Omega_{\text{exp},g}}}$ (exponential functions over $g$) & $O\left(\frac{|\cA| e^{Rd/(\ln d+2)} \sqrt{d|\cA|\ln(|\cA||\cC|T)}}{T^{1/(\ln d+2)}} \right)$ \\
            $\cL_{\cF^d_{\text{Leon}}}$ (Leontiff functions) & $O\left(\frac{|\cA| d^{3/2}c^{d}\sqrt{|\cA|\ln( Ld|\cA||\cC|T)}}{T^{(d+2)/(18d+24)}}\right)$ \\ \hline
        \end{tabular}
        \caption{Decision Swap Regret for Specific Loss Families}
        \label{tab:regret}
\end{table}
\fi

\newcommand{\seccontsactionspaces}{
The decision swap regret bounds we have derived depend on $|\cA|$, the number of actions. This is acceptable in games that have finite action spaces (as often studied in the ``downstream regret'' literature) --- but the omniprediction literature often takes $\cA = \cY$ --- i.e. actions correspond to (continuous) labe predictions. Here we extend our results to continuous but bounded action spaces $\cA = [0,1]^m$ (this is without loss, since taking $\cA$ to be any bounded space $[-C, C]^m$ will only introduce a constant factor to our analyses). Our results for finite action spaces translate to continuous action spaces as long as losses are additionally Lipschitz in the actions, and agents choose actions from a $\theta$-net of $\cA$. This has implications for decision tasks where the action space is the outcome space (e.g. omniprediction). For these tasks, if we assume losses to be Lipschitz in the outcomes, then it is generally reasonable to assume that they are also Lipschitz in the actions (most loss functions considered in machine learning are Lipschitz in both the prediction and the label --- e.g. squared loss satisfies this assumption). 

\begin{definition}[$\theta$-Net]
    Let $\cS$ be an arbitrary continuous space. We say that $\cS_\theta \subset \cS$ is a $\theta$-net of $\cS$ if for every $s\in\cS$, there exists $\tilde{s}\in\cS_\theta$ such that $\|s - \tilde{s}\|_\infty \leq \theta$. Observe that when $\cS = [0,1]^m$, we can always obtain a $\theta$-net of $\cS$ of size $|\cS_\theta| \leq 1/\theta^m$ by discretizing each coordinate into multiples of $\theta$. 
\end{definition}

The idea is straightforward. Given a family of losses $\cL = \{\ell:\cA \times \cY \to [0,1]\}$, we can construct a corresponding family of losses $\cL_\theta = \{ \ell|_{\cA_\theta}: \cA_\theta \times \cY \to [0,1]\}_{\ell\in\cL}$, where $\ell|_{\cA_\theta}$ is the restriction of $\ell$ to the a $\theta$-net $\cA_\theta$ of $\cA$. Since $\cA_\theta$ is finite, we can invoke our previous result for finite loss families (Theorem \ref{thm:decisionreg-convex}) on $\cL_\theta$, when agents choose actions from $\cA_\theta$. We can think of agents as ``snapping" their actions to a nearby action in $\cA_\theta$. There is just one complication: our previous result only lets us compare our performance to benchmark policies that suggest actions in $\cA_\theta$. However, we show that benchmark policies that instead suggest actions in $\cA$ are not much more powerful when losses are Lipschitz in the actions.

% \begin{lemma}[Approximate Best Response]\label{lem:continuous}
%     Let $\cY$ be an outcome space and $\cA$ be a continuous action space. Let $\ell:\cA\times\cY\to[0,1]$ be a loss that is 1-Lipschitz in the first argument. Let $\cA_\theta$ be a $\theta$-net of $\cA$. Let $\ell|_{\cA_\theta}: \cA_\theta \times \cY \to [0,1]$ be the restriction of $\ell$ to the domain $\cA_\theta$. Given $p\in\cY$, let $\tilde{a}^* = \BR ^{\ell|_{\cA_\theta}}(p) = \argmin_{\tilde{a}\in\cA_\theta} \ell(\tilde{a}, p)$. Then $\ell(\tilde{a}^*, p) - \min_{a\in\cA} \ell(a, p) \leq \theta$.
% \end{lemma}
% \begin{proof}
%     Let $a^* = \BR^\ell(p) = \argmin_{a\in\cA} \ell(a, p)$. By definition of a $\theta$-net, there exists $\tilde{a} \in \cA_\theta$ satisfying:
%     \[
%     |\ell(\tilde{a}, p) - \ell(a^*, p)| \leq \|\tilde{a} - a^*\|_\infty \leq \theta
%     \]
%     where the first inequality applies 1-Lipschitzness of $\ell$ in the first argument. Therefore, since $\tilde{a}^*$ is the minimizer of $\ell$ over $\cA_\theta$, 
%     \[
%     \ell(\tilde{a}^*, p) - \ell(a^*, p) \leq \ell(\tilde{a}, p) - \ell(a^*, p) \leq \theta
%     \]
%     as desired.
% \end{proof}

\begin{theorem}\label{thm:continuous-action-space}
    Let $\cA=[0,1]^m$ and $\cA_\theta$ be a $\theta$-net of $\cA$. Let $\cL$ be a family of losses $\ell:\cA\times\cY\to[0,1]$. Let $\cC$ be a collection of policies $c:\cX\to\cA$ and $\tilde{\cC} = \{\tilde{c}: \cX\to\cA_\theta\}_{c\in\cC}$ be the corresponding collection of policies with ``snapped" outputs. If a sequence of predictions $p_1,...,p_T$ has $(\cL_\theta, \tilde{\cC}, \eps')$ decision swap regret with respect to a sequence of outcomes $y_1,...,y_T\in\cY$ and actions $\tilde{a}_t,...,\tilde{a}_T\in\cA_\theta$, then it has $(\cL, \cC, \eps)$ decision swap regret for $\eps \leq \eps' + \theta$ with respect to the same sequence of outcomes and actions. 
\end{theorem}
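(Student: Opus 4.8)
The plan is a direct net-based reduction: I would reuse the action selection rules and the induced action sequence from the finite (snapped) problem verbatim, and pay only the cost of moving a benchmark policy's output to its nearest point of $\cA_\theta$. First I would fix an arbitrary $\ell\in\cL$ and pass to its restriction $\ell|_{\cA_\theta}\in\cL_\theta$; the hypothesized $(\cL_\theta,\tilde{\cC},\eps')$ decision swap regret supplies an action selection rule $k^{\ell|_{\cA_\theta}}:\cP\to\cA_\theta$ whose induced actions are exactly the sequence $\tilde a_1,\dots,\tilde a_T\in\cA_\theta$. Since $\cA_\theta\subseteq\cA$, this is a legal choice of $k^\ell:\cP\to\cA$ for the $(\cL,\cC)$ problem, and the realized actions $a_t:=\tilde a_t$ already lie in $\cA_\theta$, so $\ell|_{\cA_\theta}(a_t,y_t)=\ell(a_t,y_t)$ and the ``realized loss'' side of decision swap regret transfers with no loss.

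Next I would handle the benchmark side. Given any assignment $\{c_a\in\cC\}_{a\in\cA}$, I note that because every $a_t$ lies in $\cA_\theta$, the quantity $\frac1T\sum_t \ell(c_{a_t}(x_t),y_t)$ depends only on $\{c_a\}_{a\in\cA_\theta}$; I take $\tilde c_a\in\tilde{\cC}$ to be the snapped version of $c_a$ for each $a\in\cA_\theta$ and apply the $(\cL_\theta,\tilde{\cC},\eps')$ guarantee to this assignment, obtaining $\frac1T\sum_t(\ell(a_t,y_t)-\ell(\tilde c_{a_t}(x_t),y_t))\le\eps'$. It then remains to replace $\tilde c_{a_t}(x_t)$ by $c_{a_t}(x_t)$: by the $\theta$-net property $\|\tilde c_{a_t}(x_t)-c_{a_t}(x_t)\|_\infty\le\theta$, so if each $\ell\in\cL$ is $1$-Lipschitz in its action argument in the $L_\infty$ norm (the assumption implicit in the discussion preceding the theorem), then $|\ell(\tilde c_{a_t}(x_t),y_t)-\ell(c_{a_t}(x_t),y_t)|\le\theta$ for every $t$. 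Averaging over $t$ gives
\[
\frac1T\sum_{t=1}^T\big(\ell(a_t,y_t)-\ell(c_{a_t}(x_t),y_t)\big)\;\le\;\frac1T\sum_{t=1}^T\big(\ell(a_t,y_t)-\ell(\tilde c_{a_t}(x_t),y_t)\big)+\theta\;\le\;\eps'+\theta,
\]
and since $\ell$ and $\{c_a\}$ were arbitrary this is exactly $(\cL,\cC,\eps'+\theta)$ decision swap regret. For general $L$-Lipschitz-in-action losses the identical argument gives $\eps'+L\theta$.

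I do not expect a genuinely hard step here --- the content is in aligning definitions --- but there are two places to be careful. The first is ensuring the chosen actions are themselves net points, so that only the benchmark side of the regret needs perturbing; this is automatic because $k^{\ell|_{\cA_\theta}}$ ranges over $\cA_\theta$, and it is why we pay $\theta$ once rather than twice. The second is that the $(\cL,\cC)$ benchmark assignment is indexed over all of $\cA$ while only its restriction to $\cA_\theta$ is ever evaluated, so we never need to snap policies attached to non-net actions. The Lipschitz-in-action hypothesis is exactly what licenses the one perturbation we do make, and it is what produces the additive $\theta$; without it the reduction has no handle on the gap between $c_a(x)$ and $\tilde c_a(x)$.
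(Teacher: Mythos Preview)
Your proposal is correct and follows essentially the same argument as the paper: both use that the realized actions already lie in $\cA_\theta$ so the chosen-action side transfers for free, and both snap the benchmark policies to $\tilde{\cC}$ and invoke 1-Lipschitzness in the action to pay a single additive $\theta$ on the benchmark side. Your write-up is in fact slightly more explicit than the paper's in noting that only the restriction $\{c_a\}_{a\in\cA_\theta}$ of the benchmark assignment is ever evaluated, which cleanly justifies why the snapped family $\tilde{\cC}$ suffices.
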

\begin{proof}
    Fix any $\ell\in\cL$ and any assignment of benchmark policies $\{c_{\tilde{a}}\in\cC\}_{\tilde{a}\in\cA_\theta}$. On the same sequence of outcomes and actions, we observe that the actions suggested by $\{c_{\tilde{a}}\in\cC\}_{\tilde{a}\in\cA_\theta}$ do not perform too much better than the actions suggested by the corresponding benchmark policies $\{\tilde{c}_{\tilde{a}}\in\tilde{\cC}\}_{\tilde{a}\in\cA_\theta}$. Specifically, for all $t\in[T]$, we have that:
    \begin{align*}
        \ell(\tilde{c}_{\tilde{a}_t}(x_t), y_t) - \ell(c_{\tilde{a}}(x_t), y_t) \leq \|\tilde{c}_{\tilde{a}_t}(x_t) - c_{\tilde{a}_t}(x_t)\|_\infty \leq \theta
    \end{align*}
    where the first inequality follows by 1-Lipschitzness in the first argument, and the second inequality follows by the fact that for any $c\in\cC$, $\tilde{c}\in\tilde{\cC}$ snaps its output to the nearest point in the $\theta$-net. 

    Given this observation, we can compute the decision swap regret incurred by $\ell$ against an assignment of benchmark policies in $\cC$. For all $t\in[T]$, since $\tilde{a}_t, \tilde{c}_{\tilde{a}_t}(x_t) \in \cA_\theta$, we have that $\ell(\tilde{a}_t, y_t) = \ell|_{\cA_\theta}(\tilde{a}_t, y_t)$ and $\ell(\tilde{c}_{\tilde{a}_t}(x_t), y_t) = \ell|_{\cA_\theta}(\tilde{c}_{\tilde{a}_t}(x_t), y_t)$. Thus, as a consequence of the earlier observation and $(\cL_\theta, \tilde{\cC}, \eps')$ decision swap regret, we have:
    \begin{align*}
        \frac{1}{T} \sum_{t=1}^T (\ell(\tilde{a}_t, y_t) - \ell(c_{\tilde{a}_t}(x_t), y_t))
        \leq \frac{1}{T} \sum_{t=1}^T (\ell|_{\cA_\theta}(\tilde{a}_t, y_t) - \ell|_{\cA_\theta}(\tilde{c}_{\tilde{a}_t}(x_t), y_t)) + \theta
        \leq \eps' + \theta
    \end{align*}
    which proves the theorem.
\end{proof}

Invoking Theorem \ref{thm:decisionreg-convex}, this allows us to state decision swap regret guarantees for agents who choose nearby actions in $\cA_\theta$.

\begin{corollary}\label{cor:decisionreg-convex-continuous-action}
    Let $\cA=[0,1]^m$ be the action space and $\cY=[0,1]^d$ be the outcome space. Let $\cL$ be a family of loss functions $\ell:\cA\times\cY\to[0,1]$. Suppose $\mathcal{S}$ is a $(n,\lambda,\delta)$-approximate basis for $\cL$. Let $\hat{\cL} = \{\hat{\ell}\}_{\ell\in\cL}$ be the family of linear approximations to $\cL$ given by $\mathcal{S}$. Let $\hat{\cL}_\gamma$ be the $(n\gamma)$-cover of $\hat{\cL}$ given by Lemma \ref{lem:discretization-linear}, for $\gamma = \frac{1}{2n\sqrt{T}}$. Let $\cA_\theta$ be a $\theta$-net of $\cA$ for $\theta = \left( \frac{\lambda^2 n}{T} \right)^{1/(3m+2)}$. Let $\cL_\theta = \{ \ell|_{\cA_\theta}: \cA_\theta \times \cY \to [0,1]\}_{\ell\in\cL}$ be the restriction of the loss functions to $\cA_\theta$. Let $\cC$ be a collection of policies $c:\cX\to\cA$ and $\tilde{\cC} = \{\tilde{c}: \cX\to\cA_\theta\}_{c\in\cC}$ be the corresponding collection of policies with ``snapped" outputs. Running Algorithm \ref{alg:convex} with inputs $\cL_\theta$, $\cS$ and $\tilde{\cC}$ produces predictions $\hat{p}_1,...,\hat{p}_T \in [0,1]^n$ that has $(\cL,\cC,\eps)$-decision swap regret for:
    \[
        \eps \leq O\left( \frac{\lambda^{2/(3m+2)} n^{1/(3m+2)} \sqrt{\ln(n\lambda|\cC|T)}}{T^{1/(3m+2)}} \right)  + 2\delta
    \]
    when choosing actions $\tilde{a}_t \in\cA_\theta$ that is the closest point to $\BR^{\hat{\ell}_\gamma}(\hat{p}_t)$ for a nearby $\hat{\ell}_\gamma\in\hat{\cL}_\gamma$. When the action space is $m = 1$ dimensional (the standard setting for omniprediction), this gives a bound of:
        \[
        \eps \leq O\left( \frac{\lambda^{2/5} n^{1/5} \sqrt{\ln(n\lambda|\cC|T)}}{T^{1/5}} \right)  + 2\delta
    \]
\end{corollary}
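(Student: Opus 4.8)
The plan is to reduce directly to the finite-action guarantee of Theorem~\ref{thm:decisionreg-convex} by working over the $\theta$-net $\cA_\theta$, and then to transport the resulting bound back to the continuous action space $\cA$ and continuous benchmark class $\cC$ via Theorem~\ref{thm:continuous-action-space}. Since a $\theta$-net of $\cA=[0,1]^m$ obtained by gridding each coordinate has size $|\cA_\theta|\le 1/\theta^m$, and since $\cS$ remains an $(n,\lambda,\delta)$-approximate basis for the restricted family $\cL_\theta=\{\ell|_{\cA_\theta}\}_{\ell\in\cL}$ (restricting the action space only deletes functions $f^\ell_a$ from the set that must be approximately spanned), running Algorithm~\ref{alg:convex} on inputs $\cL_\theta,\cS,\tilde{\cC}$ is exactly an instance of Theorem~\ref{thm:decisionreg-convex} with action space $\cA_\theta$ and benchmark class $\tilde{\cC}$ (of size $|\tilde{\cC}|\le|\cC|$), the agent best responding within $\cA_\theta$ under the nearby discretized linear loss $\hat\ell_\gamma$.

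Theorem~\ref{thm:decisionreg-convex} then gives that $\hat p_1,\dots,\hat p_T$ have $(\cL_\theta,\tilde{\cC},\eps')$-decision swap regret with
\[
\eps' \;\le\; O\!\left(\lambda\,|\cA_\theta|\sqrt{\frac{n\,|\cA_\theta|\ln(n\lambda|\cA_\theta||\cC|T)}{T}}\right)+2\delta
\;=\; O\!\left(\lambda\,\theta^{-3m/2}\sqrt{\frac{n\ln(n\lambda|\cC|T)}{T}}\right)+2\delta ,
\]
where in the second step I substitute $|\cA_\theta|=\theta^{-m}$ and absorb the $\ln(1/\theta^m)$ contribution into $\ln(n\lambda|\cC|T)$ up to constants, which is legitimate because the eventual choice of $\theta$ is of order $\mathrm{poly}(1/T)$ so that $\ln(1/\theta^m)=O(m\log T)$. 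Applying Theorem~\ref{thm:continuous-action-space} (using $1$-Lipschitzness of the losses in the action argument, the standing assumption of this subsection) converts this into $(\cL,\cC,\eps'+\theta)$-decision swap regret on the same outcome and action sequences, so
\[
\eps \;\le\; O\!\left(\lambda\,\theta^{-3m/2}\sqrt{\frac{n\ln(n\lambda|\cC|T)}{T}}\right)+\theta+2\delta .
\]

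It remains to choose $\theta$ to balance the first two terms. Dropping the logarithmic factor, equating $\lambda\theta^{-3m/2}\sqrt{n/T}$ with $\theta$ gives $\theta^{1+3m/2}=\lambda\sqrt{n/T}$, i.e.\ $\theta=(\lambda^2n/T)^{1/(3m+2)}$ as in the statement. A short exponent check (writing $u=\lambda^2n/T$, one has $\lambda\sqrt{n/T}=u^{1/2}$ and $\theta^{-3m/2}=u^{-3m/(2(3m+2))}$, whence $\lambda\theta^{-3m/2}\sqrt{n/T}=u^{1/(3m+2)}=\theta$) shows both terms equal $(\lambda^2n/T)^{1/(3m+2)}$; carrying the $\sqrt{\ln(n\lambda|\cC|T)}$ factor through yields $\eps\le O\!\left(\lambda^{2/(3m+2)}n^{1/(3m+2)}\sqrt{\ln(n\lambda|\cC|T)}/T^{1/(3m+2)}\right)+2\delta$, and setting $m=1$ (so $3m+2=5$) gives the special case. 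The only point demanding any care is verifying that the poly-logarithmic blowup coming from $|\cA_\theta|=\theta^{-m}$ inside the square-root and from $\theta$ inside the logarithm is genuinely swallowed by $\ln(n\lambda|\cC|T)$; this is mild, and everything else is a mechanical composition of the two cited theorems together with the exponent bookkeeping.
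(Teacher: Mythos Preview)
Your proposal is correct and follows essentially the same approach as the paper: apply Theorem~\ref{thm:decisionreg-convex} over the finite net $\cA_\theta$ to obtain $(\cL_\theta,\tilde\cC,\eps')$-decision swap regret, lift via Theorem~\ref{thm:continuous-action-space} to $(\cL,\cC,\eps'+\theta)$, and optimize $\theta$. The paper's proof is identical in structure and computation; your added exponent verification and explicit handling of the $\ln(1/\theta^m)$ term are sound and slightly more detailed than the paper's version.
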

\begin{proof}
    By Theorem \ref{thm:decisionreg-convex}, running Algorithm \ref{alg:convex} with inputs $\cL_\theta$, $\cS$ and $\tilde{\cC}$ produces predictions $\hat{p}_1,...,\hat{p}_T$ that has $(\cL_\theta, \tilde{\cC}, \eps')$-decision swap regret for:
    \begin{align*}
        \eps' &\leq O\left(\lambda |\cA_\theta| \sqrt{\frac{n|\cA_\theta|\ln(n\lambda |\cA_\theta||\cC|T)}{T}} \right) + 2\delta \\
        &\le O\left(\lambda \left(\frac{1}{\theta^m}\right)^{\frac{3}{2}} \sqrt{\frac{n\ln(n\lambda|\cC|T/\theta^m)}{T}} \right) + 2\delta
    \end{align*}

    By Theorem \ref{thm:continuous-action-space}, the sequence of predictions $\hat{p}_1,...,\hat{p}_T$ has $(\cL,\cC,\eps)$-decision swap regret for:
    \begin{align*}
        \eps &\le \eps' + \theta \\
        &\le O\left(\lambda \left(\frac{1}{\theta^m}\right)^{\frac{3}{2}} \sqrt{\frac{n\ln(n\lambda|\cC|T/\theta^m)}{T}} \right) + 2\delta + \theta
    \end{align*}

    For our setting of $\theta = \left(\frac{\lambda^2 n}{T}\right)^{1/(3m+2)}$, this expression is at most:
    \[
        O\left( \frac{\lambda^{2/(3m+2)} n^{1/(3m+2)} \sqrt{\ln(n\lambda|\cC|T)}}{T^{1/(3m+2)}} \right)  + 2\delta
    \]
    
\end{proof}

}

\newcommand{\seconlinetobatch}{
We use an online-to-batch reduction to obtain guarantees for the offline version of the problem, in which features and labels are drawn i.i.d. from a fixed joint distribution $\cD$, rather than selected sequentially by an adversary. In the reduction, we will feed a sequence of i.i.d. draws from $\cD$ to Algorithm \ref{alg:convex}, our algorithm for the online adversarial setting, to produce prediction functions $p_t:\cX\to\cP$ mapping contexts to predictions (we note that Algorithm \ref{alg:convex} produces an implicit prediction function at every round). Our (randomized) predictor will simply be the uniform mixture of these prediction functions. 

\begin{theorem}\label{thm:batch}
    Let $\cY=[0,1]^d$ be the outcome space and $\cC$ be a collection of policies $c:\cX\to\cA$. Let $\cL$ be a family of loss functions $\ell:\cA\times\cY\to[0,1]$. Suppose $\mathcal{S}$ is a $(n,\lambda,\delta)$-approximate basis for $\cL$. Let $\hat{\cL} = \{\hat{\ell}\}_{\ell\in\cL}$ be the family of linear approximations to $\cL$ given by $\mathcal{S}$. Let $\hat{\cL}_\gamma$ be the $(n\gamma)$-cover of $\hat{\cL}$ given by Lemma \ref{lem:discretization-linear}, for $\gamma = \frac{1}{2n\sqrt{T}}$. We run Algorithm \ref{alg:convex} with a sample $D=\left\{\left(x_t, y_t\right)\right\}_{t=1}^T$ that is drawn i.i.d. from $\cD$ to obtain a sequence of prediction functions $\left\{p_t\right\}_{t=1}^T$, where $\left\{p_t(x_t)\right\}_{t=1}^T$ has $(\cL,\cC,\eps)$-decision swap regret with respect to $\{y_t\}_{t=1}^T$. From this, we construct a single randomized predictor $\pi: \cX \to \Delta \cP$, where $\pi(x)$ distributes uniformly on $\left\{p_t\right(x)\}_{t=1}^T$. With probability $1-\kappa$, the randomized predictor $\pi$ satisfies that, for any loss function $\ell \in \cL$ and any assignment of policies $\{c_a \in \cC\}_{a \in \cA}$:
    \[
    \E_{\substack{(x,y) \sim \cD \\ p \sim \pi(x)}} \left[ \ell(k^\ell(p), y) - \ell(c_{k^\ell(p)}(x), y) \right] 
    \leq \eps + O\left( \sqrt{\frac{n|\cA|\ln(n \lambda |\cC| T / \kappa)}{T}} \right)
    + 4\delta
    \]
    when the action selection rule is $k^\ell(p) = \BR^{\hat{\ell}_\gamma}(p)$ for a nearby $\hat{\ell}_\gamma\in\hat{\cL}_\gamma$.
\end{theorem}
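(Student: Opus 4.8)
The plan is to use a standard online-to-batch reduction in the style of \citet{gupta2022online}, combining the \emph{empirical} decision swap regret guarantee already established in Theorem \ref{thm:decisionreg-convex} with a martingale tail bound. Fix a loss $\ell\in\cL$, let $\hat\ell_\gamma\in\hat\cL_\gamma$ be the cover element near it (so that $a_t = k^\ell(p_t(x_t)) = \BR^{\hat\ell_\gamma}(p_t(x_t))$), and write $a_t^x = \BR^{\hat\ell_\gamma}(p_t(x))$. First I would expand the target quantity using the fact that $\pi(x)$ is the uniform mixture over $\{p_t(x)\}_{t=1}^T$:
\[
\E_{\substack{(x,y)\sim\cD\\ p\sim\pi(x)}}\!\left[\ell(k^\ell(p),y)-\ell(c_{k^\ell(p)}(x),y)\right] = \frac1T\sum_{t=1}^T \E_{(x,y)\sim\cD}\!\left[\ell(a_t^x,y)-\ell(c_{a_t^x}(x),y)\right] =: \frac1T\sum_{t=1}^T \bar R_t(\ell,\{c_a\}).
\]
The key structural fact is that the implicit prediction function $p_t$ produced by Algorithm \ref{alg:convex} depends only on the first $t-1$ sample pairs (through $\{(x_r,s(y_r))\}_{r<t}$), hence is measurable with respect to the filtration $\cF_{t-1}$ generated by $(x_1,y_1),\dots,(x_{t-1},y_{t-1})$; since $(x_t,y_t)$ is an independent fresh draw, the empirical per-round term $R_t(\ell,\{c_a\}) := \ell(a_t,y_t)-\ell(c_{a_t}(x_t),y_t)$ satisfies $\E[R_t\mid\cF_{t-1}]=\bar R_t(\ell,\{c_a\})$.

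Next I would set up the concentration over a \emph{finite} index set. For a fixed pair $(\hat\ell_\gamma,\{c_a\})\in\hat\cL_\gamma\times\cC^{|\cA|}$, define linearized versions $R_t^\gamma,\bar R_t^\gamma$ by replacing $\ell(\cdot,\cdot)$ with $\hat\ell_\gamma(\cdot,s(\cdot))$ (the actions $a_t,a_t^x$ are unchanged, as they depend only on $\hat\ell_\gamma$); again $\E[R_t^\gamma\mid\cF_{t-1}]=\bar R_t^\gamma$, so $\{R_t^\gamma-\bar R_t^\gamma\}_{t=1}^T$ is a martingale difference sequence with increments in $[-2,2]$. Azuma--Hoeffding then gives $\frac1T\sum_t(\bar R_t^\gamma-R_t^\gamma)\le O(\sqrt{\ln(1/\kappa')/T})$ with probability $1-\kappa'$; choosing $\kappa'=\kappa/(|\hat\cL_\gamma|\,|\cC|^{|\cA|})$ and union bounding over all pairs makes this hold simultaneously with probability $1-\kappa$. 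By Lemma \ref{lem:discretization-linear}, $|\hat\cL_\gamma|\le(2\lambda/\gamma)^{n|\cA|}$, so $\ln(1/\kappa')=O(n|\cA|\ln(2\lambda/\gamma)+|\cA|\ln|\cC|+\ln(1/\kappa))$, which for $\gamma=\tfrac1{2n\sqrt T}$ simplifies to $O(n|\cA|\ln(n\lambda|\cC|T/\kappa))$.

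The final step is to chain everything on this high-probability event. Because $\cS$ is an $(n,\lambda,\delta)$-approximate basis and $\hat\cL_\gamma$ is an $(n\gamma)$-cover of $\hat\cL$, we have $|\hat\ell_\gamma(a,s(y))-\ell(a,y)|\le\delta+n\gamma$ for all $a,y$, and since the induced actions coincide this gives $|R_t-R_t^\gamma|\le 2(\delta+n\gamma)$ and $|\bar R_t-\bar R_t^\gamma|\le 2(\delta+n\gamma)$ term-by-term. Therefore, for every $\ell\in\cL$ and every $\{c_a\}$,
\begin{align*}
\frac1T\sum_t\bar R_t(\ell,\{c_a\}) &\le \frac1T\sum_t\bar R_t^\gamma + 2(\delta+n\gamma) \\
&\le \frac1T\sum_t R_t^\gamma + O\!\Big(\sqrt{\tfrac{\ln(1/\kappa')}{T}}\Big) + 2(\delta+n\gamma) \\
&\le \frac1T\sum_t R_t(\ell,\{c_a\}) + O\!\Big(\sqrt{\tfrac{\ln(1/\kappa')}{T}}\Big) + 4(\delta+n\gamma),
\end{align*}
and the last empirical average is at most $\eps$ by the $(\cL,\cC,\eps)$-decision swap regret guarantee of Theorem \ref{thm:decisionreg-convex}. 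Plugging in $\gamma=\tfrac1{2n\sqrt T}$ (so that $4n\gamma=2/\sqrt T$ is absorbed into the $O(\sqrt{\,\cdot\,/T})$ term) and the bound on $\ln(1/\kappa')$ above yields the claimed $\eps + O(\sqrt{n|\cA|\ln(n\lambda|\cC|T/\kappa)/T}) + 4\delta$.

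The main obstacle is that $\cL$ (and hence $\hat\cL$) can be infinite, so the martingale tail cannot be union-bounded directly over loss functions; this is precisely why the discretization of Lemma \ref{lem:discretization-linear} is needed — both the forecaster's behavior and each agent's action rule depend only on the finite cover $\hat\cL_\gamma$, while the realized losses are uniformly $(\delta+n\gamma)$-approximated by the cover, so concentration established over the finite set $\hat\cL_\gamma\times\cC^{|\cA|}$ transfers to all of $\cL$ at an additive cost of $O(\delta)$. The only other point requiring care is that decision swap regret quantifies over all $|\cC|^{|\cA|}$ assignments of benchmark policies to actions, which contributes the extra $|\cA|\ln|\cC|$ inside the logarithm (dominated by $n|\cA|\ln(n\lambda|\cC|T/\kappa)$).
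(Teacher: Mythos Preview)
Your proposal is correct and follows essentially the same approach as the paper: expand the distributional quantity as an average over rounds, use the $\cF_{t-1}$-measurability of $p_t$ to set up a martingale, apply Azuma--Hoeffding to the linearized losses, union-bound over the finite set $\hat\cL_\gamma\times\cC^{|\cA|}$, and transfer back to $\ell$ at cost $O(\delta+n\gamma)$ per term. The only organizational difference is that the paper splits the concentration step into two separate lemmas---one for $\frac1T\sum_t\ell(a_t,y_t)$ and one for $\frac1T\sum_t\ell(c_{a_t}(x_t),y_t)$---whereas you run a single martingale on their difference $R_t^\gamma$; this is slightly more compact but substantively identical.
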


\begin{proof}
Fix a  loss function $\ell \in \cL$ and any assignment of policies $\{c_a \in \cC\}_{a \in \cA}$. 
We note that
\begin{align*}
    \E_{(x,y) \sim \cD, p \sim \pi(x)} \left[ \ell(k^\ell(p), y) - \ell(c_{k^\ell(p)}(x), y) \right]
    &= \frac{1}{T} \sum_{t=1}^T \E_{(x,y) \sim \cD} \left[ \ell(k^\ell(p_t(x)), y) - \ell(c_{k^\ell(p_t(x))}(x), y) \right]
\end{align*}
is the distributional analogue of the empirical decision swap regret,
\begin{align*}
    \frac{1}{T} \sum_{t=1}^T \left( \ell(k^\ell(p_t(x_t)), y_t) - \ell(c_{k^\ell(p_t(x_t))}(x_t), y_t) \right)
\end{align*}
which is bounded by Theorem \ref{thm:decisionreg-convex}.

We will show that the distance between the empirical decision swap regret and the distributional version is bounded by bounding the following terms:
\begin{align*}
    & \left| \frac{1}{T}\sum_{t=1}^T \ell(k^\ell(p_t(x_t)), y_t) - \frac{1}{T}\sum_{t=1}^T \E_{(x,y) \sim \cD} \left[ \ell(k^\ell(p_t(x)), y) \right] \right| \\
    & \left| \frac{1}{T}\sum_{t=1}^T \ell(c_{k^\ell(p_t(x_t))}(x_t), y_t) - \frac{1}{T}\sum_{t=1}^T \E_{(x,y) \sim \cD} \left[ \ell(c_{k^\ell(p_t(x))}(x), y) \right] \right|
\end{align*}

\begin{lemma}\label{lem:online-to-batch-action}
    For any $\kappa>0$, with probability at least $1-\kappa/2$, for any loss function $\ell \in \cL$:
    \begin{align*}
     \left| \frac{1}{T}\sum_{t=1}^T \ell(k^\ell(p_t(x_t)), y_t) - \frac{1}{T}\sum_{t=1}^T \E_{(x,y) \sim \cD} \left[ \ell(k^\ell(p_t(x)), y) \right] \right| 
     \le O\left( \sqrt{\frac{n|\cA|\ln(n \lambda T / \kappa)}{T}} \right)
     + 2\delta
    \end{align*}
\end{lemma}

\begin{proof}
    We first prove that, with probability at least $1-\kappa/2$, for any $\hat{\ell}_\gamma \in \hat{\cL}_\gamma$:
    \begin{align*}
     \left| \frac{1}{T}\sum_{t=1}^T \hat{\ell}_\gamma(\BR^{\hat{\ell}_\gamma}(p_t(x_t)), y_t) - \frac{1}{T}\sum_{t=1}^T \E_{(x,y) \sim \cD} \left[ \hat{\ell}_\gamma(\BR^{\hat{\ell}_\gamma}(p_t(x)), y) \right] \right| 
     \le O\left( \sqrt{\frac{n|\cA|\ln(n \lambda T / \kappa)}{T}} \right)
    \end{align*}
    
    We define the filtration $\cF_t = \sigma\left(\left\{\left(x_s, y_s, p_s\right)\right\}_{s=1}^t\right)$. For a fixed $\hat{\ell}_\gamma \in \hat{\cL}_\gamma$, consider the following sequence adapted to $\cF_t$,
    \[
    W_t = W_{t-1} + \hat{\ell}_\gamma(\BR^{\hat{\ell}_\gamma}(p_t(x_t)), y_t) - \E_{(x,y) \sim \cD} \left[ \hat{\ell}_\gamma(\BR^{\hat{\ell}_\gamma}(p_t(x)), y) \right].
    \]
    
    Since $W_{t-1} \in \cF_{t-1}$ and
    \begin{align*}
        \E \left[ \hat{\ell}_\gamma(\BR^{\hat{\ell}_\gamma}(p_t(x_t)), y_t) \mid \cF_{t-1} \right] = \E \left[ \E_{(x,y) \sim \cD} \left[ \hat{\ell}_\gamma(\BR^{\hat{\ell}_\gamma}(p_t(x)), y) \right] \mid \cF_{t-1} \right]
    \end{align*}
    we have $\E[W_t \mid \cF_{t-1}] = W_{t-1}$, so $\{W_t\}_{t=1}^T$ is a martingale.
    
    We also have $|W_t-W_{t-1}| \le 1$, so by applying Azuma-Hoeffding’s inequality (Lemma \ref{lem:azuma}), we get that with probability $1-\frac{\kappa}{2|\hat{\cL}_\gamma|}$:
    \begin{align*}
     \left| \frac{1}{T} \sum_{t=1}^T \hat{\ell}_\gamma(\BR^{\hat{\ell}_\gamma}(p_t(x_t)), y_t) - \frac{1}{T} \sum_{t=1}^T \E_{(x,y) \sim \cD} \left[ \hat{\ell}_\gamma(\BR^{\hat{\ell}_\gamma}(p_t(x)), y) \right] \right| \le \sqrt{\frac{2\ln \left(2|\hat{\cL}_\gamma|/\kappa\right)}{T}}
     \le \sqrt{\frac{2\ln \left(2(2\lambda/\gamma)^{n|\cA|}/\kappa\right)}{T}}
    \end{align*}

    For our setting of $\gamma = \frac{1}{2n\sqrt{T}}$, this expression is at most:
    \[
         O\left( \sqrt{\frac{n|\cA|\ln(n \lambda T / \kappa)}{T}} \right)
    \]

    Taking the union bound over every $\hat{\ell}_\gamma \in \hat{\cL}_\gamma$, we get that with probability $1-\kappa/2$, for any $\hat{\ell}_\gamma \in \hat{\cL}_\gamma$:
    \[
        \left| \frac{1}{T} \sum_{t=1}^T \hat{\ell}_\gamma(\BR^{\hat{\ell}_\gamma}(p_t(x_t)), y_t) - \frac{1}{T} \sum_{t=1}^T \E_{(x,y) \sim \cD} \left[ \hat{\ell}_\gamma(\BR^{\hat{\ell}_\gamma}(p_t(x)), y) \right] \right| \le O\left( \sqrt{\frac{n|\cA|\ln(n \lambda T / \kappa)}{T}} \right)
    \]

    Since $\hat{\cL}_\gamma$ is the $(n\gamma)$-cover of $\hat{\cL}$ and $\hat{\cL}$ is an $\delta$-approximate basis for $\cL$, we get that when the above inequality holds, for any $\ell \in \cL$:
    \begin{align*}
        & \left| \sum_{t=1}^T \ell(k^{\ell}(p_t(x_t)), y_t) - \sum_{t=1}^T \E_{(x,y) \sim \cD} \left[ \ell(k^{\ell}(p_t(x)), y) \right] \right| \\
        &= \left| \sum_{t=1}^T \ell(\BR^{\hat{\ell}_\gamma}(p_t(x_t)), y_t) - \sum_{t=1}^T \E_{(x,y) \sim \cD} \left[ \ell(\BR^{\hat{\ell}_\gamma}(p_t(x)), y) \right] \right| \\
        &\le \left| \sum_{t=1}^T \ell(\BR^{\hat{\ell}_\gamma}(p_t(x_t)), y_t) - \sum_{t=1}^T \hat{\ell}_\gamma(\BR^{\hat{\ell}_\gamma}(p_t(x_t)), y_t) \right| \\
        &+ \left| \sum_{t=1}^T \E_{(x,y) \sim \cD} \left[ \hat{\ell}_\gamma(\BR^{\hat{\ell}_\gamma}(p_t(x)), y) \right] - \sum_{t=1}^T \E_{(x,y) \sim \cD} \left[ \ell(\BR^{\hat{\ell}_\gamma}(p_t(x)), y) \right] \right| \\
        &+ \left| \sum_{t=1}^T \hat{\ell}_\gamma(\BR^{\hat{\ell}_\gamma}(p_t(x_t)), y_t) - \sum_{t=1}^T \E_{(x,y) \sim \cD} \left[ \hat{\ell}_\gamma(\BR^{\hat{\ell}_\gamma}(p_t(x)), y) \right] \right| \\
        &\le (n\gamma + \delta) + (n\gamma + \delta) + O\left( \sqrt{\frac{n|\cA|\ln(n \lambda T / \kappa)}{T}} \right) \\
        &= O\left( \sqrt{\frac{n|\cA|\ln(n \lambda T / \kappa)}{T}} \right) + 2\delta
    \end{align*}
    where we use the triangle inequality.
    
\end{proof}

\begin{lemma}\label{lem:online-to-batch-benchmark}
    For any $\kappa>0$, with probability at least $1-\kappa/2$, for any loss function $\ell \in \cL$ and any assignment of policies $\{c_a \in \cC\}_{a\in\cA}$:
    \begin{align*}
     \left| \frac{1}{T}\sum_{t=1}^T \ell(c_{k^\ell(p_t(x_t))}(x_t), y_t) - \frac{1}{T}\sum_{t=1}^T \E_{(x,y) \sim \cD} \left[ \ell(c_{k^\ell(p_t(x))}(x), y) \right] \right| \le O\left( \sqrt{\frac{n|\cA|\ln(n \lambda |\cC| T / \kappa)}{T}} \right) + 2\delta
    \end{align*}
\end{lemma}

\begin{proof}
    The proof is similar to the proof of Lemma \ref{lem:online-to-batch-action}. We first prove that, with probability at least $1-\kappa/2$, for any $\hat{\ell}_\gamma \in \hat{\cL}_\gamma$ and any assignment of policies $\{c_a \in \cC\}_{a\in\cA}$:
    \begin{align*}
     \left| \frac{1}{T}\sum_{t=1}^T \hat{\ell}_\gamma(c_{\BR^{\hat{\ell}_\gamma}(p_t(x_t))}(x_t), y_t) - \frac{1}{T}\sum_{t=1}^T \E_{(x,y) \sim \cD} \left[ \hat{\ell}_\gamma(c_{\BR^{\hat{\ell}_\gamma}(p_t(x))}(x), y) \right] \right| \le O\left( \sqrt{\frac{n|\cA|\ln(n \lambda |\cC| T / \kappa)}{T}} \right)
    \end{align*}
    
    For a fixed $\hat{\ell}_\gamma \in \hat{\cL}_\gamma$ and assignment of policies $\{c_a \in \cC\}_{a\in\cA}$, consider the following sequence adapted to $\cF_t$,
    \[
    S_t = S_{t-1} + \hat{\ell}_\gamma(c_{\BR^{\hat{\ell}_\gamma}(p_t(x_t))}(x_t), y_t) - \E_{(x,y) \sim \cD} \left[ \hat{\ell}_\gamma(c_{\BR^{\hat{\ell}_\gamma}(p_t(x))}(x), y) \right]
    \]
    
    Since $S_{t-1} \in \cF_{t-1}$ and
    \begin{align*}
        \E \left[ \hat{\ell}_\gamma(c_{\BR^{\hat{\ell}_\gamma}(p_t(x_t))}(x_t), y_t) \mid \cF_{t-1} \right] = \E \left[ \E_{(x,y) \sim \cD}  \left[ \hat{\ell}_\gamma(c_{\BR^{\hat{\ell}_\gamma}(p_t(x))}(x), y) \right] \mid \cF_{t-1} \right]
    \end{align*}
    we have $\E[S_t \mid \cF_{t-1}] = S_{t-1}$, so $\{S_t\}_{t=1}^T$ is a martingale.
    
    We also have $|S_t-S_{t-1}| \le 1$, so by applying Azuma-Hoeffding’s inequality (Lemma \ref{lem:azuma}), we get that with probability $1-\frac{\kappa}{2|\hat{\cL}_\gamma||\cC|^{|\cA|}}$,
    \begin{align*}
        \left| \frac{1}{T}\sum_{t=1}^T \hat{\ell}_\gamma(c_{\BR^{\hat{\ell}_\gamma}(p_t(x_t))}(x_t), y_t) - \frac{1}{T}\sum_{t=1}^T \E_{(x,y) \sim \cD} \left[ \hat{\ell}_\gamma(c_{\BR^{\hat{\ell}_\gamma}(p_t(x))}(x), y) \right] \right| &\le \sqrt{\frac{2\ln \left(2|\hat{\cL}_\gamma||\cC|^{|\cA|}/\kappa\right)}{T}} \\
        &\le \sqrt{\frac{2\ln \left(2(2\lambda/\gamma)^{n|\cA|}|\cC|^{|\cA|}/\kappa\right)}{T}}
    \end{align*}

    For our setting of $\gamma = \frac{1}{2n\sqrt{T}}$, this expression is at most:
    \[
        O\left( \sqrt{\frac{n|\cA|\ln(n \lambda |\cC| T / \kappa)}{T}} \right)
    \]

    Taking the union bound over every $\hat{\ell}_\gamma \in \hat{\cL}_\gamma$ and every assignment of policies $\{c_a \in \cC\}_{a\in\cA}$, we get that with probability $1-\kappa/2$, for any $\hat{\ell}_\gamma \in \hat{\cL}_\gamma$ and any assignment of policies $\{c_a \in \cC\}_{a\in\cA}$:
    \begin{align*}
        \left| \frac{1}{T}\sum_{t=1}^T \hat{\ell}_\gamma(c_{\BR^{\hat{\ell}_\gamma}(p_t(x_t))}(x_t), y_t) - \frac{1}{T}\sum_{t=1}^T \E_{(x,y) \sim \cD} \left[ \hat{\ell}_\gamma(c_{\BR^{\hat{\ell}_\gamma}(p_t(x))}(x), y) \right] \right| \le O\left( \sqrt{\frac{n|\cA|\ln(n \lambda |\cC| T / \kappa)}{T}} \right)
    \end{align*}

    Since $\hat{\cL}_\gamma$ is the $(n\gamma)$-cover of $\hat{\cL}$ and $\hat{\cL}$ is an $\delta$-approximate basis for $\cL$, we get that when the above inequality holds, for any $\ell \in \cL$ and any assignment of policies $\{c_a \in \cC\}_{a\in\cA}$:
    \begin{align*}
        & \left| \frac{1}{T}\sum_{t=1}^T \ell(c_{k^\ell(p_t(x_t))}(x_t), y_t) - \frac{1}{T}\sum_{t=1}^T \E_{(x,y) \sim \cD} \left[ \ell(c_{k^\ell(p_t(x))}(x), y) \right] \right| \\
        &= \left| \frac{1}{T}\sum_{t=1}^T \ell(c_{\BR^{\hat{\ell}_\gamma}(p_t(x_t))}(x_t), y_t) - \frac{1}{T}\sum_{t=1}^T \E_{(x,y) \sim \cD} \left[ \ell(c_{\BR^{\hat{\ell}_\gamma}(p_t(x))}(x), y) \right] \right| \\
        &\le \left| \frac{1}{T}\sum_{t=1}^T \ell(c_{\BR^{\hat{\ell}_\gamma}(p_t(x_t))}(x_t), y_t) - \frac{1}{T}\sum_{t=1}^T \hat{\ell}_\gamma(c_{\BR^{\hat{\ell}_\gamma}(p_t(x_t))}(x_t), y_t) \right| \\
        &+ \left| \frac{1}{T}\sum_{t=1}^T \E_{(x,y) \sim \cD} \left[ \hat{\ell}_\gamma(c_{\BR^{\hat{\ell}_\gamma}(p_t(x))}(x), y) \right] - \frac{1}{T}\sum_{t=1}^T \E_{(x,y) \sim \cD} \left[ \ell(c_{\BR^{\hat{\ell}_\gamma}(p_t(x))}(x), y) \right] \right| \\
        &+ \left| \frac{1}{T}\sum_{t=1}^T \hat{\ell}_\gamma(c_{\BR^{\hat{\ell}_\gamma}(p_t(x_t))}(x_t), y_t) - \frac{1}{T}\sum_{t=1}^T \E_{(x,y) \sim \cD} \left[ \hat{\ell}_\gamma(c_{\BR^{\hat{\ell}_\gamma}(p_t(x))}(x), y) \right] \right| \\
        &\le (n\gamma+\delta) + (n\gamma+\delta) + O\left( \sqrt{\frac{n|\cA|\ln(n \lambda |\cC| T / \kappa)}{T}} \right) \\
        &\le O\left( \sqrt{\frac{n|\cA|\ln(n \lambda |\cC| T / \kappa)}{T}} \right) + 2\delta
    \end{align*}
    where we use the triangle inequality.
    
\end{proof}

We then combine Lemma \ref{lem:online-to-batch-action}, and Lemma \ref{lem:online-to-batch-benchmark} with the triangle inequality to derive
\begin{align*}
    & \frac{1}{T} \sum_{t=1}^T \E_{(x,y) \sim \cD} \left[ \ell(k^\ell(p_t(x)), y) - \ell(c_{k^\ell(p_t(x))}(x), y) \right] \\
    &\le  \frac{1}{T} \sum_{t=1}^T \left( \ell(k^\ell(p_t(x_t)), y_t) - \ell(c_{k^\ell(p_t(x_t))}(x_t), y_t) \right) \\
    &+ \left| \frac{1}{T}\sum_{t=1}^T \ell(k^\ell(p_t(x_t)), y_t) - \frac{1}{T}\sum_{t=1}^T \E_{(x,y) \sim \cD} \left[ \ell(k^\ell(p_t(x)), y) \right] \right| \\
    &+ \left| \frac{1}{T}\sum_{t=1}^T \ell(c_{k^\ell(p_t(x_t))}(x_t), y_t) - \frac{1}{T}\sum_{t=1}^T \E_{(x,y) \sim \cD} \left[ \ell(c_{k^\ell(p_t(x))}(x), y) \right] \right| \\
    &\le \eps + O\left( \sqrt{\frac{n|\cA|\ln(n \lambda |\cC| T / \kappa)}{T}} \right) + 4\delta
\end{align*}

This finishes the proof of Theorem \ref{thm:batch}.
\end{proof}

}

\ifarxiv
\subsection{Continuous Action Spaces}\label{sec:cont-action}
\seccontsactionspaces

\subsection{Online to Batch Conversion}\label{sec:online-to-batch}
\seconlinetobatch
\fi

\subsection{Offline Omniprediction}\label{sec:offline-omniprediction}

\ifarxiv
\else
The decision swap regret bounds we have just derived depend on $|\cA|$, the number of actions. However, the omniprediction literature  takes $\cA = \cY$ --- i.e. actions correspond to (continuous) label predictions. We extend our results to continuous action spaces $\cA = [0,1]^m$ by assuming that losses are additionally Lipschitz in the actions, and agents choose actions from a $\theta$-net of $\cA$. 

We also give an online-to-batch reduction to obtain guarantees for the offline version of the problem, in which features and labels are drawn i.i.d. from a fixed joint distribution $\cD$, rather than selected sequentially by an adversary. Appendix \ref{app:conts-actions} and \ref{app:online-to-batch} contain details for the extensions to continuous actions spaces and to the offline setting, respectively.
\fi

With the extension to continuous action spaces and the online-to-batch reduction, we obtain sample complexity bounds for offline omniprediction, as long as there exists a reasonably sized basis for the losses $\cL$ and randomized predictors are allowed.

% Theorem \ref{thm:decisionreg-convex} implies that for any family of loss functions $\cL$, collection of policies $\cC$, running Algorithm \ref{alg:convex} for sufficiently many rounds leads to arbitrarily small omniprediction regret. Theorem \ref{thm:batch} implies that this results can be translated into an $(\cL,\cC,\eps)$-omnipredictor for any arbitrarily small $\eps$ by running Algorithm \ref{alg:convex} on a dataset of i.i.d. samples as described in Theorem \ref{thm:batch}.
\begin{theorem}\label{thm:omniprediction-batch}
    Let $\cA=[0,1]^d$ be the action space and $\cY=[0,1]^d$ be the outcome space. Let $\cC$ be a collection of policies $c:\cX\to\cA$. Let $\cL$ be a family of loss functions $\ell:\cA\times\cY\to[0,1]$. Suppose $\mathcal{S}$ is a $(n,\lambda,\delta)$-approximate basis for $\cL$, where $n$ and $\lambda$ scales polynomially with $1/\delta$. For any $\eps$, there exists an algorithm that returns an $(\cL,\cC,\eps)$-Omnipredictor with probability $1-e^{-O(poly(1/\eps))}$. The time and sample complexity of the algorithm scales polynomially with $1/\eps$.
\end{theorem}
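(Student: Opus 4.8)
The plan is to read off Theorem~\ref{thm:omniprediction-batch} as a composition of three results already established: the basis-linearization guarantee of Algorithm~\ref{alg:convex} (Theorem~\ref{thm:decisionreg-convex}), its extension to continuous action spaces (Theorem~\ref{thm:continuous-action-space}, Corollary~\ref{cor:decisionreg-convex-continuous-action}), and the online-to-batch conversion (Theorem~\ref{thm:batch}). The starting observation is that $(\cL,\cC,\eps)$-omniprediction regret is exactly $(\cL,\cC,\eps)$-decision swap regret restricted to \emph{constant} benchmark assignments $c_a \equiv c$; hence a distributional decision swap regret bound immediately gives an $(\cL,\cC,\eps)$-omnipredictor, with $k^\ell$ the best-response rule $\BR^{\hat\ell_\gamma}$ used in Section~\ref{sec:convex}. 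So it suffices to produce, in the batch setting, a randomized predictor with distributional $(\cL,\cC,\eps)$-decision swap regret using a sample and running time $\mathrm{poly}(1/\eps)$.

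For the construction, set $\delta = \Theta(\eps)$ so that every basis-approximation slack downstream (the $2\delta$ of Theorem~\ref{thm:decisionreg-convex}, the $4\delta$ of Theorem~\ref{thm:batch}) is $O(\eps)$; by hypothesis the basis $\cS$ then has size $n = \mathrm{poly}(1/\eps)$ and Lipschitz constant $\lambda = \mathrm{poly}(1/\eps)$. Replace $\cA = [0,1]^d$ by a $\theta$-net $\cA_\theta$ with $\theta = (\lambda^2 n/T)^{1/(3d+2)}$ as in Corollary~\ref{cor:decisionreg-convex-continuous-action}, let $\cL_\theta$ be the restriction of $\cL$ to $\cA_\theta$ (for which $\cS$ remains an $(n,\lambda,\delta)$-approximate basis), and let $\tilde\cC$ be the snapped policy class. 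Draw $T$ i.i.d.\ samples from $\cD$, feed them to Algorithm~\ref{alg:convex} run on $(\cL_\theta,\cS,\tilde\cC)$, and output the uniform mixture $\pi$ of the $T$ induced prediction functions as in Theorem~\ref{thm:batch}. Since $\cA_\theta$, $\cL_\theta$ and its cover $\hat\cL_\gamma$ are all finite, Theorem~\ref{thm:batch} applies with action space $\cA_\theta$ and yields, with probability $1-\kappa$, distributional $(\cL_\theta,\tilde\cC,\eps')$-decision swap regret with $\eps' \le \eps_{\mathrm{online}} + O(\sqrt{n|\cA_\theta|\ln(n\lambda|\cC|T/\kappa)/T}) + O(\delta)$, where $\eps_{\mathrm{online}} = O(\lambda^{2/(3d+2)} n^{1/(3d+2)}\sqrt{\ln(n\lambda|\cC|T)}/T^{1/(3d+2)})$ is the Corollary~\ref{cor:decisionreg-convex-continuous-action} bound. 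Theorem~\ref{thm:continuous-action-space}, whose argument is pointwise and hence survives taking expectations over $\cD$, lifts this to distributional $(\cL,\cC,\eps'+\theta)$-decision swap regret against the full class $\cC$, and specializing to a constant benchmark assignment gives an $(\cL,\cC,\eps'+\theta)$-omnipredictor.

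Finally, choose $\kappa = e^{-\mathrm{poly}(1/\eps)}$ (say $e^{-1/\eps}$) and $T$ a sufficiently large polynomial in $1/\eps$ --- of degree depending on the constant $d$ --- so that each of $\eps_{\mathrm{online}}$, the uniform-convergence term, $\theta$, and $O(\delta)$ is at most $\eps/4$. The one calculation to verify is that $|\cA_\theta| = (T/(\lambda^2 n))^{d/(3d+2)}$ keeps $n|\cA_\theta|/T = \mathrm{poly}(1/\eps)\cdot T^{-(2d+2)/(3d+2)}$, which still vanishes polynomially in $T$ even after the $\ln(1/\kappa) = \mathrm{poly}(1/\eps)$ factor is included under the square root, so a $\mathrm{poly}(1/\eps)$ choice of $T$ does the job; the sample complexity is then $T = \mathrm{poly}(1/\eps)$ and the running time is $\mathrm{poly}(1/\eps)$, inheriting the per-round complexity of \textsc{Decision-Swap}/\textsc{Unbiased-Prediction}. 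I expect the only real work to be this bookkeeping --- simultaneously controlling the five error sources ($\delta$ from the basis, $n\gamma$ from the loss cover, $\theta$ from the action net, $\eps_{\mathrm{online}}$, and the generalization term) against the $d$-dependent exponents on $T$, and confirming that driving $\kappa$ down to $e^{-\mathrm{poly}(1/\eps)}$ costs only a polynomial increase in $T$; the conceptual content is entirely contained in the cited results.
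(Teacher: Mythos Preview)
Your proposal is correct and follows essentially the same approach as the paper: set $\delta=\Theta(\eps)$, run Algorithm~\ref{alg:convex} on $(\cL_\theta,\cS,\tilde\cC)$ with the $\theta$-net of Corollary~\ref{cor:decisionreg-convex-continuous-action}, apply the online-to-batch conversion of Theorem~\ref{thm:batch}, and then choose $T$ and $\kappa$ so that the two resulting error terms (the online regret $\tau_1$ and the generalization term $\tau_2$) are each $O(\eps)$. Your exponent computation $n|\cA_\theta|/T = \mathrm{poly}(1/\eps)\cdot T^{-(2d+2)/(3d+2)}$ matches the paper's calculation, and your observation that Theorem~\ref{thm:continuous-action-space} is pointwise and thus passes through expectations is the right way to handle the lift from $(\cL_\theta,\tilde\cC)$ back to $(\cL,\cC)$ in the distributional setting.
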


\newcommand{\proofofflineomniprediction}{
    % Let $\hat{\cL} = \{\hat{\ell}\}_{\ell\in\cL}$ be the family of linear approximations to $\cL$ given by $\mathcal{S}$. Let $\hat{\cL}_\gamma$ be the $(n\gamma)$-cover of $\hat{\cL}$ given by Lemma \ref{lem:discretization-linear}, for $\gamma = \frac{1}{2n\sqrt{T}}$. 
    Let $\cA_\theta$ be a $\theta$-net of $\cA$ for $\theta = \left( \frac{\lambda^2 n}{T} \right)^{1/(3d+2)}$. Let $\cL_\theta = \{ \ell|_{\cA_\theta}: \cA_\theta \times \cY \to [0,1]\}_{\ell\in\cL}$ be the restriction of the loss functions to $\cA_\theta$. Let $\tilde{\cC} = \{\tilde{c}: \cX\to\cA_\theta\}_{c\in\cC}$ be the collection of policies with ``snapped" outputs. According to Corollary \ref{cor:decisionreg-convex-continuous-action}, running Algorithm \ref{alg:convex} with inputs $\cL_\theta$, $\cS$ and $\tilde{\cC}$ produces predictions $\hat{p}_1,...,\hat{p}_T \in [0,1]^n$ that has $(\cL,\cC,\eps')$-decision swap regret for:
    \[
        \eps' \leq O\left( \frac{\lambda^{2/(3d+2)} n^{1/(3d+2)} \sqrt{\ln(n\lambda|\cC|T)}}{T^{1/(3d+2)}} \right) + 2\delta
    \]

    By Theorem \ref{thm:batch}, we can feed i.i.d. examples to this algorithm and obtain a randomized predictor $\pi$ such that, for any loss function $\ell \in \cL$ and any assignment of policies $\{c_a \in \cC\}_{a \in \cA}$:
    \begin{align*}
        & \E_{\substack{(x,y) \sim \cD \\ p \sim \pi(x)}} \left[ \ell(k^\ell(p), y) - \ell(c_{k^\ell(p)}(x), y) \right] \\
        &\leq \eps' + O\left( \sqrt{\frac{n|\cA_\theta|\ln(n \lambda |\cC| T / \kappa)}{T}} \right) + 4\delta \\
        &\le O\left( \frac{\lambda^{2/(3d+2)} n^{1/(3d+2)} \sqrt{\ln(n\lambda|\cC|T)}}{T^{1/(3d+2)}} \right) + O\left( \sqrt{\frac{n\left(\frac{T}{\lambda^2 n}\right)^{d/(3d+2)}\ln(n \lambda |\cC| T / \kappa)}{T}} \right) + 6\delta \\
        &= \underbrace{ O\left( \frac{\lambda^{2/(3d+2)} n^{1/(3d+2)} \sqrt{\ln(n\lambda|\cC|T)}}{T^{1/(3d+2)}} \right) }_{:=\tau_1} + \underbrace{ O\left( \frac{n^{(d+1)/(3d+2)}}{\lambda^{d/(3d+2)}T^{(d+1)/(3d+2)}}\sqrt{\ln(n \lambda |\cC| T / \kappa)} \right) }_{:=\tau_2} + 6\delta
    \end{align*}
    where $T$ is the number of i.i.d. examples, $n$ and $\lambda$ may depend on $\delta$ for different loss families.

    For any $\eps$, we can choose $(\delta,\kappa,T)$ such that $\tau_1+\tau_2+6\delta$, is at most $\eps$. We first present the steps for constructing $(\delta,\kappa,T)$, and then provide their explicit expressions for 1-dimensional convex 1-Lipschitz losses $\cL_{\cF^1_{\text{cvx}}}$ and $d$-dimensional $L$-Lipschitz losses $\cL_{\cF^d_L}$ in the proof of Corollary \ref{cor:omniprediction-batch-convex} and \ref{cor:omniprediction-batch-lipschitz}. 
    
    We can first set $\delta = \eps / 18$. Since $n$ and $\lambda$ scales polynomially with $1/\delta$, $\tau_1$ scales polynomially with $1/\eps$ and $1/T$ now. We can select $T$ such that $T$ scales polynomially with $1/\eps$ and $\tau_1$ is at most $\eps / 3$. Then we can select $\kappa$ such that $\ln(1/\kappa)$ scales polynomially with $1/\eps$ and $\tau_2$ is at most $\eps / 3$. As a result, $\tau_1+\tau_2+6\delta \le \eps$, the predictor $\pi$ is an $(\cL,\cC,\eps)$-Omnipredictor.

    The time complexity is the product of the sample complexity and the per round time complexity of Algorithm \ref{alg:convex}, which scales polynomially with $\delta$, $|\cA_\theta|$, $|\cC|$, as indicated by Theorem \ref{thm:unbiased-prediction-algorithm} from \cite{noarov2023highdimensional}. Therefore, the time complexity also scales polynomially with $1/\eps$.
}

\ifarxiv
\begin{proof}
    \proofofflineomniprediction
\end{proof}
\fi

Here we give sample complexity bounds for key loss families of interest: 1-dimensional convex 1-Lipschitz losses $\cL_{\cF^1_{\text{cvx}}}$ and $L$-Lipschitz losses $\cL_{\cF^d_L}$ in $d$ dimensions.

% we can solve for $T$ explicitly, which gives the sample complexity for producing an $(\cF_\convex^1,\cC,\eps)$-Omnipredictor. It is an exponential improvement in terms of the rate of $\eps$ as compared to \cite{gopalan2024omnipredictors}.

\begin{corollary}\label{cor:omniprediction-batch-convex}
    Let $\cA=[0,1]$ be the action space and $\cY=[0,1]$ be the outcome space. Let $\cC$ be a collection of policies $c:\cX\to\cA$. Consider the family of convex 1-Lipschitz losses $\cL_{\cF^1_{\text{cvx}}}$. For any $\eps$, there exists an algorithm that returns an $(\cL_{\cF^1_{\text{cvx}}},\cC,\eps)$-Omnipredictor with probability $1-e^{-O(1/\eps^2)}$. The sample complexity of the algorithm is $O\left(\frac{\ln^{4/3}(1/\delta)\ln^{5/2}(|\cC|/\eps)}{\eps^{17/3}}\right)$.
\end{corollary}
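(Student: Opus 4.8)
The plan is to obtain the corollary as a direct instantiation of Theorem~\ref{thm:omniprediction-batch}, tracking constants for the concrete basis that Theorem~\ref{thm:basis-convex-1d} (see also Table~\ref{tab:basis}) supplies for $\cF^1_{\text{cvx}}$: an $(n,\lambda,\delta)$-approximate basis with $n = O\!\left(\ln^{4/3}(1/\delta)/\delta^{2/3}\right)$ and $\lambda = O(1)$, both of which are polynomial in $1/\delta$ as Theorem~\ref{thm:omniprediction-batch} requires. Since $\cA = \cY = [0,1]$ we are in the $m = 1$ (equivalently $d = 1$) case of Corollary~\ref{cor:decisionreg-convex-continuous-action}, so the proof of Theorem~\ref{thm:omniprediction-batch} --- run Algorithm~\ref{alg:convex} on a $\theta$-net of $\cA$ with $\theta = (\lambda^2 n/T)^{1/5}$, then apply the online-to-batch conversion of Theorem~\ref{thm:batch} --- produces a randomized predictor whose omniprediction error is at most $\tau_1 + \tau_2 + 6\delta$, where, using $\lambda = O(1)$, $\tau_1 = O\!\left(n^{1/5}\sqrt{\ln(n|\cC|T)}/T^{1/5}\right)$ and $\tau_2 = O\!\left(n^{2/5}\sqrt{\ln(n|\cC|T/\kappa)}/T^{2/5}\right)$.

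First I would set $\delta = \eps/18$, so $6\delta = \eps/3$ and $n = O\!\left(\ln^{4/3}(1/\eps)/\eps^{2/3}\right)$. Next I would pick $T$ large enough that $\tau_1 \le \eps/3$: the inequality $n^{1/5}\sqrt{\ln(n|\cC|T)}/T^{1/5} \lesssim \eps$ rearranges to $T \gtrsim n\ln^{5/2}(n|\cC|T)/\eps^5$, and since any $T$ polynomial in $1/\eps$ and $|\cC|$ makes $\ln(n|\cC|T) = O(\ln(|\cC|/\eps))$, substituting the value of $n$ shows the fixed point is achieved at $T = O\!\left(\frac{\ln^{4/3}(1/\eps)\,\ln^{5/2}(|\cC|/\eps)}{\eps^{17/3}}\right)$ (using $\tfrac23 + 5 = \tfrac{17}{3}$), which, since $\delta = \Theta(\eps)$, is exactly the claimed sample complexity $O\!\left(\frac{\ln^{4/3}(1/\delta)\,\ln^{5/2}(|\cC|/\eps)}{\eps^{17/3}}\right)$. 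Finally, with this $T$ one has $n^{2/5}/T^{2/5} = O\!\left(\eps^2/\ln(n|\cC|T)\right)$, so $\tau_2 = O\!\left(\eps^2/\sqrt{\ln(n|\cC|T)}\right) + O\!\left(\eps^2\sqrt{\ln(1/\kappa)}/\ln(n|\cC|T)\right)$; choosing $\ln(1/\kappa) = \Theta(1/\eps^2)$ drives both terms below $\eps/3$, whence $\tau_1 + \tau_2 + 6\delta \le \eps$ and the predictor is an $(\cL_{\cF^1_{\text{cvx}}},\cC,\eps)$-omnipredictor with probability $1-\kappa = 1-e^{-\Omega(1/\eps^2)}$. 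Since $\kappa$ enters only through a logarithm, this choice does not inflate $T$.

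No step is a genuine obstacle; the only work specific to this corollary is bookkeeping. The one place requiring care is the self-referential inequality that pins down $T$ (it appears on both sides through $\ln T$), which is resolved in the usual way --- the right-hand side is $o(T)$ for every $T$ polynomial in $1/\eps$ and $\ln|\cC|$, so a valid choice losing only a constant factor over dropping $\ln T$ exists --- together with checking that the several nested logarithmic factors introduced along the way (from the net parameter $\theta$, from the $\gamma$-cover of the linearized loss family $\hat{\cL}$, and from the Azuma--Hoeffding step inside the proof of Theorem~\ref{thm:batch}) all collapse into the single $\ln^{5/2}(|\cC|/\eps)$ factor in the statement. In short, the corollary is Theorem~\ref{thm:omniprediction-batch} specialized to the $\cF^1_{\text{cvx}}$ basis with the constants made explicit.
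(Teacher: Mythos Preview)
Your proposal is correct and follows essentially the same approach as the paper: both instantiate the proof of Theorem~\ref{thm:omniprediction-batch} with the $(n,\lambda,\delta)$-basis for $\cF^1_{\text{cvx}}$ from Theorem~\ref{thm:basis-convex-1d}, set $\delta = \eps/18$, $T = O\!\left(\ln^{4/3}(1/\delta)\ln^{5/2}(|\cC|/\eps)/\eps^{17/3}\right)$, and $\kappa = e^{-O(1/\eps^2)}$, then verify $\tau_1+\tau_2+6\delta \le \eps$. Your write-up is in fact more detailed than the paper's, which simply states the parameter choices and asserts that the bound can be verified.
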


\newcommand{\proofcoromnipredictionbatchconvex}{
    Since we have a $\left (\frac{\ln^{4/3}(1/\delta)}{\delta^{2/3}},O(1),\delta \right)$-approximate basis for 1-dimensional convex 1-Lipschitz losses $\cL_{\cF^1_{\text{cvx}}}$, the predictor $\pi$ described in the proof of Theorem \ref{thm:batch} satisfies:
    \begin{align*}
        & \E_{\substack{(x,y) \sim \cD \\ p \sim \pi(x)}} \left[ \ell(k^\ell(p), y) - \ell(c_{k^\ell(p)}(x), y) \right] \\
        &\le O\left( \frac{\left(\frac{\ln^{4/3}(1/\delta)}{\delta^{2/3}}\right)^{1/5} \sqrt{\ln(\frac{\ln^{4/3}(1/\delta)}{\delta^{2/3}}|\cC|T)}}{T^{1/5}} \right) + O\left( \frac{\left(\frac{\ln^{4/3}(1/\delta)}{\delta^{2/3}}\right)^{2/5}\sqrt{\ln(\frac{\ln^{4/3}(1/\delta)}{\delta^{2/3}} |\cC| T / \kappa)}}{T^{2/5}} \right) + 6\delta
    \end{align*}

    By setting $T = O\left(\frac{\ln^{4/3}(1/\delta)\ln^{5/2}(|\cC|/\eps)}{\eps^{17/3}}\right)$, $\kappa=e^{-O(1/\eps^2)}$ and $\delta = \eps/18$, we can verify that this expression is at most $\eps$ in this case.
}
\ifarxiv
\begin{proof}
    \proofcoromnipredictionbatchconvex
\end{proof}
\fi

\begin{corollary}\label{cor:omniprediction-batch-lipschitz}
    Let $\cA=[0,1]^d$ be the action space and $\cY=[0,1]^d$ be the outcome space. Let $\cC$ be a collection of policies $c:\cX\to\cA$. Consider the family of $L$-Lipschitz losses $\cL_{\cF^d_L}$. For any $\eps$, there exists an algorithm that returns an $(\cL_{\cF^d_L},\cC,\eps)$-Omnipredictor with probability $1-e^{-O(1/\eps^{2d})}$. The sample complexity of the algorithm is $O\left(\frac{L^d d^{3d/2+1} \ln^{3d/2+1} (L |\cC| / \eps) }{\eps^{4d+2}}\right)$.
\end{corollary}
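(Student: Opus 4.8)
The plan is to instantiate the generic offline‑omniprediction reduction of Theorem~\ref{thm:omniprediction-batch} with the explicit Lipschitz basis from Proposition~\ref{prop:basis-lipschitz}, and then choose the free parameters $(\delta,\kappa,T)$ so as to drive the error below $\eps$, carefully tracking the exponents in $d$. Proposition~\ref{prop:basis-lipschitz} supplies a $(n,\lambda,\delta)$-approximate basis for $\cF^d_L$ with $n = O\left((L/\delta)^d\right)$ and $\lambda = 1$; since both $n$ and $\lambda$ are polynomial in $1/\delta$ (with $d$ and $L$ regarded as fixed parameters), Theorem~\ref{thm:omniprediction-batch} already yields an $(\cL_{\cF^d_L},\cC,\eps)$-Omnipredictor with success probability $1-e^{-O(\mathrm{poly}(1/\eps))}$ and sample and time complexity polynomial in $1/\eps$. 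All that remains is to read off the precise dependence on $L$, $d$, $|\cC|$ and $\eps$.

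Following the proof of Theorem~\ref{thm:omniprediction-batch} with action-space dimension $m = d$, the distributional decision swap regret of the returned randomized predictor $\pi$ is at most $\tau_1 + \tau_2 + 6\delta$, where, after substituting $\lambda = 1$ and $n = O\left((L/\delta)^d\right)$,
\[
\tau_1 = O\left(\frac{(L/\delta)^{d/(3d+2)}\sqrt{\ln(n|\cC|T)}}{T^{1/(3d+2)}}\right), \qquad \tau_2 = O\left(\frac{(L/\delta)^{d(d+1)/(3d+2)}\sqrt{\ln(n|\cC|T/\kappa)}}{T^{(d+1)/(3d+2)}}\right).
\]
First set $\delta = \eps/18$, so that $6\delta = \eps/3$ and $n = O\left((L/\eps)^d\right)$, hence $\ln(n|\cC|T) = O\left(d\ln(L|\cC|/\eps)\right)$. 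Next choose $T$ just large enough that $\tau_1 \le \eps/3$: solving the resulting inequality $T^{1/(3d+2)} \ge \Omega\left(\eps^{-1}(L/\eps)^{d/(3d+2)}\sqrt{\ln(n|\cC|T)}\right)$ for $T$, raising to the power $3d+2$, and using the identity $(3d+2)/2 = 3d/2+1$ together with $\ln(n|\cC|T) = O(d\ln(L|\cC|/\eps))$, gives
\[
T = O\left(\frac{L^d\, d^{3d/2+1}\, \ln^{3d/2+1}(L|\cC|/\eps)}{\eps^{4d+2}}\right),
\]
which is exactly the claimed sample complexity; the factors $d^{3d/2+1}$ and $\ln^{3d/2+1}(L|\cC|/\eps)$ arise precisely because $\ln n = \Theta(d\ln(L/\delta))$ is raised to the power $(3d+2)/2$ in this step. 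One then verifies that the same $T$ also makes $\tau_2 \le \eps/3$: since $\tau_2$ decays in $T$ at the strictly faster rate $(d+1)/(3d+2)$, at $T$ as above it equals $O\left(\eps^{d+1}\sqrt{\ln(1/\kappa)}\right)$ up to lower-order polylogarithmic and dimension-dependent factors, and taking $\kappa = e^{-O(1/\eps^{2d})}$ so that $\sqrt{\ln(1/\kappa)} = O(\eps^{-d})$ keeps this below $\eps/3$. Hence $\tau_1 + \tau_2 + 6\delta \le \eps$, so by Theorem~\ref{thm:batch} the predictor $\pi$ is an $(\cL_{\cF^d_L},\cC,\eps)$-Omnipredictor with probability $1-\kappa = 1-e^{-O(1/\eps^{2d})}$. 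Finally, the runtime is the product of the sample size $T$ with the per-round cost of Algorithm~\ref{alg:convex}, which (as noted in the proof of Theorem~\ref{thm:omniprediction-batch}) is polynomial in $1/\delta$, $|\cA_\theta|$ and $|\cC|$, all polynomial in $1/\eps$, so the overall time complexity is polynomial in $1/\eps$ as well.

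The work here is essentially bookkeeping: the only points requiring care are that $\tau_1$, not $\tau_2$, is the binding constraint that sets the leading order of $T$, that the logarithmic factor $\ln n = \Theta(d\ln(L/\delta))$ raised to the power $(3d+2)/2$ while solving $\tau_1 \le \eps/3$ is what generates both the $d^{3d/2+1}$ and the $\ln^{3d/2+1}(L|\cC|/\eps)$ terms, and that $\kappa$ cannot be taken smaller than $e^{-O(1/\eps^{2d})}$ without inflating $\tau_2$ past $\eps$. There is no new conceptual ingredient beyond Theorem~\ref{thm:omniprediction-batch} and Proposition~\ref{prop:basis-lipschitz}.
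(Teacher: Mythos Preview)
Your proof is correct and follows essentially the same route as the paper: you instantiate the bound from the proof of Theorem~\ref{thm:omniprediction-batch} with the Lipschitz basis $(n,\lambda,\delta)=((L/\delta)^d,1,\delta)$ of Proposition~\ref{prop:basis-lipschitz}, set $\delta=\eps/18$, solve for $T$ from the $\tau_1$ constraint, and pick $\kappa=e^{-O(1/\eps^{2d})}$ to control $\tau_2$. Your additional bookkeeping (identifying $\tau_1$ as the binding term and tracing the exponent $(3d+2)/2=3d/2+1$) makes explicit what the paper leaves to the reader, but there is no substantive difference in approach.
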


\newcommand{\proofcoromnipredictionbatchlipschitz}{
    Since we have a $\left ((L/\delta)^d,1,\delta \right)$-approximate basis for $L$-Lipschitz losses $\cL_{\cF^d_L}$ in $d$ dimensions, the predictor $\pi$ described in the proof of Theorem \ref{thm:batch} satisfies:
    \begin{align*}
        & \E_{\substack{(x,y) \sim \cD \\ p \sim \pi(x)}} \left[ \ell(k^\ell(p), y) - \ell(c_{k^\ell(p)}(x), y) \right] \\
        &\le O\left( \frac{(L/\delta)^{d/(3d+2)} \sqrt{\ln((L/\delta)^d|\cC|T)}}{T^{1/(3d+2)}} \right) + O\left( \frac{(L/\delta)^{d(d+1)/(3d+2)}}{T^{(d+1)/(3d+2)}}\sqrt{\ln((L/\delta)^d |\cC| T / \kappa)} \right) + 6\delta
    \end{align*}

    By setting $T = O\left(\frac{L^d d^{3d/2+1} \ln^{3d/2+1} (L |\cC| / \eps) }{\eps^{4d+2}}\right)$, $\kappa=e^{-O(1/\eps^{2d})}$ and $\delta = \eps/18$, we can verify that this expression is at most $\eps$ in this case.
}
\ifarxiv
\begin{proof}
     \proofcoromnipredictionbatchlipschitz
\end{proof}
\fi

\bibliographystyle{ACM-Reference-Format}
\bibliography{main}

%%% -*-BibTeX-*-
%%% Do NOT edit. File created by BibTeX with style
%%% ACM-Reference-Format-Journals [18-Jan-2012].

\begin{thebibliography}{31}

%%% ====================================================================
%%% NOTE TO THE USER: you can override these defaults by providing
%%% customized versions of any of these macros before the \bibliography
%%% command.  Each of them MUST provide its own final punctuation,
%%% except for \shownote{}, \showDOI{}, and \showURL{}.  The latter two
%%% do not use final punctuation, in order to avoid confusing it with
%%% the Web address.
%%%
%%% To suppress output of a particular field, define its macro to expand
%%% to an empty string, or better, \unskip, like this:
%%%
%%% \newcommand{\showDOI}[1]{\unskip}   % LaTeX syntax
%%%
%%% \def \showDOI #1{\unskip}           % plain TeX syntax
%%%
%%% ====================================================================

\ifx \showCODEN    \undefined \def \showCODEN     #1{\unskip}     \fi
\ifx \showDOI      \undefined \def \showDOI       #1{#1}\fi
\ifx \showISBNx    \undefined \def \showISBNx     #1{\unskip}     \fi
\ifx \showISBNxiii \undefined \def \showISBNxiii  #1{\unskip}     \fi
\ifx \showISSN     \undefined \def \showISSN      #1{\unskip}     \fi
\ifx \showLCCN     \undefined \def \showLCCN      #1{\unskip}     \fi
\ifx \shownote     \undefined \def \shownote      #1{#1}          \fi
\ifx \showarticletitle \undefined \def \showarticletitle #1{#1}   \fi
\ifx \showURL      \undefined \def \showURL       {\relax}        \fi
% The following commands are used for tagged output and should be
% invisible to TeX
\providecommand\bibfield[2]{#2}
\providecommand\bibinfo[2]{#2}
\providecommand\natexlab[1]{#1}
\providecommand\showeprint[2][]{arXiv:#2}

\bibitem[Alon(2009)]%
        {alon2009perturbed}
\bibfield{author}{\bibinfo{person}{Noga Alon}.} \bibinfo{year}{2009}\natexlab{}.
\newblock \showarticletitle{Perturbed identity matrices have high rank: Proof and applications}.
\newblock \bibinfo{journal}{\emph{Combinatorics, Probability and Computing}} \bibinfo{volume}{18}, \bibinfo{number}{1-2} (\bibinfo{year}{2009}), \bibinfo{pages}{3--15}.
\newblock


\bibitem[Anderson et~al\mbox{.}(2002)]%
        {anderson2002logit}
\bibfield{author}{\bibinfo{person}{Simon~P. Anderson}, \bibinfo{person}{Jacob~K. Goeree}, {and} \bibinfo{person}{Charles~A. Holt}.} \bibinfo{year}{2002}\natexlab{}.
\newblock \showarticletitle{The Logit Equilibrium: A Perspective on Intuitive Behavioral Anomalies}.
\newblock \bibinfo{journal}{\emph{Southern Economic Journal}} \bibinfo{volume}{69}, \bibinfo{number}{1} (\bibinfo{year}{2002}), \bibinfo{pages}{21--47}.
\newblock
\showISSN{00384038}
\urldef\tempurl%
\url{http://www.jstor.org/stable/1061555}
\showURL{%
\tempurl}


\bibitem[Arunachaleswaran et~al\mbox{.}(2024)]%
        {arunachaleswaran2024pareto}
\bibfield{author}{\bibinfo{person}{Eshwar~Ram Arunachaleswaran}, \bibinfo{person}{Natalie Collina}, {and} \bibinfo{person}{Jon Schneider}.} \bibinfo{year}{2024}\natexlab{}.
\newblock \showarticletitle{Pareto-optimal algorithms for learning in games}. In \bibinfo{booktitle}{\emph{Proceedings of the 25th ACM Conference on Economics and Computation}}. \bibinfo{pages}{490--510}.
\newblock


\bibitem[Braverman et~al\mbox{.}(2018)]%
        {braverman2018selling}
\bibfield{author}{\bibinfo{person}{Mark Braverman}, \bibinfo{person}{Jieming Mao}, \bibinfo{person}{Jon Schneider}, {and} \bibinfo{person}{Matt Weinberg}.} \bibinfo{year}{2018}\natexlab{}.
\newblock \showarticletitle{Selling to a no-regret buyer}. In \bibinfo{booktitle}{\emph{Proceedings of the 2018 ACM Conference on Economics and Computation}}. \bibinfo{pages}{523--538}.
\newblock


\bibitem[Cobb and Douglas(1928)]%
        {cobbdouglas}
\bibfield{author}{\bibinfo{person}{Charles~W. Cobb} {and} \bibinfo{person}{Paul~H. Douglas}.} \bibinfo{year}{1928}\natexlab{}.
\newblock \showarticletitle{A Theory of Production}.
\newblock \bibinfo{journal}{\emph{The American Economic Review}} \bibinfo{volume}{18}, \bibinfo{number}{1} (\bibinfo{year}{1928}), \bibinfo{pages}{139--165}.
\newblock
\showISSN{00028282, 19447981}
\urldef\tempurl%
\url{http://www.jstor.org/stable/1811556}
\showURL{%
\tempurl}


\bibitem[Dagan et~al\mbox{.}(2024)]%
        {dagan2024improved}
\bibfield{author}{\bibinfo{person}{Yuval Dagan}, \bibinfo{person}{Constantinos Daskalakis}, \bibinfo{person}{Maxwell Fishelson}, \bibinfo{person}{Noah Golowich}, \bibinfo{person}{Robert Kleinberg}, {and} \bibinfo{person}{Princewill Okoroafor}.} \bibinfo{year}{2024}\natexlab{}.
\newblock \showarticletitle{Improved bounds for calibration via stronger sign preservation games}.
\newblock \bibinfo{journal}{\emph{arXiv preprint arXiv:2406.13668}} (\bibinfo{year}{2024}).
\newblock


\bibitem[Deng et~al\mbox{.}(2019)]%
        {deng2019strategizing}
\bibfield{author}{\bibinfo{person}{Yuan Deng}, \bibinfo{person}{Jon Schneider}, {and} \bibinfo{person}{Balasubramanian Sivan}.} \bibinfo{year}{2019}\natexlab{}.
\newblock \showarticletitle{Strategizing against no-regret learners}.
\newblock \bibinfo{journal}{\emph{Advances in neural information processing systems}}  \bibinfo{volume}{32} (\bibinfo{year}{2019}).
\newblock


\bibitem[Dwork et~al\mbox{.}(2024)]%
        {dwork2024fairness}
\bibfield{author}{\bibinfo{person}{Cynthia Dwork}, \bibinfo{person}{Chris Hays}, \bibinfo{person}{Nicole Immorlica}, \bibinfo{person}{Juan~C Perdomo}, {and} \bibinfo{person}{Pranay Tankala}.} \bibinfo{year}{2024}\natexlab{}.
\newblock \showarticletitle{From Fairness to Infinity: Outcome-Indistinguishable (Omni) Prediction in Evolving Graphs}.
\newblock \bibinfo{journal}{\emph{arXiv preprint arXiv:2411.17582}} (\bibinfo{year}{2024}).
\newblock


\bibitem[Garg et~al\mbox{.}(2024)]%
        {garg2024oracle}
\bibfield{author}{\bibinfo{person}{Sumegha Garg}, \bibinfo{person}{Christopher Jung}, \bibinfo{person}{Omer Reingold}, {and} \bibinfo{person}{Aaron Roth}.} \bibinfo{year}{2024}\natexlab{}.
\newblock \showarticletitle{Oracle efficient online multicalibration and omniprediction}. In \bibinfo{booktitle}{\emph{Proceedings of the 2024 Annual ACM-SIAM Symposium on Discrete Algorithms (SODA)}}. SIAM, \bibinfo{pages}{2725--2792}.
\newblock


\bibitem[Globus-Harris et~al\mbox{.}(2023)]%
        {globus2023multicalibration}
\bibfield{author}{\bibinfo{person}{Ira Globus-Harris}, \bibinfo{person}{Declan Harrison}, \bibinfo{person}{Michael Kearns}, \bibinfo{person}{Aaron Roth}, {and} \bibinfo{person}{Jessica Sorrell}.} \bibinfo{year}{2023}\natexlab{}.
\newblock \showarticletitle{Multicalibration as Boosting for Regression}. In \bibinfo{booktitle}{\emph{International Conference on Machine Learning}}.
\newblock


\bibitem[Goeree et~al\mbox{.}(2002)]%
        {goeree2002quantal}
\bibfield{author}{\bibinfo{person}{Jacob~K. Goeree}, \bibinfo{person}{Charles~A. Holt}, {and} \bibinfo{person}{Thomas~R. Palfrey}.} \bibinfo{year}{2002}\natexlab{}.
\newblock \showarticletitle{Quantal Response Equilibrium and Overbidding in Private-Value Auctions}.
\newblock \bibinfo{journal}{\emph{Journal of Economic Theory}} \bibinfo{volume}{104}, \bibinfo{number}{1} (\bibinfo{year}{2002}), \bibinfo{pages}{247--272}.
\newblock
\showISSN{0022-0531}
\urldef\tempurl%
\url{https://doi.org/10.1006/jeth.2001.2914}
\showDOI{\tempurl}


\bibitem[Gopalan et~al\mbox{.}(2023)]%
        {gopalan2023loss}
\bibfield{author}{\bibinfo{person}{Parikshit Gopalan}, \bibinfo{person}{Lunjia Hu}, \bibinfo{person}{Michael~P Kim}, \bibinfo{person}{Omer Reingold}, {and} \bibinfo{person}{Udi Wieder}.} \bibinfo{year}{2023}\natexlab{}.
\newblock \showarticletitle{Loss Minimization Through the Lens Of Outcome Indistinguishability}. In \bibinfo{booktitle}{\emph{14th Innovations in Theoretical Computer Science Conference (ITCS 2023)}}.
\newblock


\bibitem[Gopalan et~al\mbox{.}(2022)]%
        {gopalan2022omnipredictors}
\bibfield{author}{\bibinfo{person}{Parikshit Gopalan}, \bibinfo{person}{Adam~Tauman Kalai}, \bibinfo{person}{Omer Reingold}, \bibinfo{person}{Vatsal Sharan}, {and} \bibinfo{person}{Udi Wieder}.} \bibinfo{year}{2022}\natexlab{}.
\newblock \showarticletitle{Omnipredictors}. In \bibinfo{booktitle}{\emph{13th Innovations in Theoretical Computer Science Conference (ITCS 2022)}}. Schloss Dagstuhl-Leibniz-Zentrum f{\"u}r Informatik.
\newblock


\bibitem[Gopalan et~al\mbox{.}(2024a)]%
        {gopalan2024swap}
\bibfield{author}{\bibinfo{person}{Parikshit Gopalan}, \bibinfo{person}{Michael~P. Kim}, {and} \bibinfo{person}{Omer Reingold}.} \bibinfo{year}{2024}\natexlab{a}.
\newblock \showarticletitle{Swap agnostic learning, or characterizing omniprediction via multicalibration}. In \bibinfo{booktitle}{\emph{Proceedings of the 37th International Conference on Neural Information Processing Systems}} (New Orleans, LA, USA) \emph{(\bibinfo{series}{NIPS '23})}. \bibinfo{publisher}{Curran Associates Inc.}, \bibinfo{address}{Red Hook, NY, USA}, Article \bibinfo{articleno}{1736}, \bibinfo{numpages}{21}~pages.
\newblock


\bibitem[Gopalan et~al\mbox{.}(2024b)]%
        {gopalan2024omnipredictors}
\bibfield{author}{\bibinfo{person}{Parikshit Gopalan}, \bibinfo{person}{Princewill Okoroafor}, \bibinfo{person}{Prasad Raghavendra}, \bibinfo{person}{Abhishek Sherry}, {and} \bibinfo{person}{Mihir Singhal}.} \bibinfo{year}{2024}\natexlab{b}.
\newblock \showarticletitle{Omnipredictors for regression and the approximate rank of convex functions}. In \bibinfo{booktitle}{\emph{Proceedings of Thirty Seventh Conference on Learning Theory}} \emph{(\bibinfo{series}{Proceedings of Machine Learning Research}, Vol.~\bibinfo{volume}{247})}, \bibfield{editor}{\bibinfo{person}{Shipra Agrawal} {and} \bibinfo{person}{Aaron Roth}} (Eds.). \bibinfo{publisher}{PMLR}, \bibinfo{pages}{2027--2070}.
\newblock
\urldef\tempurl%
\url{https://proceedings.mlr.press/v247/gopalan24b.html}
\showURL{%
\tempurl}


\bibitem[Gupta et~al\mbox{.}(2022)]%
        {gupta2022online}
\bibfield{author}{\bibinfo{person}{Varun Gupta}, \bibinfo{person}{Christopher Jung}, \bibinfo{person}{Georgy Noarov}, \bibinfo{person}{Mallesh~M Pai}, {and} \bibinfo{person}{Aaron Roth}.} \bibinfo{year}{2022}\natexlab{}.
\newblock \showarticletitle{Online Multivalid Learning: Means, Moments, and Prediction Intervals}.
\newblock \bibinfo{journal}{\emph{Innovations in Theoretical Computer Science (ITCS)}} (\bibinfo{year}{2022}).
\newblock


\bibitem[H{\'e}bert-Johnson et~al\mbox{.}(2018)]%
        {hebert2018multicalibration}
\bibfield{author}{\bibinfo{person}{Ursula H{\'e}bert-Johnson}, \bibinfo{person}{Michael Kim}, \bibinfo{person}{Omer Reingold}, {and} \bibinfo{person}{Guy Rothblum}.} \bibinfo{year}{2018}\natexlab{}.
\newblock \showarticletitle{Multicalibration: Calibration for the (computationally-identifiable) masses}. In \bibinfo{booktitle}{\emph{International Conference on Machine Learning}}. PMLR, \bibinfo{pages}{1939--1948}.
\newblock


\bibitem[Hu et~al\mbox{.}(2024)]%
        {hu2024omnipredicting}
\bibfield{author}{\bibinfo{person}{Lunjia Hu}, \bibinfo{person}{Kevin Tian}, {and} \bibinfo{person}{Chutong Yang}.} \bibinfo{year}{2024}\natexlab{}.
\newblock \showarticletitle{Omnipredicting Single-Index Models with Multi-Index Models}.
\newblock \bibinfo{journal}{\emph{arXiv preprint arXiv:2411.13083}} (\bibinfo{year}{2024}).
\newblock


\bibitem[Hu and Wu(2024)]%
        {hu2024predict}
\bibfield{author}{\bibinfo{person}{Lunjia Hu} {and} \bibinfo{person}{Yifan Wu}.} \bibinfo{year}{2024}\natexlab{}.
\newblock \showarticletitle{Predict to minimize swap regret for all payoff-bounded tasks}. In \bibinfo{booktitle}{\emph{IEEE Symposium on Foundations of Computer Science (FOCS)}}.
\newblock


\bibitem[Kleinberg et~al\mbox{.}(2023)]%
        {kleinberg2023u}
\bibfield{author}{\bibinfo{person}{Bobby Kleinberg}, \bibinfo{person}{Renato~Paes Leme}, \bibinfo{person}{Jon Schneider}, {and} \bibinfo{person}{Yifeng Teng}.} \bibinfo{year}{2023}\natexlab{}.
\newblock \showarticletitle{U-calibration: Forecasting for an unknown agent}. In \bibinfo{booktitle}{\emph{The Thirty Sixth Annual Conference on Learning Theory}}. PMLR, \bibinfo{pages}{5143--5145}.
\newblock


\bibitem[Lin and Chen(2024)]%
        {lin2024persuading}
\bibfield{author}{\bibinfo{person}{Tao Lin} {and} \bibinfo{person}{Yiling Chen}.} \bibinfo{year}{2024}\natexlab{}.
\newblock \showarticletitle{Persuading a learning agent}.
\newblock \bibinfo{journal}{\emph{arXiv preprint arXiv:2402.09721}} (\bibinfo{year}{2024}).
\newblock


\bibitem[Luce(1959)]%
        {luce1959individual}
\bibfield{author}{\bibinfo{person}{R.~Duncan Luce}.} \bibinfo{year}{1959}\natexlab{}.
\newblock \bibinfo{booktitle}{\emph{Individual Choice Behavior: A Theoretical analysis}}.
\newblock \bibinfo{publisher}{Wiley}, \bibinfo{address}{New York, NY, USA}.
\newblock


\bibitem[McFadden(1963)]%
        {ces}
\bibfield{author}{\bibinfo{person}{Daniel McFadden}.} \bibinfo{year}{1963}\natexlab{}.
\newblock \showarticletitle{Constant Elasticity of Substitution Production Functions1}.
\newblock \bibinfo{journal}{\emph{The Review of Economic Studies}} \bibinfo{volume}{30}, \bibinfo{number}{2} (\bibinfo{date}{06} \bibinfo{year}{1963}), \bibinfo{pages}{73--83}.
\newblock
\showISSN{0034-6527}
\urldef\tempurl%
\url{https://doi.org/10.2307/2295804}
\showDOI{\tempurl}
\showeprint{https://academic.oup.com/restud/article-pdf/30/2/73/4358495/30-2-73.pdf}


\bibitem[McFadden(1976)]%
        {mcfadden1976quantalchoice}
\bibfield{author}{\bibinfo{person}{Daniel~L. McFadden}.} \bibinfo{year}{1976}\natexlab{}.
\newblock \bibinfo{booktitle}{\emph{Quantal Choice Analysis: A Survey}}.
\newblock \bibinfo{publisher}{NBER}, \bibinfo{pages}{363--390}.
\newblock
\urldef\tempurl%
\url{http://www.nber.org/chapters/c10488}
\showURL{%
\tempurl}


\bibitem[McKelvey and Palfrey(1995)]%
        {mckelvey1995quantalresponse}
\bibfield{author}{\bibinfo{person}{Richard~D. McKelvey} {and} \bibinfo{person}{Thomas~R. Palfrey}.} \bibinfo{year}{1995}\natexlab{}.
\newblock \showarticletitle{Quantal Response Equilibria for Normal Form Games}.
\newblock \bibinfo{journal}{\emph{Games and Economic Behavior}} \bibinfo{volume}{10}, \bibinfo{number}{1} (\bibinfo{year}{1995}), \bibinfo{pages}{6--38}.
\newblock
\showISSN{0899-8256}
\urldef\tempurl%
\url{https://doi.org/10.1006/game.1995.1023}
\showDOI{\tempurl}


\bibitem[Noarov et~al\mbox{.}(2023)]%
        {noarov2023highdimensional}
\bibfield{author}{\bibinfo{person}{Georgy Noarov}, \bibinfo{person}{Ramya Ramalingam}, \bibinfo{person}{Aaron Roth}, {and} \bibinfo{person}{Stephan Xie}.} \bibinfo{year}{2023}\natexlab{}.
\newblock \bibinfo{title}{High-Dimensional Prediction for Sequential Decision Making}.
\newblock
\newblock
\showeprint[arxiv]{2310.17651}~[cs.LG]


\bibitem[Okoroafor et~al\mbox{.}(2025)]%
        {newkim}
\bibfield{author}{\bibinfo{person}{Princewill Okoroafor}, \bibinfo{person}{Robert Kleinberg}, {and} \bibinfo{person}{Michael~P Kim}.} \bibinfo{year}{2025}\natexlab{}.
\newblock \showarticletitle{Near-Optimal Algorithms for Omniprediction}.
\newblock \bibinfo{journal}{\emph{arXiv preprint arXiv:2501.17205}} (\bibinfo{year}{2025}).
\newblock


\bibitem[Qiao and Valiant(2021)]%
        {qiao2021stronger}
\bibfield{author}{\bibinfo{person}{Mingda Qiao} {and} \bibinfo{person}{Gregory Valiant}.} \bibinfo{year}{2021}\natexlab{}.
\newblock \showarticletitle{Stronger calibration lower bounds via sidestepping}. In \bibinfo{booktitle}{\emph{Proceedings of the 53rd Annual ACM SIGACT Symposium on Theory of Computing}}. \bibinfo{pages}{456--466}.
\newblock


\bibitem[Roth and Shi(2024)]%
        {roth2024forecasting}
\bibfield{author}{\bibinfo{person}{Aaron Roth} {and} \bibinfo{person}{Mirah Shi}.} \bibinfo{year}{2024}\natexlab{}.
\newblock \showarticletitle{Forecasting for Swap Regret for all Downstream Agents}. In \bibinfo{booktitle}{\emph{Proceedings of the 25th ACM Conference on Economics and Computation}} (New Haven, Connecticut) \emph{(\bibinfo{series}{EC '24})}. \bibinfo{publisher}{Association for Computing Machinery}, \bibinfo{address}{New York, NY, USA}.
\newblock


\bibitem[Rubinstein and Zhao(2024)]%
        {rubinstein2024strategizing}
\bibfield{author}{\bibinfo{person}{Aviad Rubinstein} {and} \bibinfo{person}{Junyao Zhao}.} \bibinfo{year}{2024}\natexlab{}.
\newblock \showarticletitle{Strategizing against no-regret learners in first-price auctions}. In \bibinfo{booktitle}{\emph{Proceedings of the 25th ACM Conference on Economics and Computation}}. \bibinfo{pages}{894--921}.
\newblock


\bibitem[Zhao et~al\mbox{.}(2021)]%
        {zhao2021calibrating}
\bibfield{author}{\bibinfo{person}{Shengjia Zhao}, \bibinfo{person}{Michael Kim}, \bibinfo{person}{Roshni Sahoo}, \bibinfo{person}{Tengyu Ma}, {and} \bibinfo{person}{Stefano Ermon}.} \bibinfo{year}{2021}\natexlab{}.
\newblock \showarticletitle{Calibrating predictions to decisions: A novel approach to multi-class calibration}.
\newblock \bibinfo{journal}{\emph{Advances in Neural Information Processing Systems}}  \bibinfo{volume}{34} (\bibinfo{year}{2021}), \bibinfo{pages}{22313--22324}.
\newblock


\end{thebibliography}

\newpage
\appendix

\ifarxiv
\else
\section{Additional Related Work}
\label{sec:relatedwork}
Omniprediction was introduced by \cite{gopalan2022omnipredictors}. 
The main technique that has emerged from this literature is to produce predictions that are \emph{multi-calibrated} \citep{hebert2018multicalibration} with respect to some benchmark class of policies \cite{gopalan2022omnipredictors}, or that satisfy a related set of calibration conditions (e.g. calibration and multi-accuracy) \cite{gopalan2023loss}. In particular, \cite{gopalan2023loss} show that omniprediction can be obtained by jointly promising \emph{hypothesis outcome indistinguishability (OI)} and \emph{decision OI}. Decision OI is a generalization of decision calibration, first introduced in \cite{zhao2021calibrating}, and informally requires that the predicted loss of optimizing for a given loss function matches the realized loss, in aggregate over many examples, conditional on the action chosen. Recently several papers have studied omniprediction in the online adversarial setting and have directly optimized for variants of decision OI (rather than calibration, which implies it) to circumvent lower bounds for calibration in the online adversarial setting. \cite{garg2024oracle, dwork2024fairness,newkim} These papers (like almost all of the omniprediction literature with the exception of \cite{gopalan2024omnipredictors}) continue to restrict attention to binary label spaces $\cY = \{0,1\}$. Decision IO does \emph{not} guarantee downstream decision makers no swap regret. In our results, we use a different generalization of decision calibration, first given by \cite{noarov2023highdimensional} and used by \cite{roth2024forecasting}, which informally guarantees that conditional on the action taken by a downstream decision maker, the predicted loss is equal to the realized loss \emph{simultaneously for every action} (not just for the action selected by the decision maker). This strengthening of decision calibration is sufficient to give swap regret guarantees. The notion of decision swap regret that we give generalizes both swap regret and omniprediction, but is distinct from (and weaker than) the notion of ``swap omniprediction'' which was studied in batch settings by \cite{gopalan2024swap}. Swap omniprediction implies calibration, and hence is subject to the same lower bounds that calibration is in online settings \citep{garg2024oracle}. 

    Our definition of decision swap regret is not to be confused with the notion of swap omniprediction, defined in \citet{gopalan2024swap} (see also \cite{globus2023multicalibration} for a related characterization of multicalibration in terms of a contextual swap regret condition.) Swap omniprediction allows both the loss function and the benchmark policy to be indexed by the forecaster's prediction. In contrast, in decision swap regret, the loss function is fixed, and the benchmark policy is indexed by the decision maker's best response action (which is a coarsening of the forecaster's prediction). Swap omniprediction is equivalent to multicalibration \citep{gopalan2024swap}, whereas we explicitely use the fact that decision swap regret can be obtained without calibration, which is what allows us to obtain our substantial sample complexity improvements compared to \cite{gopalan2024omnipredictors}.

\fi

\section{Contextual Swap Regret}

Here we elucidate the connection between our definition of decision swap regret and the following definition of \textit{contextual swap regret} appearing in the literature (e.g. \citet{garg2024oracle,lin2024persuading}).  

\begin{definition}[$(\cL, \cC, \eps)$-Contextual Swap Regret]
    Let $\cL$ be a family of loss functions and $\cC$ be a collection of policies $c:\cX\to\cA$. A sequence of predictions $p_1,...,p_T$ has $(\cL, \cC, \eps)$-contextual swap regret with respect to outcomes $y_1,...,y_T$ if for any loss function $\ell\in\cL$, any policy $c\in\cC$, and any swap function $\phi: \cA\times\cA \to \cA$, choosing actions $a_t = \BR^\ell(p_t(x_t))$ achieves:
    \[
    \frac{1}{T} \sum_{t=1}^T \left( \ell(a_t, y_t) - \ell(\phi(a_t, c(x_t)), y_t) \right) \leq \eps
    \]
\end{definition}

Like decision swap regret, contextual swap regret is a strengthening of both omniprediction and swap regret, since benchmarks can depend on the actions taken and the suggested actions of a fixed policy. We show that making predictions that achieve low decision swap regret (over a slightly richer class) is only more general than making predictions that achieve low contextual swap regret.

\begin{proposition}
    Let $\cL$ be a family of loss functions and $\cC$ be a collection of policies $c:\cX\to\cA$. Let $\Phi$ be the collection of all swap functions $\phi: \cA\to\cA$. Let $\cC_\Phi = \{\phi \circ c: \cX\to\cA \}_{\phi\in\Phi, c\in\cC}$. If a sequence of predictions $p_1,...,p_T$ has $(\cL,\cC_\Phi,\eps)$ decision swap regret with respect to outcomes $y_1,...,y_T$, then it has $(\cL,\cC,\eps)$ contextual swap regret with respect to the same outcomes. 
    This implies that when $\cC_\Phi = \cC$, i.e., the collection of policies $\cC$ is closed under any swap function, decision swap regret is stronger than contextual swap regret.
\end{proposition}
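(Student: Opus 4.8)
The plan is to show that every benchmark appearing in the definition of $(\cL,\cC,\eps)$-contextual swap regret can be rewritten as an assignment of per-action benchmark policies drawn from $\cC_\Phi$, after which $(\cL,\cC_\Phi,\eps)$-decision swap regret applies verbatim. There is essentially a single idea involved: \emph{currying} the two-argument swap map $\phi:\cA\times\cA\to\cA$ into a family of one-argument swap maps indexed by the realized best-response action.

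First I would fix an arbitrary loss $\ell\in\cL$, policy $c\in\cC$, and swap function $\phi:\cA\times\cA\to\cA$, and let $a_t=\BR^\ell(p_t(x_t))$ denote the best-response actions --- note this is exactly the action sequence used in both the contextual swap regret definition and the best-response version of decision swap regret, so the two action selection rules agree and no mismatch arises. For each fixed $a\in\cA$, define $\phi_a:\cA\to\cA$ by $\phi_a(b)=\phi(a,b)$; this is a one-argument swap function, hence $\phi_a\in\Phi$, and therefore $c_a:=\phi_a\circ c\in\cC_\Phi$. This yields a well-defined assignment of benchmark policies $\{c_a\in\cC_\Phi\}_{a\in\cA}$.

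Next I would observe that for every round $t$ we have $c_{a_t}(x_t)=\phi_{a_t}(c(x_t))=\phi(a_t,c(x_t))$, so the benchmark loss in the contextual swap regret expression, $\ell(\phi(a_t,c(x_t)),y_t)$, coincides term-by-term with $\ell(c_{a_t}(x_t),y_t)$. Applying $(\cL,\cC_\Phi,\eps)$-decision swap regret to the loss $\ell$ with this particular policy assignment gives
\[
\frac{1}{T}\sum_{t=1}^T\bigl(\ell(a_t,y_t)-\ell(c_{a_t}(x_t),y_t)\bigr)\le\eps,
\]
which is precisely the contextual swap regret bound for the triple $(\ell,c,\phi)$. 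Since $\ell,c,\phi$ were arbitrary, this establishes $(\cL,\cC,\eps)$-contextual swap regret. The concluding remark --- that decision swap regret is stronger when $\cC$ is closed under swap functions, i.e.\ $\cC_\Phi=\cC$ --- is then immediate, since in that case the hypothesis $(\cL,\cC_\Phi,\eps)$-decision swap regret is the same as $(\cL,\cC,\eps)$-decision swap regret.

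I do not anticipate a genuine obstacle; the proof is a one-line reindexing once the currying construction is set up. The only point requiring a modicum of care is bookkeeping around the action selection rule: the general definition of decision swap regret permits an arbitrary rule $k^\ell$, but we invoke the best-response specialization $k^\ell=\BR^\ell$, which is exactly what the contextual swap regret definition uses, so the two statements are being compared on the same action trajectory.
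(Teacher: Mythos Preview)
Your proposal is correct and takes essentially the same approach as the paper: both arguments curry the two-argument swap map $\phi:\cA\times\cA\to\cA$ into per-action maps $\phi_a\in\Phi$, set $c_a=\phi_a\circ c\in\cC_\Phi$, and apply the decision swap regret hypothesis to this assignment. If anything, you are slightly more careful than the paper in flagging the action-selection-rule bookkeeping (that the decision swap regret definition only promises \emph{some} $k^\ell$, whereas contextual swap regret is phrased for $\BR^\ell$); the paper's proof leaves this implicit.
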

\begin{proof}
    Fix any $\ell\in\cL$. By $(\cL,\cC_\Phi,\eps)$ decision swap regret, we have that for any assignment of $\{(\phi\circ c)_a \in \cC_\Phi\}_{a\in\cA}$, or written equivalently, $\{\phi_a \circ c_a \in \cC_\Phi\}_{a\in\cA}$: 
    \[
    \frac{1}{T} \sum_{t=1}^T \left( \ell(a_t, y_t) - \ell(\phi_{a_t}(c_{a_t}(x_t)), y_t) \right) \leq \eps
    \]
    In particular, this implies that for any assignment $\{\phi_a \circ c \in \cC_\Phi\}_{a\in\cA}$ where $c_a = c$ for some $c\in\cC$, we have that:
    \[
    \frac{1}{T} \sum_{t=1}^T \left( \ell(a_t, y_t) - \ell(\phi_{a_t}(c(x_t)), y_t) \right) \leq \eps
    \]
    Since the output of $\phi_{a_t}(c(x_t))$ depends both on $a_t$ and $c(x_t)$, this expression is exactly the contextual swap regret as measured by $\ell$, which completes the proof. 
\end{proof}

\ifarxiv
\section{Proofs from Section \ref{sec:basis}}\label{app:basis}

\subsection{Proof of Lemma \ref{lem:indicator-to-relu}}

\begin{proof}
    When $d=1$, we can verify the lemma by considering three cases: $z \ge i+1$, $z \le i-1$, $z=i$.

    If $z \ge i+1$, then $\ReLU_{i+1}(z) - 2\ReLU_{i}(z) + \ReLU_{i-1}(z) = (z-i-1) - 2(z-i) + (z-i+1) = 0$.

    If $z \le i-1$, then $\ReLU_{i+1}(z) - 2\ReLU_{i}(z) + \ReLU_{i-1}(z) = 0 - 0 + 0 = 0$.

    If $z = i$, then $\ReLU_{i+1}(z) - 2\ReLU_{i}(z) + \ReLU_{i-1}(z) = 0 - 0 + 1 = 1$.
    
    For $d \ge 2$, we will verify the lemma in four different cases based on the values of $z_1-i_1,\cdots,z_d-i_d$.
    
    \paragraph{Case 1}
    Suppose there exists $j \in \{1,\cdots,d\}$ such that $z_j-i_j \le -1$. Due to the symmetry across the $d$ coordinates, we can assume $z_1-i_1 \le -1$ without loss of generality. Then for any $\sigma_1 \in \{0,1\}$, $z_1-i_1-\sigma_1 \le -1-\sigma_1 \le 0$. So $z_1-i_1-\sigma_1$ can be omitted from the calculation of $\max\{z_1-i_1-\sigma_1, \cdots, z_d-i_d-\sigma_d, 0\}$. This implies that
    \[
    \MReLU_{i_1+\sigma_1,\cdots,i_d+\sigma_d}(z_1,\cdots,z_d) = \MReLU_{i_2+\sigma_2,\cdots,i_d+\sigma_d}(z_2,\cdots,z_d)
    \]
    
    Similarly, $z_1-i_1+\sigma_1 \le -1+\sigma_1 \le 0$, $z_1-i_1+\sigma_1$ can be omitted from the calculation of $\max\{z_1-i_1+\sigma_1, \cdots, z_d-i_d+\sigma_d, 0\}$ which implies that
    \[
    \MReLU_{i_1-\sigma_1,\cdots,i_d-\sigma_d}(z_1,\cdots,z_d) = \MReLU_{i_2-\sigma_2,\cdots,i_d-\sigma_d}(z_2,\cdots,z_d)
    \]

    Therefore,
    \begin{align*}
        & \sum_{\sigma_1,\cdots,\sigma_d \in \{0,1\}} (-1)^{d+\sigma_1+\cdots+\sigma_d} \MReLU_{i_1+\sigma_1,\cdots,i_d+\sigma_d}(z_1,\cdots,z_d) \\
        &= \sum_{\sigma_1 \in \{0,1\}} (-1)^{\sigma_1} \sum_{\sigma_2,\cdots,\sigma_d \in \{0,1\}} (-1)^{d+\sigma_2+\cdots+\sigma_d} \MReLU_{i_2+\sigma_2,\cdots,i_d+\sigma_d}(z_2,\cdots,z_d) \\
        &= 0
    \end{align*}
    and
    \begin{align*}
        & \sum_{\sigma_1,\cdots,\sigma_d \in \{0,1\}} (-1)^{1+\sigma_1+\cdots+\sigma_d} \MReLU_{i_1-\sigma_1,\cdots,i_d-\sigma_d}(z_1,\cdots,z_d) \\
        &= \sum_{\sigma_1 \in \{0,1\}} (-1)^{\sigma_1} \sum_{\sigma_2,\cdots,\sigma_d \in \{0,1\}} (-1)^{1+\sigma_2+\cdots+\sigma_d} \MReLU_{i_2-\sigma_2,\cdots,i_d-\sigma_d}(z_2,\cdots,z_d) \\
        &= 0
    \end{align*}

    So we have verified the lemma when there exists $j \in \{1,\cdots,d\}$ such that $z_j-i_j \le -1$. Now it suffices to verify the lemma when $z_j-i_j \ge 0$ for any $j \in \{1,\cdots,d\}$.

    \paragraph{Case 2}
    If there exists $j, j' \in \{1,\cdots,d\}$, such that $z_j - i_j \neq z_{j'} - i_{j'}$. Without loss of generality, suppose that $z_j - i_j \le z_{j'} - i_{j'} - 1$, then for any $\sigma_j,\sigma_{j'} \in \{0,1\}$, 
    \begin{align*}
        z_j - i_j - \sigma_j &\le z_{j'} - i_{j'} - 1 - \sigma_j \\
        &\le z_{j'} - i_{j'} - 1 \\
        &\le z_{j'} - i_{j'} - \sigma_{j'}
    \end{align*}
    and
    \begin{align*}
        z_j - i_j + \sigma_j &\le z_{j'} - i_{j'} - 1 + \sigma_j \\
        &\le z_{j'} - i_{j'} \\
        &\le z_{j'} - i_{j'} + \sigma_{j'}
    \end{align*}
    Same as Case 1, $z_j-i_j-\sigma_j$ can be omitted from the  calculation of $\max\{z_1-i_1-\sigma_1, \cdots, z_d-i_d-\sigma_d, 0\}$, and $z_j-i_j+\sigma_1$ can be omitted from the calculation of $\max\{z_1-i_1+\sigma_1, \cdots, z_d-i_d+\sigma_d, 0\}$. So we can verify the lemma based on the same argument as in Case 1.

    Now it suffices to verify the lemma when $z_1 - i_1 = \cdots = z_d - i_d \ge 0$.
    
    \paragraph{Case 3}
    If $z_1 - i_1 = \cdots = z_d - i_d = s \ge 1$, then for any $\sigma_1,\cdots,\sigma_d \in \{0,1\}$,
    \begin{align*}
        \MReLU_{i_1+\sigma_1,\cdots,i_d+\sigma_d}(z_1,\cdots,z_d) &= \max\{z_1 - i_1 - \sigma_1, \cdots, z_d - i_d - \sigma_d, 0\} \\
        &= s - \min\{\sigma_1,\cdots,\sigma_d\} \\
        &= s - \1[\sigma_1=\cdots=\sigma_d=1]
    \end{align*}
    and
    \begin{align*}
        \MReLU_{i_1-\sigma_1,\cdots,i_d-\sigma_d}(z_1,\cdots,z_d) &= \max\{z_1 - i_1 + \sigma_1, \cdots, z_d - i_d + \sigma_d, 0\} \\
        &= s + \max\{\sigma_1,\cdots,\sigma_d\} \\
        &= s + 1 - \1[\sigma_1=\cdots=\sigma_d=0]
    \end{align*}
    Therefore,
    \begin{align*}
        & \sum_{\sigma_1,\cdots,\sigma_d \in \{0,1\}} (-1)^{d+\sigma_1+\cdots+\sigma_d} \MReLU_{i_1+\sigma_1,\cdots,i_d+\sigma_d}(z_1,\cdots,z_d) \\
        &= \sum_{\sigma_1,\cdots,\sigma_d \in \{0,1\}} (-1)^{d+\sigma_1+\cdots+\sigma_d} \left( s - \1[\sigma_1=\cdots=\sigma_d=1] \right) \\
        &= -(-1)^{2d} + s \sum_{\sigma_1,\cdots,\sigma_d \in \{0,1\}} (-1)^{d+\sigma_1+\cdots+\sigma_d} \\
        &= -1
    \end{align*}    
    and
    \begin{align*}
        & \sum_{\sigma_1,\cdots,\sigma_d \in \{0,1\}} (-1)^{1+\sigma_1+\cdots+\sigma_d} \MReLU_{i_1-\sigma_1,\cdots,i_d-\sigma_d}(z_1,\cdots,z_d) \\
        &= \sum_{\sigma_1,\cdots,\sigma_d \in \{0,1\}} (-1)^{1+\sigma_1+\cdots+\sigma_d} \left( s + 1 - \1[\sigma_1=\cdots=\sigma_d=0] \right) \\
        &= -(-1)^1 + (s+1) \sum_{\sigma_1,\cdots,\sigma_d \in \{0,1\}} (-1)^{1+\sigma_1+\cdots+\sigma_d} \\
        &= 1
    \end{align*}    
    The sum of the two equations above is 0, so we have verified the lemma when $z_1 - i_1 = \cdots = z_d - i_d \ge 1$. Now the only case to check is when $z_1 - i_1 = \cdots = z_d - i_d = 0$.

    \paragraph{Case 4}
    If $z_1 - i_1 = \cdots = z_d - i_d = 0$, then for any $\sigma_1,\cdots,\sigma_d \in \{0,1\}$,
    \begin{align*}
        \MReLU_{i_1+\sigma_1,\cdots,i_d+\sigma_d}(z_1,\cdots,z_d) &= \max\{z_1 - i_1 - \sigma_1, \cdots, z_d - i_d - \sigma_d, 0\} \\
        &= \max\{- \sigma_1, \cdots, - \sigma_d, 0\} \\
        &= 0
    \end{align*}
    and
    \begin{align*}
        \MReLU_{i_1-\sigma_1,\cdots,i_d-\sigma_d}(z_1,\cdots,z_d) &= \max\{z_1 - i_1 + \sigma_1, \cdots, z_d - i_d + \sigma_d, 0\} \\
        &= \max\{\sigma_1,\cdots,\sigma_d\} \\
        &= 1 - \1[\sigma_1=\cdots=\sigma_d=0]
    \end{align*}
    Therefore,
    \begin{align*}
        \sum_{\sigma_1,\cdots,\sigma_d \in \{0,1\}} (-1)^{d+\sigma_1+\cdots+\sigma_d} \MReLU_{i_1+\sigma_1,\cdots,i_d+\sigma_d}(z_1,\cdots,z_d) = 0
    \end{align*}     
    and
    \begin{align*}
        & \sum_{\sigma_1,\cdots,\sigma_d \in \{0,1\}} (-1)^{1+\sigma_1+\cdots+\sigma_d} \MReLU_{i_1-\sigma_1,\cdots,i_d-\sigma_d}(z_1,\cdots,z_d) \\
        &= \sum_{\sigma_1,\cdots,\sigma_d \in \{0,1\}} (-1)^{1+\sigma_1+\cdots+\sigma_d} \left( 1 - \1[\sigma_1=\cdots=\sigma_d=0] \right) \\
        &= -(-1)^1 + \sum_{\sigma_1,\cdots,\sigma_d \in \{0,1\}} (-1)^{1+\sigma_1+\cdots+\sigma_d} \\
        &= 1
    \end{align*}    
    The sum of the two equations above is 1, so we have verified the lemma when $z_1 - i_1 = \cdots = z_d - i_d = 0$. This finishes the proof of Lemma \ref{lem:indicator-to-relu}.
\end{proof}

\subsection{Proof of Lemma \ref{lem:mrelu-coeffs}}

Since the magnitude of each single $c_{i_1,\cdots,i_d}^g$ is at most $2^{d+1}m^2$, it suffices to show that the number of non-zero $c_{i_1,\cdots,i_d}^g$ is bounded by $6^{d+1}m$. We first give a sufficient condition for $c_{i_1,\cdots,i_d}^g = 0$.

\begin{lemma}\label{lem:zero-coef-relu}
    For any $g \in \cG_\Leontief^d$, $i_1,\cdots,i_d \in \{0,\cdots,m\}$, if there exist $i_{j_1},i_{j_2} \in \{2,\cdots,m-2\}$, and $|b_{j_1}i_{j_1}-b_{j_2}i_{j_2}| \ge \max\{b_{j_1},b_{j_2}\}$, then $c_{i_1,\cdots,i_d}^g = 0$.
\end{lemma}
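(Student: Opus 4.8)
The plan is to work directly from the closed-form expression for $c_{i_1,\cdots,i_d}^g$ in Corollary \ref{cor:representation-via-relu} and exhibit a cancellation: under the stated gap condition, the inner sum over one of the two coordinates $\sigma_{j_2}\in\{0,1\}$ will factor out of each of the two sums defining $c^g$ and equal $\sum_{\sigma_{j_2}\in\{0,1\}}(-1)^{\sigma_{j_2}}=0$.

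First I would relabel so that, without loss of generality, $b_{j_1}i_{j_1}\le b_{j_2}i_{j_2}$; the hypothesis then reads $b_{j_2}i_{j_2}-b_{j_1}i_{j_1}\ge\max\{b_{j_1},b_{j_2}\}\ge b_{j_2}$. The crucial point is that the $j_2$-term is always dominated inside both minima that appear in the formula: for all $\sigma_1,\cdots,\sigma_d\in\{0,1\}$ one has $b_{j_1}(i_{j_1}-\sigma_{j_1})\le b_{j_1}i_{j_1}\le b_{j_2}i_{j_2}-b_{j_2}=b_{j_2}(i_{j_2}-1)\le b_{j_2}(i_{j_2}-\sigma_{j_2})$, and likewise $b_{j_1}(i_{j_1}+\sigma_{j_1})\le b_{j_1}i_{j_1}+b_{j_1}\le b_{j_2}i_{j_2}\le b_{j_2}(i_{j_2}+\sigma_{j_2})$. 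Since $b_{j_1}(i_{j_1}\mp\sigma_{j_1})$ is itself one of the arguments of the relevant $\min$, that $\min$ is unchanged if the $j_2$-term is deleted, so both minima are independent of $\sigma_{j_2}$.

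Next I would use the assumption $i_{j_2}\in\{2,\cdots,m-2\}$ to note that $i_{j_2}\pm\sigma_{j_2}\in\{1,\cdots,m-1\}$ for both $\sigma_{j_2}\in\{0,1\}$, so each index set appearing in the formula for $c^g$ is a product set in the $\sigma_{j_2}$ coordinate (the membership constraint does not couple $\sigma_{j_2}$ to the other $\sigma_k$). Combining this with the previous step, in each of the two sums the product (sign)$\times$(summand) splits as $(-1)^{\sigma_{j_2}}$ times a quantity free of $\sigma_{j_2}$, and summing $(-1)^{\sigma_{j_2}}$ over $\sigma_{j_2}\in\{0,1\}$ kills the whole sum. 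Hence $c_{i_1,\cdots,i_d}^g=0$.

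This is an elementary counting/cancellation argument with no genuine obstacle beyond bookkeeping; the one point requiring care is the ``without loss of generality'' step — whichever of $j_1,j_2$ ends up being the coordinate whose $\sigma$ is factored out must still be interior, i.e.\ in $\{2,\cdots,m-2\}$, which is exactly why the lemma requires \emph{both} $i_{j_1}$ and $i_{j_2}$ to lie in that range rather than just one of them.
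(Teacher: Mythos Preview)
Your proof is correct and follows essentially the same approach as the paper's own proof: both exploit the WLOG ordering $b_{j_1}i_{j_1}\le b_{j_2}i_{j_2}$ to show the $j_2$-argument never achieves the minimum in either of the two sums, so the summand is independent of $\sigma_{j_2}$; combined with the interior condition $i_{j_2}\in\{2,\cdots,m-2\}$ (which makes the index constraint vacuous in that coordinate), the alternating sign then forces a telescoping zero. Your write-up is in fact a bit cleaner than the paper's, which lists the four subcases somewhat redundantly before giving the detailed argument for one of them.
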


% To see why this lemma implies a large proportion of the coefficients $c_{i_1,\cdots,i_d}^g$ are zeroes, suppose $i_{j_1},i_{j_2} \in \{2,\cdots,m-2\}$, $b_{j_1} \le b_{j_2}$, then $c_{i_1,\cdots,i_d}^g = 0$ if $|\frac{b_{j_1}}{b_{j_2}}i_{j_1}-i_{j_2}| \ge 1$. So, $c_{i_1,\cdots,i_d}^g$ being non-zero requires $i_{j_2}$ to be in $(\frac{b_{j_1}}{b_{j_2}}i_{j_1} - 1, \frac{b_{j_1}}{b_{j_2}}i_{j_1} + 1)$, which include at most 2 integers. We will use the implications of this lemma more formally in subsequent analysis.

\begin{proof}
     By definition, $c_{i_1,\cdots,i_d}^g$ involves two summations over $\sigma_1,\cdots,\sigma_d$. Such summation can be calculated by summing over two parts --- $(\sigma_{j_1},\sigma_{j_2})$, and the other $\sigma_j$ where $j \notin \{j_1,j_2\}$. In other words, we have
    \begin{align*}
        c_{i_1,\cdots,i_d}^g &= \sum_{\substack{\sigma_j \in \{0,1\}, j \notin \{j_1,j_2\},\\ i_j-\sigma_j \in \{1,\cdots,m-1\}}} \sum_{\sigma_{j_1},\sigma_{j_2} \in \{0,1\}} (-1)^{d+\sigma_1+\cdots+\sigma_d} \min\{b_1(i_1-\sigma_1),\cdots,b_d(i_d-\sigma_d)\} \\
        &\hspace{1em}+ \sum_{\substack{\sigma_j \in \{0,1\}, j \notin \{j_1,j_2\},\\ i_j+\sigma_j \in \{1,\cdots,m-1\}}} \sum_{\sigma_{j_1},\sigma_{j_2} \in \{0,1\}} (-1)^{1+\sigma_1+\cdots+\sigma_d} \min\{b_1(i_1+\sigma_1),\cdots,b_d(i_d+\sigma_d)\}
    \end{align*}
    
    We will show that for any fixed $\{\sigma_j: j \notin \{j_1,j_2\}\}$ included in the summation, summing over $(\sigma_{j_1},\sigma_{j_2})$ leads to zeroes, i.e.,
    \begin{align*}
       & \sum_{\sigma_{j_1},\sigma_{j_2} \in \{0,1\}} (-1)^{d+\sigma_1+\cdots+\sigma_d} \min\{b_1(i_1-\sigma_1),\cdots,b_d(i_d-\sigma_d)\} = 0, \\
       & \sum_{\sigma_{j_1},\sigma_{j_2} \in \{0,1\}} (-1)^{1+\sigma_1+\cdots+\sigma_d} \min\{b_1(i_1+\sigma_1),\cdots,b_d(i_d+\sigma_d)\} = 0
    \end{align*}
    
    More specifically, we will show that:
    \begin{align*}
       & \text{When } \sigma_{j_1} = 0, \sum_{\sigma_{j_2} \in \{0,1\}} (-1)^{d+\sigma_1+\cdots+\sigma_d} \min\{b_1(i_1-\sigma_1),\cdots,b_d(i_d-\sigma_d)\} = 0, \\
       & \text{When } \sigma_{j_1} = 1, \sum_{\sigma_{j_2} \in \{0,1\}} (-1)^{d+\sigma_1+\cdots+\sigma_d} \min\{b_1(i_1-\sigma_1),\cdots,b_d(i_d-\sigma_d)\} = 0, \\
       & \text{When } \sigma_{j_2} = 0, \sum_{\sigma_{j_1} \in \{0,1\}} (-1)^{1+\sigma_1+\cdots+\sigma_d} \min\{b_1(i_1+\sigma_1),\cdots,b_d(i_d+\sigma_d)\} = 0, \\
       & \text{When } \sigma_{j_2} = 1, \sum_{\sigma_{j_1} \in \{0,1\}} (-1)^{1+\sigma_1+\cdots+\sigma_d} \min\{b_1(i_1+\sigma_1),\cdots,b_d(i_d+\sigma_d)\} = 0
    \end{align*}

    Without loss of generality, we assume that $b_{j_1}i_{j_1} \le b_{j_2}i_{j_2}$. For $\sigma_{j_1} = 0$, we compare the two terms in the summation that correspond to $\sigma_{j_2} \in \{0,1\}$, i.e.,
    \[
        (-1)^{d+\sigma_1+\cdots+\sigma_{j_2}+\cdots+\sigma_d} \min\{b_1(i_1-\sigma_1),\cdots,b_{j_2}(i_{j_2}-\sigma_{j_2}),\cdots,b_d(i_d-\sigma_d)\}
    \]
    where $(-1)^{d+\sigma_1+\cdots+\sigma_{j_2}+\cdots+\sigma_d}$ determines the sign and $\min\{b_1(i_1-\sigma_1),\cdots,b_{j_2}(i_{j_2}-\sigma_{j_2}),\cdots,b_d(i_d-\sigma_d)\}$ determines the magnitude.

    For the magnitude, the only difference is one term involves $b_{j_2}(i_{j_2}-1)$ and the other involves $b_{j_2}i_{j_2}$. By the assumption of Lemma \ref{lem:zero-coef-relu}, $b_{j_1}i_{j_1}$ is always no larger than $b_{j_2}(i_{j_2}-\sigma_{j_2})$, so $b_{j_2}(i_{j_2}-\sigma_{j_2})$ can be omitted from the calculation of $\min\{b_1(i_1-\sigma_1),\cdots,b_{j_2}(i_{j_2}-\sigma_{j_2}),\cdots,b_d(i_d-\sigma_d)\}$. As a result, the magnitude of the two terms are equal.

    Since we fix $\{\sigma_j: j \notin \{j_1,j_2\}\}$, $\sigma_{j_1} = 0$, $\sigma_{j_2} \in \{0,1\}$, the signs of the two terms are opposite. Therefore, the sum of the terms equals to zero.
    \[
    \text{When } \sigma_{j_1} = 0, \sum_{\sigma_{j_2} \in \{0,1\}} (-1)^{d+\sigma_1+\cdots+\sigma_d} \min\{b_1(i_1-\sigma_1),\cdots,b_d(i_d-\sigma_d)\} = 0
    \]

    For the other cases, we can prove using similar arguments. Then summing over $\{\sigma_j: j \notin \{j_1,j_2\}\}$ finishes the proof.
    
\end{proof}

Now we bound the number of non-zero $c_{i_1,\cdots,i_d}^g$ and obtain an upper bound for the sum of the absolute values of the coefficients $c_{i_1,\cdots,i_d}^g$.

\begin{proofof}{Lemma \ref{lem:mrelu-coeffs}}
    Without loss of generality, we assume that $b_1 \le b_2 \le \cdots \le b_d$. We can always permute the variables to obtain this ordering. 

    Now we count the combinations of $i_1,\cdots,i_d$ whose corresponding $c_{i_1,\cdots,i_d}^g$ can be non-zero. This is done by counting $i_1,\cdots,i_d$ that violate the conditions in Lemma \ref{lem:zero-coef-relu}.

    \paragraph{Case 1} 
    For any $j \in \{1,\cdots,d\}$, $i_j \in \{0,1,m-1,m\}$, this corresponds to $4^d$ combinations of $i_1,\cdots,i_d$.
    
    \paragraph{Case 2}
    There exists $j \in \{1,\cdots,d\}$, such that $i_j \in \{2,\cdots,m-2\}$. Suppose $j_0$ is the smallest among such $j$.

    Then for $j < j_0$, $i_j \in \{0,1,m-1,m\}$, there are $4^{j_0-1}$ combinations.

    For $j > j_0$, $c_{i_1,\cdots,i_d}^g = 0$ if $b_j \in \{2,\cdots,m-2\}$ and $|\frac{b_{j_0}}{b_{j}}i_{j_0}-i_{j}| \ge 1$. So, $c_{i_1,\cdots,i_d}^g$ being non-zero requires $i_{j}$ to be in $(\frac{b_{j_0}}{b_{j}}i_{j_0} - 1, \frac{b_{j_0}}{b_{j}}i_{j_0} + 1) \cup \{0,1,m-1,m-2\}$, which include at most 6 integers. There are $6^{d-j_0}$ combinations.

    All together, the number of nonzero $c_{i_1,\cdots,i_d}^g$ is bounded by $4^d + \sum_{j_0=1}^d 4^{j_0-1} \cdot m \cdot 6^{d-j_0} \le 6^{d+1}m$.

\end{proofof}

\subsection{Proof of Lemma \ref{lem:relu-to-hyperrectangle}}

\begin{proof}
    For $d = 1$, we can verify the result by considering two cases: (1) $z \le i$, (2) $z \ge i+1$.

    If $z \le i$, then $\ReLU_i(z)-\ReLU_{i+1}(z) = 0 - 0 = 0$.

    If $z \ge i+1$, then $\ReLU_i(z)-\ReLU_{i+1}(z) = (z-i) - (z-i-1) = 1$.

    For $d \ge 2$, it is equivalent to prove 
    \begin{align*}
    & \MReLU_{i_1,i_2,\cdots,i_d}(z_1,\cdots,z_d) - \MReLU_{i_1+1,i_2,\cdots,i_d}(z_1,\cdots,z_d) \\
    &= \left[ \MReLU_{i_1,i_2,\cdots,i_{d-1}}(z_1,\cdots,z_{d-1}) - \MReLU_{i_1+1,i_2,\cdots,i_{d-1}}(z_1,\cdots,z_{d-1}) \right] \bI_{i_1-i_d+1,m}(z_1-z_d)
    \end{align*}
    
    We can verify this by considering two cases: (1) $z_1-z_d \ge i_1-i_d+1$, (2) $z_1-z_d \le i_1-i_d$.

    If $z_1-z_d \ge i_1-i_d+1$, then $z_1-i_1 > z_1-i_1-1 \ge z_d-i_d$. So $z_d-i_d$ can be omitted from the calculation of $\max\{z_1-i_1,\cdots,z_d-i_d,0\}$ and $\max\{z_1-i_1-1,\cdots,z_d-i_d,0\}$. This indicates that:
    \begin{align*}
        & \MReLU_{i_1,i_2,\cdots,i_d}(z_1,\cdots,z_d) - \MReLU_{i_1+1,i_2,\cdots,i_d}(z_1,\cdots,z_d) \\
        &= \MReLU_{i_1,i_2,\cdots,i_{d-1}}(z_1,\cdots,z_{d-1}) - \MReLU_{i_1+1,i_2,\cdots,i_{d-1}}(z_1,\cdots,z_{d-1})
    \end{align*}

    If $z_1-z_d \le i_1-i_d$, then $z_1-i_1-1 < z_1-i_1 \le z_d-i_d$. As a result, $z_1-i_1$ can be omitted from the calculation of $\max\{z_1-i_1,\cdots,z_d-i_d,0\}$, $z_1-i_1-1$ can be omitted from the calculation of $\max\{z_1-i_1-1,\cdots,z_d-i_d,0\}$. This indicates that:
     \begin{align*}
        & \MReLU_{i_1,i_2,\cdots,i_d}(z_1,\cdots,z_d) - \MReLU_{i_1+1,i_2,\cdots,i_d}(z_1,\cdots,z_d) \\
        &= \MReLU_{i_2,\cdots,i_d}(z_2,\cdots,z_d) - \MReLU_{i_2,\cdots,i_d}(z_2,\cdots,z_d) \\
        &= 0
    \end{align*}

    This finishes the proof of Lemma \ref{lem:relu-to-hyperrectangle}.
\end{proof}

\subsection{Proof of Lemma \ref{lem:dyadic-hyperrectangle-basis}}\label{app:dyadic-hyperrectangle-basis}

Dyadic hyperrectangles and the identity matrix can be related in the following manner. For any fixed $h_1,\cdots,h_d \in \{0,\cdots,\ln(2m)\}$, the set $$\left\{\prod_{j=1}^d [a_j2^{h_j},(a_j+1)2^{h_j}-1]\right\}$$ are all the dyadic hyperrectangles whose edge lengths are $2^{h_1},\cdots,2^{h_d}$, and there are $N(h_1,\cdots,h_d) = \prod_{j=1}^d\frac{2m}{2^{h_j}}$ such dyadic hyperrectangles. We can enumerate these dyadic hyperrectangles as $$\left\{\text{HyperRec}^{(h_1,\cdots,h_d)}_i\right\}_{i=1}^{N(h_1,\cdots,h_d)}$$ The indicator function for $\text{HyperRec}_i^{(h_1,\cdots,h_d)}$ outputs 1 when $(w_1,\cdots,w_d)$ falls into $\text{HyperRec}_i^{(h_1,\cdots,h_d)}$ and outputs 0 when $(w_1,\cdots,w_d)$ falls into any other $\text{HyperRec}_{i'}^{(h_1,\cdots,h_d)}$, where $i' \in \{1,\cdots,N\} \setminus i$. Intuitively, the indicator function for $\text{HyperRec}_i$ can be equivalently represented by the $i$-th standard basis vector of length $N(h_1,\cdots,h_d)$. And all the indicator functions for these $N(h_1,\cdots,h_d)$ dyadic hyperrectangles can be equivalently represented by the rows of the $N(h_1,\cdots,h_d) \times N(h_1,\cdots,h_d)$ identity matrix. Lemma \ref{lem:perturbed-identity-upper} indicates that the rows of the $N \times N$ identity matrix can be $\mu$-approximately spanned by the $K = c\ln N / \mu^2$ columns of $V$. This can then be translated to an approximate basis for dyadic hyperrectangle functions.

\begin{proofof}{Lemma \ref{lem:dyadic-hyperrectangle-basis}}
By Lemma \ref{lem:perturbed-identity-upper}, $W^{(h_1,\cdots,h_d)} = \{\tilde\nu^{(h_1,\cdots,h_d)}_k\}_{k=1}^{K(h_1,\cdots,h_d)}$ is a $(K(h_1,\cdots,h_d),\sqrt{K(h_1,\cdots,h_d)},\mu)$-approximate basis for the indicator functions for the set $\{\text{HyperRec}_i^{(h_1,\cdots,h_d)}\}_{i=1}^{N(h_1,\cdots,h_d)}\}$, which are all dyadic hyperrectangles whose edge lengths are $2^{h_1},\cdots,2^{h_d}$. Therefore, $\cup_{h_1,\cdots,h_d \in \{0,\cdots,\ln(2m)\}} W^{(h_1,\cdots,h_d)}$ is a $\mu$-approximate basis for all dyadic hyperrectangle functions. 

We have that the size of this basis is:
\begin{align*}
    \sum_{h_1,\cdots,h_d \in \{0,\cdots,\ln(2m)\}} K(h_1,\cdots,h_d) &= \sum_{h_1,\cdots,h_d \in \{0,\cdots,\ln(2m)\}} O \left( \frac{\ln N(h_1,\cdots,h_d)}{\mu^2} \right) \\
    &= \sum_{h_1,\cdots,h_d \in \{0,\cdots,\ln(2m)\}} O\left( \ln\left(\prod_{j=1}^d\frac{2m}{2^{h_j}}\right) 12^{2d}d^2m^{4+2d/(d+2)}4^d\ln^{2d}m \right) \\
    &= \sum_{h_1,\cdots,h_d \in \{0,\cdots,\ln(2m)\}} O\left( \sum_{j=1}^d(\ln(2m)-h_j) d^2m^{4+2d/(d+2)}24^{2d}\ln^{2d}m \right) \\
    &= O\left( d\ln^{d+1}(2m)\cdot d^2m^{(6d+8)/(d+2)}24^{2d}\ln^{2d}m \right) \\
    &= O\left( d^3m^{(6d+8)/(d+2)}24^{2d}\ln^{3d+1}m \right)
\end{align*}

The Lipschitz constant of this basis is:
\begin{align*}
    \max_{h_1,\cdots,h_d \in \{0,\cdots,\ln(2m)\}} \sqrt{K(h_1,\cdots,h_d)} &= \sum_{h_1,\cdots,h_d \in \{0,\cdots,\ln(2m)\}} O \left( \sqrt{\frac{\ln N(h_1,\cdots,h_d)}{\mu^2}} \right) \\
    &= \max_{h_1,\cdots,h_d \in \{0,\cdots,\ln(2m)\}} O\left( \sqrt{\ln\left(\prod_{j=1}^d\frac{2m}{2^{h_j}}\right) 12^{2d}d^2m^{4+2d/(d+2)}4^d\ln^{2d}m} \right) \\
    &= \max_{h_1,\cdots,h_d \in \{0,\cdots,\ln(2m)\}} O\left( \sqrt{\sum_{j=1}^d(\ln(2m)-h_j) d^2m^{4+2d/(d+2)}24^{2d}\ln^{2d}m} \right) \\
    &\le O\left( \sqrt{d\ln(2m)} dm^{(3d+4)/(d+2)}24^{d}\ln^{d}m \right) \\
    &= O\left( d^{3/2}m^{(3d+4)/(d+2)}24^{d}\ln^{d+1/2}m \right)
\end{align*}
\end{proofof}

\subsection{Proof of Proposition \ref{thm:basis-Leontief}}

\begin{proofof}{Proposition \ref{thm:basis-Leontief}}\label{app:leontif-prop}

By Lemma \ref{lem:dyadic-hyperrectangle-basis}, $\mathcal{W}$ is a $\left( O\left( d^3m^{(6d+8)/(d+2)}24^{2d}\ln^{3d+1}m \right), O\left( d^{3/2}m^{(3d+4)/(d+2)}24^{d}\ln^{d+1/2}m \right), \mu \right)$-approximate basis for all dyadic hyperrectangle functions. As pointed out by our previous discussion, this is also a $\left( O\left( d^3m^{(6d+8)/(d+2)}24^{2d}\ln^{3d+1}m \right), (2\ln m)^d O\left( d^{3/2}m^{(3d+4)/(d+2)}24^{d}\ln^{d+1/2}m \right), \mu(2\ln m)^d \right)$-approximate basis, i.e., $\left( O\left( d^3m^{(6d+8)/(d+2)}24^{2d}\ln^{3d+1}m \right),  O\left( d^{3/2}m^{(3d+4)/(d+2)}48^{d}\ln^{2d+1/2}m \right), \frac{\delta^2}{12^{d+1}dm^{d/(d+2)}} \right)$-approximate basis for all hyperrectangle functions. Recall that Lemma \ref{lem:relu-to-hyperrectangle} allows us to represent differences between adjacent MReLU functions as hyperrectangle functions on the space of $(z_j,z_1-z_j,\cdots,z_d-z_j)$ for $j \in \{1,\cdots,d\}$. There are $d$ such spaces. So we obtain a $\Big( O\left( d^4m^{(6d+8)/(d+2)}24^{2d}\ln^{3d+1}m \right),  O\Big( d^{3/2}m^{(3d+4)/(d+2)}48^{d}\ln^{2d+1/2}m \Big)$,
$ \frac{\delta^2}{12^{d+1}dm^{d/(d+2)}} \Big)$-approximate basis for differences between adjacent MReLU functions. 

Combined with the discretized set of MReLU functions, $$\mathcal{R}_s = \{\MReLU_{i_1s,\cdots,i_ds}(z_1,\cdots,z_d) : i_1,\cdots,i_d \in \{1,\cdots,m/s\}\}$$ for $s = \lceil m^{d/(d+2)} \rceil$, we obtain a $\Big( O\left( d^4m^{(6d+8)/(d+2)}24^{2d}\ln^{3d+1}m \right),  O\Big( d^{5/2}m^{(4d+4)/(d+2)}48^{d}\ln^{2d+1/2}m \Big)$,
$ \frac{\delta^2}{12^{d+1}} \Big)$-approximate basis for all MReLU functions. This completes the proof.
\end{proofof}

\fi

\ifarxiv
\else
\section{Proofs from Section \ref{sec:linear}}\label{app:proof-sec-linear}
\subsection{Proof of Lemma \ref{lem:lip-constant}}
\prooflemlipconstant

% \subsection{Proof of Lemma \ref{lem:decisioncalibration-linear}}
% \prooflemdecisioncalibrationlinear

% \subsection{Proof of Lemma \ref{lem:multiaccuracy-linear}}
% \prooflemmultiaccuracylinear

\subsection{Proof of Theorem \ref{thm:decisionreg-linear}}
For any $\ell \in \cL$, by definition of best response, we have that for any choice of benchmark policies $\{c_a\in\cC\}_{a\in\cA}$:
    \[
    \sum_{t=1}^T \ell(a_t, p_t) \leq \sum_{t=1}^T \ell(c_{a_t}(x_t), p_t)
    \]

Thus, it suffices to bound the difference in loss under our predictions $p_t$ and the outcomes $y_t$ for both our sequence of chosen actions and the sequence of actions recommended by any choice of benchmark policies. We show this in the next two lemmas, using decision calibration and decision cross calibration, respectively. 

\begin{lemma}\label{lem:decisioncalibration-linear}
        If $p_1,...,p_T$ is $(\cL, \beta)$-decision calibrated, then for any $\ell\in\cL$:
        \[
        \left| \sum_{t=1}^T (\ell(a_t, p_t) - \ell(a_t, y_t)) \right| \leq |\cA|\beta(T/|\cA|)
        \]
    \end{lemma}
\begin{proof}
 \prooflemdecisioncalibrationlinear   
\end{proof}

\begin{lemma}\label{lem:multiaccuracy-linear}
        If $p_1,...,p_T$ is $(\cL, \cC, \alpha)$-cross calibrated, then for any $\ell \in \cL$ and any selection of benchmark policies $\{c_a\in\cC\}_{a\in\cA}$:
        \[
        \left| \sum_{t=1}^T (\ell(c_{a_t}(x_t), p_t) - \ell(c_{a_t}(x_t), y_t)) \right| \leq L|\cA|^2 \alpha\left(\frac{T}{|\cA|^2}\right)
        \]
    \end{lemma}
\begin{proof}
    \prooflemmultiaccuracylinear
\end{proof}

We can now complete the proof of Theorem \ref{thm:decisionreg-linear}.
\proofthmdecisionreglinear

\section{Proof from Section \ref{sec:basis}}\label{app:proof-sec-basis}
We first state a useful lemma that relates the Lipschitz constant $\lambda$ of a linear function to the magnitude of its coefficients:

\begin{lemma}\label{lem:lip-constant}
    Suppose $\sum_{i=1}^{n} \left|r_i\right|\leq \lambda$. Then, for any $y,y'\in[0,1]^n$:
    \[
    \left| \sum_{i=1}^{n} r_i y_i - \sum_{i=1}^{n} r_i y_i' \right| \le \lambda\max_{1 \le i \le n} \left|y-y'\right|
    \]
\end{lemma}

\subsection{Basis for $L$-Lipschitz functions $\cF^d_L$}
\proofpropbasislipschitz

\subsection{Basis for $L_p$ loss $\cF^d_p$}
\proofpropbasislploss

\subsection{Basis for $\cF^d_{\Omega_{\beta,g}}$ (monomials of degree $\beta$ over $g$)}
\proofpropbasismono

\subsection{Basis for $\cF^d_{\Omega_{\text{exp},g}}$ (exponential functions over $g$)}
We first state a classical result about the remainder term in Taylor expansion. We will use it to bound the approximation error of the basis we construct.
\begin{lemma}[Taylor's Theorem]\label{lem:taylors}
    Suppose $h^{(k)}$ exists and is continuous on the interval between $x$ and $a$. Then, $h(x) - h_k(x) = \frac{h^{(k+1)}(z)}{(k+1)!} (x-a)^{k+1}$ for some value $z$ between $x$ and $a$, where $h_k(x)$ is the $k^{th}$ degree Taylor expansion of the function $h(x)$ at the point $a$. 
\end{lemma}
\proofpropbasisexp

\subsection{Basis for Leontief functions $\cF^d_{\text{Leon}}$}
\proofthmbasisLeontief

\section{Proofs from Section \ref{sec:convex}} \label{app:proof-sec-convex}

\subsection{Proof of Lemma \ref{lem:discretization-linear}}
\discretizationlemma

\subsection{Proof of Theorem \ref{thm:decisionreg-convex}}
\decisionregconvexthm

\subsection{Decision Swap Regret Bounds for Specific Loss Families}\label{app:decregcor}

We instantiate the result of Theorem \ref{thm:decisionreg-convex} for specific loss families. We will denote by $\cL_\cF$ the loss family corresponding to a function family $\cF$---i.e. $\cL_\cF=\{\ell(a,y)=f_a(y)\}$ for every $f_a\in\cF$. 

\begin{corollary}\label{cor:regret-specific-loss}
    Let $\cY=[0,1]^d$ be the outcome space and $\cC$ be a collection of policies $c:\cX\to\cA$. Let $\cL$ be a family of loss functions $\ell:\cA\times\cY\to[0,1]$. Let $\hat{\cL} = \{\hat{\ell}\}_{\ell\in\cL}$ be the family of linear approximations to $\cL$ given by an $(n,\lambda,\delta)$-approximate basis $\mathcal{S}$. Let $\hat{\cL}_\gamma$ be the $(n\gamma)$-cover of $\hat{\cL}$ given by Lemma \ref{lem:discretization-linear}, for $\gamma = \frac{1}{2n\sqrt{T}}$. Suppose agents choose actions $a_t = \BR^{\hat{\ell}_\gamma}(\hat{p}_t)$ for a nearby $\hat{\ell}_\gamma\in\hat{\cL}_\gamma$. Then, for the loss families below, Algorithm \ref{alg:convex} produces predictions that has $(\cL,\cC,\eps)$-decision swap regret for the following $\eps$:
    \begin{itemize}
        \item Let $\cL = \cL_{\cF^1_{\text{cvx}}}$ be the family of convex, 1-Lipschitz functions over $\cY = [0,1]$. Then for $\delta = \frac{1}{T^{3/8}}$, 
        \[
        \eps \leq O\left(\frac{|\cA|\ln T \sqrt{|\cA|\ln(|\cA||\cC|T\ln T)}}{T^{3/8}} \right)
        \]

        \item Let $\cL = \cL_{\cF^d_L}$ be the family of $L$-Lipschitz losses. Then for $\delta=\frac{L^{d/(d+2)}}{2^{d/(d+2)}T^{1/(d+2)}}$, 
        \[
        \eps \leq O\left(\frac{|\cA| L^{d/(d+2)} \sqrt{d|\cA|\ln(L |\cA||\cC|T)}}{T^{1/(d+2)}} \right)
        \]

        \item Let $\cL = \cL_{\cF^d_p}$ be the family of $L_p$ losses. Then, 
        \[
        \eps \leq O\left(p^{p+2}d |\cA| \sqrt{\frac{d|\cA|\ln(pd|\cA||\cC|T)}{T}} \right)
        \]

        \item Let $\cL = \cL_{\cF^d_{\Omega_{\beta,g}}}$ be the family of loss functions that are monomials of degree $\beta$ over an underlying basis representation $g$. Then, 
        \[
        \eps \leq O\left(d^\beta |\cA| \sqrt{\frac{\beta d^\beta|\cA|\ln(d |\cA||\cC|T)}{T}} \right)
        \]
        
        \item Let $\cL = \cL_{\cF^d_{\Omega_{\text{exp},g}}}$ be the family of loss functions that are exponential functions over an underlying basis representation $g$. Then for $\delta = \frac{e^{2Rd/(\ln d+2)}}{T^{1/(\ln d+2)}}$, 
        \[
        \eps \leq O\left(\frac{|\cA| e^{Rd/(\ln d+2)} \sqrt{d|\cA|\ln(|\cA||\cC|T)}}{T^{1/(\ln d+2)}} \right)
        \]

        \item Let $\cL = \cL_{\cF^d_{\text{Leon}}}$ be the family of Leontif loss functions. Then, %$n = O\left( \frac{d^4c^{d}\ln^{3d+1}(3L/\delta)}{(\delta/3L)^{(6d+8)/(d+2)}} \right)$, $\lambda = O\Big( \frac{Ld^{5/2}c^{d}\ln^{2d+1/2}(3L/\delta)}{(\delta/3L)^{(5d+6)/(d+2)}} \Big)$ 
        for some universal constant $c$ and $\delta = O\left(\frac{dc^d}{T^{(d+2)/(18d+24)}}\right)$
        \[
        \eps \le O\left(\frac{|\cA| d^{3/2}c^{d}\sqrt{|\cA|\ln( Ld|\cA||\cC|T)}}{T^{(d+2)/(18d+24)}}\right)
        % O\left(\frac{|\cA| d^{9/2}24^{3d}\ln^{}()\sqrt{|\cA|\ln( |\cA||\cC|T)}}{T^{(d+2)/(18d+24)}}\right)
        \]
    \end{itemize}
\end{corollary}

\subsection{Continuous Action Spaces}\label{app:conts-actions}
\seccontsactionspaces

\subsection{Online to Batch Conversion}\label{app:online-to-batch}
\seconlinetobatch

\subsection{Proof of Theorem \ref{thm:omniprediction-batch}}
\proofofflineomniprediction

\subsection{Proof of Corollary \ref{cor:omniprediction-batch-convex}}
\proofcoromnipredictionbatchconvex

\subsection{Proof of Corollary \ref{cor:omniprediction-batch-lipschitz}}
\proofcoromnipredictionbatchlipschitz

\fi

\section{Making Predictions for Naive Decision Makers under Smooth Best Response}

In this section we show how to make more interpretable predictions that are better suited to simple/naive decision makers. Our techniques produce predictions of the coefficients of basis functions in a high dimensional space, and to translate these into actions, a downstream decision maker must first produce an approximate representation of their loss function in this higher dimensional space, and then use it to best respond. We show how to produce predictions in the original label space so that a downstream decision maker can act straightforwardly using a smoothed version of best response --- i.e. that chooses a distribution over actions that is weighted towards higher utility actions, but is not a point-mass on the single best response action. % In particular we show that decision makers responding using a quantal response function (a natural model of bounded rationality studied in the behavioral economics literature) are guaranteed diminishing decision swap regret. 

For concreteness, we instantiate  ``smooth best response'' to take the form of quantal response, which is a common model of bounded rationality studied in the behavioral economics literature \citep{luce1959individual, mcfadden1976quantalchoice, mckelvey1995quantalresponse, anderson2002logit, goeree2002quantal}. Quantal response is motivated by the hypothesis that agents do not behave perfectly rationally and thus do not always perfectly best respond; instead, their choices are \textit{noisy}. This noise is captured by modeling agents who randomly select their actions according to some distribution. The quantal response function chooses actions with probability inversely proportional to the exponential loss. We note that the results we present below hold in general for any response function that is Lipschitz in the predictions and an approximate best response (the bounds we give depend on both the Lipschitz constant and the approximation factor), and our restriction to quantal response is purely for concreteness. % We will make use of the bounds that we prove in this section in Section \ref{sec:convex} when we show how to make forecasts for $d$-dimensional convex loss functions that lie in the original label space (rather than a higher dimensional linear representation that we use under the hood), and so are legible to unsophisticated decision makers. 

\begin{definition}[Smooth Best Response]\label{def:smoothBR}
    The smooth best response to a prediction $p$ according to a loss function $\ell: \cA \times \cY \to [0,1]$ is the distribution $q^\ell(p) \in \Delta\cA$ where the weight placed on action $a$ is:
    \[
    q^\ell(a, p) \propto \exp(-\eta \cdot \ell(a, p))
    \]
    given some smoothness parameter $\eta > 0 $. Observe that as $\eta \to \infty$, the smooth best response converges to an exact best response.
\end{definition}

Next we give analogues of some definitions to accommodate for randomized actions. First we extend the definition of decision swap regret, which we state as an expectation over the random action.

\begin{definition}[$(\cL, \cC, \eps)$-Decision Swap Regret] 
Let $\cL$ be a family of loss functions and $\cC$ be a collection of policies $c: \cX \to \cA$. For a sequence of outcomes $y_1,...,y_T\in\cY$, we say $p_1,...,p_T\in\cP$ has $(\cL, \cC, \eps)$ decision swap regret if for any loss function $\ell \in \cL$, there is a randomized action selection rule $k^\ell: \cP \to \Delta\cA$ such that choosing actions $q_t = k^\ell(p_t(x_t))$ achieves:
\[
\frac{1}{T} \sum_{t=1}^T \E_{a\sim q_t}\left[ \ell(a, y_t) - \ell(c_{a}(x_t), y_t) \right] \leq \eps
\]
for any assignment of policies $\{c_a \in \cC\}_{a\in\cA}$.
\end{definition}

% \subsection{Decision Swap Regret Under Smooth Best Response}\label{sec:linear-smooth}

% Below we develop some additional machinery to give similar guarantees when agents \textit{smoothly} best respond. As we discussed in the introduction, our approach for convex losses will make direct use of the guarantees for linear losses, via a representation theorem that lifts any convex function to a linear function over a higher dimensional label space. The strategy is straightforward---we will make predictions of the higher dimensional label space that are decision calibrated and decision multiaccurate. Then, Theorem \ref{thm:decisionreg-linear} combined with the representation theorem will guarantee low decision swap regret to agents who best respond according to the high dimensional representation of their loss. A downside of this approach, however, is that it requires agents to compute the representation of their loss, before being able to take action. This is antithetical to the downstream regret literature---which imagines the forecaster to carry the computational burden so that downstream agents can act naively/straightforwardly. Motivated by this, we will show how to make predictions of the original label, so that downstream agents can still act straightforwardly using a smoothed version of best response --- i.e. that chooses a distribution over actions that is weighted towards higher utility actions, but is not a point-mass on the single best response action. 

Next, we give the smooth analogues of decision calibration and decision cross calibration. At first blush, it might seem that extending decision calibration and decision cross calibration to distributions over actions would involve conditioning on the entire smooth response distribution. This would be unfortunate, as achieving such a notion would require regret bounds scaling exponentially with the cardinality of the action space. However, it turns out that conditioning on the probability of playing each action \emph{marginally} will be enough. 

\begin{definition}[$(\cL, \beta)$-Smooth Decision Calibration]
    Let $\cL$ be a family of loss functions $\ell: \cA \times \cY \to [0,1]$. Let $\beta:\R\to\R$. We say that a sequence of predictions $p_1,...,p_T$ is $(\cL, \beta)$-smooth decision calibrated with respect to a sequence of outcomes $y_1,...,y_T$ if, for every $\ell \in \cL$ and $a\in\cA$:
    \[
    \left\| \sum_{t=1}^T q^\ell(a, p_t) (p_t - y_t) \right\|_\infty \leq \beta(T^\ell_q(a))
    \]
    where $q^\ell(a, p_t)$ is the weight placed on action $a$ under the smooth best response distribution $q^\ell(p_t)$, and $T^\ell_q(a)=\sum_{t=1}^T q^\ell(a,p_t)$.
\end{definition}

We similarly extend decision cross calibration:

\begin{definition}[$(\cL, \cC, \alpha)$-Smooth Decision Cross Calibration]
    Let $\cL$ be a family of loss functions $\ell:\cA\times\cY\to[0,1]$ and $\cC$ be a collection of policies $c: \cX \to \cA$. Let $\alpha:\R\to\R$. We say that a sequence of predictions $p_1,...,p_T$ is $(\cL, \cC, \alpha)$-smooth decision cross calibrated with respect to a sequence of outcomes $y_1,...,y_T \in \cY$ if for every $\ell\in\cL$, $a,b\in\cA$, $c\in\cC$:
    \[
    \left\| \sum_{t=1}^T q^\ell(a, p_t) \1[c(x_t)=b] (p_t - y_t) \right\|_\infty \leq \alpha(T^\ell_q(a, b))
    \]
    where $q^\ell(a, p_t)$ is the weight placed on action $a$ under the distribution $q^\ell(p_t)$, and $T^\ell_q(a, b)=\sum_{t=1}^T q^\ell(a,p_t) \1[c(x_t)=b]$.

    % \[
    % \left\| \frac{1}{T}\sum_{t=1}^T \1[a_t = a] \1[c_{a_t}(x_t) = b] (p_t - y_t) \right\|_\infty \leq \alpha
    % \]
\end{definition}

With these extended definitions in hand, we can now state decision swap regret guarantees for linear losses when agents smoothly best respond. 

\begin{theorem}[Decision Swap Regret for Linear Losses Under Smooth Best Response]\label{thm:decisionreg-linear-smooth}
    Consider an outcome space $\cY \subseteq [0,1]^d$. Let $\cL$ be any collection of loss functions $\ell: \cA \times \cY \to [0,1]$ that are linear and $L$-Lipschitz in the second argument. Let $\cC$ be a collection of policies $c: \cX \to \cA$. If the sequence of predictions $p_1,...,p_T$ is $(\cL, \beta)$-smooth decision calibrated and $(\cL, \cC, \alpha)$-smooth decision cross calibrated, then it has $(\cL, \cC, \eps)$-decision swap regret where $$\eps \leq \frac{L|\cA|\beta(T/|\cA|) + L|\cA|^2 \alpha\left(T/|\cA|^2\right)}{T} + \frac{\ln|\cA|+1}{\eta}$$
    when agents play their smooth best response $q^\ell(p_t)$ with smoothness parameter $\eta$.
\end{theorem}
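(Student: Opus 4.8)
The plan is to mirror the proof of Theorem~\ref{thm:decisionreg-linear} essentially line for line, making two substitutions. First, the deterministic optimality of the best response $\BR^\ell(p_t)$ is replaced by the \emph{approximate} optimality of the quantal response distribution $q^\ell(p_t)$, which costs an additive $\tfrac{\ln|\cA|+1}{\eta}$ per round. Second, every occurrence of the indicator $\1[\BR^\ell(p_t)=a]$ is replaced by the marginal weight $q^\ell(a,p_t)$ that $q^\ell(p_t)$ places on action $a$; crucially, this is the only way the response enters, so we never need to condition on the full distribution over $\cA$. As in the exact case, the argument then rests on three pieces: an optimality inequality for $q^\ell$ (in place of the best-response inequality $\sum_t\ell(a_t,p_t)\le\sum_t\ell(c_{a_t}(x_t),p_t)$), smooth decision calibration to pass from predicted to realized loss on the agent's own randomized action, and smooth decision cross calibration to do the same for the benchmark actions $c_a(x_t)$.

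First I would establish the quantal response optimality lemma. Since $q^\ell(p)$ is exactly the Gibbs distribution, it is the minimizer over $q\in\Delta\cA$ of $\E_{a\sim q}[\ell(a,p)]-\tfrac1\eta H(q)$, where $H$ denotes Shannon entropy; comparing its value against that of the point mass on $\argmin_{a}\ell(a,p)$ and using $0\le H(q)\le\ln|\cA|$ gives
\[
\E_{a\sim q^\ell(p)}\big[\ell(a,p)\big]\;\le\;\min_{a'\in\cA}\ell(a',p)+\frac{\ln|\cA|+1}{\eta}.
\]
Because every benchmark action satisfies $c_a(x_t)\in\cA$ and hence $\ell(c_a(x_t),p_t)\ge\min_{a'}\ell(a',p_t)$, taking expectations over $a\sim q^\ell(p_t)$ and summing over $t$ yields
\[
\sum_{t=1}^T\E_{a\sim q^\ell(p_t)}\big[\ell(a,p_t)\big]\;\le\;\sum_{t=1}^T\E_{a\sim q^\ell(p_t)}\big[\ell(c_a(x_t),p_t)\big]+\frac{T(\ln|\cA|+1)}{\eta},
\]
the smooth analogue of the best-response inequality.

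Next I would prove the two transfer lemmas, which are near-verbatim adaptations of Lemmas~\ref{lem:decisioncalibration-linear} and~\ref{lem:multiaccuracy-linear}. For smooth decision calibration, expand $\sum_t\E_{a\sim q^\ell(p_t)}[\ell(a,p_t)-\ell(a,y_t)]$ as $\sum_{a\in\cA}\sum_t q^\ell(a,p_t)(\ell(a,p_t)-\ell(a,y_t))$; linearity of $\ell$ in its second argument lets us move the sum over $t$ inside, producing $\sum_{a}(\ell(a,\sum_t q^\ell(a,p_t)p_t)-\ell(a,\sum_t q^\ell(a,p_t)y_t))$, and then $L$-Lipschitzness, the smooth decision calibration bound $\beta(T^\ell_q(a))$, concavity of $\beta$, and $\sum_a T^\ell_q(a)=\sum_t\sum_a q^\ell(a,p_t)=T$ give $L|\cA|\beta(T/|\cA|)$. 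For smooth decision cross calibration, one further decomposes over $b=c_a(x_t)$, writes the quantity as $\sum_{a,b}\sum_t q^\ell(a,p_t)\1[c_a(x_t)=b](\ell(b,p_t)-\ell(b,y_t))$, and repeats the linearity, Lipschitz, and concavity steps using $\sum_{a,b}T^\ell_q(a,b)=T$ to obtain $L|\cA|^2\alpha(T/|\cA|^2)$.

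Combining the three displays gives the theorem: the realized decision swap regret $\tfrac1T\sum_t\E_{a\sim q^\ell(p_t)}[\ell(a,y_t)-\ell(c_a(x_t),y_t)]$ is at most $\tfrac1T\sum_t\E_{a\sim q^\ell(p_t)}[\ell(a,p_t)-\ell(c_a(x_t),p_t)]$ plus the two calibration errors, and the first term is at most $\tfrac{\ln|\cA|+1}{\eta}$ by the optimality lemma. The conceptually important step --- and the main thing to get right relative to the exact-best-response proof --- is that it suffices to control unbiasedness conditional only on the \emph{marginal} action probabilities $q^\ell(a,p_t)$. This works precisely because $\ell$ is linear in the outcome: the regret decomposes coordinate by coordinate, so only the scalar weight $q^\ell(a,p_t)$ appears, and the counts $T^\ell_q(a)$, $T^\ell_q(a,b)$ still sum to $T$, keeping the $|\cA|\beta$ and $|\cA|^2\alpha$ factors unchanged; without linearity one would have to condition on the joint distribution and pay a factor exponential in $|\cA|$. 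The only other place needing attention is the constant in the quantal response slack, which is the routine entropy-regularization computation sketched above.
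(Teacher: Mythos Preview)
Your proof is correct and follows the same structure as the paper's: the smooth analogues of Lemmas~\ref{lem:decisioncalibration-linear} and~\ref{lem:multiaccuracy-linear} together with an approximate-optimality lemma for the quantal response, combined exactly as you describe. The only difference is in how the quantal-response slack is derived: the paper's Lemma~\ref{lem:smoothBR} uses a tail-bound integration argument to obtain $\tfrac{\ln|\cA|+1}{\eta}$, whereas your entropy-regularization argument is more direct and in fact yields the slightly tighter $\tfrac{\ln|\cA|}{\eta}$.
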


As stated, smooth best response only increases an agent's regret: observe that as $\eta$ goes to $\infty$---i.e. as the smooth best response converges to an exact best response---the term $\frac{\ln|\cA|+1}{\eta}$ goes to 0, and we recover the bound on decision swap regret under exact best response (Theorem \ref{thm:decisionreg-linear}). However, to give decision swap regret guarantees later on, we will need a form of agent response that, unlike the exact best response, is Lipschitz in our predictions---which smooth best response satisfies. 

The argument follows a similar structure as the argument when agents exactly best respond (Theorem \ref{thm:decisionreg-linear}): we show that smooth decision calibration ensures that our self-assessed expected losses for \textit{smooth} responses are (on average) accurate, while smooth decision cross calibration ensures that our self-assessed expected losses for the actions recommended by the benchmarks are (on average) accurate. Now that agents smoothly, rather than exactly, best respond, it is no longer the case that their actions are optimal given our predictions. However, we show that the smooth best response is \textit{approximately} optimal, and so the expected loss under smooth best response is still competitive (up to an additional approximation factor) against any choice of benchmark policies. 

\begin{lemma}\label{lem:decisioncalibration-linear-smooth}
    If $p_1,..,p_T$ is $(\cL,\beta)$-smooth decision calibrated, then for any $\ell \in \cL$:
    \[
    \left| \sum_{t=1}^T \E_{a\sim q^\ell(p_t)}[\ell(a, p_t) - \ell(a, y_t)] \right| \leq L|\cA|\beta(T/|\cA|)
    \]
\end{lemma}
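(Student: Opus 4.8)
The plan is to mimic the proof of Lemma~\ref{lem:decisioncalibration-linear} in the exact-best-response case, replacing the hard indicator $\1[a_t = a]$ with the soft weights $q^\ell(a, p_t)$ supplied by the smooth best response distribution. First I would expand the left-hand side by inserting a sum over actions: writing $\E_{a\sim q^\ell(p_t)}[\ell(a, p_t) - \ell(a, y_t)] = \sum_{a\in\cA} q^\ell(a, p_t)(\ell(a, p_t) - \ell(a, y_t))$, so that after swapping the order of summation the quantity to bound becomes $\left| \sum_{a\in\cA} \sum_{t=1}^T q^\ell(a, p_t)(\ell(a, p_t) - \ell(a, y_t)) \right|$.

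Next I would use the linearity of $\ell$ in its second argument. Because $\ell(a, \cdot)$ is linear, the scalar weights $q^\ell(a, p_t)$ can be pushed inside: $\sum_{t=1}^T q^\ell(a, p_t)\ell(a, p_t) = \ell\!\left(a, \sum_{t=1}^T q^\ell(a, p_t) p_t\right)$ and similarly for $y_t$ (here I am implicitly using that $\ell(a,\cdot)$ extends to a linear functional on $\R^d$, exactly as in the proof of Lemma~\ref{lem:decisioncalibration-linear}; the affine constant term cancels in the difference). Then $L$-Lipschitzness of $\ell$ in the $L_\infty$ norm gives
\[
\left| \ell\!\left(a, \sum_{t=1}^T q^\ell(a, p_t) p_t\right) - \ell\!\left(a, \sum_{t=1}^T q^\ell(a, p_t) y_t\right) \right| \le L \left\| \sum_{t=1}^T q^\ell(a, p_t)(p_t - y_t) \right\|_\infty,
\]
and applying the triangle inequality over $a\in\cA$ followed by the definition of $(\cL,\beta)$-smooth decision calibration bounds the whole thing by $L \sum_{a\in\cA} \beta(T^\ell_q(a))$.

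Finally I would invoke concavity of $\beta$ together with the key identity $\sum_{a\in\cA} T^\ell_q(a) = \sum_{a\in\cA}\sum_{t=1}^T q^\ell(a, p_t) = \sum_{t=1}^T \sum_{a\in\cA} q^\ell(a, p_t) = T$, since $q^\ell(p_t)$ is a probability distribution over $\cA$ for each $t$. By Jensen's inequality $\sum_{a\in\cA}\beta(T^\ell_q(a)) \le |\cA|\beta(T/|\cA|)$, yielding the claimed bound $L|\cA|\beta(T/|\cA|)$. The only real subtlety compared to the exact case is checking that this normalization identity still holds once $\1[a_t=a]$ is replaced by $q^\ell(a,p_t)$ — but it holds for the same reason (rows of a stochastic assignment sum to one), so I do not expect a genuine obstacle here; the proof is essentially a soft reweighting of the earlier argument.
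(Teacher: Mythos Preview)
Your proposal is correct and follows essentially the same argument as the paper's proof: expand the expectation as a sum over actions, use linearity of $\ell$ to pull the weights $q^\ell(a,p_t)$ inside, apply $L$-Lipschitzness and the smooth decision calibration bound, then finish with concavity of $\beta$ together with $\sum_{a\in\cA} T^\ell_q(a)=T$. Your remark about the affine constant cancelling and the normalization identity holding because $q^\ell(p_t)$ is a probability distribution are exactly the right observations, and the paper uses them implicitly in the same way.
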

\begin{proof}
    Expanding out the expectation and using linearity of $\ell$, we have that:
    \begin{align*}
        \left| \sum_{t=1}^T \E_{a\sim q^\ell(p_t)}[\ell(a, p_t) - \ell(a, y_t)] \right| &= \left| \sum_{t=1}^T \sum_{a\in\cA} q^\ell(a, p_t)(\ell(a, p_t) - \ell(a, y_t)) \right| \\
        &= \left|  \sum_{a\in\cA} (\ell(a,  \sum_{t=1}^T q^\ell(a, p_t) p_t) - \ell(a, \sum_{t=1}^T q^\ell(a, p_t) y_t)) \right| \\
        &\leq L \sum_{a\in\cA} \left\|  \sum_{t=1}^T q^\ell(a, p_t) (p_t - y_t) \right\|_\infty \\
        &\leq L \sum_{a\in\cA} \beta(T^\ell_q(a))
    \end{align*}
    where the first inequality follows by $L$-Lipschitzness of $\ell$, the second inequality follows from $(\cL, \beta)$-smooth decision calibration. By concavity of $\beta$ and the fact that $\sum_{a\in\cA} T^\ell_q(a) = \sum_{t=1}^T \sum_{a\in\cA} q^\ell(a, p_t)=T$, this expression is at most:
    \[
    L |\cA|\beta(T/|\cA|)
    \]
\end{proof}

\begin{lemma}\label{lem:multiaccuracy-linear-smooth}
    If $p_1,...,p_T$ is $(\cL, \cC, \alpha)$-smooth decision cross calibrated, then for any $\ell\in\cL$ and any choice of benchmark policies $\{c_a\in\cC\}_{a\in\cA}$:
    \[
    \left| \sum_{t=1}^T \E_{a\sim q^\ell(p_t)}[\ell(c_a(x_t), p_t) - \ell(c_a(x_t), y_t)] \right| \leq L|\cA|^2 \alpha\left(\frac{T}{|\cA|^2}\right)
    \]
\end{lemma}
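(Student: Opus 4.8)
The plan is to mirror the proof of Lemma~\ref{lem:decisioncalibration-linear-smooth} almost verbatim, but now resolving the benchmark term by conditioning jointly on the decision maker's smooth response weight for each action $a$ \emph{and} on the action $b=c_a(x_t)$ recommended by the associated benchmark policy. First I would expand the expectation over $a\sim q^\ell(p_t)$ and, for each $a$, split the sum over rounds according to the value $b=c_a(x_t)$, writing
\[
\sum_{t=1}^T \E_{a\sim q^\ell(p_t)}\!\left[\ell(c_a(x_t),p_t)-\ell(c_a(x_t),y_t)\right]
= \sum_{a,b\in\cA}\sum_{t=1}^T q^\ell(a,p_t)\1[c_a(x_t)=b]\bigl(\ell(b,p_t)-\ell(b,y_t)\bigr).
\]
Then I would use linearity of $\ell$ in its second argument to pull the weighted sums of $p_t$ and of $y_t$ inside $\ell(b,\cdot)$, giving $\ell\bigl(b,\sum_t q^\ell(a,p_t)\1[c_a(x_t)=b]\,p_t\bigr) - \ell\bigl(b,\sum_t q^\ell(a,p_t)\1[c_a(x_t)=b]\,y_t\bigr)$ for each pair $(a,b)$.

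Next I would apply $L$-Lipschitzness (in the $L_\infty$ norm) to bound each of these $|\cA|^2$ terms by $L\bigl\|\sum_t q^\ell(a,p_t)\1[c_a(x_t)=b](p_t-y_t)\bigr\|_\infty$, and then invoke the $(\cL,\cC,\alpha)$-smooth decision cross calibration hypothesis to bound this by $L\,\alpha(T^\ell_q(a,b))$, where here $c=c_a$ is the relevant policy. Summing over $a,b$ and using concavity of $\alpha$ together with the telescoping identity $\sum_{a,b\in\cA} T^\ell_q(a,b) = \sum_{t=1}^T \sum_{a\in\cA} q^\ell(a,p_t)\sum_{b\in\cA}\1[c_a(x_t)=b] = \sum_{t=1}^T\sum_{a\in\cA} q^\ell(a,p_t) = T$, Jensen's inequality yields $\sum_{a,b} L\,\alpha(T^\ell_q(a,b)) \le L|\cA|^2\alpha\bigl(T/|\cA|^2\bigr)$, which is exactly the claimed bound.

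The only subtlety — and the one place the argument differs from the exact-best-response case — is making sure the definition of smooth decision cross calibration is being applied correctly: there the inner indicator is over a \emph{single fixed} policy $c\in\cC$, whereas here the policy $c_a$ depends on the conditioning action $a$. This is fine because smooth decision cross calibration is quantified over \emph{all} $\ell\in\cL$, $a,b\in\cA$, and $c\in\cC$ simultaneously, so for each fixed $a$ we simply instantiate the guarantee with $c=c_a$ and threshold $T^\ell_q(a,b)=\sum_t q^\ell(a,p_t)\1[c_a(x_t)=b]$. I expect the main (minor) obstacle to be purely bookkeeping: keeping the indices on $q^\ell(a,p_t)$ versus $\1[c_a(x_t)=b]$ straight and confirming the double-sum telescopes to $T$ rather than to $T|\cA|$ (it does, because for each $a$ the indicators $\{\1[c_a(x_t)=b]\}_{b}$ partition the rounds, so they sum to $1$, not to $|\cA|$). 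Everything else is the same chain of linearity, Lipschitzness, and Jensen used in the preceding lemma.
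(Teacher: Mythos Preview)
Your proposal is correct and follows essentially the same argument as the paper: expand the expectation, split by the pair $(a,b)$ using $\1[c_a(x_t)=b]$, apply linearity and $L$-Lipschitzness of $\ell$, invoke smooth decision cross calibration with $c=c_a$ for each $a$, and finish with Jensen on the concave $\alpha$ using $\sum_{a,b}T^\ell_q(a,b)=T$. Your explicit remark about the policy $c_a$ depending on $a$ is spot on and exactly how the paper applies the definition.
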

    \begin{proof}
        Expanding out the expectation and using linearity of $\ell$, we can compute:
        \begin{align*}
            \left| \sum_{t=1}^T \E_{a\sim q^\ell(p_t)}[\ell(c_a(x_t), p_t) - \ell(c_a(x_t), y_t)] \right| &= \left| \sum_{t=1}^T \sum_{a \in \cA} q^\ell(a,p_t)(\ell(c_a(x_t), p_t) - \ell(c_a(x_t), y_t)) \right| \\
            &= \left| \sum_{t=1}^T \sum_{a,b \in \cA} q^\ell(a,p_t)\1[c_{a}(x_t)=b] (\ell(b, p_t) - \ell(b, y_t)) \right| \\
            &= \Bigg| \sum_{a,b\in\cA} \Bigg(\ell\left(b, \sum_{t=1}^T q^\ell(a,p_t) \1[c_{a}(x_t)=b] p_t\right) \\ & - \ell\left(b, \sum_{t=1}^T q^\ell(a,p_t) \1[c_{a}(x_t)=b] y_t\right)\Bigg) \Bigg| \\
            &\leq L \sum_{a,b\in\cA} \left\|  \sum_{t=1}^T q^\ell(a,p_t) \1[c_{a}(x_t)=b](p_t - y_t) \right\|_\infty \\
            &\leq L \sum_{a,b\in\cA} \alpha(T^\ell_q(a,b))
        \end{align*}
        where the first inequality follows from $L$-Lipschitzness of $\ell$, and the second inequality follows from $(\cL,\cC,\alpha)$-smooth decision cross calibration. By concavity of $\alpha$ and the fact that $$\sum_{a,b\in\cA} T^\ell_q(a,b) = \sum_{t=1}^T \sum_{a\in\cA} q^\ell(a,p_t) \sum_{b\in\cA} \1[c_{a}(x_t)=b] = T$$ this expression is at most:
        \begin{align*}
            L|\cA|^2 \alpha\left(\frac{T}{|\cA|^2}\right)
        \end{align*}
    \end{proof}

% \begin{lemma}\label{lem:multiaccuracy-linear-smooth}
%     If $p_1,...,p_T$ is $(\cL, \cR^\cL\circ\cC, \alpha)$-smooth multiaccurate, then for any $\ell\in\cL$ and any choice of benchmark policies $\{c_a\in\cC\}_{a\in\cA}$:
%     \[
%     \left| \sum_{t=1}^T \E_{a\sim q^\ell(p_t)}[\ell(c_a(x_t), p_t) - \ell(c_a(x_t), y_t)] \right| \leq d|\cA|\alpha\left(\frac{\lambda T}{d|\cA|}\right)
%     \]
% \end{lemma}

% \begin{proof}
%     Decomposing $\ell$ using its coefficient family $\{r_i^\ell\}_{i\in[d]}$, we can write:
%     \begin{align*}
%         \left| \sum_{t=1}^T \E_{a\sim q^\ell(p_t)}[\ell(c_a(x_t), p_t) - \ell(c_a(x_t), y_t)] \right| &= \left| \sum_{t=1}^T \sum_{i=1}^d \E_{a\sim q^\ell(p_t)}[r_i^\ell(c_a(x_t)) (p_{t,i} - y_{t,i})] \right| \\
%         &= \left| \sum_{t=1}^T \sum_{i=1}^d \sum_{a\in\cA} q^\ell(a, p_t) r_i^\ell(c_a(x_t)) (p_{t,i} - y_{t,i}) \right| \\
%         &\leq \sum_{i=1}^d \sum_{a\in\cA} \left| \sum_{t=1}^T  q^\ell(a, p_t) r_i^\ell(c_a(x_t)) (p_{t,i} - y_{t,i}) \right| \\
%         &\leq \sum_{i=1}^d \sum_{a\in\cA} \alpha(T^\ell_q(a, r_i^\ell\circ c_a))
%     \end{align*}
%     where in the last step we use $(\cL, \cR^\cL\circ\cC, \alpha)$-smooth decision multiaccuracy. By concavity of $\alpha$ and the fact that $\sum_{i=1}^d \sum_{a\in\cA} T^\ell_q(a, r_i^\ell\circ c_a) = \sum_{t=1}^T \sum_{a\in\cA} q^\ell(a, p_t) \sum_{i=1}^d |r_i^\ell(c_a(x_t))| \leq \lambda T$, this expression is at most:
%     \[
%     d|\cA|\alpha\left(\frac{\lambda T}{d|\cA|}\right)
%     \]
% \end{proof}

To complete the proof, we need an additional fact that controls the difference in expected loss when agents smoothly---rather than exactly---best respond.

\begin{lemma}\label{lem:smoothBR}
    Fix a loss $\ell$ and a prediction $p\in\cY$. Let $a^* = \BR^\ell(p)$. Then:
    \[
    \E_{a\sim q^\ell(p)}[\ell(a, p)] \leq \ell(a^*, p) + \frac{\ln|\cA|+1}{\eta}
    \]
    where $q^\ell(p)$ is the distribution given by smooth best response with smoothness parameter $\eta$.
\end{lemma}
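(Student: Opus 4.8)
The plan is to observe that the smooth best response $q^\ell(p)$ is exactly the Gibbs (softmax) distribution associated with the ``energy'' function $a \mapsto \ell(a,p)$ at inverse temperature $\eta$, and to invoke the standard variational characterization of this distribution: among all $q \in \Delta\cA$, the softmax distribution minimizes the free energy $\E_{a \sim q}[\ell(a,p)] - \frac{1}{\eta}H(q)$, where $H(q) = -\sum_{a}q(a)\ln q(a)$ is the Shannon entropy. Comparing the value of this functional at $q^\ell(p)$ against its value at the point mass on $a^\ast = \BR^\ell(p)$ (for which the entropy term vanishes) will immediately give the claim.

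Concretely, first I would fix $\ell$ and $p$, write $q = q^\ell(p)$, and introduce the normalizer $Z = \sum_{a \in \cA}\exp(-\eta\,\ell(a,p))$, so that $q^\ell(a,p) = \exp(-\eta\,\ell(a,p))/Z$. Taking logarithms gives $\ell(a,p) = -\tfrac{1}{\eta}\bigl(\ln q^\ell(a,p) + \ln Z\bigr)$ for every $a$, and averaging over $a \sim q$ yields
\[
\E_{a \sim q}[\ell(a,p)] \;=\; \frac{1}{\eta}H(q) \;-\; \frac{1}{\eta}\ln Z .
\]
Then I would bound the two terms separately: since $a^\ast$ is one of the summands defining $Z$, we have $Z \ge \exp(-\eta\,\ell(a^\ast,p))$, hence $-\tfrac{1}{\eta}\ln Z \le \ell(a^\ast,p)$; and since $q$ is a distribution on the finite set $\cA$, $H(q) \le \ln|\cA|$. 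Combining, $\E_{a\sim q}[\ell(a,p)] \le \ell(a^\ast,p) + \tfrac{\ln|\cA|}{\eta} \le \ell(a^\ast,p) + \tfrac{\ln|\cA|+1}{\eta}$, the last step being pure slack that makes the bound match the statement.

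There is essentially no hard obstacle here; this is a routine property of Gibbs measures, and the only thing to watch is the bookkeeping of signs (the loss appears with a minus sign in the exponent, so small loss corresponds to large weight) and the directions of the two inequalities $Z \ge \exp(-\eta\,\ell(a^\ast,p))$ and $H(q) \le \ln|\cA|$. Should one prefer to avoid entropy language entirely, the same bound follows by writing $\E_{a\sim q}[\ell(a,p)-\ell(a^\ast,p)] = \tfrac{1}{Z'}\sum_a g(a)e^{-\eta g(a)}$ with $g(a)=\ell(a,p)-\ell(a^\ast,p)\ge 0$ and $Z' = \sum_a e^{-\eta g(a)} \ge 1$ (since $g(a^\ast)=0$), and then using $g(a) \le \tfrac{1}{\eta}\ln\tfrac{1}{q(a)}$ to recover $\E_{a\sim q}[g(a)] \le \tfrac{1}{\eta}H(q) \le \tfrac{\ln|\cA|}{\eta}$.
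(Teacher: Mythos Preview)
Your proof is correct and in fact yields the slightly stronger bound $\E_{a\sim q}[\ell(a,p)] \le \ell(a^\ast,p) + \tfrac{\ln|\cA|}{\eta}$, so the final ``pure slack'' step is indeed unnecessary.

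The paper takes a different route: rather than the free-energy identity, it derives a tail bound $\Pr_{a\sim q}[\ell(a,p)\ge x] \le |\cA|\exp(-\eta(x-\ell(a^\ast,p)))$ by comparing the mass on $\{a:\ell(a,p)\ge x\}$ to the mass on $a^\ast$, then integrates the tail via $\E[Z]=\int_0^\infty \Pr[Z\ge x]\,dx$ with $Z=\eta(\ell(a,p)-\ell(a^\ast,p))$, splitting the integral at $\ln|\cA|$ to obtain $\E[Z]\le \ln|\cA|+1$. Your Gibbs/entropy argument is more direct and avoids the additive $+1$ that the paper's integration picks up; the paper's approach, on the other hand, gives an explicit exponential tail (not just a mean bound), which could be useful if one wanted high-probability rather than in-expectation control.
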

\begin{proof}
    Fix any $x$. We see that:
    \begin{align*}
    \Pr[\ell(a, p)\geq x] &\leq \frac{\Pr[\ell(a,p)\geq x]}{\Pr[\ell(a,p)=\ell(a^*,p)]}
    \leq \frac{|\cA|\exp(-\eta\cdot x)}{\exp(-\eta\cdot \ell(a^*, p))}
    = |\cA|\exp(-\eta(x-\ell(a^*,p)))
    \end{align*}
    To obtain the second inequality, notice that for any action satisfying $\ell(a,p)\geq x$, its weight is proportional to at most $\exp(-\eta\cdot x)$, and so we arrive at the numerator by summing over every action. 
    
    Now, plugging in $x=\ell(a^*,p) + \frac{1}{\eta}(\ln|\cA|+c)$, we have:
    \begin{align*}
        \Pr[\eta(\ell(a,p)-\ell(a^*,p)) \geq \ln|\cA|+c] &= \Pr[\ell(a,p) \geq \ell(a^*,p) + \frac{1}{\eta}(\ln|\cA|+c)] \\
        &\leq |\cA|\exp(-\ln|\cA|-c) \\
        &= \exp(-c)
    \end{align*}
    Let $Z = \eta(\ell(a,p)-\ell(a^*,p))$. Since $Z$ is a non-negative random variable, we can write:
    \begin{align*}
        \E[Z] = \int_0^\infty \Pr[Z\geq x] dx = \int_{c=-\ln|\cA|}^\infty \Pr[Z\geq \ln|\cA|+c] dc \leq \int_{c=-\ln|\cA|}^0 1 dc + \int_0^\infty \exp(-c) dc \leq \ln|\cA| + 1
    \end{align*}
    The claim then follows. 
\end{proof}

Now we prove Theorem \ref{thm:decisionreg-linear-smooth}.

\begin{proofof}{Theorem \ref{thm:decisionreg-linear-smooth}}
    Fix any $\ell \in \cL$. Let $a_t^* = \BR^\ell(p_t)$. We can compute the decision swap regret to any choice of benchmark policies $\{c_a \in \cC\}_{a\in\cA}$:
    \begin{align*}
        \frac{1}{T} \sum_{t=1}^T \E_{a\sim q^\ell(p_t)}[\ell(a, y_t) - \ell(c_a(x_t), y_t)] 
        &\leq \frac{1}{T} \sum_{t=1}^T \E_{a\sim q^\ell(p_t)}[\ell(a, p_t) - \ell(c_a(x_t), p_t)] \\ & \ \ \ \ \ \ + \frac{L|\cA|\beta(T/|\cA|) + L|\cA|^2 \alpha\left(\frac{T}{|\cA|^2}\right)}{T} \\
        &\leq \frac{1}{T} \sum_{t=1}^T \left(\E_{a\sim q^\ell(p_t)}[\ell(a, p_t)] - \ell(a_t^*, p_t)\right) \\ & \ \ \ \ \ \ + \frac{L|\cA|\beta(T/|\cA|) + L|\cA|^2 \alpha\left(\frac{T}{|\cA|^2}\right)}{T} \\
        &\leq \frac{\ln|\cA|+1}{\eta} + \frac{L|\cA|\beta(T/|\cA|) + L|\cA|^2 \alpha\left(\frac{T}{|\cA|^2}\right)}{T}
    \end{align*}
    We apply Lemmas \ref{lem:decisioncalibration-linear-smooth} and \ref{lem:multiaccuracy-linear-smooth} in the first inequality. The second inequality follows from the definition of best response, and the last inequality follows from Lemma \ref{lem:smoothBR}. 
\end{proofof}

Turning to the algorithmic problem, a separate instantiation of \textsc{Unbiased-Prediction} achieves the smooth variants of decision calibration and decision cross calibration; we will refer to this instantiation as \textsc{Smooth-Decision-Swap}. As before, the guarantees this algorithm will directly inherit from the guarantees of \textsc{Unbiased-Prediction}.

\begin{theorem}\label{thm:smooth-unbiasedalg}
    Consider a convex outcome space $\cY \subseteq [0,1]^d$ and a prediction space $\cP=\cY$. Let $\cL$ be a collection of loss functions $\ell: \cA \times \cY \to [0,1]$ that are linear and $L$-Lipschitz in the second argument. Let $\cC$ be a collection of policies $c: \cX \to \cA$. There is an instantiation of \textsc{Unbiased-Prediction} \citep{noarov2023highdimensional}---which we will call \textsc{Smooth-Decision-Swap}---producing predictions $\pi_1,...,\pi_T \in \Delta \cP$ satisfying, for any sequence of outcomes $y_1,...,y_T \in \cY$:
    \begin{itemize}
    \item For any $\ell \in \cL, a\in\cA$:
    \[
    \E_{p_t\sim\pi_t}\left[ \left\| \frac{1}{T} \sum_{t=1}^T q^\ell(a, p_t) (p_t - y_t) \right\|_\infty \right] \leq O\left( \ln(d|\cA||\cL|T) + \sqrt{\ln(d|\cA||\cL|T) \E_{p_t\sim\pi_t}[T^\ell_q(a)]} \right)
    \]
    \item For any $h \in \cH, a,b\in\cA, c\in\cC$:
    \[
    \E_{p_t\sim\pi_t}\left[ \left\| \frac{1}{T}\sum_{t=1}^T q^\ell(a, p_t) \1[c(x_t)=b] (p_t - y_t) \right\|_\infty \right] \leq O\left( \ln(d|\cA||\cL||\cC|T) + \sqrt{ \ln(d|\cA||\cL||\cC|T) \E_{p_t\sim\pi_t}[T^\ell_q(a, h)]}\right)
    \]
    \end{itemize} 
    where $q^\ell(a, p_t)$ is the weight placed on action $a$ under the smooth best response distribution $q^\ell(p_t)$.
\end{theorem}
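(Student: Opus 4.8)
This is the ``soft'' counterpart of Theorem~\ref{thm:unbiasedalg}, and the plan is to obtain it by exactly the same route: instantiate the \textsc{Unbiased-Prediction} algorithm of \citet{noarov2023highdimensional} on an appropriate finite family of bounded weighting functions, the only change being that the hard conditioning events $\1[\BR^\ell(p_t)=a]$ are replaced by the smooth quantal weights $q^\ell(a,p_t)$. Recall that \textsc{Unbiased-Prediction} takes as input a finite collection of real-valued functions $g:\cP\times\cX\to[0,1]$ (each allowed to depend on the current prediction and context, with the collection fixed in advance) and outputs randomized predictions $\pi_1,\dots,\pi_T\in\Delta\cP$ so that, simultaneously for every $g$ in the collection and in expectation over $p_t\sim\pi_t$, the reweighted bias $\left\|\sum_{t=1}^T g(p_t,x_t)(p_t-y_t)\right\|_\infty$ is at most $O\!\left(\ln(dMT)+\sqrt{\ln(dMT)\,\E\big[\sum_{t=1}^T g(p_t,x_t)\big]}\right)$, where $M$ is the number of functions in the collection and the extra factor $d$ accounts for the $d$ coordinates of $\cY\subseteq[0,1]^d$ over which unbiasedness must hold coordinatewise.

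Concretely, I would take the weighting functions to be $g^{\ell,a}(p)\coloneqq q^\ell(a,p)$ for each pair $(\ell,a)\in\cL\times\cA$, and $g^{\ell,a,b,c}(p,x)\coloneqq q^\ell(a,p)\,\1[c(x)=b]$ for each tuple $(\ell,a,b,c)\in\cL\times\cA\times\cA\times\cC$. Each is valued in $[0,1]$: $q^\ell(a,p)$ is a probability by Definition~\ref{def:smoothBR}, and in the second case we multiply it by an indicator. Each depends only on the current prediction and context (for fixed indices), so it is an admissible conditioning function for \textsc{Unbiased-Prediction}; note moreover that $q^\ell(a,\cdot)$ is \emph{continuous} in $p$, so this family is no less regular than the discontinuous best-response indicators already handled in Theorem~\ref{thm:unbiasedalg}. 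The first group has $|\cL||\cA|$ functions and the second has $|\cL||\cA|^2|\cC|$, so $M=O(|\cL||\cA|^2|\cC|)$, whence $\ln(dMT)=O(\ln(d|\cA||\cL||\cC|T))$.

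Applying the \textsc{Unbiased-Prediction} guarantee to these functions and observing that $\sum_{t=1}^T g^{\ell,a}(p_t)=T^\ell_q(a)$ and $\sum_{t=1}^T g^{\ell,a,b,c}(p_t,x_t)=T^\ell_q(a,b)$ by the definitions of these soft counts, we obtain directly that for every $\ell\in\cL,a\in\cA$,
\[
\E\Big[\big\|\textstyle\sum_{t=1}^T q^\ell(a,p_t)(p_t-y_t)\big\|_\infty\Big]\le O\!\left(\ln(d|\cA||\cL||\cC|T)+\sqrt{\ln(d|\cA||\cL||\cC|T)\,\E[T^\ell_q(a)]}\right),
\]
and for every $\ell\in\cL,a,b\in\cA,c\in\cC$,
\[
\E\Big[\big\|\textstyle\sum_{t=1}^T q^\ell(a,p_t)\1[c(x_t)=b](p_t-y_t)\big\|_\infty\Big]\le O\!\left(\ln(d|\cA||\cL||\cC|T)+\sqrt{\ln(d|\cA||\cL||\cC|T)\,\E[T^\ell_q(a,b)]}\right),
\]
which are exactly the two claimed bounds (the first with the harmless extra $\ln|\cC|$ absorbed into the constant, matching the statement up to constants).

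\textbf{Main obstacle.} There is no hard computation here; the one genuinely non-mechanical point — the same one flagged in the remark preceding the smooth definitions — is realizing that \textsc{Unbiased-Prediction} accommodates real-valued weighting functions and that its error then scales with the \emph{soft count} $\sum_t q^\ell(a,p_t)$ rather than a hard count, so that conditioning on each action's \emph{marginal} quantal probability suffices; conditioning instead on the full smoothed response distribution would introduce $\exp(|\cA|)$ many events and ruin the bound. One should also note that $q^\ell(a,\cdot)$ is available to the algorithm in closed form (an explicit function of $\ell(\cdot,p)$), so no oracle or approximation issue arises.
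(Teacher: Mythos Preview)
Your proposal is correct and mirrors exactly what the paper does (implicitly): the paper states this theorem without a separate proof, treating it as an immediate instantiation of the \textsc{Unbiased-Prediction} guarantee (Theorem~\ref{thm:unbiased-prediction-algorithm}) with the event family $\cE=\{q^\ell(a,\cdot)\}_{\ell,a}\cup\{q^\ell(a,\cdot)\1[c(\cdot)=b]\}_{\ell,a,b,c}$, just as Theorem~\ref{thm:unbiasedalg} is obtained from the hard-indicator events in Appendix~\ref{app:unbiased-prediction}. Your observation that events in \textsc{Unbiased-Prediction} are allowed to be $[0,1]$-valued (so the soft weights $q^\ell(a,p)$ are admissible and the error scales with the soft count) is precisely the point, and your handling of the extra $\ln|\cC|$ in the first bound is fine.
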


Now we can substitute the above bounds into Theorem \ref{thm:decisionreg-linear-smooth}.

\begin{corollary}\label{cor:decisionreg-linear}
    Consider a convex outcome space $\cY \subseteq [0,1]^d$ and a prediction space $\cP=\cY$. Let $\cL$ be a family of loss functions $\ell: \cA \times \cY \to [0,1]$ that are linear and $L$-Lipschitz in the second argument. Let $\cC$ be a collection of policies $c: \cX \to \cA$. The sequence of predictions $\pi_1,...,\pi_T\in\Delta\cP$
    output by \textsc{Smooth-Decision-Swap} achieves $(\cL, \cC, \eps)$-decision swap regret (in expectation over the randomized predictions), where 
    \[
    \eps \leq \frac{\ln|\cA|+1}{\eta} + O\left(L |\cA| \sqrt{\frac{\ln(d|\cA||\cL||\cC|T)}{T}} \right)
    \]
    when agents play their smooth best response $q^\ell(p_t)$ with smoothness parameter $\eta$.
\end{corollary}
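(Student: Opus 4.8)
The plan is to obtain the bound by feeding the per-condition error guarantees of Theorem~\ref{thm:smooth-unbiasedalg} into Theorem~\ref{thm:decisionreg-linear-smooth}, with the only subtlety being that the guarantees of \textsc{Smooth-Decision-Swap} hold in expectation over the randomized predictions, so the concavity/Jensen steps in the proof of Theorem~\ref{thm:decisionreg-linear-smooth} must be re-run under $\E$. Concretely, first I would fix the two concave, increasing error profiles
\[
\beta(n)=C\Bigl(\ln(d|\cA||\cL||\cC|T)+\sqrt{\ln(d|\cA||\cL||\cC|T)\,n}\Bigr),\qquad \alpha(n)=\beta(n),
\]
with $C$ an absolute constant absorbing the hidden constant in the $O(\cdot)$ of Theorem~\ref{thm:smooth-unbiasedalg}. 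These are concave, so they meet the concavity assumption on $\beta,\alpha$, and Theorem~\ref{thm:smooth-unbiasedalg} says precisely that \textsc{Smooth-Decision-Swap} produces $\pi_1,\dots,\pi_T$ with $\E[\|\sum_t q^\ell(a,p_t)(p_t-y_t)\|_\infty]\le\beta(\E[T^\ell_q(a)])$ for all $\ell,a$, and $\E[\|\sum_t q^\ell(a,p_t)\1[c(x_t)=b](p_t-y_t)\|_\infty]\le\alpha(\E[T^\ell_q(a,b)])$ for all $\ell,a,b,c$ (using that $\beta$ is increasing to match $\sqrt{\E[\cdot]}$ against the stated right-hand side).

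Next I would repeat the arguments of Lemmas~\ref{lem:decisioncalibration-linear-smooth} and~\ref{lem:multiaccuracy-linear-smooth} in expectation. The $L$-Lipschitzness step and the best-response step (Lemma~\ref{lem:smoothBR}) are pointwise in the realized $p_t$ and hence commute with $\E$; the one step needing care is ``sum over actions, then apply concavity.'' By linearity of expectation, $\sum_a\E[\|\sum_t q^\ell(a,p_t)(p_t-y_t)\|_\infty]\le\sum_a\beta(\E[T^\ell_q(a)])$, and because $q^\ell(p_t)$ is a probability distribution for every realization, $\sum_aT^\ell_q(a)=\sum_t\sum_aq^\ell(a,p_t)=T$ holds deterministically, so $\sum_a\E[T^\ell_q(a)]=T$ exactly; Jensen's inequality for the concave $\beta$ then gives $\sum_a\beta(\E[T^\ell_q(a)])\le|\cA|\beta(T/|\cA|)$, and identically $\sum_{a,b}\alpha(\E[T^\ell_q(a,b)])\le|\cA|^2\alpha(T/|\cA|^2)$ using $\sum_{a,b}T^\ell_q(a,b)=T$. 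Plugging these into the chain of Theorem~\ref{thm:decisionreg-linear-smooth} yields, for every $\ell\in\cL$ and every assignment $\{c_a\in\cC\}_{a\in\cA}$,
\[
\frac1T\sum_{t=1}^T\E\bigl[\ell(a_t,y_t)-\ell(c_{a_t}(x_t),y_t)\bigr]\le\frac{\ln|\cA|+1}{\eta}+\frac{L|\cA|\beta(T/|\cA|)+L|\cA|^2\alpha(T/|\cA|^2)}{T},
\]
where the expectation is over $p_t\sim\pi_t$ and $a_t\sim q^\ell(p_t)$.

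Finally I would simplify the right-hand side. Substituting the explicit forms gives $\tfrac{L|\cA|\beta(T/|\cA|)}{T}=O\!\bigl(\tfrac{L|\cA|\ln(d|\cA||\cL||\cC|T)}{T}+L\sqrt{\tfrac{|\cA|\ln(d|\cA||\cL||\cC|T)}{T}}\bigr)$ and $\tfrac{L|\cA|^2\alpha(T/|\cA|^2)}{T}=O\!\bigl(\tfrac{L|\cA|^2\ln(d|\cA||\cL||\cC|T)}{T}+L|\cA|\sqrt{\tfrac{\ln(d|\cA||\cL||\cC|T)}{T}}\bigr)$; across both, the dominant contribution is $O\!\bigl(L|\cA|\sqrt{\ln(d|\cA||\cL||\cC|T)/T}\bigr)$ (the $1/T$ terms are lower order whenever $T$ is large enough for the bound to be non-vacuous, and the bound is trivially at most $1$ otherwise), which together with the additive $\tfrac{\ln|\cA|+1}{\eta}$ is exactly the claimed $\eps$. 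I do not anticipate a genuine obstacle; the only thing to get right is the interaction of $\E$ with the concavity argument --- using that $\beta,\alpha$ are both concave \emph{and} increasing so that $\E[\|\cdot\|]\le\beta(\E[T^\ell_q])$ and the subsequent Jensen step both point in the favorable direction, and using that $\sum_aT^\ell_q(a)=T$ is an exact identity rather than one that holds only in expectation.
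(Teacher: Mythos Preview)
Your proposal is correct and follows the same approach as the paper: substitute the error bounds of Theorem~\ref{thm:smooth-unbiasedalg} into Theorem~\ref{thm:decisionreg-linear-smooth} and simplify. The paper in fact states only ``Now we can substitute the above bounds into Theorem~\ref{thm:decisionreg-linear-smooth}'' without further detail, so your explicit handling of the expectation via the concavity/monotonicity of $\beta,\alpha$ and the deterministic identity $\sum_a T^\ell_q(a)=T$ is a more careful version of the same argument.
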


\subsection{Decision Swap Regret under Smooth Best Response}

%A downside of our result is that it requires agents to best respond according to losses over the basis function $s(y)$ in an higher dimensional space. This places a significant burden on downstream agents, who cannot simply act ``straightforwardly'', but must compute an approximate representation of their utility function in a proscribed basis. 

Now we show how to make predictions of the outcome $y$ itself, for loss families that can be approximately represented by a basis, if agents smoothly best respond. To do so, we will make only a slight modification to our previous algorithm. As before, we will generate a prediction $\hat{p}_t$ of the higher dimensional basis representation $s(y)$. But, we will not broadcast $\hat{p}_t$ as our prediction; instead, we will broadcast a value $p_t$ in the \textit{pre-image} of $s$, i.e. $p_t\in s^{-1}(\hat{p}_t)$ (the inverse needs not to be unique; if there are multiple elements in the pre-image, we can choose on arbitrarily). Thus, we can interpret $p_t$ as a prediction of the outcome $y_t$. 

Now, agents with losses that have an exact basis representation (e.g. $L_p$ losses $\cF^d_p$, monomials of linear functions $\cF^d_{\Omega_{\beta,g}}$) can simply best respond to $p_t$ according to their loss, and so can obtain the same decision swap regret guarantees of Theorem \ref{thm:decisionreg-convex}. On the other hand, agents with losses that have only an approximate basis, must now smoothly best respond; this is to bypass discontinuities of the best response function when swapping between agents' actual losses and their approximate representations. Below we give decision swap regret guarantees for agents who smoothly best respond to predictions $p_t$.

Before stating the guarantees, we point out an additional assumption that we will rely on in this setting. In particular, it is now important that we predict within the space of basis transformations $\{s(y):y\in\cY\}$, in order to guarantee that the inverse exists. However, the $\textsc{Unbiased-Prediction}$ algorithm---and hence the instantiation we will use, $\textsc{Smooth-Decision-Swap}$---requires the prediction/outcome space to be convex. Thus, in order to implement its guarantees, we will be limited to handling loss functions with corresponding bases $\cS$ that define a convex space. % To motivate this assumption, we provide some examples of such basis transformations in Section \ref{sec:examples-cvx}. 

\begin{algorithm}[H]
    \KwIn{Family of convex losses $\cL=\{\ell:\cA\times[0,1]^d\to[0,1]\}$, $(n,\gamma,\delta)$-approximation $\cS$ for $\cL$, collection of policies $\cC$, \textsc{Smooth-Decision-Swap} algorithm}
    \KwOut{Sequence of predictions $p_1,...,p_T \in [0,1]^d$} 
    \vspace{.5em}
    
    Construct $\hat{\cL}=\{\hat{\ell}\}_{\ell\in\cL}$, the corresponding family of linear losses given by the basis $\cS$ \;
    
    Construct $\hat{\cL}_\gamma$, the $(n\gamma)$-cover of $\hat{\cL}$ given by Lemma \ref{lem:discretization-linear}\;

    Instantiate a copy of $\textsc{Smooth-Decision-Swap}$ with loss family $\hat{\cL}_\gamma$ and collection of policies $\cC$\;

    \For{$t=1$ \KwTo $T$}{
        Receive $x_t$\;
        
        Let $\pi_t = \textsc{Smooth-Decision-Swap}_t(\{x_r\}_{r=1}^{t}, \{s(y_r)\}_{r=1}^{t-1})$, the distribution over predictions output by \textsc{Smooth-Decision-Swap} on round $t$ given contexts $\{x_r\}_{r=1}^{t}$ and outcomes $\{s(y_r)\}_{r=1}^{t-1})$\;

        Predict $p_t \in s^{-1}(\hat{p}_t)$, where $\hat{p}_t\sim\pi_t$\;
        
        Observe $y_t$\;
    }
    
    \caption{Smooth Decision Swap Regret for Non-Linear Losses}
    \label{alg:convex-smooth}
\end{algorithm}

\begin{theorem}\label{thm:decisionreg-convex-smooth}
    %We can achieve no decision swap regret for any family of convex loss functions when agents smoothly best respond to $s^{-1}(s(p))$ according to their loss. 
    Let $\cY=[0,1]^d$ be the outcome space and $\cC$ be a collection of policies $c:\cX\to\cA$. Let $\cL$ be a family of loss functions $\ell:\cA\times\cY\to[0,1]$. Suppose $\cS$ is a $(n,\lambda,\delta)$-approximate basis for $\cL$ such that the space $\cP_\cS = \{s(y): y\in\cY\}$ is convex. Then, Algorithm \ref{alg:convex-smooth} produces predictions $p_1,...,p_T \in [0,1]^d$ that has $(\cL,\cC,\eps)$-decision swap regret for 
    \[
    \eps \leq \frac{\ln|\cA|+1}{\eta} + O\left(\lambda |\cA| \sqrt{\frac{n|\cA|\ln(n(\lambda/\gamma)|\cA||\cC|T)}{T}} \right) + 2\delta + 2n\gamma + |\cA|(e^{2\eta(\delta+n\gamma)}-1)
    \]
    when playing the smooth best response $q^\ell(p_t)$ with smoothness parameter $\eta$.  
\end{theorem}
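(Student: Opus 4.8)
The plan is to reduce to the linear-loss, smooth-best-response guarantee already established — the smooth analogue of Corollary~\ref{cor:decisionreg-linear}, obtained by combining Theorems~\ref{thm:decisionreg-linear-smooth} and~\ref{thm:smooth-unbiasedalg} — while paying for two approximation errors: the basis/cover approximation of $\ell$ by the linear loss $\hat\ell_\gamma$, and the mismatch between an agent's smooth best response computed from $\ell$ at the broadcast prediction $p_t\in[0,1]^d$ and the one computed from $\hat\ell_\gamma$ at $\hat p_t=s(p_t)\in[0,1]^n$. First I would fix $\ell\in\cL$, invoke the $(n,\lambda,\delta)$-approximate basis to get $\hat\ell\in\hat\cL$ that is linear and $\lambda$-Lipschitz in $s(y)$ with $|\hat\ell(a,s(y))-\ell(a,y)|\le\delta$, and then pick the nearby $\hat\ell_\gamma\in\hat\cL_\gamma$ from the $(n\gamma)$-cover (Lemma~\ref{lem:discretization-linear}), so that $|\hat\ell_\gamma(a,s(y))-\ell(a,y)|\le\delta+n\gamma=:\Delta$ for all $a,y$; in particular $|\hat\ell_\gamma(a,\hat p_t)-\ell(a,p_t)|\le\Delta$, using $s(p_t)=\hat p_t$ by the choice $p_t\in s^{-1}(\hat p_t)$ (well-defined because $\pi_t$ is supported on the convex set $\cP_\cS=\{s(y):y\in\cY\}$ — this is the one place the convexity hypothesis is used, and the non-uniqueness of the pre-image is harmless since the analysis only ever uses $s(p_t)=\hat p_t$).

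The key lemma I would prove is that the two smooth-best-response distributions are pointwise close. If $|\ell(a,p_t)-\hat\ell_\gamma(a,\hat p_t)|\le\Delta$ for every $a$, then, comparing the normalized exponential weights $q^\ell(a,p_t)\propto e^{-\eta\ell(a,p_t)}$ and $q^{\hat\ell_\gamma}(a,\hat p_t)\propto e^{-\eta\hat\ell_\gamma(a,\hat p_t)}$, each numerator and each normalizing constant changes by a factor in $[e^{-\eta\Delta},e^{\eta\Delta}]$, so $e^{-2\eta\Delta}q^\ell(a,p_t)\le q^{\hat\ell_\gamma}(a,\hat p_t)\le e^{2\eta\Delta}q^\ell(a,p_t)$; hence $|q^\ell(a,p_t)-q^{\hat\ell_\gamma}(a,\hat p_t)|\le e^{2\eta\Delta}-1$ for each $a$ and $\sum_a|q^\ell(a,p_t)-q^{\hat\ell_\gamma}(a,\hat p_t)|\le|\cA|(e^{2\eta\Delta}-1)$.

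Next I would invoke the algorithm's internal guarantee: running \textsc{Smooth-Decision-Swap} on the finite linear family $\hat\cL_\gamma$ with outcomes $s(y_t)$, as in Algorithm~\ref{alg:convex-smooth}, the smooth analogue of Corollary~\ref{cor:decisionreg-linear} bounds, in expectation over $\hat p_t\sim\pi_t$, the linear decision swap regret of $\hat\ell_\gamma$ under $q^{\hat\ell_\gamma}(\hat p_t)$ by $\tfrac{\ln|\cA|+1}{\eta}+O\!\big(\lambda|\cA|\sqrt{\ln(n|\cA||\hat\cL_\gamma||\cC|T)/T}\big)$, and substituting $|\hat\cL_\gamma|\le(2\lambda/\gamma)^{n|\cA|}$ turns $\ln|\hat\cL_\gamma|$ into $O(n|\cA|\ln(\lambda/\gamma))$, yielding the $O\!\big(\lambda|\cA|\sqrt{n|\cA|\ln(n(\lambda/\gamma)|\cA||\cC|T)/T}\big)$ term. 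Finally I would chain the two approximations term by term: for each $t$ and each assignment $\{c_a\}$, first replace $\ell(a,y_t)$ and $\ell(c_a(x_t),y_t)$ by $\hat\ell_\gamma(a,s(y_t))$ and $\hat\ell_\gamma(c_a(x_t),s(y_t))$ at total cost $2\Delta=2\delta+2n\gamma$, and then replace the expectation over $a\sim q^\ell(p_t)$ by the one over $a\sim q^{\hat\ell_\gamma}(\hat p_t)$ at cost $|\cA|(e^{2\eta\Delta}-1)$, since $\hat\ell_\gamma(a,s(y_t))-\hat\ell_\gamma(c_a(x_t),s(y_t))\in[-1,1]$ and the two distributions differ in $\ell_1$ by at most $|\cA|(e^{2\eta\Delta}-1)$ by the lemma. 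Averaging over $t$ and adding the linear-loss bound produces exactly the claimed $\eps$.

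I expect the main obstacle to be the second replacement step and its stability lemma: one must track carefully that the smooth response in the conclusion is computed from the \emph{true} loss $\ell$ at the \emph{broadcast} prediction $p_t$, whereas the algorithm's guarantee only controls the smooth response from $\hat\ell_\gamma$ at $\hat p_t=s(p_t)$, the sole bridge being the pointwise bound $|\hat\ell_\gamma(a,\hat p_t)-\ell(a,p_t)|\le\Delta$. Because the softmax map is not Lipschitz in the usual $L_\infty$ sense here, the ratio-of-exponentials estimate must be carried out cleanly so as to land the multiplicative error $|\cA|(e^{2\eta(\delta+n\gamma)}-1)$ and no worse; everything else is bookkeeping that parallels the proof of Theorem~\ref{thm:decisionreg-convex}, now using the smooth versions of decision calibration and decision cross calibration supplied by \textsc{Smooth-Decision-Swap}.
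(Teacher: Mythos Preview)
Your proposal is correct and follows essentially the same approach as the paper: approximate $\ell$ by $\hat\ell_\gamma$ via the basis and cover, invoke the smooth-linear decision swap regret bound (the smooth Corollary~\ref{cor:decisionreg-linear}) for $\hat\ell_\gamma$ on outcomes $s(y_t)$, then chain the loss-approximation error $2(\delta+n\gamma)$ and the smooth-response stability error $|\cA|(e^{2\eta(\delta+n\gamma)}-1)$ using the ratio-of-exponentials lemma you sketch, which is exactly the paper's Lemma~\ref{lem:smoothdist}. The paper's proof matches yours down to the order of the two replacements and the use of $s(p_t)=\hat p_t$ to bridge the two smooth responses.
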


The argument proceeds similarly to that of Theorem \ref{thm:decisionreg-convex}. Our starting point is again the basis representation $\cS$, which tells us that $\ell$ can be pointwise approximated by a higher dimensional linear loss $\hat{\ell}$ over a basis function $s$. That is, there exists $\hat{\ell}$ such that for all actions $a$, $|\hat{\ell}(a, s(p_t)) - \ell(a, p_t)| \leq \delta$. Since $p_t$ belongs to the pre-image of $\hat{p}_t$ under $s$ --- that is, $s(p_t) = s(s^{-1}(\hat{p}_t)) = \hat{p}_t$ --- we can write $|\hat{\ell}(a, \hat{p}_t) - \ell(a, p_t)| \leq \delta$. Thus, fixing a sequence of actions, we can compare the losses given by $\ell$ under $p_t$ to the losses given by $\hat{\ell}$ under predictions $\hat{p}_t$. 

The difficulty is that the (randomized) actions obtained by smoothly best responding to $p_t$ according to $\ell$ are not necessarily the same as those obtained by smoothly best responding to $\hat{p_t}$ according to $\hat{\ell}$. To address this issue, the proof relies on an additional conceptual step. We will show that since $\ell(a, p_t)$ is close to $\hat{\ell}(a, \hat{p}_t)$ for all $a$, the smooth best response distributions in both cases are close in total variation distance. Therefore, the expected losses are comparable. 

Putting everything together, then, the expected decision swap regret incurred by loss $\ell$ when smoothly best responding to $p_t$ is not too different than the expected decision swap regret incurred by $\tilde{\ell}$ when smoothly best responding to $\tilde{p}_t$. After an appropriate discretization of losses, we can apply the guarantees of \textsc{Smooth-Decision-Swap} (Theorem \ref{thm:unbiasedalg}) to conclude that the latter is $o(T)$ and therefore so is the former. 

Here we will see a tension in the choice of our smoothness parameter $\eta$: the guarantees of \textsc{Smooth-Decision-Swap} improve as $\eta$ increases (i.e. move closer to the exact best response), but we will see that the total variation distance between smooth best response distributions shrink as $\eta$ decreases (the ``smoother" the distributions, the more indistinguishable they are). Thus, the optimal regret bounds will also now depend on $\eta$, in addition to approximation parameters $n, \lambda$, and $\delta$.
%This tension will result in worse sublinear regret bound than what we could achieve when agents exactly best respond---whereas previously we were able to guarantee decision swap regret diminishing at a rate of $O()$, now we can only guarantee decision swap regret diminishing at a rate of $O()$ \mirah{plug in bounds}.

Below we formalize these ideas. We first argue that nearby losses induce nearby smooth best response distributions.

\begin{lemma}\label{lem:smoothdist}
    For any two loss functions $\ell:\cA\times[0,1]^d\to[0,1]$ and $ \hat{\ell}:\cA\times[0,1]^n\to[0,1]$, and any transformation function $s:[0,1]^d\to[0,1]^n$, if $|\hat{\ell}(a, s(y)) - \ell(a, y)| \leq \delta$, then $|q^{\hat{\ell}}(a, s(y)) - q^\ell(a, y)| \leq e^{2\eta\delta}-1$. 
\end{lemma}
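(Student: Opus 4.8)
The plan is to show that the two softmax distributions $q^{\hat\ell}(\cdot, s(y))$ and $q^\ell(\cdot, y)$ over $\cA$, which are defined by exponentiating losses that differ pointwise by at most $\delta$, assign nearby probabilities to every action. Write $u_a = \ell(a,y)$ and $\hat u_a = \hat\ell(a, s(y))$, so that by hypothesis $|u_a - \hat u_a| \le \delta$ for every $a$. Then $q^\ell(a,y) = e^{-\eta u_a} / \sum_b e^{-\eta u_b}$ and $q^{\hat\ell}(a,s(y)) = e^{-\eta \hat u_a}/\sum_b e^{-\eta \hat u_b}$. The key elementary fact is that each numerator shifts by at most a multiplicative factor $e^{\pm\eta\delta}$, i.e.\ $e^{-\eta\delta} e^{-\eta u_a} \le e^{-\eta \hat u_a} \le e^{\eta\delta} e^{-\eta u_a}$, and likewise each denominator (being a sum of such numerators) satisfies $e^{-\eta\delta}\sum_b e^{-\eta u_b} \le \sum_b e^{-\eta \hat u_b} \le e^{\eta\delta}\sum_b e^{-\eta u_b}$.

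From these two two-sided bounds it follows that the ratio of the probabilities is squeezed: $e^{-2\eta\delta} \le q^{\hat\ell}(a,s(y)) / q^\ell(a,y) \le e^{2\eta\delta}$. Hence $q^{\hat\ell}(a,s(y)) - q^\ell(a,y) \le (e^{2\eta\delta}-1)\, q^\ell(a,y) \le e^{2\eta\delta}-1$, using $q^\ell(a,y)\le 1$, and symmetrically $q^\ell(a,y) - q^{\hat\ell}(a,s(y)) \le (1 - e^{-2\eta\delta})\,q^\ell(a,y) \le e^{2\eta\delta}-1$ since $1 - e^{-2\eta\delta} \le e^{2\eta\delta}-1$. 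Combining the two directions gives $|q^{\hat\ell}(a,s(y)) - q^\ell(a,y)| \le e^{2\eta\delta}-1$ for every $a$, which is exactly the claim. I would carry this out in three short steps: (1) establish the multiplicative perturbation of numerators from $|u_a-\hat u_a|\le\delta$; (2) deduce the same for denominators by summing; (3) combine into the ratio bound and convert to an additive bound via $q\le 1$.

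There is no real obstacle here — the statement is a routine stability estimate for the softmax/Gibbs map under $L_\infty$ perturbations of the energy. The only point worth a sentence of care is making sure the denominator bound goes through cleanly: since every term in $\sum_b e^{-\eta\hat u_b}$ lies between $e^{-\eta\delta}e^{-\eta u_b}$ and $e^{\eta\delta}e^{-\eta u_b}$, the whole sum inherits the same two-sided factor, and this does not require $\cA$ to be finite (only that the normalizing sums converge, which holds since losses are bounded). I would also note that the bound is vacuous-but-correct when $\delta = 0$ (giving $0$) and degrades gracefully as $\eta\delta\to 0$ like $2\eta\delta + O((\eta\delta)^2)$, which is the behavior the downstream application in Theorem~\ref{thm:decisionreg-convex-smooth} exploits when trading off $\eta$ against the approximation error.
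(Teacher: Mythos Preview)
Your proposal is correct and follows essentially the same approach as the paper: both bound the ratio $q^{\hat\ell}(a,s(y))/q^\ell(a,y)$ by separately controlling the multiplicative perturbation of the numerator and denominator of the softmax by $e^{\pm\eta\delta}$, then convert the resulting ratio bound $e^{\pm 2\eta\delta}$ into an additive bound using $q\le 1$. The only cosmetic difference is that the paper handles one direction and appeals to symmetry (a WLOG on the sign of $\ell(a,y)-\hat\ell(a,s(y))$), whereas you write out both directions explicitly.
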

\begin{proof}
    Without loss, assume that $\ell(a, y) \leq \hat{\ell}(a, s(y))$. We compute:
    \begin{align*}
        \frac{q^\ell(a, y)}{q^{\hat{\ell}}(a, s(y))} &= \frac{\frac{\exp(-\eta\cdot\ell(a,y))}{\sum_{a'\in\cA}\exp(-\eta\cdot\ell(a',y))}}{\frac{\exp(-\eta\cdot\hat{\ell}(a, s(y)))}{\sum_{a'\in\cA}\exp(-\eta\cdot\hat{\ell}(a', s(y)))}} \\
        &= \exp(-\eta\cdot(\ell(a,y)-\hat{\ell}(a, s(y)))) \cdot \frac{\sum_{a'\in\cA}\exp(-\eta\cdot\hat{\ell}(a', s(y)))}{\sum_{a'\in\cA}\exp(-\eta\cdot\ell(a',y))} \\
        &\leq \exp(\eta\delta) \cdot \frac{\sum_{a'\in\cA}\exp(-\eta\cdot \hat{\ell}(a', s(y)))}{\sum_{a'\in\cA}\exp(-\eta\cdot(\hat{\ell}(a', s(y)) + \delta))} \\
        &= \exp(\eta\delta) \cdot \exp(\eta\delta) \\
        &= \exp(2\eta\delta)
    \end{align*}
    Rearranging the expression, we have that $q^\ell(a, y) \leq q^{\hat{\ell}}(a, s(y))\exp(2\eta\delta)$. Since $q^{\hat{\ell}}(a, s(y)) \leq 1$, we can conclude that
    \[
    q^\ell(a, y) - q^{\hat{\ell}}(a, s(y)) \leq q^{\hat{\ell}}(a, s(y))\exp(2\eta\delta) - q^{\hat{\ell}}(a, s(y)) = q^{\hat{\ell}}(a, s(y)) (\exp(2\eta\delta)-1) \leq \exp(2\eta\delta)-1
    \]
    This completes the proof. 
\end{proof}

Next we prove Theorem \ref{thm:decisionreg-convex-smooth}.
\vspace{.7em}
\begin{proofof}{Theorem \ref{thm:decisionreg-convex-smooth}}
    Fix a choice of $\ell\in\cL$, and consider a run of Algorithm \ref{alg:convex-smooth}. Let $\hat{\cL} = \{\hat{\ell}\}_{\ell\in\cL}$ be the family of linear approximations to $\cL$ given by $\cS$. By $(n,\lambda,\delta)$-approximation, there is an $\hat{\ell}\in\hat{\cL}$ such that $\hat{\ell}$ is a linear and $\lambda$-Lipschitz function of $[0,1]^n$ and for any $a\in\cA$ and $y\in\cY$, $\left| \hat{\ell}(a, s(y)) - \ell(a, y) \right| \leq \delta$. Moreover, by Lemma \ref{lem:discretization-linear}, there is a a $(n\gamma)$-cover $\hat{\cL}_\gamma$ of $\hat{\cL}$ such that there exists $\hat{\ell}_\gamma\in\hat{\cL}_\gamma$ satisfying $\left| \hat{\ell}_\gamma(a, s(y)) - \hat{\ell}(a, s(y)) \right| \leq n\gamma$, and therefore satisfying:
    \[
    \left| \hat{\ell}_\gamma(a, s(y)) - \ell(a, y) \right| \leq \delta + n\gamma
    \]
    Furthermore, $|\hat{\cL}_\gamma| \leq (2\lambda/\gamma)^{n|\cA|}$.

    We first analyze the guarantees of the sequence of predictions $\hat{p}_1,...\hat{p}_T \in [0,1]^n$ produced from calls to the \textsc{Smooth-Decision-Swap} algorithm. Fix any assignment of benchmark policies $\{c_a\}_{a\in\cA}$. Since $\hat{\ell}_\gamma$ is linear in $[0,1]^n$, Corollary \ref{cor:decisionreg-linear} bounds the decision swap regret of an agent with loss $\hat{\ell}_\gamma$ \textit{who smoothly best responds to $\hat{p}_t$}. We have that:
    \begin{align*}
        \frac{1}{T} \sum_{t=1}^T \E_{\hat{p}_t\sim\pi_t} \E_{a\sim q^{\hat{\ell}_\gamma}(\hat{p}_t)} \left[\hat{\ell}_\gamma(a, s(y_t)) - \hat{\ell}_\gamma(c_{a}(x_t), s(y_t))\right]
        &\leq \frac{\ln|\cA|+1}{\eta} + O\left(\lambda |\cA| \sqrt{\frac{\ln(n|\cA||\hat{\cL}_\gamma||\cC|T)}{T}} \right) \\
        &\leq \frac{\ln|\cA|+1}{\eta} + O\left(\lambda |\cA| \sqrt{\frac{\ln(n(\lambda/\gamma)^{n|\cA|}|\cA||\cC|T)}{T}} \right) \\
        &\leq \frac{\ln|\cA|+1}{\eta} + O\left(\lambda |\cA| \sqrt{\frac{n|\cA|\ln(n(\lambda/\gamma)|\cA||\cC|T)}{T}} \right)
        % &\leq \frac{\ln|\cA|+1}{\eta} + O\left( \lambda \sqrt{\frac{n|\cA|\ln(n^2|\cA||\cC|(\lambda/\gamma)^{2n|\cA|}T)}{T}} \right) \\
        % &\leq \frac{\ln|\cA|+1}{\eta} + O\left( \lambda n|\cA|\sqrt{\frac{\ln(n|\cA||\cC|(
        % \lambda/\gamma)T)}{T}} \right) 
        \tag{$*$}
    \end{align*}
    % The first inequality applies Corollary \ref{cor:decisionreg-linear}. The second inequality uses Lemma \ref{lem:discretization-linear} to establish that $|\hat{\cL}_\gamma| \leq (2\lambda/\gamma)^{n|\cA|}$, which additionally implies that $|\cR^{\hat{\cL}_\gamma}\circ \cC| = n|\hat{\cL}_\gamma||\cC| \leq n ((2\lambda/\gamma)^{n|\cA|})|\cC|$.

    % By the guarantees of \textsc{Smooth-Decision-Swap} (Theorem \ref{thm:unbiasedalg}), the sequence $\hat{p}_1,...\hat{p}_T \in [0,1]^n$ is $(\hat{\cL}_\gamma, \beta)$-decision calibrated with respect to $s(y_1)...,s(y_T)$ for 
    % \[
    % \beta \leq ...
    % \]
    % and $(\cR^{\hat{\cL}_\gamma}\circ\cC, \alpha)$-multiaccurate with respect to $s(y_1),...,s(y_T)$ for 
    % \[
    % \alpha \leq ...
    % \]

    % Since $\hat{\ell}_\gamma$ is linear in $[0,1]^n$, plugging these values into Theorem \ref{thm:decisionreg-linear-smooth} bounds the decision swap regret with respect to $\hat{\ell}_\gamma$ when smoothly best responding to $\hat{p}_t$. In particular, for any choice of benchmark policies $\{c_a\in\cC\}_{a\in\cA}$, playing the distribution $q^{\hat{\ell}_\gamma}(\hat{p}_t)$ achieves:
    % \[
    % \frac{1}{T} \sum_{t=1}^T \E_{a\sim q^{\hat{\ell}_\gamma}(\hat{p}_t)}[\hat{\ell}_\gamma(a, s(y_t)) - \hat{\ell}_\gamma(c_a(x_t), s(y_t))] \leq ...
    % \tag{$*$}
    % \]
    Our goal is to bound the decision swap regret of an agent with loss $\ell$ who smoothly best responds to $p_t$. We can translate the above guarantee to our desired guarantee by noticing that $\hat{\ell}_\gamma$ approximates $\ell$ pointwise across actions and outcomes---when applied the transformation $s$. And so, the smooth best response to $\hat{p}_t$ smoothly approximates the smooth best response to a prediction $p_t$ obtained by applying the inverse transformation $s^{-1}$. The inverse exists, since our predictions $\hat{p}_t$ lie in $\cP_\cS$. We can derive:
    \begin{align*}
        & \frac{1}{T} \sum_{t=1}^T \E_{\hat{p}_t\sim\pi_t} \E_{a\sim q^{\ell}(p_t)}[\ell(a, y_t) - \ell(c_a(x_t), y_t)] \\
        &\leq \frac{1}{T} \sum_{t=1}^T \E_{a\sim q^{\ell}(p_t)}[\hat{\ell}_\gamma(a, s(y_t)) - \hat{\ell}_\gamma(c_a(x_t), s(y_t))] + 2\delta + 2n\gamma \\
        &= \frac{1}{T} \sum_{t=1}^T \E_{\hat{p}_t\sim\pi_t}\left[ \sum_{a\in\cA} q^\ell(a, p_t) (\hat{\ell}_\gamma(a, s(y_t)) - \hat{\ell}_\gamma(c_a(x_t), s(y_t)))\right] + 2\delta + 2n\gamma \\
        &\leq \frac{1}{T} \sum_{t=1}^T \E_{\hat{p}_t\sim\pi_t}\left[ \sum_{a\in\cA} q^{\hat{\ell}_\gamma}(a, \hat{p}_t) (\hat{\ell}_\gamma(a, s(y_t)) - \hat{\ell}_\gamma(c_a(x_t), s(y_t)))\right] + 2\delta + 2n\gamma + |\cA|(e^{2\eta(\delta+n\gamma)}-1) \\
        &= \frac{1}{T} \sum_{t=1}^T \E_{\hat{p}_t\sim\pi_t} \E_{a\sim q^{\hat{\ell}_\gamma}(\hat{p}_t)}[\hat{\ell}_\gamma(a, s(y_t)) - \hat{\ell}_\gamma(c_a(x_t), s(y_t))] + 2\delta + 2n\gamma + |\cA|(e^{2\eta(\delta+n\gamma)}-1) \\
        &\leq \frac{\ln|\cA|+1}{\eta} + O\left(\lambda |\cA| \sqrt{\frac{n|\cA|\ln(n(\lambda/\gamma)|\cA||\cC|T)}{T}} \right) + 2\delta + 2n\gamma + |\cA|(e^{2\eta(\delta+n\gamma)}-1)
    \end{align*}
    In the first inequality, we exchange $\ell$ for $\hat{\ell}_\gamma$ by incurring at most a factor of $\delta+n\gamma$ for each the realized and the benchmark loss. In the second inequality, we make use of Lemma \ref{lem:smoothdist} and the fact that $s(p_t) = s(s^{-1}(\hat{p}_t)) = \hat{p}_t$ and thus $|\hat{\ell}_\gamma(a, s(p_t)) - \ell(a, p_t)| = |\hat{\ell}_\gamma(a, \hat{p}_t) - \ell(a, p_t)| \leq \delta+n\gamma$. The final inequality follows from Equation $(*)$, the bound on decision swap regret with respect to $\hat{\ell}_\gamma$ when playing $q^{\hat{\ell}_\gamma}(\hat{p}_t)$. 
    % Plugging in our setting $\eta = \sqrt{T}, \gamma = \frac{\ln((1/\sqrt{T})+1)-\delta\sqrt{T}}{n\sqrt{T}}$, the above expression is bounded by:
    % \[
    % \frac{\ln|\cA|+1}{\sqrt{T}} + O\left(\lambda |\cA| \sqrt{\frac{n|\cA|\ln(n\lambda|\cA||\cC|T)}{T}} \right) + 2\delta + \frac{2}{\sqrt{T}} + |\cA|(...) \leq O\left(\lambda |\cA| \sqrt{\frac{n|\cA|\ln(n\lambda|\cA||\cC|T)}{T}} \right) + 2\delta
    % \]
    This completes the proof.
\end{proofof}

\section{Unbiased Prediction Algorithm}\label{app:unbiased-prediction}
Here we present the $\textsc{Unbiased-Prediction}$ algorithm of \citet{noarov2023highdimensional}, from which our guarantees in Theorem \ref{thm:unbiasedalg} follow. The algorithm makes predictions that are unbiased conditional on a collection of \textit{events} $\cE$. Formally an event $E\in\cE$ is a mapping from context and prediction to $[0,1]$ , i.e. $E: \cX\times\cP\to[0,1]$. The algorithm's conditional bias guarantee depends logarithmically on the number of events:

\begin{theorem}\citep{noarov2023highdimensional}\label{thm:unbiased-prediction-algorithm}
For a collection of events $\cE$ and convex prediction/outcome space $\cY\subseteq [0,1]^d$, Algorithm \ref{alg:unbiased-prediction} produces predictions $\pi_1,...,\pi_T \in \Delta \cY$ such that for any sequence of outcomes $y_1,...,y_T \in \cY$ chosen by the adversary:
    \[
    \left\| \sum_{t=1}^T \E_{p_t\sim\pi_t}[E(x_t, p_t)(p_t - y_t)] \right\|_\infty \leq O\left( \ln(d|\cE|T) + \sqrt{\ln(d|\cE|T) \cdot n_T(E) } \right)
    \]
    where $n_T(E) = \sum_{t=1}^T \E_{\hat{y}_t\sim p_t}[E(\hat{y}_t)]$. The algorithm can be implemented with per-round running time scaling polynomially in $d$ and $|\cE|$.
\end{theorem}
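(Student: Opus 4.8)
The plan is to prove Theorem~\ref{thm:unbiased-prediction-algorithm} by the standard potential-function (exponential-weights) template for conditionally-unbiased online forecasting, with a per-round minimax/fixed-point step supplying the key structural fact. First I would index the linear tests we wish to control simultaneously: for each event $E \in \cE$, coordinate $j \in [d]$, and sign $\sigma \in \{-1,+1\}$, the quantity $B^\sigma_{E,j}(t) := \sigma \sum_{s=1}^t \E_{p_s\sim\pi_s}[E(x_s,p_s)(p_{s,j}-y_{s,j})]$ is a signed coordinate-wise bias, and bounding all $2d|\cE|$ of these is exactly the claimed $L_\infty$ statement. I would then introduce a variance-adaptive (Freedman-type) exponential potential
\[
\Phi_t \;=\; \sum_{E,\,j,\,\sigma} \exp\!\big(\eta\, B^\sigma_{E,j}(t) \,-\, \eta^2\, V_{E,j}(t)\big),
\]
where $V_{E,j}(t)$ is a running second-moment term of the per-round test increments; this correction is what will ultimately produce the $\sqrt{n_T(E)}$ term, since $n_T(E)$ upper-bounds the accumulated variance once $|p_{t,j}-y_{t,j}|\le 1$ and $E\in[0,1]$ are used.

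The core of the argument is the per-round step. Given the history and the adversary's context $x_t$, I would show that the forecaster can choose a distribution $\pi_t \in \Delta\cY$ such that, for \emph{every} outcome $y_t \in \cY$ the adversary may subsequently reveal, the conditional expected increase $\E[\Phi_t - \Phi_{t-1}]$ is at most the increment of the $\eta^2 V$ terms plus $O(\eta^2)$ slack. This reduces to showing the minimax value of the per-round game — forecaster picks $\pi_t$, adversary picks $y_t$, payoff is the first-order (in $y_t$) part of $\Phi_t-\Phi_{t-1}$ — is nonpositive. Because $y_t \mapsto \E_{p_t\sim\pi_t}[E(x_t,p_t)(p_{t,j}-y_{t,j})]$ is affine in $y_t$ and $\cY$ is compact convex, I would invoke a fixed-point argument (Brouwer on $\Delta\cY$, equivalently Sion's theorem on the resulting bilinear game) to produce a $\pi_t$ at which the linear term vanishes identically; the leftover is the quadratic term, which is exactly what the $V$ correction absorbs.

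Having the per-round drift bound, I telescope: summing conditional expectations gives $\E[\Phi_T] \le \Phi_0 \cdot e^{O(\eta^2)}$ with $\Phi_0 = 2d|\cE|$, once the $V$ terms are charged against the potential correctly. Since each summand of $\Phi_T$ lower-bounds $\exp(\eta B^\sigma_{E,j}(T) - \eta^2 V_{E,j}(T))$, Jensen and taking logs yield, for every $E,j,\sigma$,
\[
\E\big[B^\sigma_{E,j}(T)\big] \;\le\; \frac{\ln\!\big(2d|\cE|\big) + O(\eta^2) + \eta^2\,\E[V_{E,j}(T)]}{\eta},
\]
and using $\E[V_{E,j}(T)] = O(n_T(E))$ and optimizing $\eta$ in the usual way gives $O\!\big(\ln(d|\cE|T) + \sqrt{\ln(d|\cE|T)\, n_T(E)}\big)$, with the extra factor $T$ inside the logarithm coming from a union bound over a discretization of the candidate $\eta$'s. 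For the runtime claim: the per-round task is to (approximately) compute the fixed-point $\pi_t$, i.e. solve a zero-sum game / convex feasibility problem with $2d|\cE|$ linear constraints over the convex body $\cY$, which can be done by no-regret dynamics or a linear program in time polynomial in $d$ and $|\cE|$. The step I expect to be the main obstacle is the per-round minimax/fixed-point argument — verifying that the forecaster genuinely possesses a distribution $\pi_t$ killing the first-order bias term against \emph{all} adversary responses at once, where convexity of $\cY$ and boundedness of the tests are essential; the remaining exponential-weights bookkeeping is routine.
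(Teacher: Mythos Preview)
The paper does not prove this theorem: it is stated as a citation of \cite{noarov2023highdimensional} and used as a black box (the surrounding text says ``we defer further details to \citet{noarov2023highdimensional}''). So there is no ``paper's own proof'' to compare against; your proposal is a reconstruction of the cited result rather than of anything in this paper.

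As a reconstruction, your outline is essentially the right one and matches the structure visible in Algorithm~\ref{alg:unbiased-prediction}: exponential weights over the $2d|\cE|$ signed coordinate tests, a per-round minimax step, and a Freedman-type second-order potential to make the final bound scale with $n_T(E)$. Two small corrections. First, the bias quantities $B^\sigma_{E,j}(t)$ you define already have the expectation over $p_t\sim\pi_t$ inside, so $\Phi_t$ is deterministic given the adversary's play and there is no Jensen step---you simply take $\log$ of the nonnegative potential. Second, the minimax step gives a $\pi_t$ for which the first-order term is $\le 0$ against every $y_t\in\cY$, not one where it ``vanishes identically''; the reason the minimax value is nonpositive is that for any fixed $y$ the forecaster can play the point mass at $y$ and zero out every $(p_{t,j}-y_{t,j})$ term, and then Sion's theorem (bilinear objective, $\cY$ compact convex) swaps the quantifiers. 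With those tweaks your sketch would go through.
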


\begin{algorithm}[H]
    \For{$t=1$ \KwTo $T$}{
        Observe $x_t$\;
        
        Define the distribution $q_t \in \Delta [2d|\cE|]$ such that for $E \in \cE, i\in[d], \sigma\in \{\pm 1\}$,
        \[
        q_t^{E, i, \sigma} \propto \exp\left( \frac{\eta}{2} \sum_{s=1}^{t-1} \sigma \cdot \E_{p_s\sim\pi_s}[E(x_s, p_s) (p_s^i - y_s^i)] \right)
        \]

        Output the solution to the minmax problem:
        \[
        \pi_t \gets \argmin_{\pi_t' \in \Delta\cY} \max_{y \in \cC} \E_{p_t \sim \pi_t'}\left[\sum_{E, i, \sigma} q_t(E, i, \sigma) \cdot \sigma \cdot E(x_s, p_s) \cdot (p_s^i - y_s^i) \right]
        \]        
    }    
    \caption{$\textsc{Unbiased-Prediction}$}
    \label{alg:unbiased-prediction}
\end{algorithm}

For loss family $\cL$, action space $\cA$, and collection of policies $\cC$, Theorem \ref{thm:unbiasedalg} instantiates Algorithm \ref{alg:unbiased-prediction} with the collection of events $\cE = \{ \1[\BR^\ell(p_t) = a] \}_{\ell\in\cL, a\in\cA} \cup \{ \1[\BR^\ell(p_t) = a,c(x_t)=b] \}_{\ell\in\cL, a\in\cA, c\in\cC}$. 
\ifarxiv
Thus, $|\cE| = |\cA||\cL| + |\cA||\cL||\cC|$, from which our guarantees follow. 
\else 
Thus, $|\cE| = |\cA||\cL| + |\cA||\cL||\cC|$, which gives the following guarantees in our instantiation.
\fi

\begin{theorem}\label{thm:unbiasedalg}
    Consider a convex outcome space $\cY \subseteq [0,1]^d$ and the prediction space $\cP=\cY$. Let $\cL$ be a collection of loss functions $\ell: \cA \times \cY \to [0,1]$ that are linear and $L$-Lipschitz in the second argument. Let $\cC$ be a collection of policies $c: \cX \to \cA$. There is an instantiation of \textsc{Unbiased-Prediction} \citep{noarov2023highdimensional}---which we call \textsc{Decision-Swap}---producing predictions $\pi_1,...,\pi_T \in \Delta \cP$ satisfying, for any sequence of outcomes $y_1,...,y_T \in \cY$:
    \begin{itemize}
    \item For any $\ell \in \cL, a\in\cA$:
    \[
    \E_{p_t\sim\pi_t}\left[ \left\| \sum_{t=1}^T \1[\BR^\ell(p_t) = a] (p_t - y_t) \right\|_\infty \right] \leq O\left( \ln(d|\cA||\cL||\cC|T) + \sqrt{\ln(d|\cA||\cL||\cC|T) \E_{p_t\sim\pi_t}[T^\ell(a)]} \right)
    \]
    \item For any $\ell\in\cL, a,b\in\cA, c\in\cC$:
    \begin{align*}
    \E_{p_t\sim\pi_t}\left[ \left\| \sum_{t=1}^T \1[\BR^\ell(p_t) = a,c(x_t)=b] (p_t - y_t) \right\|_\infty \right] \\ \leq O\left( \ln(d|\cA||\cL||\cC|T) + \sqrt{\ln(d|\cA||\cL||\cC|T) \E_{p_t\sim\pi_t}[T^\ell(a, b)]}\right)
    \end{align*}
    \end{itemize}
\end{theorem}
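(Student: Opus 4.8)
The plan is to instantiate the generic guarantee of the \textsc{Unbiased-Prediction} algorithm (Theorem~\ref{thm:unbiased-prediction-algorithm}) with a carefully chosen collection of events, and then simply read off the two conclusions. First I would define the event class
\[
\cE = \{ E_{\ell,a}(x,p) = \1[\BR^\ell(p)=a] \}_{\ell\in\cL,\,a\in\cA} \ \cup\ \{ E_{\ell,a,b,c}(x,p) = \1[\BR^\ell(p)=a,\,c(x)=b] \}_{\ell\in\cL,\,a,b\in\cA,\,c\in\cC}.
\]
Each of these is indeed a map $\cX\times\cP\to[0,1]$ (in fact $\{0,1\}$-valued), so it is a legal event. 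We have $|\cE| = |\cL||\cA| + |\cL||\cA|^2|\cC| \le 2|\cL||\cA|^2|\cC|$, so $\ln(d|\cE|T) = O(\ln(d|\cA||\cL||\cC|T))$, absorbing the $|\cA|^2$ into the logarithm as a constant factor. We then define \textsc{Decision-Swap} to be \textsc{Unbiased-Prediction} run on this $\cE$ over the outcome/prediction space $\cY=\cP\subseteq[0,1]^d$, which is convex by hypothesis.

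Next I would apply Theorem~\ref{thm:unbiased-prediction-algorithm} directly. For the first bullet, take $E = E_{\ell,a}$; then $E(x_t,p_t) = \1[\BR^\ell(p_t)=a]$, and the counter $n_T(E) = \sum_{t=1}^T \E_{p_t\sim\pi_t}[\1[\BR^\ell(p_t)=a]] = \E_{p_t\sim\pi_t}[T^\ell(a)]$ by definition of $T^\ell(a)$. The theorem's bound then reads
\[
\Bigl\| \sum_{t=1}^T \E_{p_t\sim\pi_t}\bigl[\1[\BR^\ell(p_t)=a](p_t-y_t)\bigr] \Bigr\|_\infty \le O\!\Bigl( \ln(d|\cE|T) + \sqrt{\ln(d|\cE|T)\,\E_{p_t\sim\pi_t}[T^\ell(a)]} \Bigr).
\]
The one remaining gap is that the theorem statement in the excerpt controls the norm of the \emph{expectation} of the biased sum, whereas Theorem~\ref{thm:unbiasedalg} as stated asks for the expectation of the norm, $\E_{p_t\sim\pi_t}[\|\sum_t \1[\cdots](p_t-y_t)\|_\infty]$. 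I would note that the per-round guarantee of \citet{noarov2023highdimensional} is in fact an anytime/high-probability bound on the realized conditional bias (the algorithm's analysis bounds a potential function pathwise), so pushing the expectation outside the norm is immediate from their analysis; alternatively one can invoke the fact that \textsc{Unbiased-Prediction} guarantees the bound holds in expectation over the internal randomness of the predictions for each fixed adversary, and a union bound over the (finite) event class preserves this. For the second bullet I repeat the argument with $E = E_{\ell,a,b,c}$, for which $n_T(E) = \sum_{t=1}^T \E_{p_t\sim\pi_t}[\1[\BR^\ell(p_t)=a,\,c(x_t)=b]] = \E_{p_t\sim\pi_t}[T^\ell(a,b)]$, giving the second displayed inequality with the same logarithmic factor.

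The main obstacle is the bookkeeping around randomization and the placement of the expectation relative to the $\|\cdot\|_\infty$: one must be slightly careful that \textsc{Unbiased-Prediction} is designed to hit its conditional-bias target against an adversary who may itself adapt to past predictions, and that the guarantee quoted as Theorem~\ref{thm:unbiased-prediction-algorithm} (stated for the expected bias) can be strengthened to the expected-norm-of-bias form used in Theorem~\ref{thm:unbiasedalg} without loss. Everything else is a direct substitution: naming the event class, checking $|\cE|$ is polynomial in the relevant parameters so the logarithm is as claimed, identifying $n_T(E)$ with $\E[T^\ell(a)]$ and $\E[T^\ell(a,b)]$, and quoting the running-time claim (polynomial in $d$ and $|\cE|$, hence in $d,|\cA|,|\cL|,|\cC|$) verbatim from \citet{noarov2023highdimensional}.
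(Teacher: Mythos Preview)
Your approach is essentially identical to the paper's: define the event class $\cE$ exactly as you do (the paper writes $\cE = \{\1[\BR^\ell(p_t)=a]\}_{\ell,a} \cup \{\1[\BR^\ell(p_t)=a,\,c(x_t)=b]\}_{\ell,a,c}$, with a minor bookkeeping slip in omitting the $b$ index from the count), instantiate \textsc{Unbiased-Prediction} with this $\cE$, and read off the two bullets from Theorem~\ref{thm:unbiased-prediction-algorithm}. Your observation about the ordering of the expectation and the $\|\cdot\|_\infty$ is a genuine subtlety that the paper also glosses over, simply asserting ``from which our guarantees follow''; as you note, the stronger expected-norm form is what the analysis of \citet{noarov2023highdimensional} actually delivers.
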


\section{Azuma-Hoeffding's Inequality}
\begin{lemma}\label{lem:azuma}
If $X_1, \ldots, X_T$ is a martingale difference sequence, and for every $t$, with probability $1,\left|X_t\right| \leq M$. Then with probability $1-\delta$:
$$
\left|\sum_{t=1}^T X_t\right| \leq M \sqrt{2 T \ln \frac{2}{\delta}}
$$
\end{lemma}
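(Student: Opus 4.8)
The plan is to prove this by the exponential moment (Chernoff) method adapted to martingales. Write $S_T = \sum_{t=1}^T X_t$ and let $\cF_t$ be the filtration with respect to which $(X_t)$ is a martingale difference sequence (so $\E[X_t \mid \cF_{t-1}] = 0$); one may take $\cF_t = \sigma(X_1,\dots,X_t)$ without loss of generality. Fix $\lambda > 0$. The first step is the conditional moment generating function bound, i.e.\ a conditional version of Hoeffding's lemma: for each $t$, since $\E[X_t \mid \cF_{t-1}] = 0$ and $|X_t| \le M$ almost surely, convexity of $x \mapsto e^{\lambda x}$ on $[-M,M]$ gives the pointwise bound $e^{\lambda x} \le \frac{M+x}{2M} e^{\lambda M} + \frac{M-x}{2M} e^{-\lambda M}$; taking $\E[\,\cdot \mid \cF_{t-1}]$ kills the term linear in $x$, leaving $\E[e^{\lambda X_t} \mid \cF_{t-1}] \le \cosh(\lambda M) \le e^{\lambda^2 M^2/2}$, where the last inequality is a term-by-term comparison of Taylor series.

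Next I would iterate this over $t$ using the tower property. Since $e^{\lambda S_{t-1}}$ is $\cF_{t-1}$-measurable,
\[
\E\!\left[e^{\lambda S_t}\right] = \E\!\left[e^{\lambda S_{t-1}} \, \E[e^{\lambda X_t}\mid \cF_{t-1}]\right] \le e^{\lambda^2 M^2/2}\,\E\!\left[e^{\lambda S_{t-1}}\right],
\]
so unrolling from $t=T$ down to $t=0$ yields $\E[e^{\lambda S_T}] \le e^{\lambda^2 M^2 T/2}$. Applying Markov's inequality to the nonnegative random variable $e^{\lambda S_T}$ gives, for any $a>0$, $\Pr[S_T \ge a] \le e^{-\lambda a}\,\E[e^{\lambda S_T}] \le e^{-\lambda a + \lambda^2 M^2 T/2}$. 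Optimizing the exponent over $\lambda > 0$ with the choice $\lambda = a/(M^2 T)$ produces $\Pr[S_T \ge a] \le e^{-a^2/(2M^2 T)}$.

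Finally, I would set $a = M\sqrt{2T\ln(2/\delta)}$, which makes the right-hand side exactly $\delta/2$. The sequence $(-X_t)$ is also a martingale difference sequence with $|-X_t| \le M$, so the identical argument gives $\Pr[S_T \le -a] \le \delta/2$. A union bound then yields $\Pr[|S_T| \ge a] \le \delta$, i.e.\ with probability at least $1-\delta$ we have $\left|\sum_{t=1}^T X_t\right| \le M\sqrt{2T\ln(2/\delta)}$, as claimed.

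The main obstacle is the conditional Hoeffding lemma in the first paragraph: we are given only almost-sure boundedness and zero conditional mean, with no further control over the conditional law of $X_t$, so the sub-Gaussian MGF bound must be extracted purely from convexity. Once that inequality is in hand, the remaining steps — the tower-property telescoping, the Chernoff optimization in $\lambda$, and the two-sided union bound — are entirely mechanical.
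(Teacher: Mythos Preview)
Your proof is correct and is the standard Chernoff--Hoeffding argument for the Azuma inequality. The paper itself does not provide a proof of this lemma --- it is simply stated as a classical result in an appendix --- so there is nothing to compare against; your write-up would serve perfectly well as the missing justification.
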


\end{document}